\pdfobjcompresslevel 0
\documentclass{memoir}

\usepackage{geometry}

\usepackage[utf8]{inputenc}
\usepackage[T1]{fontenc}
\usepackage{natbib}
\usepackage{subcaption}
\usepackage{colortbl}
\usepackage{dirtytalk} %
\usepackage{geometry}
\usepackage[skins,theorems]{tcolorbox}

\usepackage{textcomp}

\usepackage[colorinlistoftodos]{todonotes}

\newcommand{\checked}[1]{\textcolor{black}{#1}}

\usepackage{mathtools}
\usepackage{amsthm}
\usepackage{amsmath}
\usepackage{enumitem}
\newtheorem{proposition}{Proposition}[section]

\newtheorem*{axiom*}{Axiom}
\newtheorem{lemma}[proposition]{Lemma}
\newtheorem*{lemma*}{lemma}
\newtheorem{Definition}{Definition}

\newcolumntype{P}[1]{>{\centering\arraybackslash}p{#1}}
\definecolor{Gray}{gray}{0.9}
\definecolor{LightCyan}{rgb}{0.88,1,1}
\usepackage{dsfont}
\usepackage{bm} %
\newtheorem{recommendation}{Recommendation}
\newtheorem{constraint}{Constraint}
\newtheorem{relaxation}{Relaxation}
\newtheorem{desideratum}{Desideratum}

\usepackage{longtable}

\title{Continual Learning: Tackling Catastrophic Forgetting in Deep Neural Networks with Replay Processes}
\author{Timothee Lesort }
\date{May 2020}

\usepackage[english]{babel}
\usepackage{amsmath}
\usepackage{graphicx}
\usepackage{booktabs} 
\usepackage{multirow}           
\usepackage{multicol}

\usepackage{amssymb}
\usepackage{stmaryrd}
\usepackage{mathtools}
\usepackage{multicol}
\usepackage{url}

\usepackage{booktabs} %
\usepackage{tikz} %
\usepackage{array}
\usepackage{float}
\usepackage{wrapfig}
\restylefloat{figure}
\tcbset{highlight math style={enhanced,
  colframe=red,colback=white,arc=0pt,boxrule=1pt}}
\usetikzlibrary{arrows.meta}

\usepackage{verbatim} %
\usepackage[final]{pdfpages}

\usepackage[colorlinks=true]{hyperref}
\setsecnumdepth{subsection}

\begin{document}

\includepdf[page=-, width=\paperwidth, height=\paperheight]{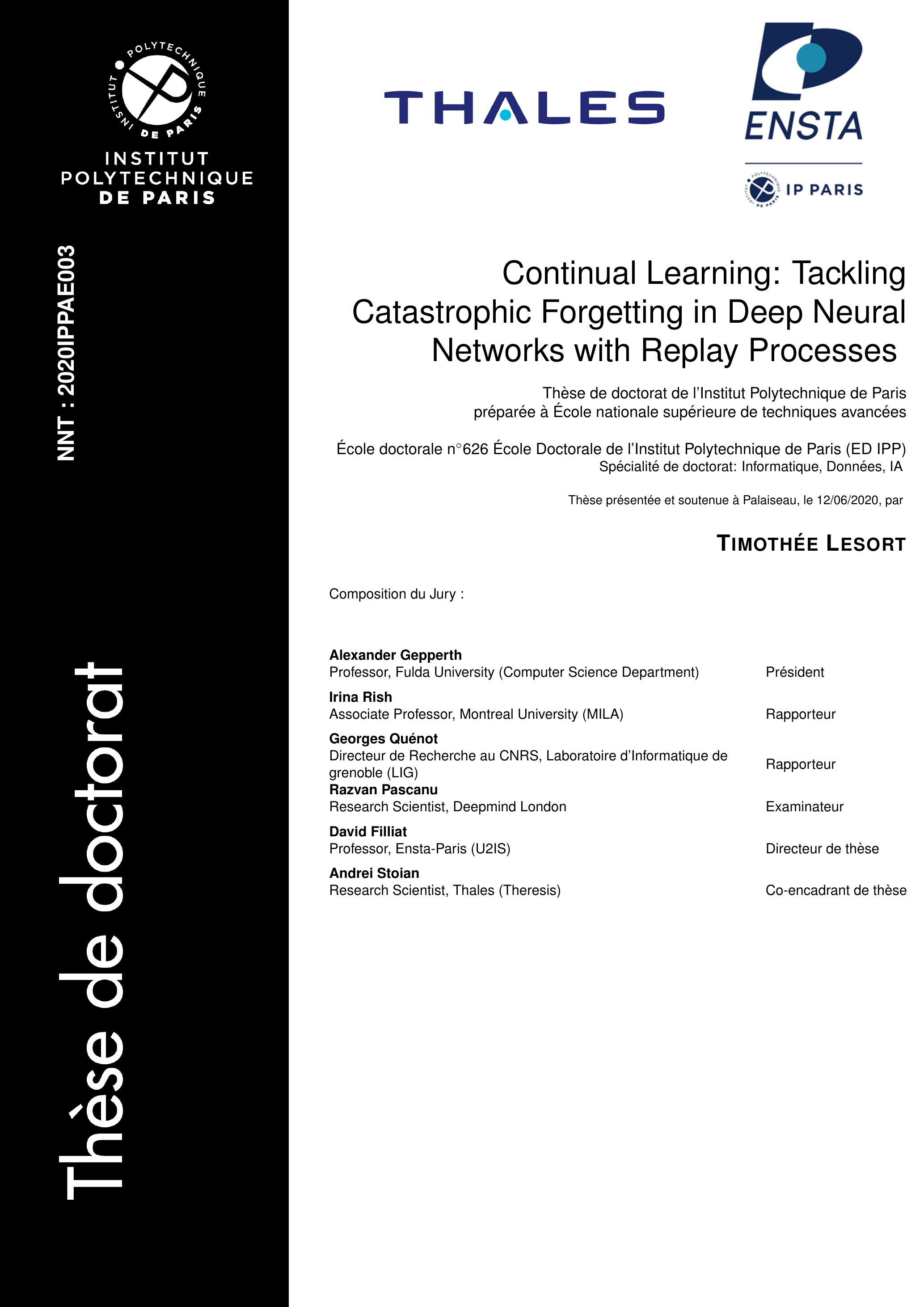}

\newpage

\section*{Remerciements}

J'ai passé trois années pour ce doctorat pendant lesquelles j'ai pu travailler avec beaucoup de liberté sur mon sujet de thèse. J'ai ainsi pu réaliser de nombreuses collaborations ainsi qu'aller à plusieurs conférences. Tout cela m'a permis de m'épanouir dans ma recherche et dans mon sujet et de pouvoir varier mes activités et mes modes de travail. Je suis ainsi reconnaissant à tous les acteurs de cet environnement de travail qui m'a amené à obtenir mon diplôme de doctorat. Par ailleurs, pendant la période de cette thèse, j'ai aussi continué à exercer des activités diverses à côté grâce à ma famille et à mes amis, merci à tous également.

Pour entrer plus dans le détail, tout d'abord j'aimerais remercier mon directeur de thèse, David, qui a été présent pendant toute la durée de ma thèse pour m'aider, me conseiller et m'écouter. Merci pour ta présence, merci pour ta patience.

Je souhaite remercier aussi Andrei et Jean-François qui ont successivement été mes co-directeurs de thèse côté Thales. Merci, d'avoir accepté de m'encadrer et merci pour le temps que vous m'avez consacré.

Par ailleurs, je suis reconnaissant envers mes collègues de Thales qui ont participé chacun à leur manière à faire régner une bonne ambiance au sein du laboratoire, merci Tiago, Thierry, J-E, David, Stéphanie (x2) :), Michael (x3), Cédric, Jean-Yves, Stéphane, Andreina, Yassine, Tom, Rémi, Céline, Louis, mes co-open-space Kévin, Angélique, Astrid, Thomas  ainsi que tous les autres.

Merci également à mes collègues de l'ensta, mes multiples co-bureau, Antonin, Victor, Florence.
Les camarades des trois actes déjeuner-café-baby, Thibault, Olivier, Julien, Alex, Thomas et les autres. Les partenaires de sport: Gabriel, Vyshakh (t'es nul!!), Hugo.
Merci à tous mes co-auteurs, de l'ensta et d'ailleurs, grâce à qui ma thèse a pu avancer plus vite: Natalia (so much work together !), Hugo, Antonin (again), René, Te, Massimo, Vincenzo, Alexander, Florian, Mathieu, Ashley !

Merci à ma famille, merci à mes parents toujours là pour me soutenir, m'aider et m'accueillir, merci à Zaz, Cacou et Céc, une belle friterie toujours prête à partager un bon repas. Merci aussi à Manf notamment pour ces deux formidables neveu et nièce, Loulou et Milou. Merci à mon grand-père Henri, un exemple de ténacité et de vigueur. Ainsi qu'au reste de la famille, notamment Sabine et Claire qui se greffent occasionnellement aux mardis friterie.

Merci à mes amis d'ici et d'ailleurs pour être ce qu'ils sont et avoir été là afin que je puisse penser à d'autres choses que le doctorat!
La TBC : Julaix, Günter, GrandMax, Sylvain, Eddine, Etienne, Jej, Joss et Tang, tous grands amateurs de tarot et de bibines!
Les négligés : Mon p'tit fréro LaKajette (surnom favori s'il en est), Dug mon eternel colloc Munichois et traître parmi les traîtres, mon camarade de chant Driss, Cheucheu, Roro, Pauline et les autres... :)
Les ex-fumistes : Sophie, Loubier, Lucie, Romain, Lucile, Eline, Antho, Quentin, Alexis, PX, Christophe. 
Les autres amis de Paris, ex-Parisiens ou d'ailleurs: Minot, Camille, Emmanuelle, Guidg, Hugo, Louis, Nasta, Olivier, Margaux, Claire, Romdav, Paul, Thomas, le brave Remillieux, Sylvestre, Capucine et tant d'autres.
Les potes de Montréal : Vincent, Florian, César, Alexandre, Thomas, Massimo, Laurent, Theodora. A bientôt, j'espère ;).
Les bons vieux des scouts,  Emile, Balt et Martin, ça fait plaisir qu'on se voit encore parfois.
Les cibeinsois Sean, Marion, Lucie, Mélanie et bien d'autres.
Et enfin Beber et Sylvain, ces personnes respectables et talentueuses, j'espère bien vous voir plus souvent pour travailler efficacement et expérimenter le présent de vérité générale.

Merci également à tous mes relecteurs, ceux de tout temps mais aussi pour ceux qui ont directement participé à ce manuscrit : mon père, ma mère, Zaz, Colin, Cacou, Victor, Vyshakh, Jacko, Günter, Etienne, Capucine, Florence, Andrei et bien sûr, David. 

\newpage

\tableofcontents
\newpage

\listoffigures
\newpage

\listoftables

\newpage

\section*{List of Abbreviation}

\newlist{abbrv}{itemize}{1}
\setlist[abbrv,1]{label=,labelwidth=1in,align=parleft,itemsep=0.1\baselineskip,leftmargin=!}

\begin{abbrv}

\item[CL] Continual Learning
\item[LLL] LifeLong Learning
\item[DL] Deep Learning
\item[ANN] Artificial Neural Network
\item[NN] Neural Network
\item[CNN] Convolutional Neural Network
\item[SGD] Stochastic Gradient Descent
\item[MLP] Multi Layer Perceptron
\item[i.i.d.] Independent and Identically Distributed
\item[MAP] Maximum A posteriori Probability estimate
\item[MLE] Maximum Likelihood Estimation
\item[PCA] Principal Component Analysis
\item[RGB] Red Green and Blue
\item[RL] Reinforcement Learning

\item[LwF] Learning without Forgetting
\item[GEM] Gradient Episodic Memory
\item[iCaRL] Incremental Classifier and Representation Learning
\item[EWC] Elastic Weight Consolidation
 
\end{abbrv}

\newpage

\newpage
\chapter*{Résumé Français}

Les humains apprennent toute leur vie. Ils accumulent des connaissances à partir d'une succession d'expériences d'apprentissage et en mémorisent les aspects essentiels sans les oublier.
Les réseaux de neurones ont des difficultés à apprendre dans de telles conditions. Ils ont en général besoin d'ensembles de données rigoureusement préparés pour apprendre à résoudre des problèmes comme de la classification ou de la régression.
En particulier, lorsqu'ils apprennent sur des séquences d'ensembles de données, les nouvelles expériences leurs font oublier les anciennes.
Ainsi, les réseaux de neurones artificiels sont souvent incapables d'appréhender des scénarios réels tels ceux de robots autonomes apprenant en temps réel à s'adapter à de nouvelles situations et à de nouveaux problèmes \cite{LESORT2019Continual}.

L'apprentissage continu est une branche de l'apprentissage automatique s'attaquant à ce type de scénarios.
Les algorithmes continus sont créés pour apprendre des connaissances, les enrichir et les améliorer au cours d'un curriculum d'expériences d'apprentissage.

Il existe quatre grandes familles d'approches pour l'apprentissage continu. Premièrement, les méthodes à \textbf{architecture dynamique} consistent à faire évoluer l'architecture du réseau de neurones afin que différentes expériences d'apprentissages soient apprises avec différents neurones ou groupes de neurones. Secondement, les méthodes de \textbf{régularisation} évaluent l'importance des neurones ayant déjà été entrainés afin de limiter leurs modifications en conséquence. Il s'agit ainsi d'apprendre de nouvelles tâches préférentiellement avec des neurones jusqu'alors peu ou pas utiles. Troisièmement, les méthodes à \textbf{répétitions de données} consistent à sauvegarder des images représentatives des connaissances apprises et à les rejouer plus tard pour se les remémorer. Le quatrième type de méthode, appelé \textbf{rejeu par génération} utilise un réseau de neurones auxiliaire apprenant à générer les données d'apprentissage actuelles. Ainsi plus tard le réseau auxiliaire pourra être utilisé pour régénérer des données du passé et les remémorer au modèle principal pour éviter qu'il ne les oublie. Nous présentons en détail l'état de l'art de l'apprentissage continu dans le chapitre \ref{chap:2_CL}.

Dans cette thèse, nous proposons d'étudier ces deux dernières méthodes sur des problèmes d'apprentissage sur images. Nous les rassemblont au sein de la famille des méthodes à rejeu de données.
Les méthodes de rejeu de données permettent de trouver un compromis entre l'optimisation de l'objectif d'apprentissage actuel et ceux des experiences d'apprentissage passées.%

Nous montrons que ces méthodes sont prometteuses pour l'apprentissage continu.
Elles permettent la réévaluation des données du passé avec des nouvelles connaissances et de confronter des données issues de différentes expériences. 
Ces caractéristiques confèrent un avantage certain aux méthodes de rejeu de données par rapport aux méthodes à architecture dynamique ou à régularisation, qui peuvent être incapables d'apprendre à différencier des données provenant de différentes expériences d'apprentissage \cite{lesort2019regularization}.

Pour mettre en valeur les avantages de ces méthodes, nous expérimentons les algorithmes à redifussion de données sur des séquences de tâches disjointes. 
Des tâches disjointes sont des tâches d'apprentissage clairement séparées sans intersection dans leur critère d'apprentissage.
 En classification, par exemple, il s'agit d'apprendre à reconnaitre des images provenant de différents ensembles de classes séparés. Le réseau de neurones apprend les uns après les autres chacun des ensembles de classes. Il doit être capable, in fine, de pouvoir reconnaitre une image provenant de n'importe lequel des ensembles et d'identifer sa classe. 
 Ces expérimentations permettent d'évaluer premièrement, la capacité d'apprendre à distinguer des concepts (e.g classes) appris séparément et deuxièmement, la capacité à se souvenir de concepts appris tout au long de la séquence de tâches.

Afin d'étudier au mieux les méchanismes d'apprentissage et d'oubli continus,
les tâches d'apprentissage sont construites à partir d'ensemble de données d'images classiques tels que MNIST \cite{LeCun10}, Fashion-MNIST \cite{Xiao2017} ou CIFAR10 \cite{Krizhevsky09}. Ceux-ci sont des ensembles de données faciles à résoudre en apprentissage profond classique, mais peuvent toujours être ardu à résoudre dans un contexte continu \cite{pfulb2019a}. %
 Nous expérimentons, par ailleur, dans le chapitre \ref{chap:5_DiscoRL} un scénario proche d'une situation réelle en apprenant à un robot à résoudre une séquence de tâche de renforcement.

Nous pouvons ainsi résumer nos contributions de la façon suivante:

\begin{itemize}

\item Nous présentons un aperçut approfondit de l'apprentissage continu (Chapitre \ref{chap:2_CL}). Nous résumons l'état de l'art sur le sujet ainsi que les différents bancs d'expérimentations et méthodes d'évaluations. De plus, nous approfondissons l'exemple de la robotique pour mettre en valeur les potentielles applications de l'apprentissage continu.

\item Nous apportons une preuve théorique des limitations des méthodes dites de régularisation pour l'apprentissage continu. Nous montrons que ces méthodes ne permettent pas d'apprendre à différencier les données provenant de différentes expériences d'apprentissage.

\item Nous réalisons une étude empirique sur l'entrainement des modèles génératifs sur des scénarios d'apprentissage continu et nous introduisons une nouvelle méthode d'évaluation des modèles génératifs : la capacité d'adaptation (Fitting Capacity).

\item Nous expérimentons différentes méthodes de rejeu de données pour l'apprentissage continu. Nous appliquons en particulier ces méthodes aux scénarios de tâches disjointes pour mettre en avant leurs avantages pour l'apprentissage continu.

\end{itemize}

\medskip

Pour résumer, nous démontrons la capacité des méthodes de rejeu de données à apprendre continuellement à travers les paradigmes d'apprentissage non-supervisé (Chapitre \ref{chap:3_CL_GM}), supervisé (Chapitres \ref{chap:2b_Replay} et \ref{chap:4_CL_GR}) et de renforcement (Chapitre \ref{chap:5_DiscoRL}).
Ces experimentations nous permettent de présenter et de mettre en valeur les avantages de ces méthodes et de démontrer leur pouvoir à apprendre certains aspects essentiels d'un curriculum d'expérience d'apprentissage faisant défaut aux méthodes concurrentes.

\chapter{Introduction}
\label{chap:Intro}

\section{Context}

In recent years, machine learning with deep neural networks has significantly improved the state of the art in solving many research and industrial problems.
In vision problems, deep neural networks particularly improve the state of the art in classification and detection. %
In natural language processing, deep neural are nowadays used for search engines or text analysis.
Deep learning also improved reinforcement learning performances. It has made it possible to learn policy and skills in various applications such as video-games, board-games or control.

Robotics is a field which offers significant opportunities for deep learning. Its successes could improve robots cognitive functions such as vision, language processing or exploration. 
Moreover, deep reinforcement learning can help leverage challenging robotics tasks, such as object manipulation or autonomous driving.
 
However, the learning algorithms suffer from many shortcomings. An important restriction of deep learning is the dependence on the quality of the data sets, a clean and well-built dataset being a critical condition to an effective learning process.
In most machine learning algorithms, training data are assumed to be independent and identically distributed (iid), i.e. the data distribution is assumed static. If the data distribution changes while learning, the new data will interfere with current knowledge and erase it. This phenomenon is so dazzling \cite{French99} and it so drastically challenges the algorithms' performance that we call it \say{catastrophic forgetting}. 
This problem has many implications in the way algorithms train neural networks and in the potential application fields for machine learning.

Let us consider, for example, a robot working in an evolving environment and being assigned the goal of manipulating new objects or solving new tasks.
The robot will then need to incrementally learn new knowledge and skills to improve and adapt its behaviour to new situations. 
With classical machine learning techniques, in order to incorporate new knowledge while avoiding catastrophic forgetting, the model will have to re-learn everything from scratch. %
\checked{A robot which needs only the new data to improve and develop knowledge and skills, would be much more efficient in this situation.}

Continual learning (CL) is a branch of machine learning aiming at handling this type of situation and more generally, settings with non-iid data sources. 
It aims at creating machine learning algorithms able to accumulate a set of knowledge learned sequentially.
The general idea behind continual learning is to make algorithms able to learn from a real-life data source. In a natural environment, the learning opportunities are not simultaneously available and need to be processed sequentially. 
Learning from the present data and being able to continue later with new data rather than learning only once for all, seems very appropriate. It opens the possibility to improve the algorithms on a certain task or to make them learn new skills/knowledge without forgetting. 
It also enables transfer between learning experiences.  Previous knowledge acquired may help in learning to solve new problems and new knowledge which may improve the solutions found for past tasks. 
Nevertheless, because of the catastrophic forgetting phenomena, learning continually is a challenging discipline. 
The strategy consisting in saving everything to avoid any forgetting is not satisfying because it would not be scalable in terms of memory and computation. The amount of memory used might grow too fast.
It is therefore important to remember only the essential concepts. Moreover, to deal with catastrophic forgetting, algorithms should identify the potential source of interference between tasks\footnote{The interferences are phenomena happening when different learning criteria conflict.} to come up with a smoother forgetting process.

\begin{figure}[ht]
    \centering
    \includegraphics[scale=0.5]{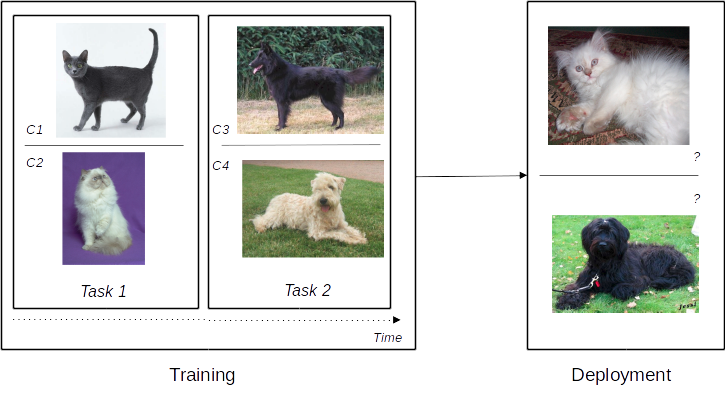}
    \caption[Disjoint tasks illustration.]{Illustration of a disjoint task setting with ImageNet images \cite{krizhevsky12}. There are two tasks learned sequentially, in the first there are two classes, black cats (c1) and white cats (c1), second task is the same but for dogs (c3 vs c4). At the deployment stage, we expect the model to be able to distinguish any class from any other, as white cats from black dogs. Therefore, the model needs both to solve the two tasks and to solve the higher level task which consists of distinguishing classes from various tasks.}
    \label{fig:1_Intro:CL_Tasks}
\end{figure}

\checked{In this thesis, we study specifically learning scenarios where the data is static by parts. Each part is different from the other and is referred to as \say{a task}. This setting is called \textit{disjoint tasks setting} 
and, in classification, it is also called \textit{class-incremental setting}. Each task brings new classes to the learning curriculum and past ones are not available anymore. We show an illustration of class incremental learning in Figure~\ref{fig:1_Intro:CL_Tasks}.
In this type of setting, the forgetting happens only when tasks change. Moreover, as classes are available only in one task, it is convenient to evaluate how the neural network learns and memorizes them. On the other hand, this setting makes it possible to assess if algorithms are able to learn to differentiate classes from different tasks, which is challenging  in continual learning.
Therefore, we study how algorithms are able to deal with this kind of setting as the ability to solve disjoint settings is a necessary condition to be able to deal with real-life settings. }

\section{Contributions and Scope}

As will be detailed later in the thesis, continual learning approaches can be divided into four main categories:
\textbf{Regularization}, \textbf{Dynamics Architectures}, \textbf{Rehearsal} and \textbf{Generative Replay}. In our work, we show that \textit{Regularization} and \textit{Dynamics Architectures} methods have theoretical shortcomings for continual learning and therefore \checked{we focus on studying  applications of replay methods, i.e. \textit{Rehearsal} and \textit{Generative Replay} methods, to categorization and reinforcement learning.}
More precisely,  we study the generative replay and rehearsal methods capacity to learn continually in disjoint settings. %

The contributions of this thesis are: 
\begin{itemize}
\item A global overview of the continual learning research field (Chapter \ref{chap:2_CL}). We present the state of the art in continual learning and introduce classical benchmarks and metrics. 
Moreover, we develop the example of robotics as an application of continual learning solutions.

\item A theoretical proof of the shortcomings of regularization methods for continual learning (Chapter~\ref{chap:2b_Replay}). We show that regularization methods do not provide learning criterion to differentiate data available at different learning experiences \checked{in disjoint tasks settings}. %

\item An empirical study on generative models capabilities in learning continually (Chapter~\ref{chap:3_CL_GM}) with a new evaluation metric: the Fitting Capacity.

\item We experiment with replay methods in continual learning settings
 (Chapters \ref{chap:3_CL_GM}, \ref{chap:4_CL_GR} and \ref{chap:5_DiscoRL}). We study in particular class-incremental settings with supervision free inference. 
 This benchmark highlights the need for replay in continual learning. 
\end{itemize}

This thesis aims at giving an extended perspective on replay methods in continual learning and insights into the advantages of these approaches for continual learning. 
 We apply replay methods to unsupervised, supervised and reinforcement learning to illustrate our statement. 
 
Moreover, we propose an extensive discussion to stress the real requirements of continual learning, to present the advantages of replay methods in continual learning and spread in light the research direction that should be explored to make it progress.

\section{Publications}

Our work has resulted in the following publications:

\subsection{Journals}

\begin{itemize}

\item \cite{LESORT2019Continual} \textbf{Continual Learning for Robotics: Definition, Framework, Learning Strategies, Opportunities and Challenges} (2019) \textit{T Lesort, V Lomonaco, A Stoian, D Maltoni, D Filliat, N D\`iaz-Rodr\`iguez}, Information Fusion, Elsevier, 2019, ISSN 1566-2535, doi: 10.1016/j.inffus.2019.12.004.

\end{itemize}

\subsection{International Conferences}

\begin{itemize}
\item \cite{lesort2018generative} \textbf{Generative Models from the perspective of Continual Learning} (2019) \textit{T Lesort, H Caselles-Dupr\'e, M. Garcia-Ortiz, J-F Goudou, D Filliat}, IJCNN - International Joint Conference on Neural Networks, Budapest, Hungary

\item \cite{lesort2018training} \textbf{Training Discriminative Models to Evaluate Generative Ones} (2019) \textit{T Lesort, A Stoian, J-F Goudou, D Filliat},  Artificial Neural Networks and Machine Learning -- ICANN 2019: Deep Learning, Springer International Publishing, pp 604-619

\item \cite{lesort2018marginal} \textbf{Marginal Replay vs Conditional Replay for Continual Learning} (2019) \textit{T Lesort, A Gepperth, A Stoian, D Filliat}, Artificial Neural Networks and Machine Learning -- ICANN 2019: Deep Learning, Springer International Publishing, pp.466-480

\end{itemize}

\subsection{Workshops in International Conferences}

\begin{itemize}

\item \cite{Kalifou19} \textbf{Continual Reinforcement Learning deployed in Real-life using Policy Distillation and Sim2Real Transfer} (2019) \textit{R Traor\'e*, H Caselles-Dupr\'e*, T Lesort*, T Sun, G Cai, N D\`iaz-Rodr\`iguez, D Filliat}, ICML Workshop on Multi-Task and Lifelong Learning, 2019, Long Beach

\item \cite{Traore19DisCoRL} \textbf{DisCoRL: Continual Reinforcement Learning via Policy Distillation} (2019) \textit{R Traor\'e*, H Caselles-Dupr\'e*, T Lesort*, T Sun, G Cai, N D\`iaz-Rodr\`iguez, D Filliat}, Deep RL Workshop, NIPS 2019, Vancouver

\end{itemize}

\section{Outline}

The organization of the remainder of this manuscript is the following:

\begin{itemize}

\item Chapter \ref{chap:1b_ML} introduces necessary deep learning background to understand the learning processes applied in the thesis experiments.

\item Chapter \ref{chap:2_CL} proposes a global overview of continual learning, its objectives, applications and evaluation.

\item Chapter \ref{chap:2b_Replay} motivates the research in replay methods by pointing out theoretical shortcomings of other methods and by shedding light on replay methods advantages.

\item Chapter \ref{chap:3_CL_GM} presents the generative replay method and evaluate its core component ability: the generative model in a continual context.

\item Chapter \ref{chap:4_CL_GR} experiments the generative replay method in incremental classification tasks sequences.

\item Chapter \ref{chap:5_DiscoRL} brings supplementary results on replay methods by applying rehearsal strategies to a continual multi-task reinforcement learning setting. The resulting algorithm is applied on real robots.

\item Chapter \ref{chap:6_disc} discusses continual learning objectives and use cases, the choices made in the thesis and the traps of continual learning research.

\item Chapter \ref{chap:7_ccl} conclude this 3-year work on continual learning and replay methods and opens research directions for its extension.

\end{itemize}

\newpage
\chapter{Deep Learning Backround:  Principles and Applications}
\label{chap:1b_ML}

Deep learning is a research field that aims at developing learning algorithms. Those algorithms should learn a function that optimizes an objective function on data. In deep learning, this function is implemented as a deep neural network, i.e. a neural network with more than one hidden layer \cite{bengio2009learning, schmidhuber2015deep} (Figure~\ref{fig:1b_ML:DNN}\footnote{Image taken from \url{https://towardsdatascience.com/a-laymans-guide-to-deep-neural-networks-ddcea24847fb}}).

\begin{figure}[ht]
    \centering
    \includegraphics[scale=0.2]{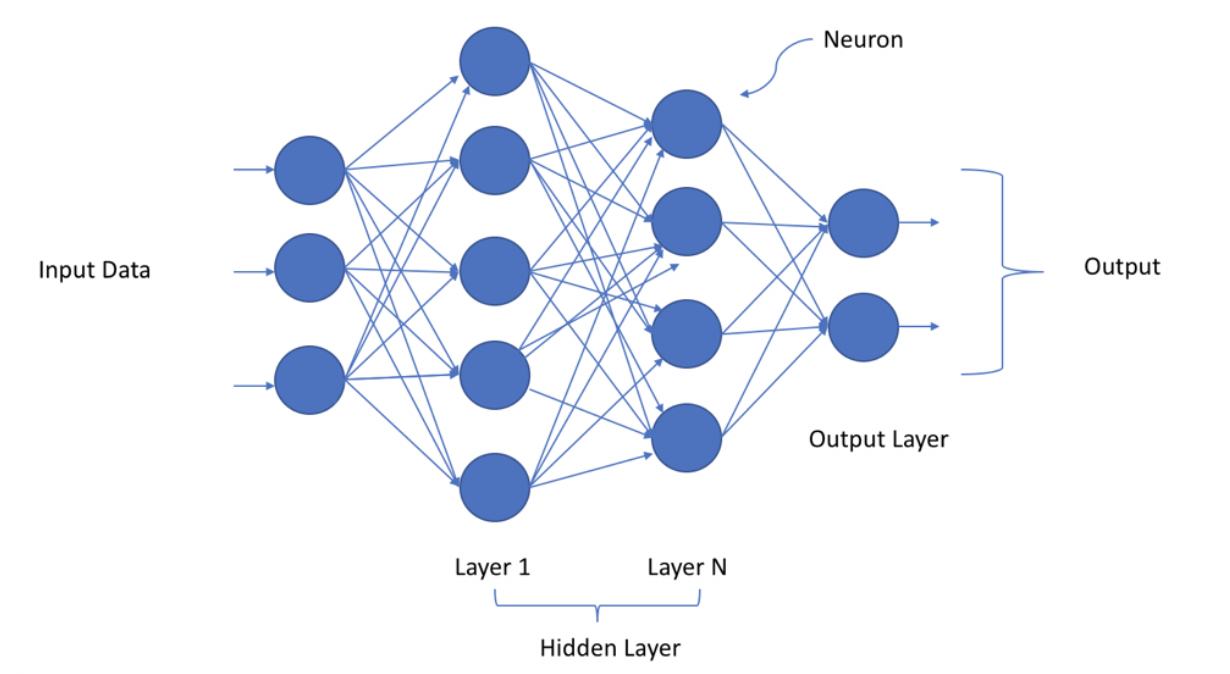}
    \caption[Illustration of a deep neural network (DNN).]{Illustration of a deep neural network (DNN).}
    \label{fig:1b_ML:DNN}
\end{figure}

Deep learning has many applications such as signal processing, language processing or image processing. The scope of this thesis is limited to image processing: we work on algorithms learning from images to understand other images. However, there is no theoretical limitation to transfer results of this thesis in other application fields.

This chapter introduces the basic concepts of classical deep learning in Section \ref{sec:1b_ML:SGD} and its applications in Section \ref{sec:1b_ML:App}. For a more in-depth understanding of the subject we suggest to refer to the book \say{Deep Learning} \cite{Goodfellow2016Deep}. 
We also present the global deep learning pipeline in Section \ref{sec:1b_ML:Pipeline} and introduce in Section \ref{sec:1b_ml:toward_CL}  its constraints leading to continual learning.

\section{Training Deep Neural Networks by Gradient Descent}
\label{sec:1b_ML:SGD}

 We present in this section the simplest method to train deep neural networks: Stochastic Gradient Descent. We also introduce the optimization objective and the libraries dedicated to deep learning.

\subsection{Deep Neural Networks (DNN)}

Deep neural networks (DNN) are artificial neural networks with multiple hidden layers. A layer is composed of a set of neurons connected to neurons from previous layer. They perform a computation and output a single value sent to the next layer. The neurons together form the neural network. A representation of a deep neural structure can be found Fig.~\ref{fig:1b_ML:DNN}.  By combining all the neurons into a coherent ensemble, the neural network should be able to learn complex functions to solve complex problems. 

Mathematically, for a set of $n-1$ input values ${x_1,x_2,..,x_n}$ a neuron will compute the following output:

\begin{equation}
out= \sigma(\sum_{i=1}^{n} x_i\omega_i + b)
\end{equation}
with $\sigma(.)$ a non-linear activation function, $b$ the bias and $\omega_i$ the weights of the neuron. An illustration of a single neuron is presented in Fig.~\ref{fig:1b_ML:neuron}. To train a neural network, we tune the weights (or parameters) and bias of all neurons in order to produce a specific function.

\begin{figure}[h]
    \centering
    \includegraphics[scale=0.5]{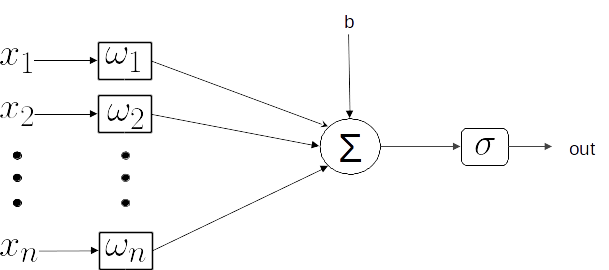}
    \caption[Illustration of an artificial neuron.]{Illustration of an artificial neuron.}
    \label{fig:1b_ML:neuron}
\end{figure}

There are different types of neural networks such as convolutional network or fully connected neural networks. We will present them in Section \ref{sub:1b_ML:Classif}.
In the next section, we will see how to train a DNN.

\subsection{Stochastic Gradient Descent (SGD)}
\label{sub:1b_ML:SGD}

We define the function $f(.)$ implemented by a neural network. $f(.)$ is parametrized by $\theta \in \mathbb{R}^N$ a vector of $N$ real values corresponding to the connection weights and biases of all neurons.
For an input data $x$ we have:
\begin{equation}
\hat{y} = f(x; \theta)
\end{equation}
$\hat{y}$ being the neural network's output.

We assume the dataset composed of pairs $(x, y)$, with $x$ a data point and $y$ its associated expected output.
For each data point $x \in \mathbb{D}$, we can compute the output $\hat{y}=f(x; \theta)$ and the loss $\ell(\hat{y}, y)$. $\ell(\hat{y}, y)$ evaluates the difference between the expected output $y$ and the actual output $\hat{y}$. The loss function is a differentiable function, for example the squared Euclidean distance: 

\begin{equation}
\ell_2(y, \hat{y}) = \parallel y - \hat{y} \parallel^2_2
\end{equation}

The training procedure goal is then to find the best vector $\theta^*$ that minimize the cost function $\ell(.)$ on a dataset $\mathbb{D}$.

\medskip

Deep neural networks are designed such as for each parameter of $\theta$, $\theta_j \in \theta$, we can compute the gradient $\nabla_{\theta_j}$:

\begin{equation}
\nabla_{\theta_j} = \nabla_{\theta_j}(x, y) = \frac{\partial \ell(f(x; \theta),y)}{\partial \theta_j}
\end{equation}

One of the assets of deep neural networks is the efficient back-propagation of the gradient through the model. The gradient can use the chain rule to be transmitted from a layer to another.

\begin{equation}
 \frac{\partial \ell(f(x; \theta),y)}{\partial \theta_j}
 = \frac{\partial \ell(f(x; \theta); \theta_j),y)}{\partial f(x; \theta)} \cdot \frac{\partial f(x; \theta)}{\partial \theta_j}
\label{eq:1b_ML:gradient_propagation}
\end{equation}

Hence, $\frac{\partial \ell(f(x; \theta); \theta_j),y)}{\partial f(x; \theta)}$ can be computed once for all and be used after to compute all the $\nabla_{\theta_j}$.

The gradient is then used to update the value of all $\theta_j$ such that $\ell(\hat{y}, y)$ is minimized.

\begin{equation}
\theta_j \leftarrow \theta_j - \eta \nabla_{\theta_j}
\label{eq:1b_ML:update_rule}
\end{equation}
with $\eta$ the learning rate.

This operation is then repeated for all $(x,y)$ sampled randomly from the dataset, until convergence to a local minima $\theta^*$ of $\ell(f(x; \theta),y)$. This process is called stochastic gradient descent (SGD) \cite{Bottou10large}. It is the simplest method to train a deep neural network by gradient descent. 
Data randomly sampled are called i.i.d. (Identically and Independently Distributed). The i.i.d. assumption on the data distribution is often an essential condition to the success of the training algorithms.

The update rule (eq. \ref{eq:1b_ML:update_rule}) can be modified for a more efficient optimization. Some well known optimization methods are Adagrad \cite{duchi2011adaptive}, Nesterov momentum \cite{sutskever2013importance},  Adam \cite{kingma2014adam}, RMSProp \cite{DauphinVCB15}. They add momentum and acceleration components to the gradient in order to learn faster. 
For the practical applications in this thesis, we mainly used Adam and SGD with momentum to optimize deep neural networks.

\subsection{Overfitting and Generalization}
\label{sub:1b_ML:Generalization}

The optimization process described in Section \ref{sub:1b_ML:SGD} minimizes the loss function on the training data until finding a local minima $\theta^*$:

\begin{equation}
\theta^* = \operatorname*{argmin}_{\theta} \mathbb{E}_{(x,y) \sim \mathbb{D}_{tr}} \ell(f(x;\theta), y)
\label{eq:1b_ML:opt_tr}
\end{equation}
with $\mathbb{D}_{tr}$ the training dataset.

\checked{However, the true objective of deep learning optimization is to make good predictions on never seen data, i.e. to generalize knowledge from training data to new data. 
The ability of making good predictions on unknown data is called \textit{Generalization}. It is measured by computing the loss on a test set $\mathbb{D}_{te}$ never seen by the model. }
If the training loss is very low but the loss on the testing set is high, the model did not learn a good solution to solve the task. This phenomenon is called \textit{overfitting}.
If the loss of the testing set is low then we consider that the model generalized well and the training is successful.

One of the main objectives of machine learning and deep learning is to learn functions that generalize well on new data.
\checked{However, it is important to note that the test set should be similar to the training set. A neural network can not generalize to completely different data.}

\subsection{Deep Learning Programming Frameworks}
\label{sub:1b_ML:Framework}

The training of neural networks is in most situations achieved thanks to programming libraries that are specific to deep learning.
 Those libraries allow to compute efficiently and automatically the gradient for all the parameters and to train neural network faster. Using those libraries makes it also possible to develop code faster and have an easy way to use GPU acceleration for deep neural network training. The most famous deep learning library nowadays are Pytorch \cite{NEURIPS2019_9015}, TensorFlow \cite{tensorflow2015-whitepaper} and Keras \cite{chollet2015} but some years ago caffe \cite{Jia2014Caffe} and Theano \cite{Bastien-Theano-2012} were the most used ones. All of those libraries can be used with python, but some of them have an interface to be used with other programming languages such as C++.

In recent years, those libraries have been developed very intensively, making it possible to find pre-trained models and already implemented architectures, neural layers and optimization processes. Today, they are complete frameworks to develop and train deep neural networks.

In this thesis, all the code to train deep neural networks have been developed in python with the Pytorch framework.

\section{Learning Paradigms}
\label{sec:1b_ML:App}

The training of deep neural networks has been applied to different learning paradigm. These paradigms differs in their supervision signal. Supervised algorithms have a true label for all data point, reinforcement learning algorithms have a sparse label referred to as \textit{reward} and unsupervised algorithms have no label at all.

\subsection{Classification}
\label{sub:1b_ML:Classif}

Images classification (or images recognition) is a typical application of deep learning. It consists of learning to predict the class associated with input data.
In this part, we are interested only in supervised training of deep neural networks for classification. Training by supervision is the most common method for learning classification. 

\subsubsection{History}

In the beginning of the 2010s, deep neural networks helped to make significant progress in the image recognition domain, especially with convolutional neural networks (CNN) architectures \cite{fukushima1980neocognitron} and hardware computation acceleration with graphical processors units (GPU).

The development of GPU hardware contributed to the acceleration of neural networks training. It substantially helped to develop deep neural networks with more layers growing from a few thousands to hundreds of millions parameters in past few years.
Since then, they have been ubiquitous in classification challenges such as PASCAL VOC \cite{Everingham10}, ImageNet \cite{imagenet_cvpr09}, MS COCO \cite{LinMBHPRDZ14} or Open Images \cite{Kuznetsova18}.

Deep neural networks consist of a stack of different neural layers that learn to detect essential features in the data and take decisions. In the early years of classification, feature extractor where hand-engineered, outside the learning algorithm, and only the decision layer was learned. Today, both feature extraction and decision can be learned automatically inside a single neural network. In the next section, we present the important types of neural layers needed for feature extraction and decision making.

Based on these layers, the most famous architectures that helped the development of deep neural networks are LeNet \cite{lecun1998gradient}, AlexNet \citep{krizhevsky12}, Inception \cite{Szegedy15}, VGG \cite{Simonyan15}, ResNet \cite{HeZRS15}. Those models proposed different types of connections between neurons and layers of neurons to help  learning features for image recognition.

\subsubsection{Convolution Layers}

In image classification, the feature extractor is generally composed of a stack of convolution layers \cite{fukushima1980neocognitron}.

\begin{figure}[h]
    \centering
    \includegraphics[scale=0.4]{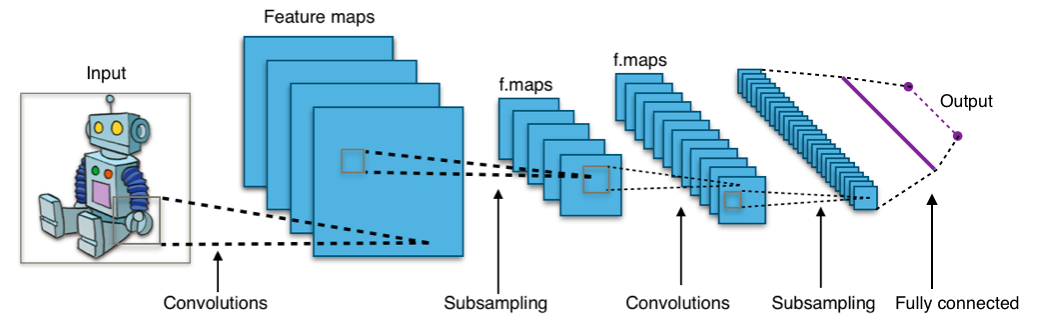}
    \caption[Illustration of an convolution neuronal network.]{Illustration of a multi-layer convolution neural network for image classification. The feature maps contain all the activation output computed with learned filters. The sub-sampling consists of transmitting only a part of the feature map to the next layer.}
    \label{fig:1b_ML:cnn}
\end{figure}

The convolution layers are designed to limit significantly the number of parameters with respect to a fully connected layer (presented in the next section). They are able to capture local dependencies and benefit from the invariance of certain features to learn better, e.g. a car is still a car whatever its position in the images.
The convolution layers are composed of discrete convolution filters (Illustration Figure \ref{fig:1b_ML:cnn}). The goal of each filter is to detect a certain pattern. For a given input, the more the input is close to the feature the higher the output of the convolution will be. 
By stacking convolution layers, the model can detect more and more complex features.
Training the neural network consists of learning the right filters to detect discriminative features allowing them to solve the classification tasks.

The output of a convolution layer is a vector $\bm{h}$ composed of a set of feature map, %
 characteristic of the input features $x$. It is parameterized by the number of filters, their size and how they are applied to the input vector.
$\bm{h}$ is then transmitted to the next layer.

\subsubsection{Fully Connected Layers}

The fully connected (FC) layers have the particularity of connecting all the neurons from one layer to another. Those layer can learn to approximate a high variety of functions, however since they contain a lot of connection, they have a lot of parameters. Their training is then more time consuming and energy consuming than convolution layers.

A FC layer of size $N$ realizes the following function:

\begin{equation}
\bm{o} = W * \bm{h}+ b
\end{equation} 
with $\bm{h}$ the input vector of size $H$, $W$ a weight matrix of size $H*N$ and $\bm{b}$ the bias vector of size $N$.

\subsubsection{Output interpretation and loss}

The output layer is designed to make the right prediction and to be able to compute a gradient that will be back-propagated through the model.

With respect to the learning procedure introduced in Section \ref{sec:1b_ML:SGD}, in the classification case, the expected output $y$ is a label (most of the time an integer) associated with a class of images.
The model (i.e. the neural network) should then, for any image $x$, output a label $\hat{y}$ equal to $y$.

In order to predict such an integer, the classification models have an customized output layer to learn efficiently a solution. Generally, this layer is a fully connected layer which outputs one value per class, the highest value indicating the class selected by the neural network.

Thus, if there are $N$ classes, the output vector $\bm{o}$ is a tensor of float with $N$ values.
The predicted class is then computed as:

\begin{equation}
\hat{y}  = \operatorname*{argmax}_{i \in \llbracket 0, N-1 \rrbracket} (o[i])
\end{equation}
For probabilistic interpretation of the output, it is common to apply a softmax operation to the output. Then, each float value $o[i]$ is transformed to $\sigma(o[i])$ with:

\begin{equation}
\sigma(o[i]) = \frac{e^{o[i]}}{\sum_j e^{o[j]}}
\label{eq:1b_ML:Softmax}
\end{equation}

so that all values are mapped between $0$ and $1$ and they sum to $1$. We will note the resulting tensor $\sigma(\bm{o})$.
We can then compute a loss function to compute a gradient and train the neural network by gradient descent. One example commonly used with the softmax is the negative log-likelihood loss:

\begin{equation}
\ell(\hat{y}, y) = -log(\sigma(o)[y])
\label{eq:1b_ML:NLL}
\end{equation}
with $\hat{y}=\sigma(o)$.%

The gradient descent can then be applied as described in Section \ref{sec:1b_ML:SGD}.

\bigskip

In this thesis, these layers will be directly exploited in Chapter \ref{chap:2b_Replay} and the Chapter \ref{chap:4_CL_GR} dealing with continual learning for classification.
Note that the convolutional and fully connected layers presented in this section are used for reinforcement learning and unsupervised learning as well.

\subsection{Reinforcement Learning}
\label{sub:1b_ML:RL}

Reinforcement Learning is a machine learning paradigm where the goal is to train an agent to perform actions sequences in a particular environment.
The agent should learn a policy which associates the best action to each state of the environment.
It is guided by a reward function providing reward according to policy performance. %
In order to maximize the expected cumulative reward, the agent should explore its environment to discover reward sources and exploit them.

\subsubsection{Training methods}

Most reinforcement learning processes can be described as Markov decision processes (MDPs).  MDPs provide a mathematical framework for modeling decision making in situations where outcomes are partly random and partly under the control of a decision-maker\footnote{Definition taken from \url{https://en.wikipedia.org/wiki/Markov_decision_process}.}. At each time-step $t$, the process is in some state $s_t$, and the decision-maker may choose any action $a_t$ that is available. The process responds at the next time-step by %
 moving into a new state $s_{t+1}$ and giving the decision-maker a corresponding reward $r_t$. This process is illustrated in Figure \ref{fig:1b_ML:RL_env}.

\begin{figure}[h]
    \centering
    \includegraphics[scale=0.8]{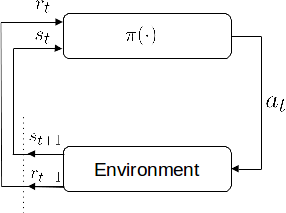}
    \caption[Illustration of reinforcement learning environment.]{Illustration of reinforcement learning environment. With $\pi(\cdot)$ the policy function followed by the agent, $r_t$ the reward at time step $t$, $s_t$ the state of the environment and $a_t$ the action taken by the agent.}
    \label{fig:1b_ML:RL_env}
\end{figure}

The objective of reinforcement learning is to maximize the accumulated reward gathered in a sequence of actions:
\begin{equation}
R_{tot} = \sum^T_{i=1} r_i
\label{eq:1b_ML:sum_reward}
\end{equation}

The total reward $R_{tot}$ is the sum of reward received over all the episode (sequence of actions). A discount factor $\gamma$ can be added to all reward $r_i$ to ponder them.

For each state $s_t$, the neural network function should choose an action to perform. This function is called the policy $\pi(\cdot)$, such as: 
\begin{equation}
p(a|s) = \pi(a,s)
\label{eq:1b_ML:base_RL}
\end{equation}

The expected reward received by an agent following a policy starting at a state $s_t$ is computed by the value function $V^{\pi}(s_t)$:
\begin{equation}
V^{\pi}(s_t) = \mathbb{E} [ \sum^T_{i=1} \gamma^{i-1} r_i ]
\label{eq:1b_ML:value_function}
\end{equation}

The optimal value function $V^*(s_t)$ is the value function for the best possible policy:

\begin{equation}
V^{*}(s_t) = \operatorname*{max}_{\pi} V^{\pi}(s_t)
\label{eq:1b_ML:best_value_function}
\end{equation}

then the best policy $\pi^*$ is:

\begin{equation}
\pi^* = \operatorname*{argmax}_{\pi} V^{\pi}(s_t)
\label{eq:1b_ML:best_policy}
\end{equation}

Many reinforcement learning algorithms rely on good representation of state quality to maximize the expected cumulated reward. Learning the value function is a way of approximating the state quality and learn a good policy.

However, the value function alone does not give directly the right action to realize, it only evaluates the current state. To find the right action to achieve, the Q-function is introduced which evaluates action quality at each state, such as:

\begin{equation}
 V^*(s_t)= \operatorname*{argmax}_{a_t} Q^*(s_t,a_t)
\end{equation}
with $Q^*(s_t,a_t)$ the optimal Q-function. The Q-function makes it possible to introduce the \textit{Bellman equation} \cite{Bellman1957Dynamic} which links values, rewards and Q-functions:

\begin{equation}
Q(s_t,a_t)= R(s_t,a_t) + \gamma  \mathbb{E}[V(s_{t+1})]
\end{equation}
with $R(s_t,a_t)$ the reward received after realizing action $a_t$ at state $s_t$. %

We can then learn a policy that maximizes the reward based on a neural network implementing $Q$. 
A particular method is to learn the optimal $Q$ function which gives directly the best policy. This method is called \textit{Q-Learning} \cite{watkins1992q}. %
There also exist other reinforcement learning training methods types. %

\checked{In the Chapter \ref{chap:5_DiscoRL}, we will use the PPO (Proximal Policy Optimization) algorithm \cite{schulman2017proximal} that belongs to the family of policy gradients methods to directly learn robotics policies. This algorithm is an extension of TRPO (Trust region policy optimization) algorithm \cite{schulman2015trust}. TRPO introduces the trust-region in the policy space. It adds a surrogate loss to constraint the update in the policy space. The goal is to not create drastic changes in the policy and stabilize the learning process. This constraint is implemented as a constraint on the KL divergence between the old and the new policy. KL divergence should not be to high. PPO algorithm does not compute the KL divergence but approximate its action by defining another function that clips the objective function: if the ratio between new policy and old policy is too far from one, the surrogate objective is clipped.
PPO is today a commonly used algorithm for various reinforcement learning applications, we used it for many robotics experiments in \cite{raffin2019decoupling,Raffin18,Kalifou19,Traore19DisCoRL}.}

For more information about reinforcement learning, we link the reader to the \say{Reinforcement Learning: An Introduction} book \cite{Sutton1998}.

\subsubsection{Reward functions}

The reward function defines which behaviour are good or bad for the agent. The reward function can either be sparse and distributes reward only for specific actions (and states) or dense and gives a reward for each action proportionally to the quality of this action.
Dense reward functions make the policy easier to learn but it might be expensive to design. Sparse functions can be cheap and easy to design but it might be very hard to learn the best policy from them. The good reward function is just sparse enough to be cheap to design and easy to use for learning algorithms.

 For example, a reward function could be one point when a robot put a basketball in the basket.
However, the policy that solves those problems might be very difficult to find. 
It might be really hard to get the first reward and exploit it.
 The model needs then to explore the environment to find a potential source of reward. The space to explore might be very large, in the example of basketball, it will be very hard for the algorithms to explore all the possibilities of launching the ball in the basket without some more hints beforehand.
The reward shaping approach is a way to tune the reward function to help the algorithm find a solution. In particular, it aims make reward more frequent during exploration to give more hints of actions' quality. %

\subsubsection{Classical benchmarks}

Reinforcement learning has known recent big success in games such as chess \cite{SilHub18General}, go \cite{Silver2016Mastering} or Dota 2 video games \cite{openai2019dota}. However, commonly used benchmarks are most of the time with simpler robotics settings as in Mujoco \cite{Todorov2012application} or simple video games as Atari \cite{mnih2013playing}.

\begin{figure}[ht]

\begin{subfigure}{0.45\textwidth}

        \includegraphics[scale=0.5]{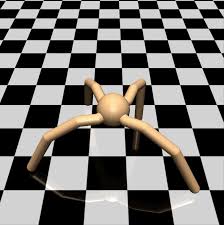}
\centering
    
    \end{subfigure}
    \begin{subfigure}{0.45\textwidth}
\centering
        \includegraphics[scale=0.3]{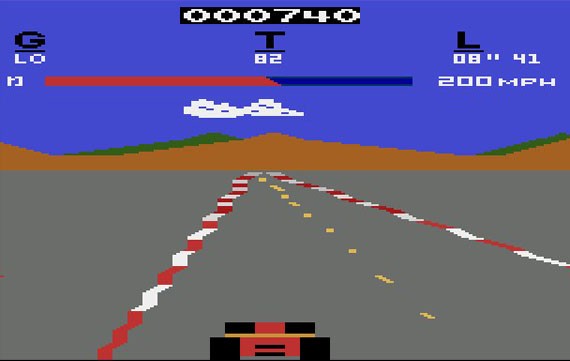}
    \end{subfigure}
    \caption[Illustration of famous RL benchmarks.]{Right: Illustration of the Mujoco environment with the ant task. The ant should walk and move as fast as possible. Left: Illustration of the Atari environment with one of the games. The car should go as fast as possible and stay in the circuit.}
    \label{fig:1b_ML:RL_Images}
\end{figure}

\bigskip

Reinforcement learning is a very appealing subject because it promises that from a simple reward function we can train an algorithm to execute a difficult task. It has many potential applications as robotics \cite{kober2013reinforcement}, autonomous vehicles \cite{sallab2017deep, talpaert2019Exploring} etc. However, in practice, the reinforcement learning algorithms are still very unstable and difficult to train. It remains a very interesting research topic, challenging our understanding of how animals and humans learn and questioning the level of supervision needed to learn a given task.

In this thesis, reinforcement learning paradigm will be applied in Chapter~\ref{chap:5_DiscoRL} in order to learn multiple policies with a robot.

\subsection{Unsupervised learning}
\label{sub:1b_ML:GM}

Unsupervised learning is a wide subject in machine learning. In this thesis we are specifically interested in Generative models, as they are used in continual learning for generative replay  (see Chapter \ref{chap:2_CL}).
They are particular types of neural networks designed to reproduce the input data distribution. We call data distribution, in this context, a theoretical probabilistic distribution that generates the dataset $\mathbb{D}$ and could generate any testing data. The goal is to learn to generate data from this distribution, similar but not identical to the data of the training set. For generative models, the generalization is therefore the capacity to generate novel data points. In this thesis, we focus on generative models for images. 

In recent years, generative models such as BigGAN \cite{brock2018large}, VQ-VAE \cite{van2017neural} or StyleGAN \cite{karras2019style} have shown incredible progress in generating high quality images.
In this section, we introduce two generative models frameworks that led to this progress: variational auto-encoders (VAE) \cite{kingma2013auto, Rezende2014Stochastic} and generative adversarial networks (GAN)  \cite{goodfellow2014generative}. We also introduce a tedious challenge of generative models: the evaluation of generated data.

\subsubsection{Variational Auto-Encoder (VAE)}
Auto-encoders are models that learn to reproduce their input data in their output layer. 
The variational auto-encoder (VAE) \cite{kingma2013auto, Rezende2014Stochastic} framework is a particular kind of auto-encoder (Illustration Fig~\ref{fig:1b_ML:VAE}).
 It is composed generally of an encoder, mapping the input into a latent space and a decoder which learn to regenerate the input image from the latent vector. Those models are useful to learn data compression: if the latent vector is in low dimension, then we can compress input data and decompress it later with the decoder.
The VAE learns to map data into a Gaussian latent space, generally chosen as a univariate normal distribution $\mathcal{N}(0,I)$ (where $I$ is the identity matrix).  The particularity of the latent space comes from the minimization of the Kullback-Leibler (KL) divergence %
 between the distribution of data in the latent space and a prior distribution $\mathcal{N}(0,I)$. The KL divergence \cite{kullback1951} is a measure of the difference between two probability distributions.
The decoder then learns  the inverse mapping from the univariate normal distribution $\mathcal{N}(0,I)$ to the observation space. 
However, since the latent space distribution is the univariate normal distribution, we can sample it without encoding data and generate novel data points.
This characteristic makes the VAE an interesting option for generating new data after training.

\begin{figure}[ht!]
    \centering
    \includegraphics[scale=0.5]{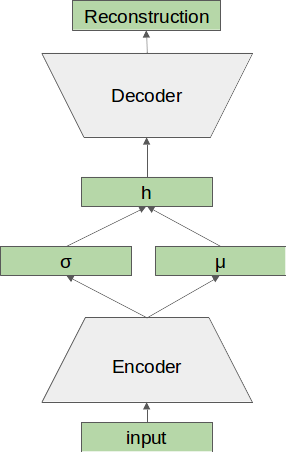}
    \caption[Illustration of the variational auto-encoder (VAE).]{Illustration of the variational auto-encoder (VAE). The encoder computes two vectors from the input data, a mean vector $\mu$ and a standard deviation vector $\sigma$. Then the vector $h$ is sampled from a Gaussian distribution $\mathcal{N}(\mu,\sigma)$ and the decoder output the reconstruction of the input image from $h$.}
    \label{fig:1b_ML:VAE}
\end{figure}

Then, to train the VAE and respecting the prior on the latent distribution, the loss function can be written:

\begin{equation}
\ell = \parallel x - \hat{x} \parallel^2 + D_{KL}( \mathcal{N}(\mu,\sigma), \mathcal{N}(0,I))
\label{eq:1b_ML:VAE}
\end{equation}
with $x$ the input data, $\hat{x}$ the output of the VAE, $\mu$ and $\sigma$ the two VAE latent vectors, $\mathcal{N}$ is a $H$ dimensional (size of the latent dimension) normal distribution parametrized by a mean and a standard deviation vector. %

The Kullback-Leibler between two probabilistic distributions $P$ and $Q$: %

\begin{equation}
D_{KL}(P \parallel Q) = \int_{-\infty}^{\infty} P(x)log\left(\frac{P(x)}{Q(x)}\right) dx
\label{eq:1b_ML:KLD}
\end{equation}

We can note that $D_{KL}(P \parallel Q) \neq D_{KL}(Q \parallel P)$. 

In the case of the VAE the Kullback-Leibler can be simplified to:

\begin{equation}
\forall i \in \llbracket 0, H-1 \rrbracket, ~ D_{KL}( \mathcal{N}(\mu_i,\sigma_i), \mathcal{N}(0,1)) = log\left(\frac{1}{\sigma_i}\right) + \sigma_i^2 + \mu_i^2 - \frac{1}{2}
\label{eq:1b_ML:KLD_simple}
\end{equation}
with $\mu_i$ and $\sigma_i$ the i-th dimension of respectively  $\mu$ and $\sigma$.

\subsubsection{Generative adversarial networks (GAN)}

\begin{figure}[ht!]
    \centering
    \includegraphics[scale=0.5]{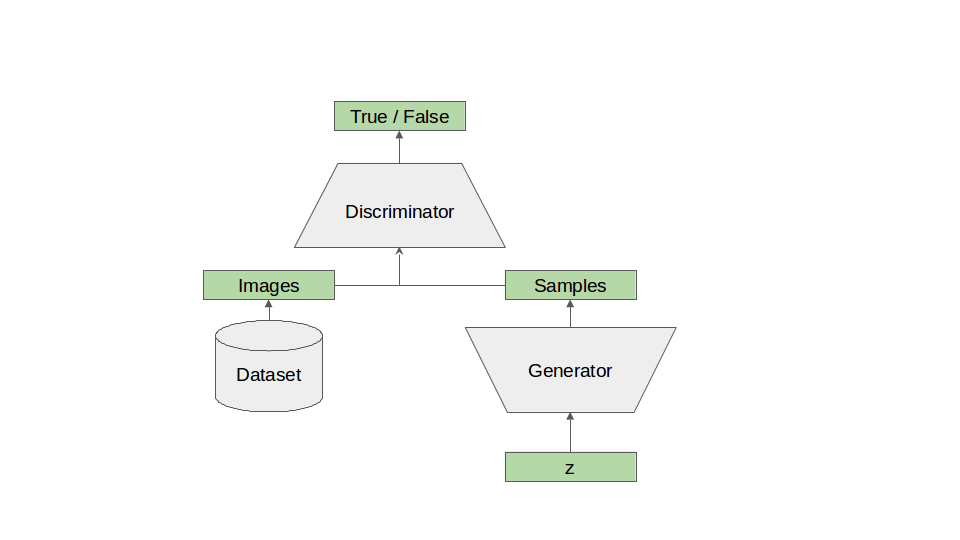}
    \caption[Illustration of the generative adversarial network (GAN).]{Illustration of the generative adversarial network (GAN). An input vector $z$ sampled from a random distribution is given to the generator. The generator output a generated sample. The discriminator receives a mix of true samples and generated samples and predicts if they are true samples or generated one.}
    \label{fig:1b_ML:GAN}
\end{figure}

Generative adversarial network \cite{goodfellow2014generative} is another framework of generative models (Illustration Fig~\ref{fig:1b_ML:GAN}). The learning process is a game between two networks: a generator $G$ learns to produce images from the data distribution $P$ and a discriminator $D$ learns to discriminate between generated and true images. The generator learns to fool the discriminator and the discriminator learns to not be fooled. 
Therefore both $D$ and $G$ have the same loss function $\ell_{GAN}(x, z) $, but $G$ try to minimize it while $D$ try to maximize it:

\begin{equation}
\ell_{GAN}(x, z) = \mathbb{E}_x[log(D(x))] + \mathbb{E}_z[log(1-D(G(z)))]
\label{eq:1b_ML:GAN}
\end{equation}

In this function, $D(x)$ is the discriminator's estimate of the probability that real data instance x is real, $\mathbb{E}_x$ is the expected value over all real data instances, $G(z)$ is the generator's output when given noise $z$, $D(G(z))$ is the discriminator's estimate of the probability that a fake instance is real, $\mathbb{E}_z$ is the expected value over all random inputs to the generator\footnote{Equations and legends are taken from \url{https://developers.google.com/machine-learning/gan/loss}}.

This class of generative models can produce visually realistic samples from diverse datasets but they suffer from instabilities in their training.

First introduced in \citep{goodfellow2014generative}, many follow-up work have extended and improved upon the original model. Among these, GANs have been extended to Conditional-GANs (or CGANs) to support class-conditional generation \citep{mirza2014conditional}, and  numerous of papers \citep{arjovsky2017wasserstein, berthelot2017began,nowozin2016f,gulrajani2017improved} have focused on modifying the objective function \ref{eq:1b_ML:GAN} to stabilize training and improve the generation quality. One of the model we evaluate, the Wasserstein GAN (WGAN) \citep{arjovsky2017wasserstein}, try to address training issues by enforcing a Lipschitz constraint on the discriminator, i.e. they clip the discriminator's gradient to make training more stable.

\bigskip
There exist  many variations of those two frameworks aiming at improving data generation and stability of model with improved losses and architectures. %

\subsubsection{Evaluation methods}

A remaining problem of generative models is the evaluation of the generated samples
because generative models produce images, and it is tedious to have a formal definition of a good image. Furthermore, the expectations on the generated images might be different from one application to another. For example, one might expect images that maximize their reality likelihood while others might expect to maximize their variability.

We present here a partial list of evaluation methods for image generation:

\begin{itemize}

\item{\textbf{Visual Turing Test}} The visual Turing test \cite{Geman3618} is performed by asking humans if images look real or not to assess the generative model quality.

\item{\textbf{Multi-scale structural similarity}}
Multi-scale structural similarity \cite{Wang03} (MS-SIM) is a measurement that gives a way to incorporate image details at different resolutions in order to compare two images. This similarity is generally used in the context of image compression to compare images before and after compression. However, it can be used to estimate the variability of features in generated images \cite{odena2017conditional}. 

\item{\textbf{Inception score}}
One of the most used approaches to evaluate a generative model is Inception Score (IS) \cite{Salimans16,odena2017conditional}. The authors use an inception classifier model pre-trained on ImageNet dataset to evaluate the sample distribution.
They compute the conditional classes distribution $P(Y | X=x)$ at each generated sample $x$ and the general classes distribution $P(Y)$ over the generated dataset.

They proposed the following score:
\begin{equation}
IS(X)=\exp(\mathbb{E}_X [D_{KL} (P(Y | X) \parallel P(Y))]
\label{eq:1b_ML:inception_score}
\end{equation}
where $D_{KL}$ is the Kullback-Leibler divergence. The KL term can be rewritten :

\begin{equation}
  D_{KL} (P(Y | X) \parallel P(Y)) = H(P(Y | X), P(Y)) - H(P(Y|X))
\label{eq:1b_ML:inception_score2}
\end{equation}
where $H(P(Y|X))$ is the entropy of $P(Y|X)$ and $H(P(Y | X), P(Y))$ the cross-entropy between $P(Y | X)$ and $P(Y)$. 

The inception score measures if the inception model predictions gives high confidence in varied classes for the generated data. This relies on the hypothesis that if prediction confidence is high, the input image is good.

\item{\textbf{Frechet Inception Distance}}

Another approach to evaluate generative adversarial networks is the Frechet Inception Distance (FID)~\cite{heusel2017gans}. The FID, as the inception score, is based on features low moment analysis. It compares the mean and the covariance of activations between real data ($\mu$ and $C$) and generated data ($\mu_{gen}$, $C_{gen}$). The activation is taken from an inner layer in a pre-trained inception model. The comparison is done using the Frechet distance (see Eq. \ref{eq:frechet_inception_distance}). The inception model is trained on Imagenet.

\begin{equation}
d^2((\mu,C),(\mu_{gen},C_{gen})) =\parallel \mu - \mu_{gen} \parallel_2^2
+ Tr(C+C_{gen} -  2(C*C_{gen})^{\frac{1}{2}})
\label{eq:frechet_inception_distance}
\end{equation}

FID measures the similarities between the distribution of the generated feature and the distribution of real features. It assumes a Gaussian distribution of features over the dataset.

\item{\textbf{Fitting Capacity}}
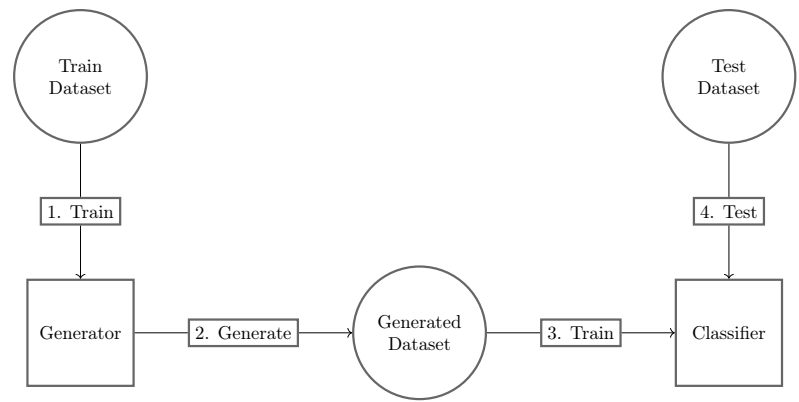
\begin{figure}[h]
\begin{center}
\resizebox{0.7\textwidth}{!}{
\begin{tikzpicture}[scale=0.50,
roundnode/.style={circle, draw=black!60, fill=green!0, very thick, minimum size=25mm},
squarednode/.style={rectangle, draw=black!60, fill=red!0, very thick, minimum size=20mm},
squarednode2/.style={rectangle, draw=black!60, fill=red!0, very thick, minimum size=5mm},
]

\node[squarednode]      (generator)                              {Generator};
\node[squarednode2]      (trainOn)       [above=of generator] {1. Train};
\node[roundnode]        (train)       [above=of trainOn, align=center] {Train \\ Dataset};

\node[squarednode2]        (generate)       [right=of generator] {2. Generate}; 
\node[roundnode]        (generated)       [right=of generate, align=center] {Generated \\ Dataset}; 

\node[squarednode2]      (trainOn2)       [right=of generated] {3. Train};

\node[squarednode]      (classifier)       [right=of trainOn2] {Classifier};
\node[squarednode2]      (testOn)       [above=of classifier] {4. Test};
\node[roundnode]        (test)       [above=of testOn, align=center] {Test \\ Dataset};
 
\draw[-] (train.south) -- (trainOn.north);
\draw[->] (trainOn.south) -- (generator.north);
\draw[-] (generator.east) -- (generate.west);
\draw[->] (generate.east) -- (generated.west);
\draw[-] (generated.east) -- (trainOn2.west);
\draw[->] (trainOn2.east) -- (classifier.west);
\draw[-] (test.south) -- (testOn.north);
\draw[->] (testOn.south) -- (classifier.north);

\end{tikzpicture}

}
\label{fig:shema_methode}
\caption[Illustration of Fitting Capacity method (FiC).]{Illustration of Fitting Capacity method (FiC): 1. Train a generator on real training data, 2. Generate labeled data, 3. Train classifier with the generated data, 4. Evaluate the generator by testing the classifier on the test set composed of real data}
\label{fig:1b_ML_shema_FiC}
\end{center}
\end{figure}

The  \textit{fitting capacity} (FiC) approach is to use labeled generated samples from a generator $G$ (GAN or VAE) to train a classifier and evaluate this classifier afterward on real data \citep{lesort2018training}.  It is  illustrated in figure \ref{fig:1b_ML_shema_FiC}. This estimation of the generative model quality is one of the contributions of this thesis \cite{lesort2018training}. It is presented more in depth in Chapter \ref{chap:3_CL_GM}.

The fitting capacity of $G$ is the test accuracy of a classifier trained with $G$'s samples. It measures the generator's ability to train a classifier that generalizes well on a testing set, i.e the generator's ability to fit the distribution of the testing set. This method aims at evaluating generative models on complex characteristics of data and not only on their features distribution.

\end{itemize}

There exist a lot of different evaluation for generative models as listed and discussed in \cite{borji2018pros}. Anyhow, the best evaluation for a generative model is generally dependent on the future use of the generated data.

\bigskip

In this thesis, the Chapter \ref{chap:3_CL_GM} deals with continual learning for data generation and the \ref{chap:4_CL_GR} take advantage of generative models for continual classification.

\subsection{Classical Benchmarks}
\label{sub:1b_ML:Benchmarks}

We now present some classical machine learning benchmarks (MNIST, Cifar10, ImageNet) and some other benchmarks we will use in the experimental work of this thesis (Fashion MNIST and KMNIST).

\subsubsection{MNIST and MNIST-like datasets}

\begin{figure}[h]
    \centering
    \begin{subfigure}[t]{0.3\linewidth}
        \centering
        \includegraphics[width=0.9\linewidth]{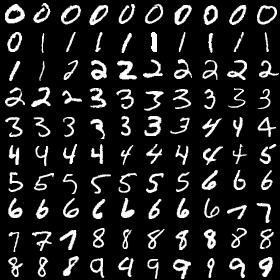}
        \caption{MNIST}
    \end{subfigure}
    \begin{subfigure}[t]{0.3\linewidth}
        \centering
        \includegraphics[width=0.9\linewidth]{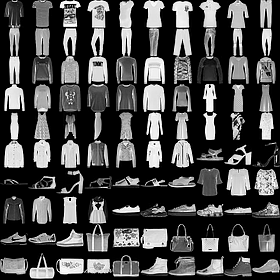}
        \caption{Fashion-MNIST}
    \end{subfigure}
    \begin{subfigure}[t]{0.3\linewidth}
        \centering
        \includegraphics[width=0.9\linewidth]{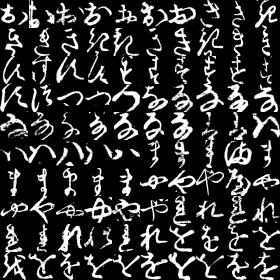}
        \caption{KMNIST}
    \end{subfigure}
    \caption[Samples from MNIST-like datasets.]{Samples from MNIST-like datasets.}
    \label{fig:1b_ML:Mnist_datasets}
\end{figure}

\begin{itemize}

\item \textbf{MNIST}~  \cite{LeCun10} is a common benchmark for computer vision systems and classification problems (Fig.~\ref{fig:1b_ML:Mnist_datasets}). It consists of gray scale 28x28 images of handwritten digits (ten balanced classes representing the digits 0-9). The train, test and validation sets contain 55.000, 10.000 and 5.000 samples, respectively.

\item \textbf{Fashion MNIST}~  \cite{Xiao2017} consists of grayscale 28x28 images of clothes (Fig.~\ref{fig:1b_ML:Mnist_datasets}).   We choose this dataset because it claims to be a \say{more challenging classification task than the simple MNIST digits data  \cite{Xiao2017}} while having the same data dimensions, number of classes, balancing properties and number of samples in train, test and validation sets. 

\item \textbf{KMNIST}~  \cite{Xiao2017} consists of grayscale 28x28 images of Kuzushiji (japanese cursive) (Fig.~\ref{fig:1b_ML:Mnist_datasets}).   As for Fashion-MNIST, we use this dataset because it is a drop in replacement of MNIST dataset and it is a more challenging classification task than the simple MNIST digits data  and adds some diversity in the training data.

\end{itemize}

\subsubsection{Cifar10 / Cifar100}

Cifar10 \cite{Krizhevsky09}  dataset consists of 60000 32x32 colour images in 10 classes, with 6000 images per class. There are 50000 training images and 10000 test images. The classes are completely mutually exclusive. This dataset have been used a lot to design and prototype classification and data generation machine learning algorithms. CIFAR100 is the same dataset with 90 more classes.

\begin{figure}[ht]
    \centering
     \includegraphics[width=0.45\linewidth]{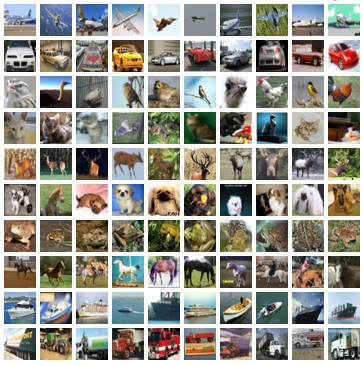}
    \caption[Samples from Cifar10 dataset.]{Samples from Cifar10 dataset.}
    \label{fig:1b_ML:Cifa10_datasets}
\end{figure}

\subsubsection{ImageNet}

ImageNet \cite{krizhevsky12} is a dataset associated with the ILSVRC challenge (ImageNet Large Scale Visual Recognition Challenge). The dataset used in this context is composed of one thousand non-overlapping classes. The images are in color and are from different shapes and resolution but they are often normalized at 224*224 pixels. This dataset is one that leads to the revolution of machine learning for classification. It is also used nowadays for data generation purposes.

\begin{figure}[ht]
    \centering
     \includegraphics[width=\linewidth]{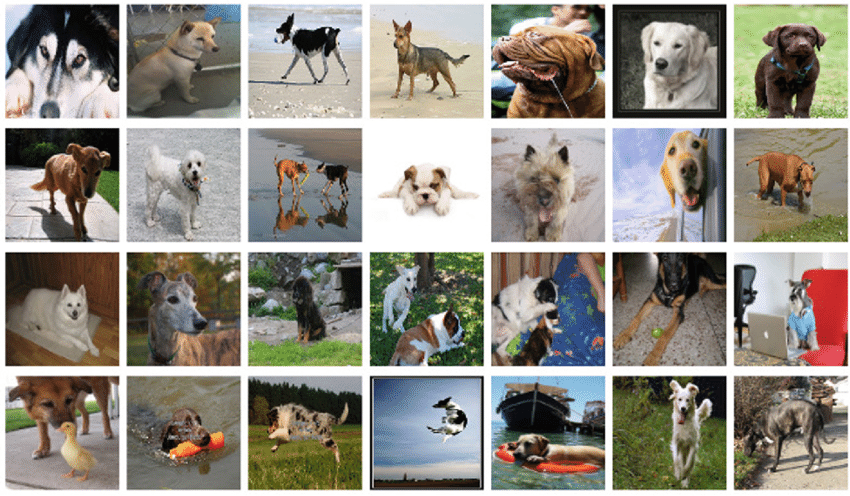}
    \caption[Dog samples from ImageNet dataset.]{Exemples of Imagenet samples with dog pictures.}
    \label{fig:1b_ML:ImageNET_datasets}
\end{figure}

\bigskip

The benchmark listed above are linked to supervised classification, but they have been adapted for benchmarking other type of machine learning such as data generation, few shot-learning or disentanglement.
In this thesis we adapted these classical benchmarks into continual learning benchmarks. Even if some of them can looks almost trivial for nowadays machine learning algorithms they can still be challenging in continual learning settings \cite{pfulb2019a}. In Chapter \ref{chap:2_CL}, we will present the different continual learning benchmarks.

\section{Learning procedure}
\label{sec:1b_ML:Pipeline}

Besides the training optimization process, the success of deep learning algorithms relies significantly on data handling. 
In this section, we present the classical full pipeline necessary to train a neural network.

\subsection{Data Gathering}
\label{sub:1b_ML:Gathering}
First of all, to train a DNN, data needs to be gathered to create a dataset. In classification, the image of the different categories needs to be selected and their label set and verified. Building a good quality dataset is difficult, categories need to be balanced, images should be varied and a trivial solution should not bias the problem if we want to train correctly a neural network. 
For example, one should avoid that to classify cows from birds, it might be possible to only check if the image is mostly blue (because of sky) or green (because of grass) to solve the classification, instead of looking at intrinsic characteristics from cows and birds.
The trivial solutions come from bias into the training set making the learning model believe false causal relationship. A well-designed dataset aims at avoiding misleading correlations in the data.

\subsection{Data Splitting}
\label{sub:1b_ML:Splitting}

Before training the neural network, the dataset is split into three different sets. The training set, used to learn parameters, the validation set, used to select the hyper-parameters and the test set that will be used to evaluate the final performance and verify if the model trained can generalized on new data. It is important to split those set wisely because they should be similar but different enough to be able to measure generalization.

\subsection{Model Architecture}
\label{sub:1b_ML:Archi}

Once the dataset is built, we choose a model architecture. The model architecture consists of the stack of layers, the characteristics of each layer and how they connected to each other.
 The architecture defines implicitly the family of functions we are searching our solution $\theta^*$ in. Architectures are often parametrized by hyper-parameters that can change the model architecture to better fit a learning problem, e.g. the number of layers, the number of convolutional filters by layer, the size of the filters or the padding...

\subsection{Model Initialization}
\label{sub:1b_ML:Init}

Once the architecture is chosen, the model and optimization process should be initialized, i.e. all parameters and hyper-parameters should be set.
The architecture hyper-parameter (HP) described in the previous Subsection \ref{sub:1b_ML:Archi} are generally chosen empirically based on existing algorithms. The optimizer hyper-parameters such as the learning rate or the ones specific to certain optimizers are also chosen empirically. In the frameworks described in Section \ref{sub:1b_ML:Framework}, the optimizer functions are proposed with default values that can be used directly.

Once the hyper-parameters are fixed, we initialize the parameters. In the literature, there are several heuristics to initialize parameters, generally following a random policy.
The most famous one are \textit{Xavier initialization} \cite{glorot2010understanding}, or \textit{Kaiming initialization} \cite{he2015delving}, or a simple normal initialization $\mathbb{N}(0,1)$. %

For example, in the \textit{Xavier} initialization, layers are initialized such as any weight (parameter) $w$ is sampled according to:
\begin{equation}
w \sim \mathcal{U}(-\frac{\sqrt{6}}{\sqrt{n_i + n_{i+1}}},+\frac{\sqrt{6}}{\sqrt{n_i + n_{i+1}}})
\end{equation}
with $\mathcal{U}$ the uniform probabilistic distribution, $n_i$ the incoming network connections and $n_{i+1}$ the out coming network connections from that layer. The bias are initialized to zero.

\subsection{Data Preprocessing}
\label{sub:1b_ML:Prepro}

In order to maximize the learning process chance of converging to a satisfying solution, one could choose to apply specific transformations to the data.
First, the data can be modified to be easier to process by the model. For example by applying an heuristic transformation to improve the saliency of crucial components of data. For example, change the color spectre to remove useless colors. In deep learning, a known heuristic for data preprocessing is to normalize value such as the mean is zero and the standard deviation 1. Another common one is the normalization of input data such that all data have the same mean and standard deviation channel by channel. For example, the default values in pytorch to normalize input for model trained on ImageNet are, respectively for channel RGB $mean=[0.485, 0.456, 0.406]$, $std=[0.229, 0.224, 0.225]$. 
 
To improve the model's generalization performance (see Section \ref{sec:1b_ML:SGD}), we can perform \textit{data augmentation}, i.e. add a random modification to training images. 
Those modifications should change the image without changing what it represents, i.e. without changing essential features. Thus, we expect from the model to learn only important features and ignore the rest.
As discussed in Section \ref{sub:1b_ML:Gathering}, the construction of a dataset can be difficult and produce some bias. The data augmentation can help to correct them artificially.
For example, to detect a car in an image, we expect the model to not give importance to the color (cars can be in any color). If the car's color in the dataset is only white, it will create a bias in the model making it believe that cars are always white. 
To correct this bias we can then artificially change the color of cars.    

\subsection{Parameters Optimization}
\label{sub:1b_ML:Optim}

The model can now be trained by gradient descent and learn the best set of parameters $\theta^*$ as described in Section \ref{sec:1b_ML:SGD}.
$\theta^*$ should optimize the loss on training data but its real objective is to optimize the loss on unknown data. As described in Section \ref{sub:1b_ML:Generalization}, the real application goal is to train a model able to generalize.

\subsection{Hyper-Parameters Optimization}
\label{sub:1b_ML:HP_Optim}

Frequently, the hyper-parameters set at the beginning do not allows to learn the most satisfying solution.
The validation set help then to select the model that has learned and generalizes best.
Indeed, the validation data has not been used for training, the error computed on this set is not biased by the optimization process and can be used to select the best hyper-parameters.
The hyper-parameters selection is a tough problem, contrary to parameters it is not possible to compute the gradient of the hyper-parameters, their selection should then follow empirical heuristics.
Among the possible approaches, grid search consists in searching all the combination of hyper-parameters for specific regularly separated values, but it is also advised to rather search them randomly \cite{bergstra2012random}.

There is a research field aiming at finding automatically hyper-parameters, called auto-machine learning \cite{NIPS2015_5872,elsken2018neural}. The final goal is still to maximize the generalization capacity on never seen data as presented in Section \ref{sub:1b_ML:Generalization}, but these approaches are computationally extremely intensive and are rarely used in practice.

\section{Towards Continual Learning}
\label{sec:1b_ml:toward_CL}

The deep learning pipeline, as presented in the previous section, has shown impressive results in numerous applications such as classification, detection, generation, language processing... However, it imposes to have all the data at the beginning of the learning process. In particular, it assumes  i.i.d. data during training. This assumption is unfortunately difficult to maintain in many situations, for example when data are gathered online. In this section we introduce context in which the pipeline presented in Section \ref{sec:1b_ML:Pipeline} can not be applied rigorously. Then, we will present briefly the implications in the optimization process.

\subsection{Context}
\label{sub:1b_ML:context}

The constraint of collecting all the data before training a neural network is inconvenient in many cases. De facto, a dataset is very likely to be completed with new data, either to increase the variability of existing concepts in data or to add new concepts. In the classification perspective, either to add data to existing classes or to add new classes to the existing ones.

A trivial solution is to train a neural network from scratch with all data every time new data are available.
Nevertheless, training large neural networks can take weeks to months to be trained, then retraining every time can be costly and time-consuming. Especially if new data are regularly available as in a stream of data.
Being able to improve a trained network with new data only would be then quite advantageous.
Another situation, where training from scratch is not possible, is when the data is not available anymore. For example, if a client buys a pre-trained model but has no access to the initial training data. If this client wants to improve the model with its own data, he can not retrain it from scratch since the initial data are not available. It can also happen if the data have not been saved for legal reasons or memory limitations.
From a more general point of view, it would be more convenient to learn from the data available and improve later with new resources. In order to handle such settings, representations should be learned in an online manner \cite{Li17learning}. As data gets discarded and has a limited lifetime, the ability to forget what is not important and retain what matters for the future are the main issues that algorithms should targets and focuses on.

The field of deep learning aiming at solving this problem of data availability is called \textit{Continual Learning}.
In particular, it aims at finding working solutions for agents which learn from an evolving environment and that need to learn continually to adapt to unseen situations and remember already learned solutions to known situations. For examples robots.

Indeed, from a robotics point of view, CL is the machine learning answer to developmental robotics \cite{Lungarella03Developmental}. 
Developmental robotics is the interdisciplinary approach to the autonomous design of behavioral and cognitive capabilities in artificial agents that directly draws inspiration from developmental principles and mechanisms observed in children's natural cognitive systems \cite{Cangelosi18,Lungarella03Developmental}
In this context, CL must consist of a process that learns cumulative skills and that can progressively improve the complexity and the diversity of tasks handled.
Autonomous agents in such settings learn in an open-ended \cite{Doncieux18} manner, but also in a continual way.  Besides CL, crucial components of such developmental approach consist of learning the ability to autonomously generate goals and explore the environment, exploiting intrinsic motivation \cite{Oudeyer07} and computational models of curiosity \cite{Oudeyer18}.

\subsection{Continual Learning procedure}
\label{sub:1b_ML:optim}

We have seen that the classical pipeline for deep learning have limitations in continual learning, in particular, because all the data might not be available at the same time. The pipeline from Section \ref{sec:1b_ML:Pipeline} should therefore be adapted so that data gathering and learning can be simultaneous or interleaved. 

\subsubsection{Optimization}
Concerning the optimization process to train neural networks, the fact that all data are not available at the same time removes the i.i.d. assumption. 
Many deep learning algorithms are then not adapted anymore for such situations.
For most of them, in a continual learning situation, the neural network will automatically adapt to the last data only and forget everything learned on the previous one. This phenomenon is named \say{catastrophic forgetting} \cite{French99}.

\subsubsection{Data gathering}
In fact, gathering data continually is not the essential problem, as theoretically, it is still possible that gathered data are drawn i.i.d. The real problem is the evolution of the underlying data distribution. If this data distribution changes, then \say{catastrophic forgetting} might happen. This phenomenon is called \textit{concept drift} \cite{gepperth2016incremental}. 
In order to find suitable algorithms to avoid catastrophic forgetting, the concept drift needs to be detected and estimated. This process is tedious because the concept drift can be abrupt or very progressive, making it difficult to grasp. 

\subsubsection{Evaluation}
Furthermore, the evaluation process when the data distribution is not static is complicated. As seen in Section \ref{sub:1b_ML:Gathering}, building a dataset to learn should be achieved carefully, there should be a proper balance between classes, a good variability in the data to learn. In the case of continual learning, the evaluation set should be constructed while learning, which might be difficult. Anyhow, it remains essential to have an evaluation set to avoid overfitting.

\subsubsection{Hyper-Parameters Optimization}
Another tedious process in continual learning is the hyper-parameters optimization. In classical deep learning, the hyper-parameters are tuned to make the algorithms learn better its full task. However, for a convenient set of hyper-parameter at a time $t$ it is possible that at time $t+T$ this set is not good anymore, the hyper-parameter should then be modifiable during the training. Moreover, a HP set can be good to learn but not to remember, so at time $t+T$ if the algorithms forget everything it can not come back at time $t$ to re-learn. The HP selection should both be optimized to present data and withstand to potential future concept drifts.

\bigskip

We will not propose a new pipeline for continual learning because many continual learning cases are different and should not be handled in the same way.
Nevertheless, the spirit of the pipeline from Section \ref{sec:1b_ML:Pipeline} should be maintained. The idea of gathering data, preprocessing them, designing a proper model and train it should be kept but more dynamically than in classical deep learning settings. The less impacted step of the original pipeline is \textit{Model Initialization} since it should be achieved only once at the training's beginning.

\section{Conclusion}
\label{sec:1b_ML:ccl}

\checked{In this chapter, we presented the simplest method to train deep neural networks: Stochastic Gradient Descent (SGD). The training is achieved by optimizing the model parameter on a dataset. The training goal is to make a neural network able to make decisions on never seen data, i.e. able to generalized its training data. We described the complete pipeline of deep neural network classical training, from data gathering to hyper-parameters selection and we point out the limitation of this pipeline for real-life applications. Finally, we introduced \textit{continual learning}, a research field that aims at overcoming those limitations.}

In the next chapter, we present a more in-depth overview of continual learning state of the art, key vocabulary and objectives. We also present a framework to frame any continual learning approach, with a set of benchmarks and evaluation metrics.

\newpage
\chapter{Continual Learning}
\label{chap:2_CL}

In the previous chapter, we introduced classical deep learning basic concepts and pipeline and showed their lack of adaptability in practical situations. We also introduced how continual learning aims at solving those shortcomings. %
In this chapter, we present the continual learning research field more extensively. We illustrate the need for continual learning through the lens of robotics. In particular, we stress the need for better practice in research for better transfer from a field to another, as from simulation to robotics. %

This work was the fruit of a collaboration with Vincenzo Lomonaco and Natalia D\'iaz-Rodr\'iguez. It was published in the \say{Information Fusion} Journal \cite{LESORT2019Continual}. The original article has been slightly modified to better fit the thesis thread and updated to add some recent papers.

\section{Introduction}
\label{sec:2_CL:Intro}

\checked{As described in Section \ref{sec:1b_ml:toward_CL},}
machine learning (ML) approaches generally learn from a stream of data randomly sampled from a stationary data distribution. This is often a \textit{sine qua non} condition to learn efficiently, which makes many application scenarios difficult to solve. %

For convenience, we can empirically split the data stream into several  temporally bounded parts called \textit{tasks}.
We can then observe what we learn or forget when learning a new task.
Even if there is no mandatory constraint on a task, a task often refers to a particular period of time within which the data distribution may (but not necessarily) be stationary, and the objective function constant. Tasks can be disjoint or related to each other, in terms of learning objectives, depending on the setting.

We now propose a framework for continual learning. This framework also sets the opportunities for continual learning to have a description frame to present approaches in a clear and systematic way. %
We can summarize the chapter's contributions as following:

\begin{itemize}
\item \checked{We propose an in depth state of the art of continual learning approaches.}
\item \checked{We present a framework to help characterizing continual learning approaches and benchmarks.}
\item \checked{We gather a list of benchmarks and evaluation metrics for continual learning.}
\item \checked{We develop the example of robotics as an application field of continual learning research.}
\end{itemize}

In the sequel, we first present the context and the history of continual learning. Second, we aim at disentangling vocabulary around continual learning to have a clear basis. Third, we introduce our framework as a standard way of presenting CL approaches to help transfer between different fields of continual learning, especially to robotics. 
Fourthly, we present a set of metrics that will help to better understand the quality and shortcomings of every family of approaches.
Finally, we present the specifics and opportunities of continual learning in robotics that make CL so crucial.

\section{Definition of Continual Learning}
\label{sec:2_CL:Def}

Given a potentially unlimited stream of data, a Continual Learning algorithm should learn from a sequence of partial experiences where all data is not available at once.
A non-continual learning setting would then be when the algorithm can have access to all data at once and can process it as desired.
Continual learning algorithms may have to deal with imbalanced or scarce data problems \cite{Sprechmann18}, catastrophic forgetting \cite{French99}, or data distribution shifts \cite{gepperth2016incremental}.

We consider continual learning a synonym of \textit{Incremental Learning} \cite{gepperth2016incremental, rebuffi2017icarl}, \textit{Lifelong Learning} \cite{Chen2018Lifelong, Thrun95} and \textit{Never Ending Learning} \cite{Carlson10, Mitchell15}. 
For the sake of simplicity, for the remainder of the chapter we refer to all Continuous, Incremental and Lifelong learning synonyms as Continual Learning (CL). \checked{However, in the discussion of the thesis (Chapter \ref{chap:6_disc}) we show that we can use them to distinguish different continual learning scenarios.}

In this section we first present the history and motivation of continual learning, then we present several definitions of terms related to CL and, finally, we present challenges addressed by CL in machine learning.

\subsection{History and Motivation}
\label{sub:2_CL:History}

The concept of learning continually from experience has always been present in artificial intelligence and robotics since their birth \cite{turing09, weng01}. However, it is only at the end of the $20^{th}$ century that it has begun to be explored more systematically. Within the machine learning community, the lifelong learning paradigm has been popularized around 1995 by \cite{Thrun95} and \cite{ring94}.%

Between the end of the 90s and the first decade of the $21^{st}$ century, sporadic attention has been devoted to the topic within the supervised, unsupervised and reinforcement learning domains. However, despite the first pioneering attempts and early speculations, research in this area has never been carried out extensively until the recent years \cite{Parisi18review, Chen2018Lifelong}. We argue that this is because there were more complex and fundamental problems to solve and a number of additional constraints:

\begin{itemize}
\item \emph{Lack of systemic approaches}: Machine learning research for the past 20 years has focused on statistical and algorithmic approaches on simple tasks (e.g., tasks where the distribution of data is assumed static). CL typically needs a systems approach that combines multiple components and learning algorithms in complex and dynamic tasks. The complexity of tasks and their multiple uses in continual learning greatly complicates training and evaluation procedures. Disentangling \emph{``static''} learning performance from continual learning side effects is important for the very incremental nature of the research and to facilitate comparison between approaches in this area.

\item \emph{Limited amount of data and computational power}: Digital data is a luxury of the $21^{st}$ century. Before the big data revolution, collecting and processing data was a daunting task. Moreover, the limited amount of computational power available at the time did not allow complex and expensive algorithmic solutions to run effectively, especially in a continual learning setting which undoubtedly makes learning more complex by having to deal with multiple tasks at the same time, as well as having to incorporate the concept of time into the learning process. 
\item \emph{Manually engineered features and ad-hoc solutions}: Before early 2000s and first works on representation learning, creating a machine learning system meant to handcraft features and finding ad-hoc solutions, which may differ significantly depending on the task or domain. Having a general algorithm with a more systematic approach seemed for a long time a very distant goal.
Manually engineered features is also a clear limitation to achieve autonomy, as new tasks need to have the same features or re-engineered ones.
\item \emph{Focus on supervised learning}: creating labelled data is probably the slowest and the most expensive step in most ML systems. This is why learning continuously has been for a long time not a viable and practical option.
\end{itemize}

The relaxation of these constraints, thanks to recent advancements and results in machine learning research, as well as the rapid technological progress witnessed in the last 20 years, have open the door for starting tackling more complex problems such as learning continually. %

We argue that the robotics community, which has always been intrigued by endowing embodied machines with lifelong and open-ended learning \cite{Doncieux18} of new skills and new knowledge, would highly benefit from the recent advances of ML in this area. Robotics applications in unconstrained environments, indeed, have always raised questions out of reach for previous machine learning techniques. On the other hand, CL developed in the context of robotics is involved in understanding the %
role and the impact of the concept of \say{embodiment} in intelligent machines that learn and think like humans. 

Learning, embodiment, and reasoning are presented as the three great families of challenges for robotics in \cite{Sunderhauf18}. We postulate that CL tackles the learning problem, taking into account the importance and constraints of embodiment. At best, CL would also benefit from reasoning in order to maximize the learning process. Thus, continual learning lies in the intersection of crucial robotics challenges.

Though lifelong learning approaches do exist in various ML disciplines (such as evolutionary algorithms for example \cite{Bellas09, Bellas10,Bredeche18,Bellas10cognitive}), we will focus, in the rest of this thesis, on recent continual learning developments in the context of gradient-based neural network and deep learning approaches.
For a more detailed description of many other classic approaches to continual learning with shallow architectures we refer the reader to \cite{Chen2018Lifelong}.

\section{Key vocabulary}
\label{sec:2_CL:vocabulary}

\begin{Definition}
\textbf{Learning objective} 
\label{def:2_CL:learning_objective}
\checked{The learning objective is composed of a data set and a loss function, that has to be optimized. The learning objective change if either the loss or the data change.}
\end{Definition}

\begin{Definition}
\textbf{Task} 
\label{def:2_CL:task}
A task is a learning experience characterized by a unique task label $t$ and its target function $g_{\hat{t}}^*(x) \equiv h^*(x,t=\hat{t})$, i.e., the objective of its learning.
\end{Definition}

\begin{Definition}
\textbf{Task label} 
\label{def:2_CL:task_label}
The task label is a variable that define tasks boundaries. It might be available or not depending on the learning scenario.
\end{Definition}

\begin{Definition}
\textbf{Continuum} 
\label{def:2_CL:continuum}
The continuum is the full learning experience. It is composed by a sequence of tasks.
\end{Definition}

\begin{Definition}
\textbf{Data distribution} 
\label{def:2_CL:data_distribution}
The data distribution, is a theoretical statistical distribution that generate the data. This distribution can be constant or may variate through time.

\end{Definition}

\begin{Definition}
\textbf{Data stream} 
\label{def:2_CL:data_stream}
The data stream is the flux of samples generated by the data distribution. A task is a set of the data stream, the continuum is the full data stream.
\end{Definition}

\begin{Definition}
\textbf{Forgetting} 
\label{def:2_CL:forgetting}
A neural network forget when its performance on a data distribution is decreased by learning on another one. 
\end{Definition}

\begin{Definition}
\textbf{Interferences} 
\label{def:2_CL:Interferences}
\checked{In machine learning, interferences are conflicts between two (or more) objective functions leading to prediction errors. }
\end{Definition}

\begin{Definition}
\textbf{Concept drift} 
\label{def:2_CL:concept_drift}
\checked{The concept drift characterizes the learning objective variations, i.e. when the learning criterion or the data distribution changes.
 When there is no concept drift the learning objective is constant.
 Concept drift may lead to forgetting in neural networks models.}
\end{Definition}

\subsection{Terminology Clarification}

In this section we aim at clarifying the distinction and similarities of continual learning with related topics and terms used in the literature.

\textbf{Online learning:} 
Online learning is a special case of CL \cite{Kaeding16} where updates are done on per single data point basis and therefore, the batch size is one. 
Online learning algorithms are suited to scenarios where information should be processed instantly, either to adapt the model to learn as fast as possible or because data cannot be saved.

\textbf{Few-shot Learning:}  
Few shot learning \cite{Lake11,Fei-Fei06} is the ability to learn to recognize new concepts based on only few samples of them. 
It may be used for continual learning problems when the number of data points is very low.
The extreme case of zero-shot learning %
consists of the ability to detect new %
classes %
while being trained with a disjoint set of classes \cite{Wang19}. %

\textbf{Curriculum Learning: }
Curriculum learning \cite{Bengio09curriculum} is a training process that proposes a sequence of  more and more difficult tasks to a learning algorithm in order to make it able to learn, at last, a generally harder task. 
The sequence of tasks is designed in order to be able to learn the last one.
Both CL and curriculum learning learn on a sequence of tasks (or partial experience). However, in curriculum learning, tasks are chosen in a way that makes possible to learn tasks of different complexity, by taking into account the difficulty of them, while in CL, tasks are not voluntarily chosen nor ordered. 
Furthermore, while the interest of curriculum learning ultimately lies %
into solving the last task, the continual learning objective is to be able to solve all tasks.

\textbf{Meta-learning: }
Meta-learning \cite{Brazdi2008Metalearning} is a learning process that uses meta-data about past experiences, such as hyper-parameters, in order to improve its capacity to learn on new experiences.
It also learns several different tasks; however, its goal is not to learn without forgetting but to progressively improve the learning efficiency while learning on more and more tasks.
It is also called \say{learning to learn}, and it can be used or not in a continual learning setting. 

\textbf{Transfer learning: }
Transfer learning \cite{Pratt93,Finn17, Zhao17} is the ability to use what has been learned from a previous task on a new task. The difference with continual learning is that transfer learning is not concerned about keeping the ability to solve previous tasks. In computer vision, transferring what has been learned from a past environment to new environments would be often referred to as \textit{domain adaptation} \cite{Patel15,Csurka17}.

\textbf{Active Learning: } 
Active learning is a special case of semi-supervised machine learning in which a learning algorithm is able to interactively query the user (or some other information source) to obtain the desired output labels for new data points \cite{Settles09, Burr10}. %
Active learning may be used in CL to query new examples and have control of the data the algorithm has access to.

\subsection{Challenges Addressed by CL} 

In this section we describe the specific problems addressed by continual learning; the kind of problems that arise when data cannot be assumed i.i.d., and when the hypothesis that the data distribution is static is not valid.

\subsubsection{Catastrophic Forgetting}

Catastrophic forgetting \cite{Mccloskey89,French99} refers to the phenomenon of a neural network experiencing performance degradation at previously learned concepts %
when trained sequentially on learning new %
concepts \cite{Mccloskey89}. Since by definition the continual learning setting deals with sequences of classes or tasks, the catastrophic forgetting is an important challenge to be tackled.  
Catastrophic forgetting might also be referred to as \textit{catastrophic interference}. The notion of interference is pertinent since the acquisition of new skills interferes with past skills by modifying important parameters as described in definition \ref{def:2_CL:Interferences}.

\subsubsection{Handling Memories}

One of the main components that distinguishes two CL approaches is the way they handle memories. In order to deal with catastrophic forgetting, each strategy should find a way to remember what may be destroyed by learning future tasks.
Continual learning needs a mechanism to \textit{store} memories of past tasks, which can take very various forms. It is important to note that memories can be saved in different manners: as raw data, as representations, as model weights, regularization matrices, etc.
An efficient memory management strategy should only save important information, as well as be able to transfer knowledge and skills to future tasks.
In practice, it is almost impossible to know what will be important and what could be transferable in the future; a trade off should then be found between the precision of the information saved and the acceptable forgetting.
This trade-off problem is known as the stability/plasticity dilemma \cite{Mermillod13}.

An important challenge inherent to handling memories is to automatically assess them. Learning new tasks may lead to degradation of the memories. As a consequence, the memory process needs mechanisms to evaluate how the memories are degraded, i.e., how it forgets. As no more data and labels from past tasks may be available, this check-up might be very challenging.

\checked{Another challenge is the stability of the learning process, which is crucial to learn and remember. Instability might lead to exploding gradients that would accidentally and permanently erase memories.}

\subsubsection{Detecting Distributional Shifts (concept drift)}

When the distribution is not stationary, a shift into the data stream is observed.
 When there is no external information concerning this shift, the CL model has to detect it, and account for fixing it by itself.
An undetected shift in the data distribution will irrevocably lead to forgetting.
Changes in the data distribution over time are commonly referred to as \textit{concept drift}. This idea is related to online change detection algorithms \cite{Sarkar98,Moens18} or Bayesian surprise \cite{Sun11} in ML. 
Two kinds of concept drift are defined \cite{gepperth2016incremental}:
Virtual and real concept drift.
Virtual concept drift concerns the input distribution only, and can easily occur, e.g., due to imbalanced classes over time. 
Real concept drift, on the contrary, is caused by novelty on data or new classes, and can be detected by its effect, on e.g., classification accuracy.
However, shift may also happen when the task changes. In RL for example an agent may have to solve a new task. Then the shift is not exactly in the data distribution but in the supervision signal.
Regardless of where exactly the shift happened it has to be detected to avoid catastrophic interference with non related skills or knowledge.

\subsection{Learning Paradigms Orthogonal to Continual Learning}

In this section we describe the relationship of continual learning with respect to the main three, generally acknowledged machine learning paradigms introduced in Chapter \ref{chap:1b_ML}: supervised, unsupervised and reinforcement learning.

\subsubsection{Supervised Continual Learning}
\label{subsub:2_CL:sup}

Supervised learning is the machine learning problem of learning from input-output example pairs \cite{Russell09}. \checked{In Chapter \ref{chap:1b_ML}, we introduced supervised learning and its different use cases.}
 
 While the study of continual learning in this context may help disentangling the complexity introduced by algorithms that learn continually, in the context of robotics, the lack of supervision does not allow, most of the time, to apply directly supervised methods.

\subsubsection{Unsupervised Continual Learning}
\label{subsub:2_CL:unsup}

Unsupervised learning refers to machine learning algorithms that do not have labels or rewards to learn from. \checked{In Chapter \ref{chap:1b_ML}, we already introduced unsupervised learning and in particular generative models.}
In the context of robotics, unsupervised continual learning may play an important role in building increasingly robust multi-modal representations over time to be later fine-tuned with an external and very sparse feedback signal from the environment. 
In order to learn robust and adaptive representations with unsupervised learning, the main objective is to find suitable surrogate and meaningful learning signals, as robotics priors \cite{Jonschkowski14, Lesort19}, self-supervised models or curiosity driven techniques.

A particular unsupervised task learned in a continual learning setting is the generation of images. Image generation is achieved by training generative models to reproduce images from a dataset. In a CL setting, the distribution changes over time and the generative model should be able to produce at the end images from the whole distribution. This problem has been studied for various generative models as adversarial models \cite{wu2018memory, lesort2018generative}, variational auto-encoders \cite{nguyen2017variational, ramapuram2017lifelong, achille2018life,Farquhar18, lesort2018generative} and standard auto-encoders \cite{Triki17,Zhou12}.

There is also a different relation between unsupervised learning and CL, since unsupervised models can be used to learn representations from vast amounts of data sources and can then generate such data (cf Section \ref{subsub:2_CL:GR}). This capacity can then be used to perform CL for classification \cite{wu2018incremental, shin2017continual, Triki17, lesort2018marginal} or reinforcement learning tasks \cite{caselles2018continual}.%

\subsubsection{Continual Reinforcement Learning}
\label{subsub:2_CL:reinforcement}

Reinforcement Learning is a machine learning paradigm where the goal is to train an agent to perform actions in a particular environment in order to maximize the expected cumulative reward. As explained in Chapter \ref{chap:1b_ML}, in traditional RL, the world is modeled as a stationary MDP: i.e., fixed dynamics and states that can recur infinitely often \cite{Ring05}. \checked{Chapter \ref{chap:1b_ML} also presented a basic learning process of reinforcement learning.}
Since in general, complex RL environments have no access to all data gathered at once, RL could often be framed as a CL situation.  %
Moreover, RL borrows several tools used in CL models, such as approximating data to an i.i.d. distribution, via either \textit{i)} setting multiple agents or actors to learn in parallel \cite{Mankowitz18}, or \textit{ii)} using a replay buffer (or experience replay \cite{mnih15}), that is equivalent to a particular category of CL (rehearsal, see Section \ref{subsub:2_CL:rehearsal}).
Another link is found in a popular stable method in RL, the TRPO%
 algorithm \cite{schulman2015trust}, %
  which constrains learning by using an estimate of the Fisher information matrix to improve learning continually, in the same way as some CL strategies (e.g., EWC, see Section \ref{subsub:2_CL:penalty}).
Most of Continual Learning approaches in RL have been applied in simulation settings such as Atari games \cite{kirkpatrick2017overcoming}. However, many %
approaches \cite{Traore19DisCoRL, Kalifou19, Bellas10, Bredeche18} also solve use cases on real robots.

\checked{As we can see, the continual learning problems are real shortcoming in reinforcement learning algorithms. Improving continual learning will therefore necessarily help improving reinforcement learning performance.}

\section{A Framework for Continual Learning}
\label{sec:2_CL:framework}

Despite the rapidly growing interest in continual learning and mainly empirical developments of the recent years \cite{Parisi18review}, very little research and effort has been devoted to a common formalization of algorithms that learn continually in dynamic environments. However, the availability of a common ground for thoroughly evaluating and understanding continual learning algorithms is essential to reduce ambiguities, enhancing fair comparisons and ultimately better advancing research in this direction.

\subsection{Setting Description}
\label{sub:2_CL:Setting_Desctiption}

Being able to better compare and evaluate continual learning strategies, while still being general enough to overlook implementation-dependent details over different learning paradigms, becomes essential. This is specially true when targeting deployment of CL paradigms in real-word applications, such as robotics. Nowadays, %
despite the existence of a basic set of shared practices, many are the fundamental questions often overlooked in recent continual learning research. For example, questions about the data availability during training and evaluation, the amount of supervision with respect to the tasks separation and composition, as well as common but biased assumptions on the nature of the data among others. A list of questions of interest we would like to address and report are the following:

\subsection{Questions}

\begin{enumerate}[label=(\alph{*})]

    \item Data Availability

    \begin{itemize}[noitemsep]
        \item $\bm{Q_1}$: \emph{Does some data need to be stored? if yes, how and what for? (e.g. regularization, re-training, validation)?}
        \item $\bm{Q_2}$: \emph{Is the algorithm tuned based on the final performance? I.e. is it possible to go back in time to improve performance?}
        \item $\bm{Q_3}$: \emph{Are data distributions assumed i.i.d. at any point?}
        \item{$\bm{Q_4}$: \emph{Is each task assumed to be encountered only once? }}
    \end{itemize}

    \item Prior Knowledge
    \begin{itemize}[noitemsep]
        \item $\bm{Q_5}$: \emph{Is the continual learning algorithm agnostic with respect to the structure of the training data stream? (e.g. number of classes, numbers of tasks, number of learning objectives...)}

        \item $\bm{Q_6}$: \emph{Does the approach need a pretrained model for the CL setting? If so, what is the new knowledge that needs to be acquired while learning continually?}
    \end{itemize}
    
    \item Memory and Computational Constraints
    \begin{itemize}[noitemsep]
        \item $\bm{Q_7}$: \emph{How much available memory does the algorithm require while learning? Does the memory capacity requirement changes as more tasks are learned?}
        \item $\bm{Q_{8}}$: \emph{Is the continual learning algorithm constrained in terms of computational overhead for each learning experience? Does the computational overhead increase over the task sequence? }
        \item $\bm{Q_{9}}$: \emph{Is the continual learning algorithm agnostic with respect to the data type? (e.g. images, video, text,...)} 
        \item $\bm{Q_{10}}$: \emph{Is the continual learning algorithm able to handle situations where there is not enough time to learn?}
    \end{itemize}
    
    \item Amount/Type of Supervision
    
    \begin{itemize}[noitemsep]

        \item $\bm{Q_{11}}$: \emph{In the presence of multiple tasks, is the task label available to the algorithm during the training phase? And during evaluation?}
        \item $\bm{Q_{12}}$: \emph{Are all the data labeled? or only the first training set? Can the user provide sparse label/feedback (e.g. active learning) to correct the system errors?}
        
    \end{itemize}

    \item Performance Expectation
    
    \begin{itemize}[noitemsep]

        \item $\bm{Q_{13}}$: \emph{What is expected from the algorithm to remember at the end of the full stream? Is it acceptable to forget somehow, when task, context or supervision change?}
    \end{itemize}
    
\end{enumerate}

To summarize these questions, in any new CL algorithm proposition, it is fundamental to clearly describe the data stream, its use, the algorithm functioning, its assumed 
prior knowledge, and its requirements in terms of supervision, memory and computation. \checked{We will answer those questions in the discussion chapter of the thesis (Chapter \ref{chap:6_disc})}.

We will now propose a comprehensive and detailed framework to help distinguish and disentangle different approaches in different continual learning settings and help answer these questions.

Early theoretical attempts to formalize the CL paradigm are found in \cite{Ring05} as a combination between reinforcement learning and inductive transfer. More general framework approaches include the one on non i.i.d. tasks of \cite{Pentina15}. As in \cite{Pentina15}, we assume CL is tackling a probably approximately correct (PAC) learnable problem in the approximation of a target hypothesis $h^*$ as well as learning from a sequence of non i.i.d. training sets. Our framework could also be seen as a generalization of the one proposed in \cite{Lopez-Paz17}, where learning happens continuously through a \textit{continuum} of data and a \say{task supervised signal} $t$ may be provided along with each training example.

\subsection{Framework Definitions}
\label{sub:2_CL:Framework_Definitions}

In continual learning data can be conveniently seen as drawn from a sequence of distributions $D_i$, and thus the need to redefine a CL framework taking into account this important property is defined as follows.

\begin{Definition}
\textbf{Continual Distributions and Training Sets} 

In Continual Learning, $\mathcal{D}$ is a potentially infinite sequence of unknown distributions $\mathcal{D} = \{D_1, \dots, D_N\}$ over $X \times Y$, with $X$ and $Y$ input and output random variables, respectively. At time $i$ a training set $Tr_i$ containing one or more observations is provided by $D_i$ to the algorithm.
\end{Definition}

As the framework hereby proposed is supposed to be general enough to cover the orthogonal and classical unsupervised, supervised and reinforcement learning approaches, $Tr_i$, as better detailed in Definition \ref{def:2_CL:cla}, is a collection of training observations/data samples that act as signal of the joint distribution to be learned.

\begin{Definition}
\textbf{Task} 
\label{def:2_CL:task2}

A task is a learning experience characterized by a unique task label $t$ and its target function $g_{\hat{t}}^*(x) \equiv h^*(x,t=\hat{t})$, i.e., the objective of its learning.
\end{Definition}

It is important to note that the tasks are just an abstract representation of a learning experience represented by a task label. This label helps to split the full learning experience into smaller learning pieces. However, there is not necessarily a bijective correspondence between data distributions and tasks.

\begin{Definition}
\label{def:2_CL:cla}
\textbf{Continual Learning Algorithm}
Given $h^*$ as the general target function (i.e. our ideal prediction model), %
and a task label $t$, %
a continual learning algorithm $A^{CL}$ is an algorithm with the following signature: 
\begin{equation}
	\forall D_i \in \mathcal{D}, \hspace{20pt} A^{CL}_i:\ \ <h_{i-1}, Tr_i, M_{i-1}, t_i>  \rightarrow <h_i, M_i> 
\end{equation}

Where:
\begin{itemize}
	\item $h_i$ is the current hypothesis at timestep $i$, or, practically speaking, the parametric model learned continually.
	\item $M_i$ is an external memory where we can store previous training examples or partial computation not directly related to the parametrization of the model.
	\item $t_i$ is a task label, that can be used to disentangle tasks and customize the hypothesis parameters. For simplicity, we can assume $N$ as the number of tasks, one for each $Tr_i$.
    \item $Tr_i$ is the training set of examples. %
Each $Tr_i$ is composed of a number of examples $e_j^i$ with $j \in [1,\dots,m]$. Each example $e^{i}_j = <x^{i}_j, y^{i}_j>$, where $y^{i}$ is the feedback signal and can be the optimal hypothesis $h^*(x,  t)$ (i.e., exact label $y^{i}_j$ in supervised learning), or any real tensor (from which we can estimate $h^*(x, t)$, such as a reward $r^{i}_j$ in RL). 
\end{itemize}
\end{Definition}

It is worth pointing out that each $D_i$, can be considered as a stationary distribution. However, this framework setting allows to accommodate continual learning approaches where examples can also be assumed to be drawn non i.i.d. from each $D_i$ over $X \times Y$, as in \cite{gepperth2016incremental,Hayes18NewMetrics}.

\begin{Definition}
\textbf{Continual Learning scenarios}
\label{def:2_CL:scen} 
A CL scenario is a specific CL setting in which the sequence of $N$ task labels respects a certain ``task structure'' over time. Based on the proposed framework, we can define three different common scenarios:
\begin{itemize}
	\item \textsf{Single-Incremental-Task (SIT)}: $t_1 = t_2 = \dots = t_N $.
    \item \textsf{Multi-Task (MT)}: $ \forall i,j \in [1,.., n]^2, i\neq j \implies t_i \neq t_j$.
    \item \textsf{Multi-Incremental-Task (MIT)}: $\exists\ i,j,k:\ t_i = t_j$ and $t_j \neq t_k $.
\end{itemize}
\end{Definition}

Table \ref{tab:2_CL:batch-examples} illustrates an example to clarify the definition of SIT, MT and MIT.

An example of Single-Incremental-Task (SIT) scenario is an ordinary classification task between cats and dogs, %
where the distribution changes through time. First, there may only be input images of white dogs and white cats, and later only black dogs and black cats. Therefore, while learning to distinguish black cats from black dogs the algorithm should not forget to differentiate white cats from white dogs. The task is always the same, but the concept drift might lead to forgetting.

However, in a classification setting, a Multi-Task (MT) scenario would first consist of learning cats versus dogs, and later cars versus bikes, without forgetting. The task label changes when the classes change, and the algorithm can use this information to maximize its continual learning performance.
The Multi-Incremental-Task (MIT) is the scenario where the same task can happen several times in the sequence of tasks.%

\begin{table}[ht]
\centering
\caption[Illustration of continual learning scenarios.]{Illustration of continual learning scenarios. Sequential task labels (corresponding to different distribution $D_i \in \mathcal{D}$) to reflect differences among CL categorization w.r.t. number and unicity of tasks for SIT, MT and MIT. 
Notice that a MIT setting requires relaxing the constraint definition of SIT but also relaxing the constraint definition of MT, i.e., it corresponds to the case where not all the tasks are considered having the same \textit{ID}, and not all the task are considered distinct.}
\label{tab:2_CL:batch-examples}
\begin{tabular}{|l|c|c|c|} %
\hline
\textbf{Task ID/Session} %
&   \multicolumn{3}{|c|}{\textbf{CL settings}}  \\\hline
\textbf{Task ID} & \textbf{SIT} & \textbf{MT} & \textbf{MIT} \\\hline\hline
$t_1$ & 0 & 1 & 0   \\\hline  
$t_2$& 0 & 2 & 1   \\\hline 
$t_3$ & 0 & 3 & 0   \\\hline 
... & ... & ... & ...   \\\hline 
$t_i$ & 0 & i & ...   \\\hline 
\end{tabular}
\end{table}

In any learning problem (be it classification, RL or unsupervised learning), the ability to adapt to new concepts to be learned (from the PAC ML framework \cite{Valiant84}),
as well as new instances of each concept, should be accounted. This is the objective of the next definition where we formally set three different settings an algorithm is required to manage, as they can have very high impact on the algorithm performance.

\begin{Definition}
\textbf{Task label and concept drift scenarios}
\label{def:2_CL:label} 
The task label can specify different assumptions made in a continual learning scenario.
We can define three main categories of task label assumptions regarding concept drift:

\begin{itemize}
	\item \textsf{No task label}: Changes in the distribution are not signaled by any task label. The task is always the same (equivalent to SIT scenario).
    \item \textsf{Sparse task label}: Changes in the distribution are sparsely signaled by the task label. There are several tasks but changes in distribution may as well happen inside a task.
    \item \textsf{Task label oracle}: Every change in the data distribution is signaled by the task label, which is given.  %
\end{itemize}
\end{Definition}

We illustrate the different scenarios in Figure \ref{fig:2_CL:concept_drift}.

\begin{figure}[ht]
    \centering
    \includegraphics[width=0.6\textwidth]{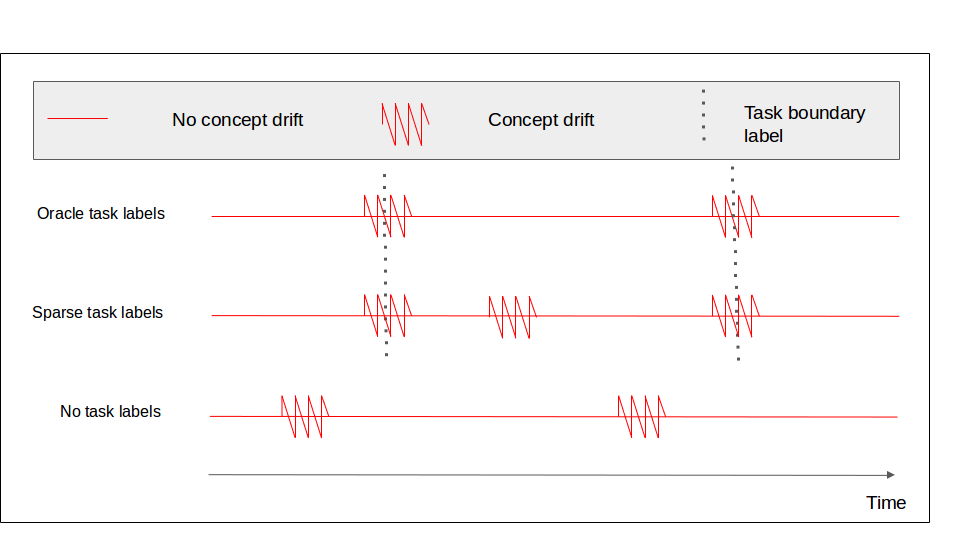}
    \caption[Task label and concept drift: illustration of the different scenarios.]{Task label and concept drift: illustration of the different scenarios.}
    \label{fig:2_CL:concept_drift}
\end{figure}

\begin{Definition}
\textbf{Availability of task label.} When a task label is provided, %
it is worth distinguishing among two different cases:
\begin{itemize}
	\item \textsf{Learning labels}: Task labels are provided for learning only. At test time, inference should be done without knowing from which task a data point is coming from. 
    \item \textsf{Permanent labels}: The task labels are provided for learning, and it is assumed they will also be provided at test time for inference.
\end{itemize}
\end{Definition}

\checked{The need for a task label is a need for supervision. It might be legit for training since it helps significantly the learning process. However, it might be a shortcoming in terms of autonomy at test time (for inference).}

\begin{Definition}
\label{def:2_CL:cut}
\textbf{Content Update Type}
The nature of the data samples or observations contained in each $Tr_i$ can be conveniently framed in three different categories: 
\begin{itemize}
    \item \textsf{New Instances (NI)}: Data samples or observations contained in the training set at time-step $i$ %
    relate to the same dependent variable $Y$ used in the past. %
	\item \textsf{New Concepts (NC)}: Data samples or observations contained in the training set at time-step $i$ %
    relate to a new dependent variable $Y$ to be learned from the model. %
    \item \textsf{New Instances and New Concepts (NIC)}: Data samples or observations contained in the training set at time-step $i$ %
    relate to both, already encountered dependent variables, and new ones ($Y$). %
\end{itemize}
\end{Definition}

In order to exemplify the concept of \emph{Content Update Type} defined in Definition \ref{def:2_CL:cut}, let us recover the aforementioned example of classification. If an algorithm learns the \textit{cat vs dogs} classification problem on a dataset and then new images of cat vs dogs are provided to the algorithm, we are then in a \textit{New Instances} case (NI), we have new data but no new concepts. If the new instances were of different classes (e.g. cars vs bikes) we then would face the \textit{New Concepts} case (NC). The new instances and new concepts case would then have been a mix of both new images of known and new classes.

If a CL algorithm uses a network pretrained on a dataset, the features of such dataset will need to be accounted for as one more task or the same, depending on the distribution of new instances and new classes according to definitions \ref{def:2_CL:scen} and \ref{def:2_CL:cut}. In other words, using a pretrained model is similar to assume there is a task already learned by the model, and the new learning experiences of the algorithm are just a continuum of learning curricula. If there is any intersection between the pretraining and the new tasks, it should be reported in the setting description. The pretraining effect can then be estimated with the metrics proposed in Section \ref{sub:2_CL:metrics}.

\subsection{Constraints}
\label{sub:2_CL:Constraincts}

\checked{In this section, we phrase constraints that are advised to be respected to make small experiments scalable to bigger ones.}

\begin{constraint}
For every step in time, the number of current examples contained in the memory is lower than the total number of previously seen examples\footnote{I.e., if we could fit all previous examples in memory $M$, it would become a problem of scarce interest for the CL community, given that re-training the entire model $h_i$ from scratch would be always possible \cite{Kaeding16}}.
\end{constraint}

\begin{constraint}
Memory and computation for each iteration step $i$ are bounded. Given two functions $ops()$ and $mem()$ that compute the number of operations and memory occupation required by $A_i^{CL}$, two reasonably small values 
\emph{max\_ops} and \emph{max\_mem} should exist, such that, for each $i$, $ops(A_i^{CL}) < max\_ops$ and $mem(h_{i-1},M_{i-1}) < max\_mem$.%
\end{constraint}

$max\_ops$ and $max\_mem$ are the max throughput, in number of operations, and the max memory capacity of the system running $A_i^{CL}$. Having a memory and computational bounds for each iteration $i$ is an important constraint for a continual learning algorithm. The reason is that the number of training sets $Tr_i$ can potentially be unlimited, and thus, computation and memory should not be proportional to the number of hypothesis updates $h_i$ over time. %
A finite upper bound should exist and be considered, especially with $n \rightarrow \infty$.

\subsection{Relaxation and desiderata}
\label{sub:2_CL:Desiderata}

Given the difficult setting and the additional constraints imposed by Continual Learning with respect to the classic ``static'' setting, many researchers in the recent literature have proposed new CL strategies in slightly relaxed \cite{Rusu16progressive,kirkpatrick2017overcoming, Mallya18Piggyback, Lopez-Paz17} yet reasonable settings: 

\begin{relaxation}
\textsf{Memory relaxation}: Removes the fixed memory bound constraint over $ops()$ and $mem()$. 
\end{relaxation}

\begin{relaxation}
\textsf{Computation relaxation}: Removes the fixed computational bound constraint $ops(h_i) < max\_ops$.
\end{relaxation}

In both cases we assume that for practical applications, either a finite (and reasonable) number of tasks $N$ are encountered or that we are transigent with the memorization of past tasks, hence, for many settings with a generous memory and computational bounds, many continual learning strategies that, in terms of complexity and memory usage, grow somehow proportional to the number of training sets $Tr_i$ may still be a viable option, especially if they can guarantee better performance. 

Having defined a formal framework for CL, we can therefore highlight a number of desiderata:

\begin{desideratum}
\textsf{Online Continual Learning}:
Limit the size of each training set, moving towards online learning so that $|Tr_i| = 1$.
\end{desideratum}

Being able to learn without storing any raw data would mean a large step towards continual learning. In fact, getting rid of storing raw data means that the learning algorithm is able to extract information from the current task that may be not only useful and accurate for the current task, but also transferable for the future.
 
In our biological counterparts, namely the brain, a system-level consolidation process is often thought to take place, where memories are encoded, stored and than retrieved for rehearsal purposes \cite{delvenne09}. However, the idea of storing high-dimensional perceptual data appears impractical given the incredible amount of information flowing into our brain every day from our multi-modal senses. Being able to process data online as well, is an important desideratum especially for reducing adaptation time and operational memory usage in an embedded or robotics setting.
\checked{At least, unprocessed data should be selected to keep only essential information.}

\begin{desideratum}
\textsf{Task indicator free Continual Learning}: %
Learning continually without help of an external signal $t$ indicating the current task, in particular at test time, is strongly desirable.
\end{desideratum}

\begin{desideratum}
\textsf{Be ready for the future}:
Prepare the model to be robust and provided with good representation for handling future learning experiences. Solving only the current task only is probably not sufficient.
\end{desideratum}

\section{State of the art}
\label{sec:2_CL:SOTA} 

\checked{Continual learning can be classified in four families of approaches. Each family has its own method to manage memories and learning without forgetting in non-iid settings. }

\subsection{Regularization}
\label{subsub:2_CL:regu}

Regularization is a process of introducing additional information in order to prevent overfitting \cite{bhlmann2011statistics}. In the context of Continual Learning, the model should not overfit a new problem because it would make it forget it's previous skills.
The regularization approaches in continual learning consist of modifying the update of weights when learning in order to keep memory of previous knowledge.

\subsubsection{Penalty Computing}
\label{subsub:2_CL:penalty}

Basic regularization techniques that could be used for CL are weight sparsification, dropout \cite{Goodfellow13}, and early stopping \cite{Maltoni18}. 
These simple regularization techniques reduce the chance of weights being modified, and thus decrease the probability of forgetting.
More complex methods consist of searching for important weights inside the models and protect them afterwards to prevent forgetting.
The Fisher matrix can be used to estimate the importance of weights and produce an adapted regularization as for Elastic Weight Consolidation (EWC) approach \cite{kirkpatrick2017overcoming}. For efficiency purpose, EWC only use the diagonal of the Fisher matrix to estimate importance.  \cite{Ritter18} proposes an alternative to get a better estimation of the Fisher matrix using the Kronecker factorization.
EWC approach needs to have clear task delimitation to compute Fisher matrix at the end of the task, but  Synaptic Intelligence (SI) \cite{Zenke17} extended the method in an online learning fashion to relax this constraint.
\cite{lee2017overcoming} propose to use a regularization method called \textit{incremental moment matching} to overcome catastrophic forgetting. This method saves the moment posterior distribution of neural networks weights from past tasks and uses it to regularize learning of a new task. Two different declinations of this method are proposed: one with the use of first order moment \textit{IMM-mean} and one with second order moment \textit{IMM-mode}.

Another method to apply regularization for continual learning is the use of \textit{Conceptor} \cite{Jaeger14, He18}. Conceptor are memory mechanism that store learned patterns and representation. They are used to guide the gradient of the loss function to prevent forgetting and then favor modification for some weights and penalize others. 

The regularization methods have been shown to be efficient in reinforcement learning \cite{kirkpatrick2017overcoming}, classification \cite{kirkpatrick2017overcoming,Ritter18, Zenke17,He18} and also generative models \cite{nguyen2017variational, seff2017continual}. A limitation is that after several tasks the model may saturate because of a too high regularization, and finding a good trade-off between regularization that allows learning without forgetting may be hard.
Some regularization methods are described more in depth in Chapter \ref{chap:2b_Replay}.

\subsubsection{Knowledge Distillation}
\label{subsub:2_CL:distillation}

Distillation techniques were introduced by \cite{hinton2015distilling} in order to transfer knowledge from neural network A to neural network B. The idea is that after A has learned to solve a task, we want B to share this skill with A. We then forward the same input to both A and B and impose B to have the same output as A. Distillation should be more efficient than retraining B because A produces a soft-target that helps B to learn faster. In order to apply this method for continual learning, after network A learned to solve the first task, and while B is learning the second one, we distill knowledge from A to B. In the end, B should be able to solve both tasks. This and related methods have been used in various approaches \cite{wu2018memory,schwarz2018progress, Furlanello16, Rusu16distillation,Kalifou19,Traore19DisCoRL,Dhar19,michieli2019knowledge}. A drawback of distillation is that it generally needs to preserve a reservoir of persistent data learned for each task in order to apply distillation from a teacher model to a student model. %
Distillation can also be used to transfer policy learning from one model to another \cite{Rusu16distillation}.

\subsection{Dynamic Architecture}
\label{sub:2_CL:archi}

The architecture of learning models has a strong influence on how they learn.
One approach to CL is to modify dynamically the architecture of a model to make it learn new concepts or skills without interfering with old ones. We present two types of dynamic architectures. First, when the changes in the architecture are explicit; and second, when changes are implicit architectural changes by freezing weights. We also present an important architectural approach to CL: dual memory models.

\subsubsection{Explicit Architecture Modification}
\label{subsub:2_CL:expl}

Explicit dynamics architecture gather all methods that add, clone or save parts of parameters of the models to avoid catastrophic forgetting.

Progressive neural networks \cite{Rusu16progressive} is one of the first approaches within this paradigm for deep neural networks. 
For each new task to be learned, a new model is created connected to all past ones. The goal of this new model is to learn the new task by using what was already learned by previous models, and so develop the new skills needed.
At test time, the method needs to inject data into each neural network previously created, and needs to know the task index to pick the right output.
Because the weights are used to connect neural networks together, the growth of parameters is quadratic w.r.t. the number of tasks. This growth is generally to be prevented. %
Instead, layers may be dynamically expanded in a single network without the need of re-training or freezing previously learned parameters, improving model capacity over time \cite{wang17}.

Another type of dynamic architecture strategy consists of  dynamically adding neurons for new tasks. As an example, output layers can be added in order not to change output parameters from previous tasks as in LWF approach \cite{Li17learning}. 
This method ensures that the output layer will not be modified; however, as the feature extraction layers are shared between tasks, some parameters risk to be modified and forgotten. In addition, at test time, the method needs the task label.

It is worth mentioning that we consider as \textit{dynamic architecture}, those approaches that adapt their architecture specifically with the aim of not forgetting, while similar mechanisms can be used for other purposes\footnote{If the architecture is changed without this objective, it is not considered as part of the CL approach. As an example, when new classes are available, we might choose to make the output size grow to handle these, without making it as a way to not forget.}.

\subsubsection{Implicit Architecture Modification}
\label{subsub:2_CL:freeze}

Implicit architecture modification is the use of model adaptation for continual learning without modifying its architecture. This adaptation is typically achieved by inactivating some learning units or by changing the forward pass path. 

We categorize the fact of dynamically freezing weights as an implicit dynamic architecture approach. It is implicit because the architecture of the model does not change; however, the model's capacity is necessarily affected.

Freezing weights consist of choosing some weights at the end of a task that will no more change in the future. The backward pass will not be able to tune them anymore; however, they can still be used in the forward pass. This method ensures that these weights will not be modified, and tries to keep enough free parameters to learn in the future \cite{mallya2018packnet, Mallya18Piggyback, serra2018overcoming}.  The difficulty lies in freezing enough weights to remember, but not too much to still be able to learn new skills.
The way weight freezing is implemented in PackNet \cite{mallya2018packnet}, Piggypack \cite{Mallya18Piggyback}  or HAT \cite{serra2018overcoming} is by defining a special mask for each task that is used to both protect weights when new tasks are learned, and to define which weights to use at inference time for a given task. 
The use of masks to freeze important weights can be referred to as hard attention process \cite{serra2018overcoming}. 
Weight freezing can also be used to keep the decision boundary of the output unchanged \cite{Jung16}. 

An alternative to a weight freezing when tasks change is to define a dynamics path inside the model in order to use a specific path for a specific task and not modify already learned weights. This is the idea exploited in \textit{PathNet} \cite{fernando2017pathnet}.

The use of implicit architecture modifications is not incompatible with explicit architecture modification as it is shown in \cite{Mallya18Piggyback, serra2018overcoming}.

\subsubsection{Dual Architectures}
\label{subsub:2_CL:dual}

Dual approaches characterize architectures that are split into two models. One model is used in order to learn the current task and should be easily adaptable, while the second model is used as a memory of past experiences. 
This approach can be linked to interactions between the hippocampus and neocortex to avoid catastrophic interference in mammals \cite{Mcclelland95}. The stable network plays the role of the neocortex, and the flexible one plays the role of hippocampus \cite{Furlanello16,gepperth2016incremental, gepperth16bio, Maltoni18}. 

The use of dual architecture is explicit in many bio-inspired approaches such as \cite{Furlanello16, gepperth16bio, Parisi18, Sprechmann18,kemker18fearnet}. Dual architectures are extended in \cite{Sprechmann18} with the addition of an embedding model, and then, continual learning happens in the embedding space. The dual architecture can also be extended to more than two components, as in FearNet \cite{kemker18fearnet}, which takes inspiration from the basolateral amygdala from the brain to add a third component that is able to choose between the flexible and the stable memory for recall.

\subsection{Rehearsal}
\label{subsub:2_CL:rehearsal}

Rehearsal approaches gather all methods that save samples as memory of past tasks either raw or processed.

These samples are used to maintain knowledge about the past in the model. Ideally, those samples are carefully chosen in order to be representative of past tasks; by default, they can be randomly chosen.

The initial strategy is to save the representative samples and incorporate them in the new training set \cite{rebuffi2017icarl,Hayes18MemoryEfficient, lesort2018generative}. In the third article samples are chosen randomly for continual learning of generative models but in \cite{rebuffi2017icarl,Hayes18MemoryEfficient} the set is carefully sorted in order to keep the most representative samples into a coreset. This process allows to dynamically adapt the weights of the feature extractor and strengthen the network connections for memories already learned without forcing to keep previous weights. 

However, the coreset can also be used for regularization purpose and not just to be replayed from time to time along with new data in the learning process. 

For example, the coreset can be used for distillation  in \cite{Robins95}, in A-LTM (Active Long Term Memory Networks) \cite{Furlanello16} 
\checked{and in DisCorl for continual reinforcement learning \cite{Traore19DisCoRL}}.
 They can also be useful to regularize the gradient when learning new tasks as in GEM (Gradient Episodic Memory)   \cite{Lopez-Paz17}, A-GEM (Averaged Gradient Episodic Memory) \cite{Chaudhry19} and \cite{Aljundi2019Gradient}. Coresets have also been used to regularize the continual learning of a generative model in the CloGAN approach \cite{Rios19}.
In a bayesian learning setting the coreset can be incorporated into the prior to regularize learning update as in \cite{nguyen2017variational}. The authors experimented the use of a coreset to create a variational continual learning model (VCL). 

\checked{The coresest should be built in order to maintain knowledge and to be able to learn to distinguish old data from new ones. Moreover, the saved samples should not be used extensively in order to avoid overfitting them. In \cite{Aljundi2019Online}, the authors save samples but only use samples creating interference with current tasks. This makes it possible to use only appropriate memories.}

The disadvantage of rehearsal approaches is the utilization of a separate memory for raw and unprocessed data which is a vanilla way of saving knowledge that does not respect data privacy. Nevertheless it ensures that the memories are not degraded through time.
\checked{In order to mitigate this problem, some methods aimed at saving data point representation for rehearsal purpose using deep neural networks latent representation \cite{Belouadah2018DeeSIL, caccia2019online, pellegrini2019latent}.}

 \checked{Another problem could arise in rehearsal methods; since memory is limited, there could be a large unbalance between data from previous tasks and new data. However, this problem can be partially addressed by rescaling weights of learning criterion \cite{belouadah2020scail, Hou_2019_CVPR, Wu19Large}.}

\checked{Pseudo-rehearsal \cite{Robins95,Wang18}, a similar method to rehearsal, is is an approach to continual learning that generates pseudo-input as a memory of past tasks. The pseudo-inputs are data points generated by gradient descent on the classification models to match a randomly sampled output. 
Pseudo-rehearsal has been abusively used as a synonym to generative replay described in the next section in the continual learning literature. However, the fundamental difference is that the pseudo-inputs depend only on the classifier knowledge and are not produced by a generative model.}

\subsection{Generative Replay}
\label{subsub:2_CL:GR}

Instead of modeling the past from few samples as it is done in \textit{Rehearsal} approaches, \textit{Generative Replay} approaches train generative models on the data distribution.
Therefore, they are able to afterwards sample data from past experience when learning on new data. By learning on current data and artificially generated past data, they ensure that the knowledge and skills from the past is not forgotten.
These methods have also been associated with the term  \textit{Intrinsic Replay} \cite{Draelos17NeurogenesisDL}.
They could be understood as methods that perform \textit{regeneration} of samples or internal states, and thus, they can be associated with model-based learning, where the model learns the data distribution of past experiences.
The generative models is generally a GAN \cite{goodfellow2014generative} as in \cite{wu2018incremental, lesort2018generative, shin2017continual} or an auto-encoder as in \cite{Draelos17NeurogenesisDL, kemker18fearnet, caselles2018continual, kamra2017deep}.
 
A classical method %
implementing a generative replay normally makes use of dual models \cite{kamra2017deep, shin2017continual, wu2018incremental, Farquhar18, kemker18fearnet}.
One frozen model generates samples from past experiences and another learns to generate and classify current samples in addition to the regenerated ones. When a task is over, we replace the frozen model by the current one, %
freeze it, and initialize a new model to learn next task.

Generative Replay models can be categorized into two different approaches: "Marginal Replay" and "Conditional Replay" \cite{lesort2018marginal}. Techniques using \textit{Marginal Replay} make use of standard generative models, while \textit{Conditional Replay} are a particular case of the former where the generative model is conditional. Conditional models can generate data from a specific condition, e.g. a class or a task. In continual learning, it allows then to choose from which past learning experience we want to generate. It is important for example to balance data in generated replay \cite{lesort2018marginal}.

While most of the Generative Replay based approaches are meant to solve classification tasks \cite{kemker18fearnet, kamra2017deep, shin2017continual, wu2018incremental, Rios19}, some models use it for unsupervised learning \cite{lesort2018generative, wu2018memory} or reinforcement learning \cite{caselles2018continual}.

\subsection{Hybrid Approaches}
\label{subsub:2_CL:hybrid}

Memorization processed described are not incompatible, therefore they can be combined.
Most CL approaches have an implicit dual architecture strategy, as they always have a slow learning and a fast learning mechanisms to learn continually.
For example, in rehearsal approaches the stable model role is played by a memory that stores samples, in generative replay approaches a generative model plays the role of stable model, in some regularization approaches the stable model is played by the Fisher matrix which saves important weights.

Moreover, most of continual learning approaches do not rely on a single strategy to tackle catastrophic forgetting.  As stated in previous sections, each approach offers advantages and disadvantages, but most of the times, combining strategies allows to find the best solutions.
We summarize in Table \ref{tab:2_CL:classification} and Figure \ref{fig:2_CL:2_Venn} the different approaches cited and the strategies they propose.

\newcolumntype{M}[1]{>{\centering\arraybackslash}m{#1}}

\begin{longtable}{|l |M{20mm} M{20mm} M{20mm} M{20mm}|}
\caption[Classification of continual learning  main strategies]{\label{tab:2_CL:classification}Classification of continual learning  main strategies}\\

\hline
\multirow{2}*{\textbf{\footnotesize{References}}} &
\textbf{\footnotesize{Regularization}}&
\textbf{\footnotesize{Rehearsal}}  &
\textbf{\footnotesize{Architectural}}  &
\textbf{\footnotesize{Generative-Replay}} \\\hline\hline

\endfirsthead
\multicolumn{4}{c}%
{\tablename\ \thetable\ -- \footnotesize{Continued from previous page}} \\
\hline
\multirow{2}*{\textbf{\footnotesize{References}}} &
\textbf{\footnotesize{Regularization}}&
\textbf{\footnotesize{Rehearsal}}  &
\textbf{\footnotesize{Architectural}}  &
\textbf{\footnotesize{Generative-Replay}} \\\hline\hline
\hline
\endhead
\hline \multicolumn{4}{r}{\footnotesize{Continued on next page}} \\
\endfoot
\endlastfoot

\rowcolor{gray!20}
\footnotesize{\cite{Zhou12}}& 
& %
& %
\checkmark & %
\\ %

\footnotesize{\cite{Goodfellow13}} & 
  \checkmark &  %
& %
& %
\\ %

\rowcolor{gray!20}
\footnotesize{\cite{lyubova15}}  & 
 & %
\checkmark & %
& %
 \\ %

\footnotesize{\cite{Rusu16distillation}}  & 
  \checkmark & %
& %
& %
 \\ %

\rowcolor{gray!20}
\footnotesize{\cite{Camoriano16}} & 
\checkmark &%
\checkmark & %
& %
\\ %

\footnotesize{\cite{Furlanello16}} & 
  \checkmark & %
& %
  \checkmark & %
\\ %

\rowcolor{gray!20}
\footnotesize{\cite{Li17learning} (LwF)}& 
  \checkmark & %
& %
  \checkmark & %
\\ %

\footnotesize{\cite{Rusu16progressive} (PNN)} & 
& %
& %
  \checkmark & %
\\ %

\rowcolor{gray!20}
\footnotesize{\cite{Jung16}} & 
  \checkmark & %
& %
  \checkmark & %
\\ %

\footnotesize{\cite{aljundi2017expertGate}}  & 
& %
& %
\checkmark & %
\\ %

\rowcolor{gray!20}
\footnotesize{\cite{rebuffi2017icarl} (Icarl)}  & 
  \checkmark & %
  \checkmark & %
& %
\\

\footnotesize{\cite{kirkpatrick2017overcoming} (EWC) } & 
  \checkmark  & %
& %
& %
\\ %

\rowcolor{gray!20}
\footnotesize{\cite{fernando2017pathnet}}  & 
& %
& %
\checkmark & %
\\

\footnotesize{\cite{lee2017overcoming}}  & 
  \checkmark & %
& %
& %
\\ %

\rowcolor{gray!20}
\footnotesize{\cite{Lee17Lifelong} } & 
  \checkmark & %
& %
& %
\\ %

\footnotesize{\cite{Triki17}}  & 
  \checkmark & %
& %
& %
\\ %

\rowcolor{gray!20}
\footnotesize{\cite{seff2017continual} } &  
  \checkmark & %
& %
& %
\\ %

\footnotesize{\cite{shin2017continual} (DGR)} & 
& %
& %
& %
  \checkmark \\ %

\rowcolor{gray!20}
\footnotesize{\cite{Velez17}}  & 
  \checkmark & %
& %
& %
\\ %

\footnotesize{\cite{Lopez-Paz17} (GEM)}  & 
  \checkmark & %
  \checkmark & %
& %
\\ %

\rowcolor{gray!20}
\footnotesize{\cite{Zenke17} (SI)} & 
  \checkmark & %
& %
& %
\\

\footnotesize{\cite{nguyen2017variational} (VCL)} & 
\checkmark & %
\checkmark & %
\checkmark  & %
\\ %

\rowcolor{gray!20}
\footnotesize{\cite{ramapuram2017lifelong}}  & 
  \checkmark & %
& %
& %
  \checkmark \\ %

\footnotesize{\cite{mallya2018packnet} } & 
& %
& %
  \checkmark & %
\\ %

\rowcolor{gray!20}
\footnotesize{\cite{kamra2017deep}}  & 
& %
& %
& %
  \checkmark \\ %

\footnotesize{\cite{Draelos17NeurogenesisDL}}  & 
& %
& %
& %
  \checkmark \\ %

\rowcolor{gray!20}
\footnotesize{\cite{serra2018overcoming} }& 
  \checkmark & %
& %
& %
\\ %

\footnotesize{\cite{Mallya18Piggyback}}  & 
& %
& %
  \checkmark & %
\\ %

\rowcolor{gray!20}
\footnotesize{\cite{Parisi18} } (GDM)& 
  \checkmark  & %
& %
  \checkmark & %
  \checkmark \\ %
 
\footnotesize{\cite{He18}}  & 
  \checkmark & %
& %
  \checkmark & %
 \\ %
 
\rowcolor{gray!20}
\footnotesize{\cite{Hayes18MemoryEfficient}}  & 
 & %
\checkmark  & %
  & %
 \\ %
 
\footnotesize{\cite{wu2018incremental}}  & 
& %
  \checkmark & %
& %
  \checkmark  \\

\rowcolor{gray!20}
\footnotesize{\cite{Ritter18}}  & 
  \checkmark & %
& %
& %
\\

\footnotesize{\cite{schwarz2018progress}} &
& %
  \checkmark & %
& %
 \\ %

\rowcolor{gray!20}
\footnotesize{\cite{Maltoni18}} &
  \checkmark& %
 & %
  \checkmark& %
\\ %

\footnotesize{\cite{achille2018life}}  & 
& %
& %
  \checkmark & %
  \checkmark \\ %

\rowcolor{gray!20}
\footnotesize{\cite{wu2018memory} (MeRGAN)}  &  %
  \checkmark & %
& %
& %
  \checkmark \\ %
  
\footnotesize{\cite{Dhar19}} & 
\checkmark & %
& %
& %
\\ %

\rowcolor{gray!20}
\footnotesize{\cite{lesort2018generative}}  & 
& %
& %
& %
  \checkmark \\
  
\footnotesize{\cite{castro2018end}}  & 
& %
 \checkmark & %
& %
 \\

  \rowcolor{gray!20}
\footnotesize{\cite{caselles2018continual}} & 
& %
& %
& %
  \checkmark \\ %

\footnotesize{\cite{riemer2018learning} (MER)} & 
 \checkmark & %
  \checkmark   & %
 & %
  \\ %

\rowcolor{gray!20}
\footnotesize{\cite{Rios19} (CloGAN)} & 
\checkmark  & %
  \checkmark   & %
 & %
 \checkmark  \\ %

\footnotesize{\cite{lesort2018marginal} } & 
& %
& %
& %
  \checkmark \\ %

\rowcolor{gray!20}
\footnotesize{\cite{Sprechmann18}}  & 
& %
  \checkmark  & %
  \checkmark  & %
 \\ %

\footnotesize{\cite{Hayes18MemoryEfficient} (ExStream)} & 
 & %
  \checkmark  & %
    & %
 \\ %

\rowcolor{gray!20}
\footnotesize{\cite{Belouadah2018DeeSIL} (DeeSIL)} & 
 & %
  \checkmark  & %
    & %
 \\ %

\footnotesize{\cite{kemker18fearnet} (FearNet)} & 
& %
  & %
  \checkmark  & %
 \checkmark  \\ %
 
\rowcolor{gray!20}
\footnotesize{\cite{Chaudhry19} } & 
  \checkmark & %
  \checkmark & %
& %
\\ %

\footnotesize{\cite{Wu19Large}}  & 
 & %
  \checkmark & %
& %
\\ %
 
\rowcolor{gray!20}
\footnotesize{\cite{Kalifou19}}  & 
  \checkmark & %
\checkmark  & %
& %
    \\%

\footnotesize{\cite{Aljundi2019Online}}  & 
  \checkmark & %
\checkmark  & %
& %
\checkmark     \\ %

\rowcolor{gray!20}
\footnotesize{\cite{michieli2019knowledge}}  & 
  \checkmark & %
\checkmark  & %
& %
   \\ %

\footnotesize{\cite{caccia2019online} }  & 
  & %
\checkmark  & %
& %
     \\ %
 
\rowcolor{gray!20}
\footnotesize{\cite{Hou_2019_CVPR}}   & 
& %
\checkmark  & %
& %
     \\ %

\footnotesize{\cite{Traore19DisCoRL}}  & 
  \checkmark & %
\checkmark  & %
& %
    \\ %
    
\rowcolor{gray!20}
\footnotesize{\cite{Aljundi2019Gradient}}  & 
  \checkmark & %
\checkmark  & %
& %
    \\ %
   
\rowcolor{gray!20}
\footnotesize{\cite{belouadah2020scail} } & 
\checkmark  & %
& %
& 
\\\hline %
\end{longtable}

\begin{figure}[ht]
  \centering
  \includegraphics[width=0.6\textwidth]{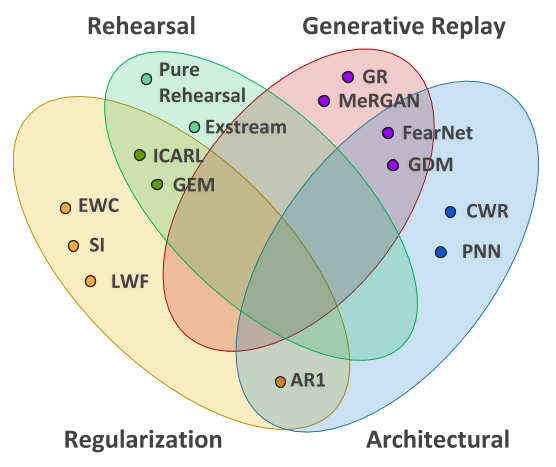}
  \caption[Venn diagram of some of the most popular CL strategies.]{Venn diagram of some of the most popular CL strategies w.r.t the four approaches illustrated in Section \ref{sec:2_CL:SOTA}: CWR \cite{Lomonaco17}, PNN \cite{Rusu16progressive}, EWC \cite{kirkpatrick2017overcoming}, SI \cite{Zenke17}, LWF \cite{Li17learning}, ICARL \cite{rebuffi2017icarl}, GEM \cite{Lopez-Paz17}, FearNet \cite{kemker18fearnet}, GDM \cite{Parisi18}, ExStream \cite{Hayes18MemoryEfficient}, Pure Rehearsal, GR \cite{shin2017continual}, MeRGAN \cite{wu2018memory} and AR1 \cite{Maltoni18}. Rehearsal and Generative Replay upper categories can be seen as a subset of replay strategies. Better viewed in color.}
  \label{fig:2_CL:2_Venn}
\end{figure}

\section{Evaluation}
\label{sec:2_CL:eval}

Before applying CL solutions to autonomous agents, they should be experimented and evaluated in simulation or toy examples. It is crucial to have a set of good evaluation metrics and benchmarks to assess if the approaches are scalable to real problems or may not solve harder ones.
\checked{It is important to distinguish the evaluation of the performances and the evaluation of algorithms.}
\checked{In research, the evaluation should be able to make predictions of success in applications, while in applications, the evaluation only assesses if the results are satisfying. Therefore, in research, we assess the method while in an application we should assess the result.}
 In this section we summarize existing evaluation methods and benchmarks and highlight some of them we believe worth using when targeting the deployment of practical CL applications.

\subsection{Benchmarks}
\label{sub:2_CL:benchmarks}

\begin{table}[!t]

\caption[Continual learning's benchmarks and environments]{Benchmarks and environments for continual learning. For each resource, paper use cases in the NI, NC and NIC scenarios are reported.} 
\label{tab:2_CL:benchmarks} 
\begin{tabular}{|p{50mm}|P{6mm}P{6mm}P{9mm}|P{60mm}|}
\hline
\textbf{Benchmark} & \textbf{NI}  & \textbf{NC} & \textbf{NIC} & \textbf{Use Cases} \\ %
\hline
\hline 

\rowcolor{gray!20}
Split MNIST/Fashion MNIST & 
 &  %
 \checkmark & 
 & %
\cite{lesort2018marginal, lesort2018generative, He18, Rios19} \\

Rotation MNIST & 
 \checkmark &  %
 & 
 & %
 \cite{Lopez-Paz17,lesort2018marginal, riemer2018learning}\\ %

\rowcolor{gray!20}
Permutation MNIST& 
 \checkmark &  %
 &
 & %
 \cite{Goodfellow13,kirkpatrick2017overcoming,fernando2017pathnet,shin2017continual,Zenke17,lesort2018marginal,He18, riemer2018learning}\\

iCIFAR10/100 & 
& %
\checkmark  & %
 & %
  \cite{rebuffi2017icarl, Maltoni18,castro2018end, kemker18fearnet, belouadah2020scail, Wu19Large, Hou_2019_CVPR} \\

\rowcolor{gray!20}
SVHN & 
  & %
 \checkmark  &
 & %
 \cite{Kemker2017Measuring, seff2017continual, Rios19}
 \\

CUB200 & 
\checkmark & %
 &
 & %
\cite{Lee17Lifelong, Hayes18MemoryEfficient} \\

\rowcolor{gray!20}
CORe50 & 
 \checkmark & %
 \checkmark  &
 \checkmark  & %
\cite{Lomonaco17, Parisi18,Maltoni18, Hayes18MemoryEfficient} \\

iCubWorld28 & \checkmark & & & \cite{Pasquale15,lomonaco16, Hayes18MemoryEfficient} \\

\rowcolor{gray!20}
iCubWorld-Transformation  & & \checkmark & & \cite{Pasquale16,camoriano17a} \\

LSUN & 
& %
 \checkmark &
 & %
\cite{wu2018memory} \\

\rowcolor{gray!20}
ImageNet & 
& %
 \checkmark  &
 & %
 \cite{rebuffi2017icarl, Mallya18Piggyback, castro2018end, Belouadah2018DeeSIL, Wu19Large, Hou_2019_CVPR, belouadah2020scail}\\
 
 Omniglot & & \checkmark & & \cite{lake2015human,schwarz2018progress}\\

\rowcolor{gray!20}
Pascal VOC & & \checkmark & & \cite{michieli2019knowledge,shmelkov2017incremental,michieli2019knowledge}\\

 Atari & \checkmark
& &
 & %
\cite{Rusu16distillation, kirkpatrick2017overcoming, schwarz2018progress}\\

\rowcolor{gray!20}
RNN CL benchmark & & & \checkmark  & \cite{Sodhani18}\\  %

CRLMaze (based on VizDoom) & \checkmark & & & \cite{lomonaco2019continual}\\

\rowcolor{gray!20}
 DeepMind Lab &  \checkmark & & & \cite{Mankowitz18}\\

MNIST Fellowship & \checkmark & \checkmark & & \cite{lesort2019regularization}\\

\hline

\end{tabular}
\end{table}

In continual learning, the difficulty of learning on a sequence of tasks is first of all dependant on the difficulty of each of the tasks separately. 
If a task is difficult to learn, a model will have to deeply modify its weights. If those weights contain knowledge from previous tasks, there is a high probability they will be degraded.
On the other hand, the risk of forgetting is also dependant on the likelihood of tasks occurring.
Indeed, after learning a task $T_t$, it is easier for a neural network to learn a radically different task $T_{t+1}$ without forgetting, than learning a task $T_{t+1}$ with similarities to $T_t$ \cite{Farquhar18}.

There are several kinds of similarities in a sequence of tasks:
\begin{itemize}
  \item Similarities in learning objectives: They occur when the objective is similar from task to task. For example, in a classification setting, when the same classes are used from one task to another (e.g. Permuted MNIST), or in RL, the same tasks need to be achieved in different environments.
  \item Similarities in features: the features from task to task are the same or very similar (e.g. Rotation MNIST).
\end{itemize}

Beyond the similarity among tasks and the learnability of each task, the availability of data is primordial to evaluate the difficulty of a benchmark. 
For convenience, most of the classical benchmarks assume that each task is available long enough to learn a satisfying solution.
Nevertheless, even when there is no constraint on the time to learn a task, data from the past cannot be available again in the future. %
In several approaches, past data is used for model selection, however using the performance obtained on task $T_t$ to fine-tune a model that will learn on $T_0$ violates temporal causality \cite{pfulb2019a}. Data might be saved for later use as in rehearsal approaches, but this must be done before moving on to the next task.

Most CL benchmarks are adapted from others fields, for instance:
\begin{itemize}
  \item \textbf{Classification}: MNIST \cite{LeCun10}, Fashion-MNIST \cite{Xiao2017}, KMNIST \cite{clanuwat2018deep}, CIFAR10/100 \cite{Krizhevsky09}, Street View House Numbers (SVHN) \cite{Netzer11}, CUB200 \cite{Welinder10}, LSUN  \cite{Yu15}, MNIST-Fellowship \cite{Lesort19Data}, ImageNet \cite{krizhevsky12}, Omniglot \cite{lake2015human} or Pascal VOC \cite{Everingham15} (object detection and segmentation).
  \item \textbf{Reinforcement Learning}: Arcade Learning Environment (ALE) \cite{Bellemare13} for Atari games, SURREAL \cite{Fan18} for robot manipulation and RoboTurk for robotic skill learning through imitation \cite{Mandlekar18}, \textit{CRLMaze} extension of VizDoom \cite{lomonaco2019continual} and DeepMind Lab \cite{Mankowitz18}. 
  \item \textbf{Generative models}: Datasets that prevail in this domain are the same as those used in classification tasks.

\end{itemize}

These datasets are then split, artificially modified (e.g., with image rotations or permutation of pixels) or concatenated together to create sequences of tasks and build a continual learning setting. As an example, permuted MNIST \cite{kirkpatrick2017overcoming} and rotated MNIST \cite{Lopez-Paz17} are continual learning datasets artificially created from MNIST.

Another possible continual learning scenario is the use of
naturally non i.i.d. datasets (e.g. NICO \cite{He19}) or learning sequentially different datasets either on the same input space \cite{lee2017overcoming, serra2018overcoming} or in a multi-modal fashion \cite{Kemker2017Measuring}.
However, only few datasets, such as CORe50 \cite{Lomonaco17}, OpenLORIS-Object \cite{she2019openlorisobject}, OpenLORIS-Scene \cite{shi2019ready} or MNIST-Fellowship \cite{Lesort19Data}, are specifically built with continual learning in mind.

In robotics, numerous datasets are often recorded in a online fashion through video. Therefore, they are suitable to evaluate continual learning algorithms. As an example, those proposed by \cite{Pasquale15,Pasquale16, azagra17} are composed of sequences of images captured during robotics object manipulation; they are used for classification and detection algorithms. 
A summary of the main datasets and examples of their applications can be found in Table  \ref{tab:2_CL:benchmarks}.

\checked{For the remainder of the thesis, we experiment with quite easy datasets, such as MNIST, Fashion-MNIST or KMNSIT (presented in Chapter \ref{chap:1b_ML}), to focus more on continual learning problems rather than on learning problems. Solving hard problems is important to know if approaches are scalable but for prototyping purposes, it is not always necessary. We discuss this point furthermore in Chapter \ref{chap:6_disc}.}

\subsection{Metrics}
\label{sub:2_CL:metrics}

\checked{Following the algorithm evaluation on a benchmark, we should make sure that the evaluation criteria are rigorous and cover the representative aspects of algorithm capacities.} %
\checked{A thorough research evaluation should report more than just the final accuracy.}
 We should also evaluate how fast it learns and forgets, if the algorithm is able to transfer knowledge from one task to another, and if the algorithm is stable and efficient while learning. In this section we gather a set of metrics to evaluate a CL approach.

For evaluation purpose, we assume access to series of test sets $Te_i$. The aim is to assess and disentangle the performance of our hypothesis $h_i$ as well as to evaluate if it is representative of the knowledge that should be learned by the corresponding training batch $Tr_i$.

For instance, one example of such evaluation is one of the first metrics proposed for CL \cite{Hayes18NewMetrics}; it consists of an overall performance $\mathcal{M}$ in a supervised classification setting. It is based on the relative performance of an incrementally trained algorithm with respect to an offline trained algorithm (which has access to all the data at once). In our notation, $\mathcal{M}$ is: %

\begin{equation}
\mathcal{M} = \frac{1}{N}\sum_{i=1}^{N}\frac{R_{i,i}}{R_{i,i}^{C}}.
\end{equation}  %

Where %
$N$ is the number of tasks encountered, $R^{C}_{i,j}$ is the potentially best accuracy we can have on $Te^C_i$ if the model was trained with all data at once, i.e. on $Tr^C_{i}$ (the accumulation of training sets $Tr^{C}_{t}$ from t=0 to t=i). $Te^C_{i}$ is the accumulation of all test sets $Te^{C}_{t}$ from $t=0$ to $t=i$.  %
 $\mathcal{M}$ = 1 indicates identical performance to an off-line cumulative setting; an $\mathcal{M}$ larger than one is possible when the offline model is worse than trained in a CL paradigm.

In \cite{serra2018overcoming}, instead, the authors try to directly model forgetting with the proposed \emph{forgetting ratio} metric $\rho$ after learning $i$ tasks, defined as:

\begin{equation}
\rho^{j\leq i} = \frac{1}{N} \sum_{i}^{N} \sum_{j}^{N} \left(\frac{R_{ij}-R_{j}^R}{R_{ij}^C -R_{j}^R}-1\right)
\end{equation}
Where, $R_{j}^R$ is the accuracy of a random stratified classifier using the class information of task $j$.

Always in the same sequential setting, in \cite{Lopez-Paz17} other three metrics are proposed: \emph{Average Accuracy} (ACC), \emph{Backward Transfer} (BWT), and \emph{Forward Transfer} (FWT). In this case, after the model finishes learning about the training batch $Tr_i$, its performance is evaluated on all (even future) test batches $Te_j$.

The larger these metrics, the better the model. 
The metrics are extended for more fine grained, generic evaluation \cite{Diaz18} so that the original accuracy \cite{Lopez-Paz17} (as well as BWT and FWT) can account for performance at \emph{every timestep in time}. Average Accuracy is defined as:

\begin{equation}
ACC = \frac{\sum_{i=1}^{N}\sum_{j=1}^{i} R_{i,j}}{\frac{N(N+1)}{2}}
\end{equation}
where $R \in \mathds{R}^{N \times N}$ is the training-test accuracy matrix that contains in each entry $R_{i,j}$ the test classification accuracy of the model on task $t_j$ after observing the last sample from task $t_i$%
, \checked{Average Accuracy (ACC) considers all accuracies of training set $Tr_i$ and test set $Te_j$ by considering the diagonal elements of  $R$, as well as all elements below it} %
(i.e., averages $R_{i,j}$ where $i>=j$ see Table \ref{tab:2_CL:acc-matrix-r}).

\checked{It should not be confused with the classical accuracy $A$.}%
\begin{equation}
A = \frac{\sum_{i=1}^{N} R_{i,i}}{N}
\label{eq:2_CL:Average_Accuracy}
\end{equation}
\checked{$A$ computes the average performance at time $t$ and does not take into account the previous performance evolution. We can compare $A$ and $ACC$ to get some more insight of the algorithm behaviour.
If $A$ > $ACC$ then the mean performance on past tasks improved through the learning of new tasks, if $A$ < $ACC$ then the mean performance decreased and the algorithm forgot and, finally, if $A$ = $ACC$, then either the algorithm stays very stable or the progress and forgetting compensate each others.}

\begin{table}[ht]
\centering
\caption[Illustration of accuracy matrix $R$]{Illustration of accuracy matrix $R$: elements accounted to compute ACC (white \& cyan), BWT (cyan), and FWT (gray). $R^{*} = R_{ii}$, \textbf{$Tr_i$} = training, \textbf{$Te_i$}= test tasks.}
\label{tab:2_CL:acc-matrix-r}
\begin{tabular}{l|ccc} 
\textit{R}  & \textbf{$Te_1$} & \textbf{$Te_2$} & \textbf{$Te_3$} \\
\specialrule{.1em}{.05em}{.05em} 
\textbf{$Tr_1$} & $R_{1,1}$ & \cellcolor{Gray}$R_{1,2}$ & \cellcolor{Gray}$R_{1,3}$  \\ 
\textbf{$Tr_2$} & \cellcolor{LightCyan}$R_{2,1}$ &$R_{2,2}$ & \cellcolor{Gray}$R_{2,3}$  \\
\textbf{$Tr_3$} & \cellcolor{LightCyan}$R_{3,1}$ & \cellcolor{LightCyan}$R_{3,2}$ & $R_{3,3}$ \\
\specialrule{.1em}{.05em}{.05em} 
\end{tabular}
\end{table}

Backward Transfer (BWT) %
measures the influence that learning a task has on the performance on previous tasks. %
It is defined as the accuracy computed on $Te_i$ right after learning $Tr_i$ as well as at the end of the last task on the same test set (see %
 Table \ref{tab:2_CL:acc-matrix-r} in light cyan).
\begin{equation}
BWT = \frac{\sum_{i = 2}^{N}\sum_{j = 1}^{i-1}(R_{i,j} - R_{j,j})}{\frac{N(N-1)}{2}}
\end{equation}
The original BWT \cite{chaudhry2018riemannian,Lopez-Paz17}
is extended %
into two terms to distinguish among two semantically different concepts (so that, as the rest of metrics, is to be maximized and in [0,1]). 
\begin{equation}
REM = 1- |min (BWT, 0)|
\end{equation}
i.e., \textit{Remembering}, and (the originally positive) BWT, %
i.e., improvement over time, \textit{Positive Backward Transfer}: %
\begin{equation}
BWT^{+} =  max (BWT, 0)
\end{equation}

Likewise, the FWT redefined to account for the dynamics of CL at each timestep is 
\begin{equation}
FWT = \frac{\sum_{i=1}^{j-1}\sum_{j=1}^{N} R_{i,j}}{\frac{N(N-1)}{2}}  %
\end{equation}

FWT accounts for the train-test accuracy entries $R_{i,j}$ above the principal diagonal of $R$, excluding it (see elements accounted in Table \ref{tab:2_CL:acc-matrix-r} in light gray). Forward transfer can occur when the model is able to perform \textit{zero-shot} learning.

\checked{FWT and BWT can be interpreted as trans-task generalization capacity of an algorithm. It is important to note that FWT and BWT give no insight into the algorithms assets if not compared to another algorithm. It is not possible to easily disentangle the generalization performance from the similarity of tasks.}

A Learning Curve Area (LCA) ($\in [0,1]$) metric to quantify the learning speed by a CL strategy is proposed in \cite{Chaudhry19}. It uses the $b$-shot performance (where $b$ is the mini-batch number) after being trained for all the $N$ tasks: 

\begin{equation}
Z_{b} =  \frac{1}{N} \sum_{i=1}^{N}a_{i,b,i}  %
\end{equation}
where %
$a_{i,k,j} \in [0,1]$
is the accuracy evaluated on the test set of task $j$ after the model has been trained with the $k$-th mini-batch of task $i$. This amount is equivalent to previous accuracy matrix entry $R_{ij}$ but at a lower granularity of a batch level. $a_{i,k,j}$ is used to define a forgetting measure $\in [-1, 1]$ that quantifies the drop in accuracy on previous tasks \cite{chaudhry2018riemannian}. $f^k_j$ is the forgetting on task $j$ after the model is trained with all mini-batches up to task $k$:

\begin{equation} 
f_j^{k} = \max_{l \in {1,..,k-1}} a_{l, B_l, j} - a_{k, B_k, j}
\end{equation} 
where $B_i$ is all mini-batches corresponding to training dataset of task $k$ ($\mathcal{D}_k$).

$LCA_{\beta}$ is the area of the convergence curve $Z_b$ during training as a function of $b \in [0, \beta]$:
\begin{equation}
LCA_{\beta} =  \frac{1}{\beta + 1} \int_{0}^{\beta}Z_{b}db = \frac{1}{\beta +1}\sum_{b=0}^{\beta}Z_{b}
\end{equation}
The interpretation of LCA is intuitive: an $LCA_{0}$ is the average 0-shot performance (FWT), and $LCA_{\beta}$ is the area under the $Z_{b}$ curve, which is high if the 0-shot performance is good and if the learner learns quickly. LCA aims at disambiguating the performance of models that may have the same $Z_{b}$ or $A_T$, but very different $LCA_{\beta}$ because despite both eventually obtaining the same final accuracy, one may learn much faster than the other.

While forgetting and knowledge transfer could be quantified and evaluated in various ways, as argued in \cite{Farquhar18,Hayes18NewMetrics,Kemker2017Measuring}, these may not suffice for a robust evaluation of CL strategies. For example, in order to better understand the different properties of each strategy in different conditions, especially for embedded systems and robotics, it would be interesting to keep track and unambiguously determine the amount of computation and memory resources exploited. In this context, the metrics proposed in \cite{Lopez-Paz17} are extended in \cite{Diaz18} to unify in a common evaluation framework different infrastructural and operational metrics. 
Other practical metrics included are Continual Memory Size (CMS) and Computational Efficiency (CE). We briefly describe them next. 

The memory size of model $h_i$ is quantified in terms of parameters $\theta$ at each task $i$, $Mem(\theta_i)$ and the eventual external memory to save data  $M_{Data_i}$; with the idea that it should not grow too rapidly with respect to the size of the model that learned the first task, $Mem(\theta_1)$:
\begin{equation}
MS = min(1, \frac{\sum_{i = 1}^{N}\frac{Mem(\theta_1)}{Mem(\theta_i)+M_{Data_i}}}{N})
\end{equation}

\checked{To compute the $MS$ metric, it is important to note that $M_{Data_i}$ may contain data to remember and evaluation data needed for hyper-parameters selection.}

A metric that bounds the Computational efficiency (CE) by the number of %
operations for training set $Tr_i$ is defined as: %
\begin{equation} %
CE = min(1, \frac{\sum_{i = 1}^{N} \frac{\mathit{Ops}\uparrow\downarrow(Tr_i) \cdot \varepsilon}{1+\mathit{Ops}(Tr_i)}}{N})
\end{equation}
where $Ops(Tr_i)$ is the number of (mul-adds) operations needed to learn $Tr_i$, $Ops\uparrow\downarrow$($Tr_i$) are the operations required to do one forward and one backward (backprop) pass on $Tr_i$, and $\varepsilon$ is a scaling factor (associated to the nr of epochs needed to learn $Tr_i$). %
Overall \textbf{$CL_{score}$} and \textbf{$CL_{stability}$} metrics are also finally proposed \cite{Diaz18} in order to aggregate different criteria to be maximized that allow to rank CL strategies.

In future evaluation scenarios, particularly in robotics, stability is another important property that should be evaluated since in many robotic tasks and safety-critical conditions, potential abrupt performance drifts would be a major concern when learning continuously.
The metrics presented here can also be combined to assess higher-level capabilities. As an example, if we are to assess the \textit{scalability} of a CL algorithm, one could use a weighted average of \textit{MS}, and \textit{CE}.

The metrics presented in a supervised classification context \cite{Diaz18} can also be generalized with different performance measure $P$, instead of \checked{task accuracy $R_{i,j}$, and used in the same way in the metrics proposed previously. For example, it could be used for reinforcement and unsupervised learning.} For instance, they can be extended to RL; the underlying performance metric is, instead of accuracy, the accumulated reward on test episodes. In general in RL, cumulative reward plots over time are common norm to evaluate policy learning algorithms. Extra performance metrics in RL tasks will very much depend on the task being assessed, the reward function, and other evaluation metrics that act as evaluation \textit{proxies}, as it is common in semi/unsupervised learning settings.

The evaluation of generative models in any setting is challenging.
Fr\'{e}chet Inception Score (FID) \cite{heusel2017gans} is a common metric that compares features from generated data and true data.
Inception Score (IS) \cite{Salimans16} has also been widely used as a proxy to evaluate the quality of generative models. It measures if the class of generated samples are varied by making use of a model trained on ImageNet. One shortcoming of these scores is that they may be maximized by over-fitting generative models.
Another evaluation method is using generated data to train a classifier and evaluate its accuracy on a test set of true data \cite{lesort2018training}. The test accuracy, called Fitting Capacity (FC) gives a proxy on the quality of the generated data. 
Fitting Capacity and Fr\'{e}chet Inception Score were used in a CL setting in \cite{lesort2018marginal, lesort2018generative}.\\
More methods for evaluating generative models are described and assessed more in depth in \cite{borji2018pros, Jiwoong18}; however, they have never been used in a CL setting.
In any case, the need for real data is mandatory in most evaluation schemes. In a CL setting, evaluating the generation of data from past tasks may need to violate the data availability assumption. The different metrics for generative models may then be useful tools for example for evaluating generative replay methods; however, they have to be manipulated carefully to be incorporated into the continual learning spirit.

\bigskip

\checked{For the remainder of this thesis, we are particularly focus on the final accuracy performance (eq. \ref{eq:2_CL:Average_Accuracy}). %
Even if the other metrics are interesting to evaluate an algorithm, the most valuable results are the final performance.  We discuss more this point in Chapter \ref{chap:6_disc}. }
\checked{Moreover, we mostly study disjoint settings and in such setting BWT learning and FWT learning cannot happen, so the corresponding metrics are not relevant. Finally, reporting the experiments computational costs would be clearly interesting, but would require a higher number of experiments than what has been possible to achieve within this work.}

\section{Applications : Continual Learning for Robotics}
\label{sec:2_CL:robotics}

In the previous section we listed and described the different existing types of strategies to tackle continual learning. In this section, we will present real use cases of CL with an emphasis on robotics applications. First, we present why continual learning is crucial for robotics, and then, the challenges that robotics face in CL tasks. Finally, we present concrete robotic applications with potential insights to draw from CL. %

\subsection{Opportunities for Continual Learning in Robotics }
\label{sub:2_CL:opportunities}

A robot is an agent that interacts with the real world. It means that it cannot go back in time to improve what it has learn in the past. %
These particularities of robotic platforms make them a natural playground for CL algorithms.
Furthermore, robots suffer from several constraints in terms of power or memory, which CL intends to optimize, in the way it addresses learning problems.
On the other hand, robots have rich information about their experiences. They are in control of their interaction with the environment, which may help them understanding the concept of causality, and extracting knowledge from different kinds of sensors (images, sound, depth...). This rich information helps machines to produce strong representations which are crucial for a well performing CL algorithm \cite{Lesort18}.

We could almost conclude that CL is born for robotics, and it may be true; however, today most of CL approaches are not robotics related and rather focus on experiments on image processing or simulated environments. The next section will present the challenges that make CL difficult to apply in robotic environments.

\subsection{Challenges of Continual Learning in Robotics} 
\label{sub:2_CL:challenges}

\subsubsection{Robotics Hardware}

The first challenge to deal with when doing any experiment with robots is the hardware.
Robots are known to be unstable and fragile. Robot failures are one of the main restrictions for researchers to propose new approaches on robotics tasks. They add unavoidable delay in any experiment and are expensive to fix.
Moreover, if the failure is not hardware but software, since it is not possible to reset the state of the robot automatically, manual help is often needed, e.g., to put back the robot in his starting position or recover it from an irrecoverable state.
Furthermore, most of the time building or buying a robot is itself quite costly.
Once the robot is correctly working, one new problem arises, which is its autonomy in terms of energy. This aspect is also a main difficulty to deal with when experiments need to be set. It is difficult to program long experiments without manually recharging the robot and making sure that it will not stop by a lack of power supply or failure.
Lastly, robots are embedded platforms and, consequently, have limited memory and computation resources, which should be carefully managed to avoid overflow.

The difficulties of using robots in experiments explain why there are so few approaches of continual learning with robots in the literature. In the next section, we will see how robotic environments challenge continual learning algorithms. 

\subsubsection{Data Sampling}
\label{sub:2_CL:sampling}

When a robot needs to learn a task in a known or unknown environment, it must collect its own training data in the real world \cite{Wong16}. Data serves as the basis for environment exploration and comprehension.
This problematic is exactly the same as the one met by RL algorithms \cite{Sutton98}. 
In infants, a crucial component of lifelong learning is the ability to autonomously generate goals and explore the environment driven by intrinsic motivation \cite{Oudeyer07, Cangelosi18}. 
Self-supervised approaches \cite{Pinto15, Levine16, Wong16, Shelhamer16} also help to automatically explore environments. 
Curiosity \cite{pathak18largescale} and self-supervision \cite{Doersch17} allow to search for new experiences (or data) and build a base of knowledge useful to achieve current or future tasks via transfer learning \cite{Parisi18}.
As an example, manipulation tasks \cite{Kim19} such as grasping \cite{Pinto15}, reaching \cite{raffin2019decoupling, Colas18Curious}, pushing buttons \cite{Lesort19}, throwing \cite{Stulp14,Kim19} or stacking \cite{Colas18Curious} objects (cubes, balls...) are common complex tasks
built on comprehensive sets of experiments. %

Data gathered in this way can then be used on the fly in an online learning process or stored for later processing.
However, in order to improve learning algorithms the need for annotations or external help is crucial. In the next subsection we will describe the particular needs for annotations in robotics. 

\subsubsection{Data Labelling}
\label{subsub:2_CL:labeling}

As seen in previous section, gathering a varied set of raw data is already a difficult task. However, using it and understanding it is even more tedious. In this section, we detail different needs for labelling that autonomous agents such as robots need.
First of all, to understand its environment, a robot will need to apprehend the objects that compose it. To do so, the robot will need at some point that an external expert assesses that the object representation learned is good. This is the first kind of label the robot will need, i.e., object labelling \cite{Collet15, Craye18}.
Second, if we want the robot to perform a certain task, it will need to get information about the goals we gave it and also about what it should not do. This is generally done by a reward function that defines credit assignment \cite{Minsky61}, 
or it can also be defined internally by more abstract rules such as self-supervision \cite{Gopnik01, Smith05}, intrinsic motivation or curiosity \cite{Oudeyer07, Schmidhuber10} as in \cite{Forestier2017intrinsically,Colas18, Craye18, Laversanne-Finot18}.
Third, the robot should know when the task changes, and what task it should try to perform. This process consists of labelling the task; and the label is called the task identifier \cite{Lopez-Paz17}.

All these types of labels are not mandatory, but they drastically help and impact the learning process. 
The downside of labelling is that it is expensive and time consuming, which slows down the learning algorithms. To tackle those two problems, CL needs to find efficient solutions that can make the best out of the available labels for learning. 

The specific fields that aim at answering these questions are few-shot learning \cite{Lake11,Fei-Fei06} and active learning \cite{Burr10}. The former tries to grasp a concept from very few data points. Active learning aims at identifying and selecting the most needed labels in order to maximize learning. 
By combining optimization procedures in learning from few instances and minimizing the needs for labels, the field of robotics could be more suitable for leveraging continual learning settings in the real world. Furthermore, efficiency in learning reduces the risks of forgetting and degrading memories.

\subsubsection{Learning Algorithms Stability}
\label{subsub:2_CL:learning_algo}

In continual learning, algorithms face several learning experiences in a row. From each learning experience, some memory should be saved to later prevent for not forgetting. The stability of learning algorithms is then crucial: if only one learning experience fails, the whole process may be corrupted. Moreover, if we respect the continual learning causality, we cannot go back one or several tasks earlier in time in order to fix an current problem. The corruption of one learning experience can lead to the corruption of memories and then to the model degradation when learning later tasks. The needs for robust mechanisms to validate or reject results of a learning algorithm is key to keep sane memories and weights; however, the instability of deep learning models must also be addressed to overcome this drawback. As an example, generative models are powerful tools for continual learning but their instability may make them unsuitable for this kind of setting \cite{lesort2018generative}. Reinforcement learning algorithms are also known to be unstable and unpredictable, which is disastrous for continual learning. %

\subsection{Applications Fields}
\label{sub:2_CL:applications}

\begin{figure}[ht]
  \centering
  \includegraphics[width=0.6\textwidth]{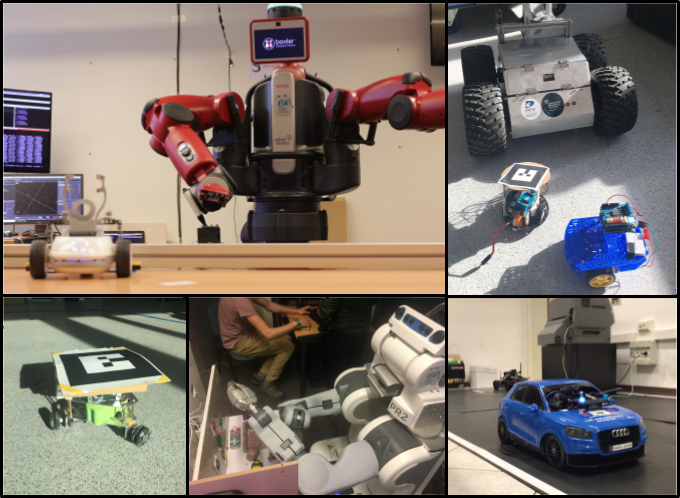}
  \caption[Illustation for robotics tasks]{Sample tasks tested for unsupervised open-ended learning \cite{raffin2019decoupling,Doncieux18} and continual learning settings \cite{Kalifou19} in a couple of robotics labs, among others, from the DREAM project.}
  \label{fig:2_CL:robots}
\end{figure}

Real-word applications of continual learning are virtually unlimited. In fact, any learning algorithm that needs to deal with the real world will face a non i.i.d. data stream. This as well happens for autonomous robots that learn new manipulation tasks, for exploration policies, as well as for autonomous vehicles that need to learn and adapt to new circumstances %
\cite{Bojarski16,Codevilla17, Jaritz18,Rhinehart18}. Non-static settings are also faced by algorithms that learn how to predict trends based on data streams from internet user activities, e.g., among others, for advertisement or finance. This problem is likewise confronted when an already trained algorithms needs to acquire new knowledge without forgetting, e.g., recognize new classes for classification, anomaly detection, etc. However, in this section we focus on specific continual learning use cases on robotics.

\subsubsection{Perception}
\label{subsub:2_CL:ap-classif}

While the world of perception is a multi-faceted topic at the very center of every application on autonomous sytems, the vast majority of CL algorithms in the literature are evaluated on object recognition tasks. Most models, indeed, are evaluated on datasets including static or moving objects. This is motivated by the fact that before any further action or policy, an autonomous agent (or robot) needs to identify the different component of its environment. In the case of classification, the robot may be pre-trained from an initial dataset. However, in any environment the robot would probably need to learn new objects from the new domain, and new variants (different poses, lighting, aspect) of already learned objects should be leveraged to improve its recognition \cite{lyubova15} capabilities. CL is crucial to deal with such dynamic scenarios. Initial progresses in this area have been proposed in \cite{Thrun95, Pasquale15, lomonaco16, camoriano17a, Lomonaco17}. Concrete Continual learning approaches to object segmentation can be found in \cite{michieli2019knowledge, michieli2019knowledge}, and in object detection in \cite{shmelkov2017incremental}.

Visual saliency for semantic segmentation and unsupervised object detection are other equally important applications in the context of perception which have been recently explored under continual learning and robotics settings \cite{Craye15}. RL-IAC (RL Intelligent Adaptive Curiosity), in particular, explores to learn visual saliency incrementally \cite{Craye18} with an articulated autonomous exploration technique based on curiosity to efficiently and continually learn a saliency model in a complex robotics environment tested in the real-world.

A classic problem in robotics within inherently continual learning settings are simultaneous localization and mapping (SLAM) \cite{cavallari2017fly} and navigation \cite{Thrun95}. In \cite{Thrun95}, using a HERO-2000 mobile robot with a radar sensor a continual learning algorithm based on explanation-based neural network learning (EBNN) is proposed to perform room mapping and navigation. Action models in EBNN \textit{explain} (in terms of previous experiences) and analyze observations to transfer task-independent (navigation) knowledge via predicting collisions and their prediction certainty. %

\subsubsection{Reinforcement Learning}
\label{subsub:2_CL:ap-reinforcement}

In reinforcement learning the data distribution is dependent on the actions taken by the controlled agent. 
Therefore, since the actions taken are not random, data is not i.i.d. and the data distribution is not stationary.
In the context of reinforcement learning similar techniques to those proposed in CL are often adopted in order to learn over a data distribution which is approximately stationary. An example of such techniques is the use of a external memory for rehearsal purposes, also know as \textit{experience} or \textit{memory replay} buffer \cite{Lin92,Schaul15,Hayes18MemoryEfficient}.

The first challenge for RL is the extraction of representations to understand and compact what is relevant from the input data \cite{Lesort18}.
Continual learning of state representations for RL is intrinsically close to unsupervised learning or representation learning for classification; the methods used in both cases may then be very similar or worth leveraging across.%

The second RL challenge is learning a policy to solve a specific task. The CL challenge of policy learning is different because it often should take into account both state and context. Context is usually handled with recurrent neural networks, and this kind of model is not yet %
been studied extensively in CL; one example is in \cite{Sodhani18}. 
Different robot manipulation tasks such as grasping and reaching objects that are used as benchmarks can be seen in Fig. \ref{fig:2_CL:robots} and, for instance, in state representation learning for robotics goal-based tasks \cite{raffin2019decoupling,Kalifou19}. 
These two challenges face continual learning problems, to learn representations and to learn policies from non stationary data distributions. However, it is worth distinguishing among both problems because learning and transfer between tasks are different challenges. Two tasks may need similar representations with different policies, while similar policies may require dissimilar representations.

In the context of robotics, fewer RL approaches have been proposed than in video-games or simulation settings. In particular, this is due to the low data efficiency of RL algorithms \cite{raffin2019decoupling}. 
We can still note several approaches that successfully tackle this problem, either in an end-to-end manner \cite{Kalashnikov18, Pinto15}, or by splitting the two challenges to address them separately, i.e., by first learning a state representation \cite{Lesort18} and later performing policy learning \cite{Finn15, Hoof16, Mattner12, Agrawal16, Duan17, Jonschkowski14}.
Nevertheless, a solution to this problem is to learn the policy in simulation and transfer it to deploy it in a real world robot \cite{Bojarski16, Rusu16sim2real, Gandhi17, Kalifou19}.

\subsubsection{Model-based Learning}
\label{subsubsap-ec:ers} 
Smoothly moving and interacting with always different, unpredictable environments, while building a consistent model of the external world, is one of the holy grails of robotics. For many years, researchers in this area have tried to propose robust and general enough sensory-motor solutions to complex problems such as navigation or object grasping. However, as it appears to be also true for humans, there will always be an environment or situation in which our biased model of the world fails and would require a further adaptation

Online (inverse dynamics) learning has also been applied in robotics, but generally not using deep learning. In \cite{Romeres16,Camoriano16}, the inverse and semiparametric dynamics of an iCub humanoid robot is learned in an incremental manner. This means both parametric modelling (based on rigid body dynamics equations) and nonparametric modelling (using incremental kernel methods) are used.
In \cite{Romeres18} it is shown that derivative-free models outperform numerical differentiation schemes in online settings when applied to non parametric (e.g. Gaussian processes with kernel function) model structures. %

In the pioneering work by \cite{Thrun95}, a model of both the external world and the robot itself is incrementally learned through reinforcement learning in complex navigation tasks on a real robot. However, incrementally and autonomously building a causal model of the external world, still remains a poorly explored topic in the context of robotics. Nevertheless, as it has been shown in recent RL literature, a model-based approach may be of fundamental importance in the real-world where millions of trials and errors are not always conceivable.

\section{Conclusion} 
\label{sec:2_CL:ccl}

Several notions appear to be crucial to clearly describe learning algorithms in CL settings, to compare them fairly enough and to transfer them from simulation to real autonomous systems and robotics.
First of all, identifying the exact problem we want to solve, and what are the existing constraints is mandatory. The framework we introduce in Section \ref{sec:2_CL:framework} should be an help in achieving the characterization of these settings. This formal step helps finding the proper approach to use and identifying similarities with other settings.
Secondly, in the same spirit of defining what we want to learn, it is important to define the level of supervision we are able to give to our learning algorithm. For example, if we can give it the task label, or some kind of information about the structure of the input data stream (number of classes, type of data distribution, number of instances of each task, etc.). This point is also discussed in our proposed framework (Section  \ref{sec:2_CL:framework}).
Finally, it is important to exactly clarify what is the expected performance of the algorithm. The set of metrics and benchmarks gathered in Section \ref{sec:2_CL:eval} should help defining and articulating the dimension of evaluation for important properties worth considering in the development of embodied agents that learn continually.

To summarize, in this chapter, we proposed a generalized framework to hold a variety of CL strategies and make easier the connection between machine learning and robotics in continual learning settings.
We reviewed the state of the art in continual learning and illustrated how to use the proposed framework to present various approaches. We also discussed benchmarks and evaluation techniques currently being used in continual learning algorithms.
Machine learning and robotics are fields undergoing an aggressive development period. We believe that pushing them forward to find formalization solutions to facilitate transfer between both fields is critical in order to understand each other, and make them profit from each other's successes. \checked{Moreover, the example of robotics for continual learning is well representative of the challenge addressed for the autonomy of learning agents.}

\checked{In the following chapters of this manuscript, we will refer to this chapter for state of the art description, continual learning scenarios, benchmarks, and evaluation metrics. But some content will also be summarized in the context of each chapter.}

\newpage
\chapter{Supervision Free Inference in Continual Learning}
\label{chap:2b_Replay}

\checked{In the previous chapter, we presented extensively the continual learning research field and illustrate its application potential through the lens of robotics.}
\checked{In this chapter, we discuss the need for supervision during the inference in some continual learning approaches and the practical limitations following from this requirement.}

\section{Introduction}

\checked{In continual learning, some approaches such as \textit{dynamic architectures} need to know from which task a data point is coming from in order to perform an inference. Indeed, since they use different inference paths or different models for prediction they need the task label to choose which one to use.} 
For regularization methods, it is often assumed that the task label is needed only at training time. However, in this chapter, we show that in class-incremental settings, the approach can not distinguish classes from different tasks. Therefore, the task label is necessary at test time.

The class-incremental settings we study is considered iid by parts. Each iid part is referred to as a \textit{task} and the data distribution changes are signalled by a task label. Each task contains different classes. %
In continual learning, this setting is called a class-incremental or disjoint-task scenario, as introduced in Chapter \ref{chap:2_CL}.
\checked{%
It consists of learning sets of classes incrementally. Each task is composed of new classes. As the training ends, the model should classify data from all classes correctly.}

In this chapter, the setting considers the task label as provided for training but not for inference.
Then, without task labels for inference, the model needs to both learn the discrimination of intra-task classes and the inter-task classes discrimination (i.e. distinctions between classes from different tasks). 
On the contrary, if the task label was available for inference, only the discrimination of intra-task classes needs to be learned. The discrimination upon different tasks is given by the task label.
Learning without access to task labels at test time is then much more complex, since it needs to discriminate data that are not available at the same time in the data stream.
We study in particular a widely used approach for continual learning: regularization. We show that in the classical setting of class-incremental tasks, this approach has theoretical limitations and can not be used alone. Indeed, it can not distinguish classes from different tasks.

\checked{The contributions of this chapter are:}

\begin{itemize}
\item \checked{We prove theoretical shortcomings of regularization based approaches in class-incremental settings.}
\item \checked{We propose an existing alternative to circumvent this problem.}
\end{itemize}

We believe this chapter presents important results for a better understanding of CL which will help practitioners to choose the appropriate approach for practical settings.

\section{Background}

Let's first summarize the information from Chapter \ref{chap:2_CL} relevant to this chapter. In continual learning, algorithms protect knowledge from catastrophic forgetting by saving them into a memory. The memory should be able to incorporate new knowledge and protect them from modification.

In continual learning, we distinguish four types of memorization approaches:

\begin{itemize}
\item \textbf{Dynamic architecture:} The neural networks create new weights automatically that will learn new tasks. Trained weights are frozen to protect memories \cite{Rusu16progressive, fernando2017pathnet, Li17learning}. In this case, the memory is composed of the old weights that are not modified anymore.

\item \textbf{Rehearsal:} In order to maintain knowledge from past learning experiences, the algorithms  save a subset of training data as memory
\cite{rebuffi2017icarl, nguyen2017variational, Aljundi2019Online, Belouadah2018DeeSIL, Kalifou19, Wu19Large, Hou_2019_CVPR, caccia2019online, Traore19DisCoRL}.
    \item \textbf{Generative Replay:} Instead of saving samples, this method learns generative models that will produce 
    artificial samples as memory of past learning experiences \cite{shin2017continual, lesort2018generative, wu2018memory, lesort2018marginal}. 
    \item \textbf{Regularization:} Regularization defines a loss that will constrain weight updates to retain knowledge from previous tasks \cite{kirkpatrick2017overcoming, Zenke17, Maltoni18}, or  distill knowledge \cite{hinton2015distilling}  from old models to a new one to remember past learning experiences \cite{Li17learning, schwarz2018progress}.

\end{itemize}
As presented in Section \ref{subsub:2_CL:hybrid}, many approaches use combinations of these families to allow better memorization. %
The effectiveness of these approaches is related to the use of the task label defined in Section \ref{sec:2_CL:vocabulary}. %
The task label $t$ is an abstract representation built to help continual algorithms to learn. It is designed to give information about the current task and notify if the task changes.%
$t$ is typically a simple integer indexing the tasks in the learning curricula.
The different use cases of task labels are described in Section \ref{sub:2_CL:Framework_Definitions}. %
Families of approaches have different dependencies to the task label.
For example, \textit{dynamic architecture} is an effective approach but it needs the task label at test time for inference.
Unfortunately, this necessity of supervision at test time is not desirable in most continual learning settings. Rehearsal and Generative Replay methods generally need the task label for learning but not for inference.
\checked{As mentioned in the chapter's introduction, for regularization methods, it is often assumed that the task label is needed only at training time. %
 To challenge this belief, we study the case of class-incremental settings. We would like to demonstrate that regularization methods can not distinguish classes from different tasks and therefore, that the task label is also necessary at test time to expect a good prediction.}

\section{Regularization Approach}
\label{sec:2b_Replay:approach}

\checked{In the class incremental setting presented in chapter's introduction, we would like to demonstrate that regularization does not help learn the discrimination between tasks.}
For example, if a first task is to discriminate white cats vs black cats and the second is the same with dogs, a regularization based method does not provide the learning criteria to learn features to distinguish white dogs from white cats.

\subsection{Formalism}

Let's first introduce a more detailed formalization of regularization. We assume that the data stream is composed of $N$ disjoint tasks learned sequentially one by one (with $N>=2$).
Task $t$ is noted $T_t$ and $\mathbb{D}_{t}$ is the associated dataset. 
The task label $t$ is a simple integer indicating the task index.
We refer to the full sequence of tasks as the continuum, noted $C_N$.
The dataset combining all data until task $t$ is noted $\mathbb{C}_{t}$. 
While learning task $T_t$, the algorithm has access to data from $\mathbb{D}_{t}$ only.

We study a disjoint set of classification tasks where classes of each task only appear in this task and never again.
We assume at least two classes per task (otherwise a classifier cannot learn). 

 Let $f$ be a function parametrized by $\bm{\theta}$ that implements the neural network's model.
At each task $t$ the model learns an optimal set of parameters $\bm{\theta}^*_t$ optimizing the task loss $\ell_{\mathbb{D}_t}(\cdot)$.
Since we are in a continual learning setting, $\bm{\theta}^{\ast}_t$ should also be an optima for all tasks $T_{t'}$, $ \forall t' \in \llbracket 0, t \rrbracket$. 

We consider the class-incremental setting with no test label. It means that an optima $\bm{\theta}_1^\ast$ for $T_1$ 
 is a set of parameters which at test time will, for any data point $x$ from $\mathbb{D}_0 \cup \mathbb{D}_1$, classify correctly without knowing if $x$ comes from $T_0$ or $T_1$. 
Therefore, in our continual learning setting, the loss to optimize when learning a given task $t$ is augmented  with a remembering loss:
\begin{equation}
\ell_{\mathbb{C}_t}(f(x; \bm{\theta}), y) = \ell_{\mathbb{D}_t}(f(x; \bm{\theta}), y) + \lambda \Omega(C_{t-1})
\label{eq:continual_loss}
\end{equation}
where $\ell_{\mathbb{C}_t}(.)$ is the continual loss, $\ell_{\mathbb{D}_t}(.)$ is the current task loss, $\Omega(C_{t-1})$ is the remembering loss with $C_{t-1}$ represents past tasks, $\lambda$ is the importance parameter.

\subsection{Problem}
\label{sub:pb}

In continual learning, the regularization approach is to define $\Omega(\cdot)$ as a regularization term to maintain knowledge from $C_{t-1}$ %
in the parameters $\bm{\theta}$ such as while learning a new task $T_t$, $f(x; \bm{\theta}^{\ast}_{t-1}) \approx f(x; \bm{\theta})$, $\forall x \in \mathbb{C}_{t-1}$.
In other words, it aims at keeping $\ell_{\mathbb{C}_{t-1}}(f(x; \bm{\theta}), y)$ low $\forall x \in \mathbb{C}_{t-1}$ while learning $T_t$. 

The regularization term $\Omega_{t-1}$ act as a memory of $\bm{\theta}^{\ast}_{t-1}$. 
This memory term depends on the learned parameters $\bm{\theta}^{\ast}_{t-1}$, on $\ell_{\mathbb{C}_{t-1}}$ the loss computed on $T_{t-1}$ and the current parameters $\bm{\theta}$. 
$\Omega_{t-1}$ memorizes the optimal state of the model at $T_{t-1}$ and generally the importance of each parameter with regard to the loss $\ell_{\mathbb{C}_{t-1}}$. 
We note $\Omega_{\mathbb{C}_{t-1}}$ the regularization term memorizing optimal parameters for all past tasks.

When learning the task $T_t$, the loss to optimize is then:
\begin{equation}
\ell_{\mathbb{C}_t}(f(x; \bm{\theta}), y) = 
\ell_{\mathbb{D}_t}(f(x; \bm{\theta}), y)
+ 
\lambda \Omega_{\mathbb{C}_{t-1}}(\bm{\theta}^{\ast}_{t-1},\ell_{\mathbb{C}_{t-1}}, \bm{\theta})
\label{eq:update_general}
\end{equation}
Eq. \ref{eq:update_general} is similar to eq. \ref{eq:continual_loss} but in this case the function $\Omega(\cdot)$ is a regularization term depending on past optimal parameters $\bm{\theta}^{\ast}_{t-1}$, loss on previous tasks $\ell_{\mathbb{C}_{t-1}}$ and the vector of current model parameters $\bm{\theta}$ only. 
It could be for example a matrix pondering weights importance in previous tasks 
\cite{kirkpatrick2017overcoming, Ritter18, Zenke17}.

\subsection{Regularization methods}

To illustrate the previous section, we present several well known regularization methods in our formalism.

\subsubsection{Elastic Weight Consolidation}
(EWC) \cite{kirkpatrick2017overcoming} is one of the most famous regularization approaches for continual learning.
The loss augmented with a regularization term is at task $t$: 
\begin{equation}
\ell_{\mathbb{C}_t}(\bm{\theta}) = \ell_{\mathbb{D}_t}(f(x; \bm{\theta}), y) + \frac{\lambda}{2} * F_{t-1} (\bm{\theta}^*_{t-1} - \bm{\theta})^2
\label{eq:update_ewc}
\end{equation}
We can then by identification, extract our function $\Omega_t(\bm{\theta}^*, \ell_D, \bm{\theta})$
\begin{equation}
\Omega_t(\bm{\theta}^*, \ell_{\mathbb{C}_{t-1}}, \bm{\theta}) =  \frac{1}{2} * F_{t-1} (\bm{\theta}^*_{t-1} - \bm{\theta})^2
\label{eq:ewc}
\end{equation}
$F_t$ is a tensor of size $card(\bm{\theta})^2$, specific to task $t$, characterizing the importance of each parameter $\theta_k$. $F_t$ is computed at the end of each task and will protect important parameters to learn without forgetting.
In EWC, the $F_t$ tensor is implemented as a diagonal approximation of the Fisher Information Matrix:
\begin{equation}
F_t =  \mathbb{E}_{(x,y) \in \mathbb{D}_t} \left[ \left( 
\frac{ \partial log~p(\hat{y}) }{\partial \bm{\theta}}   \right)^2
\right]
\label{eq:fisher_matrix}
\end{equation}
 where $\hat{y} \sim P(f(x; \bm{\theta}))$. 
 The diagonal approximation allows to save only $card(\bm{\theta})$ values in $F_t$.

\subsubsection{K-FAC Fisher approximation} 
The K-FAC Fisher approximation \cite{Ritter18} is very similar to EWC but approximates the Fisher matrices with a Kronecker factorization (K-FAC) \cite{martens2015optimizing} to improve the expressiveness of the posterior over the diagonal approximation. However, the Kronecker factorization saves more values than the diagonal approximation.

\subsubsection{Incremental Moment Matching}
Incremental Moment Matching (IMM) \cite{lee2017overcoming}  proposes two regularization approaches for continual learning which differ in the computation of the mean $\theta_{0:t}$ and the variance $\sigma_{0:t}$ of the parameters on all tasks.

The idea is to regularize parameters such that the moments of their posterior distributions are matched in an incremental way. It means that each parameter is approximated as a normal distribution and their mean or standard deviation should match from one task to another. This regularization, on the parameters' low-order moments, helps to protect the model from forgetting. 

\begin{itemize}

\item Mean based Incremental Moment Matching (mean-IMM)
\begin{equation}
\theta_{0:t} =  \sum_{i=0}^{t} \alpha_i \bm{\theta}^*_i  \quad\mathrm{and}\quad  \sigma_{0:t} =  \sum_{i=0}^{t} \alpha_i (\sigma_i +(\bm{\theta}^*_i - \theta_{0:t})^2)
\label{eq:mu_mean_imm}
\end{equation}
$\alpha_i$ are importance hyper-parameters to balance past task weight into the loss function. They sum up to one.

\item Mode based Incremental Moment Matching (mode-IMM)
\begin{equation}
\theta_{0:t} =  \sigma_{0:t} \cdot \sum_{i=0}^{t} (\alpha_i \sigma_i^{-1} \bm{\theta}^*_i) \quad\mathrm{and}\quad \sigma_{0:t} =  (\sum_{i=0}^{t} \alpha_i \sigma_i^{-1})^{-1}
\label{eq:mu_mode_imm}
\end{equation}
 $\sigma_{i}$ is computed as the Fisher matrix (eq. \ref{eq:fisher_matrix}) at task $i$.

\end{itemize} 
 
 Then at task $t$, with $\theta_{0:t-1}$ and $\sigma_{0:t-1}$ we can compute:
\begin{equation} 
\Omega_t(\bm{\theta}^*, \ell_{\mathbb{C}_{t-1}}, \bm{\theta}) = \frac{1}{2}  \sigma_{0:t-1} (\theta_{0:t-1} - \bm{\theta})^2
\label{eq:omega_imm}
\end{equation}
\subsubsection{Synaptic Intelligence}

The original idea of Synaptic Intelligence approach (SI) \cite{Zenke17} is to imitate synapse biological activity. Therefore, each synapse accumulates task relevant information over time, and exploits this information to rapidly store new memories without forgetting old ones. 
In this approach, we can identify $\Omega_t$ as:
\begin{equation}
\Omega_t(\bm{\theta}^*, \ell_{\mathbb{C}_{t-1}}, \bm{\theta})  =  M_t (\bm{\theta}^*_{t-1} - \bm{\theta})^2
\label{eq:synaptic_intelligence}
\end{equation}
$M_t$ is a tensor of size $card(\bm{\theta})$ specific to task $t$ characterizing the importance of each parameter $\theta_k$ over the all past tasks such as:
\begin{equation}
M_t = \sum_{0<i<t} \frac{m_i}{\Delta_i^2 + \xi}
\label{eq:synaptic_intelligence_Weights}
\end{equation}
$M_t$ is the sum over $m_i$ which characterizes the importance of each parameter on task $i$, with $\Delta_i = \bm{\theta}^*_i-\bm{\theta}^*_{i-1}$. $\xi$ is a supplementary parameter to avoid null discriminator.
\begin{equation}
m_i = \int^{T_i}_{T_{i-1}} \nabla_{\bm{\theta}} \delta_{\bm{\theta}}(t) dt
\label{eq:synaptic_intelligence_weights}
\end{equation}
With $\delta_{\bm{\theta}}(t)$ the parameter update at time step $t$.

\section{Propositions}
In this section, we present the proposition concerning the shortcomings of regularization methods in class-incremental settings. We also present  preliminary definitions and lemmas to prepare for the proposition and we illustrate the proposition with practical examples.

\subsection{Preliminary Definition / Lemma}

\begin{Definition}\textbf{Linear separability}\\
Let $S$ and $S'$ be two sets of points in an n-dimensional Euclidean space. $S$ and $S'$ are linearly separable if there exists $n+1$ real numbers $\omega_1, \omega_2,...,\omega_n, k$ such that $\forall x\in S$,  $\sum_{i=1}^n \omega_i x_i > k$ and $\forall x \in S'$,  $\sum_{i=1}^n \omega_i x_i < k$
\end{Definition}
where $x_i$ the $i$-th component of x. This means that two classes are linearly separable in an embedded space if there exists a hyper-plane separating both classes of data points.

It can be written, $\forall x \in S$ and $\forall x' \in  S'$.
\begin{equation}
(\bm{q} \cdot \bm{x} + q_0) \cdot (\bm{q} \cdot \bm{x'} + q_0) < 0
\label{eq:linear_sep}
\end{equation}
with $\bm{q}=[\omega_1, \omega_2,...,\omega_]$ and $q_0=-k$ respectively the  normal vector

\begin{figure}[h]
    \centering
    \includegraphics[width=0.3\linewidth]{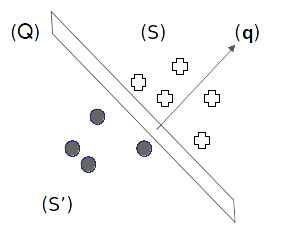}
    \caption[Illustration of a decision boundary learned.]{Illustration of a decision boundary learned between two sets of points.}
    \label{fig:2b_Replay:hyperplan}
\end{figure} and position vector of a hyper-plane $\mathcal{Q}$.

In the case of learning a binary classification with linear model, the model is the best hyper-plane separating two dataset as in Fig.~\ref{fig:2b_Replay:hyperplan}. %
As soon as eq. \ref{eq:linear_sep} can be solved, then it is possible to define a function $f(x,\theta)$ and a loss $\ell(.)$ to learn a hyper-plane that will separate $S$ and $S'$ perfectly.

\begin{Definition}\textbf{Interferences}\\
In machine learning, interferences are conflicts between two (or more) objective functions leading to prediction errors. 
\end{Definition} 

As such, optimizing one of the objective function increases the error on the other one. In continual learning, interferences happen often after a drift in the data distribution. The loss on previous data is increased with the optimization of the loss for the new data leading to interferences and catastrophic forgetting.

\checked{We would like to present the following general lemma, as a preparatory step for the later proposition presented in this chapter:}

\begin{lemma}
$\forall (S, S')$ bounded set of discrete points in $R^n$ and linearly separable by a hyper-plane $\mathcal{Q}$. For any algorithm, it is impossible to assess $\mathcal{Q}$ as a separation hyper-plane without access to
$S'$ set.
\label{lem:boundary}
\end{lemma}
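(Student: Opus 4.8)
The plan is to prove Lemma~\ref{lem:boundary} by an indistinguishability (adversarial) argument: any procedure that sees only $S$ together with the candidate hyperplane $\mathcal{Q}$ must return the same verdict on two instances that share the same $S$ but differ in $S'$, while the statement ``\,$\mathcal{Q}$ separates the two sets\,'' has opposite truth values on them. Since one fixed verdict cannot be correct on both, no such procedure can assess $\mathcal{Q}$ correctly in general.

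First I would fix notation. By the hypothesis and the definition of linear separability, write $\mathcal{Q} = \{\bm{x} : \bm{q}\cdot\bm{x} + q_0 = 0\}$ and assume without loss of generality that $S$ lies in the open half-space $H^{+} = \{\bm{x} : \bm{q}\cdot\bm{x} + q_0 > 0\}$, so that $S'$ lies in the complementary open half-space $H^{-}$. Because $S$ is bounded, its convex hull $\mathrm{conv}(S)$ is a bounded (compact) convex set, hence contained in some ball $B(c,R)$.

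Next I would exhibit the two instances. Instance $1$ is the given pair $(S, S')$, for which $\mathcal{Q}$ is by hypothesis a separating hyperplane, so the correct assessment is ``yes''. Instance $2$ is $(S, \tilde{S}')$ with $\tilde{S}' = \{p\}$ a single point chosen in $H^{+}$ and outside $B(c,R)$; such a $p$ exists since $H^{+}$ is unbounded while $B(c,R)$ is bounded. Then $\tilde{S}'$ is a bounded, discrete (finite) set, so $(S,\tilde{S}')$ is an admissible instance under the lemma's hypotheses; it is moreover still linearly separable, since $p \notin \mathrm{conv}(S)$ and a point can be strictly separated from a disjoint compact convex set. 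Yet $\mathcal{Q}$ is \emph{not} a separating hyperplane for $(S,\tilde{S}')$, because $S$ and $p$ both lie in $H^{+}$, which violates the sign condition of linear separability (eq.~\ref{eq:linear_sep}); hence the correct assessment for Instance $2$ is ``no''.

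Finally I would conclude. An algorithm that does not access $S'$ receives exactly the same input $(S,\mathcal{Q})$ on both instances, so it returns the same answer (or, if randomized, the same distribution over answers) in both cases. Since the correct answers differ, it errs on at least one of the two admissible instances, contradicting correctness; therefore $\mathcal{Q}$ cannot be assessed as a separating hyperplane without access to $S'$. The only delicate point — the ``hard part'' — is ensuring that Instance $2$ genuinely satisfies all the hypotheses (bounded, discrete, and linearly separable), which is precisely why $p$ is placed far from $S$ but on the same side of $\mathcal{Q}$; the remainder is routine.
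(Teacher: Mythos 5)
Your proof is correct, but it takes a genuinely different route from the paper's. You hold the visible data $(S,\mathcal{Q})$ fixed and vary the \emph{hidden set}: you build a second admissible world $(S,\tilde{S}')$ with $\tilde{S}'=\{p\}$ on the same side of $\mathcal{Q}$ as $S$ (bounded, discrete, and still linearly separable by the separating-hyperplane theorem since $p\notin\mathrm{conv}(S)$), so the same input forces the same output on two instances whose correct answers differ. The paper instead holds $(S,S')$ fixed and varies the \emph{hyperplane}: it constructs a second hyperplane $\mathcal{P}$ (the opposing supporting bound of $S$ along the direction $\bm{q}$) that satisfies the only condition checkable from $S$ alone — all of $S$ strictly on one side — yet fails to separate $S$ from the actual $S'$, so no criterion computable from $S$ can certify $\mathcal{Q}$. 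Your two-instance adversarial formulation is the logically tighter statement about algorithms (identical input, contradictory ground truths, hence no always-correct assessor) and sidesteps the paper's more delicate geometric bookkeeping about opposing bounds and sign patterns (including the step it leaves to the reader); its only mild cost is that it requires reading ``assess'' as verification without the promise that $\mathcal{Q}$ separates, since your second instance deliberately violates that promise. The paper's construction, by contrast, stays entirely within the given pair $(S,S')$ and directly supports its narrative that infinitely many candidate separators are consistent with $S$ with no way to prefer one, which is the interpretation it reuses afterwards. Both arguments establish Lemma~\ref{lem:boundary}; yours could be spliced in as a shorter, more standard indistinguishability proof.
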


\begin{proof}
Let $S$ and $S'$ be two bounded and linearly separable set of discrete points in $R^n$. %
 Let $\mathcal{Q}$ be a potential linear separation between $S$ and $S'$.
The hyper-plane $\mathcal{Q}$ can not be assessed as a linear separation between $S$ and $S'$ if there exists at least one hyper-plane indistinguishable from $\mathcal{Q}$ and which is not a separation boundary between $S$ and $S'$.

Let $\mathcal{P}$ be a hyper-plane, defined as a normal vector $\bm{p}$ and position vector $p_0$. $\mathcal{P}$ is a separation boundary between $S$ and $S'$ if all the point of $S$ are on one side of $\mathcal{P}$ and all point of $S'$ are on the other side. It can be formalized as follows:

\medskip

$\forall \bm{x} \in S ~ \& ~ \forall \bm{x'} \in S' $:

\begin{equation}
(\bm{p} \cdot \bm{x} + p_0) \cdot (\bm{p} \cdot \bm{x'} + p_0) < 0
\label{eq:linear_sep2}
\end{equation}
where $<\cdot>$ is the scalar product.

Without the access to $S'$, eq. \ref{eq:linear_sep2} can not be evaluated to verify that $S$ and $S'$ are each entirely on different side of the $\mathcal{P}$.

However, we can evaluate if all the point of $S$ are on the same side of $\mathcal{P}$. By definition if all the point sof $S$ are above $\mathcal{P}$ then: %

\medskip

$\forall \bm{x} \in S$
\begin{equation}
(\bm{p} \cdot \bm{x} + p_0) > 0
\label{eq:upper_set}
\end{equation}

If all the point are under $\mathcal{P}$ then:
\begin{equation}
(\bm{p} \cdot \bm{x} + p_0) < 0
\label{eq:under_set}
\end{equation}

And if neither eq. \ref{eq:upper_set} nor eq. \ref{eq:under_set} are verified then all the points of $S$ are not on the same side of $\mathcal{P}$.
Finally, we can merge both \ref{eq:upper_set} and eq. \ref{eq:under_set} and verify only:

\medskip

$\forall \bm{x} \in S$
\begin{equation}
sign{(\bm{p} \cdot \bm{x} + p_0)}  =  constant
\label{eq:sign}
\end{equation}
where $sign(.)$ is the function which returns the sign of any real value. 

The lemma \ref{lem:boundary} is proven if $\exists$ $\mathcal{P}$ such as eq. \ref{eq:sign} is true but not eq. \ref{eq:linear_sep2}, because $\mathcal{P}$ would not be a linear separation of $S$ and $S'$ and would not be distinguishable from $\mathcal{Q}$ without access to $S'$. 

Now, we will build an hyper-plan $P$ that respect eq. \ref{eq:sign} and not eq. \ref{eq:linear_sep2}.
We know that $S$ is bounded, then it has both upper and lower bounds in all the direction of $R^n$.
If eq. \ref{eq:sign} is respected, then $\mathcal{Q}$ is a bound of $S$ in the direction of its normal vector $\bm{q}$. If we move  $\mathcal{Q}$ along the direction of $\bm{q}$ (i.e. if we change the position vector $q_0$ ), we can find at least one other plane $\mathcal{P}$ respecting eq. \ref{eq:sign}: the opposing bound of $S$ along the direction $\bm{q}$.

Since, $\mathcal{P}$ and $\mathcal{Q}$ are two opposing bounds of $S$ in the same direction $\bm{q}$, then: 

\medskip

$\forall x \in S$
\begin{equation}
sign{(\bm{p} \cdot \bm{x} + p_0)}  \neq  sign{(\bm{q} \cdot \bm{x} + q_0)}
\label{eq:sign2}
\end{equation}

If $\mathcal{Q}$ is a lowerbound of $S$ in the direction $\bm{q}$ and an upperbound of $S'$ in the same direction then, a lowerbound of $S'$ in the direction $\bm{q}$ is a lowerbound of $S$ in the same direction and  an upperbound of $S$ in the direction $\bm{q}$ is an upperbound of $S'$ in the same direction. (We leave the demonstration to the reader).

Therefore, $\mathcal{Q}$ and $\mathcal{P}$ are both upperbounds or both lowerbounds of $S'$ in the direction of $\bm{q}$:

\medskip

$\forall x' \in S'$:
\begin{equation}
sign{(\bm{p} \cdot \bm{x'} + p_0)}  =  sign{(\bm{q} \cdot \bm{x'} + q_0)}
\label{eq:sign3}
\end{equation}

Then with \ref{eq:sign2} and eq. \ref{eq:sign3}:

\begin{equation}
(\bm{p} \cdot \bm{x} + b) \cdot (\bm{p} \cdot \bm{x'} + b) > 0
\label{eq:linear_sep3}
\end{equation}

Consequently, from eq \ref{eq:sign} and eq \ref{eq:linear_sep3}, $\exists$ a hyper-plane $\mathcal{P}$ which respects eq. \ref{eq:sign} and not eq \ref{eq:linear_sep2}, $\mathcal{P}$ is indistinguishable from $\mathcal{Q}$ and is not a separation boundary between $S$ and $S'$. 

\end{proof}

Let's summarize this demonstration in a more insightful way. For any bounded set of points $S$, there is an infinite number of linearly separable set of points. Thus, there exists an infinite number of potential separating hyper-planes. If the second set of points $S'$ is not known, then it is not possible to choose among the infinite number of potential separating hyper-plane which one is a good one. And even if one is chosen, there is no way to tell if it is better or not than another.

In the context of machine learning, this demonstration says that without an assessment criterion for a classification problem, it is not possible to learn a viable solution. Hence, we can not optimize the parameters. For binary classification, the lemma \ref{lem:boundary} can be interpreted as: \say{The decision boundary between two classes can not be assessed nor learned if there is no access to data from both simultaneously}.

\checked{After presenting the lemma about finding linear separation between sets of points, we would like to add another one concerning finding a projection making two mixed sets of points linearly separable. This lemma will also help in the proposition proof presented in this chapter.}

\begin{lemma}
 $\forall (S, S')$ two bounded datasets not linearly separable.
 For any algorithm, it is impossible to assess a function $g(.)$ as a projection of $S$ and $S'$ into a space were they are linearly separable without access to
 $S'$ set.
\label{lem:features}
\end{lemma}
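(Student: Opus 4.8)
The plan is to reduce Lemma~\ref{lem:features} to the already-established Lemma~\ref{lem:boundary} by working in the image space of the candidate projection. Suppose, for contradiction, that some algorithm could assess a function $g(\cdot)$ as a valid projection making $S$ and $S'$ linearly separable, using only access to $S$. Consider the image sets $g(S)$ and $g(S')$ inside the target Euclidean space $R^m$. Since the algorithm only sees $S$, it only sees $g(S)$ in the image space; the set $g(S')$ is unknown to it. For $g$ to be a valid projection, $g(S)$ and $g(S')$ must be linearly separable in $R^m$, i.e.\ there must exist a hyper-plane $\mathcal{Q}$ in $R^m$ separating them.

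First I would observe that $g(S)$ is a bounded set of discrete points in $R^m$ (the image of a bounded discrete set under $g$), and that $g(S')$ is some bounded set on the other side of $\mathcal{Q}$. This is precisely the configuration of Lemma~\ref{lem:boundary}: two bounded, linearly separable sets of discrete points, one of which ($g(S')$) is inaccessible. By Lemma~\ref{lem:boundary}, no algorithm can assess $\mathcal{Q}$ as a separating hyper-plane between $g(S)$ and $g(S')$ without access to $g(S')$. In particular, there exists another hyper-plane $\mathcal{P}$ in $R^m$ indistinguishable from $\mathcal{Q}$ given only $g(S)$, such that $\mathcal{P}$ fails to separate $g(S)$ from $g(S')$. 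The plan is then to lift this ambiguity back: whatever certificate the algorithm produces to assert that $g$ linearises $S$ versus $S'$ can only depend on $g(S)$ (since $S'$ is unavailable), so it cannot distinguish the situation where $g(S')$ lies entirely on the far side of $\mathcal{Q}$ from a situation where $g(S')$ straddles $\mathcal{Q}$ (for instance if $g(S')$ coincides with the opposite bound of $g(S)$ along the normal direction, mimicking the $\mathcal{P}$ construction in the proof of Lemma~\ref{lem:boundary}). In the second situation $g$ is \emph{not} a valid linearising projection, yet it is indistinguishable from the first. Hence $g$ cannot be assessed as a valid projection without access to $S'$, contradicting our assumption.

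I would close by spelling out the machine-learning reading of the result, parallel to the remark after Lemma~\ref{lem:boundary}: assessing a feature map (or representation) that makes two classes separable is an instance of assessing a decision boundary \emph{in the image space}; the impossibility of the latter without both classes' data forces the impossibility of the former. One subtlety worth a sentence is that the statement speaks of ``two bounded datasets not linearly separable'' in the original space: this hypothesis is what makes the lemma non-vacuous (a non-trivial $g$ is genuinely required), but it plays no role in the obstruction itself, which lives entirely downstream of $g$.

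The main obstacle I anticipate is formalising ``assess a function $g(\cdot)$ as a projection'' cleanly enough that the reduction is airtight --- in particular, arguing rigorously that any such assessment is a function of $g(S)$ alone (and of $g$, which is a fixed known object), so that it genuinely inherits the indistinguishability of $\mathcal{Q}$ and $\mathcal{P}$ from Lemma~\ref{lem:boundary}. A secondary, more technical point is ensuring that the $\mathcal{P}$-style counterexample built in $R^m$ corresponds to an \emph{admissible} choice of the unseen set $S'$ --- i.e.\ that there really exists a bounded dataset $S'$ in the original space with $g(S')$ equal to (or contained in) the problematic configuration; since we are free to choose the adversarial $S'$, picking $S'$ to be any finite point set whose image is the opposing bound of $g(S)$ suffices, and the not-linearly-separable hypothesis on $(S,S')$ can be arranged or simply dropped as irrelevant to the negative conclusion.
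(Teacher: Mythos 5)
Your route is genuinely different from the paper's. You keep the candidate map $g$ fixed and let the unseen dataset $S'$ vary adversarially, reducing everything to Lemma~\ref{lem:boundary} applied in the image space; the paper instead keeps $(S,S')$ fixed and exhibits a second, concrete function, the identity, which satisfies the only condition checkable from $S$ alone (eq.~\ref{eq:sign} on its image of $S$, automatic since $S$ is bounded) yet is certainly not a linearising projection, precisely because $(S,S')$ are assumed not linearly separable. So in the paper the non-separability hypothesis is load-bearing, while you declare it irrelevant and droppable; that asymmetry is the first warning sign that your reduction is not just a repackaging of the paper's argument.

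The concrete gap is the step you flag and then wave away: the admissibility of the adversarial $S'$. Your contradiction needs a bounded $S'$ in the input space whose image under $g$ straddles $\mathcal{Q}$ (or sits at the opposing bound of $g(S)$), and you assert that \say{any finite point set whose image is the opposing bound of $g(S)$} suffices. But $g$ is an arbitrary fixed map, and nothing guarantees that this region of the codomain lies in the range of $g$. Take a $g$ that sends every point of $S$ to one cluster and every point of the complement of $S$ to a single distant cluster: for every admissible $S'$ disjoint from $S$, the sets $g(S)$ and $g(S')$ are then linearly separable, so the \say{bad world} you need simply does not exist and no contradiction arises for such a $g$ — indeed this $g$ could be assessed as valid from $S$ and $g$ alone, so your claim that the certificate's dependence on $g(S)$ only forces indistinguishability fails without further hypotheses. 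To repair the argument you must either restrict the class of candidate maps (e.g.\ to feature extractors whose range covers the problematic configurations) or actually use the structure you discarded, such as the non-separability of $(S,S')$, to show the bad configuration is always realisable as $g(S')$; as written, the lifting step does not go through for all $g$, whereas the paper's identity-function construction sidesteps the realisability question entirely because its alternative object is a function, not a dataset.
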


\begin{proof}

$g(.)$ is a projection of $S$ and $S'$ into a space where they are linearly separable means:

\medskip

$\forall \bm{x} \in S ~ \& ~ \forall \bm{x'} \in S'$, then $g(x)$ and $g(x')$ respect eq. \ref{eq:linear_sep2}.

\medskip

Without access to $S'$ this condition can not be verified. However, we can verify eq. \ref{eq:sign} with $g(x)$.

The lemma 3.4 is proven if $\forall \bm{x} \in S ~ \& ~ \forall \bm{x'} \in S'$, $\exists$ a projection $f$, that respect eq. \ref{eq:sign} with $f(x)$ but not eq. \ref{eq:linear_sep2} with $f(x)$ and $f(x')$, because then $f$ and $g$ are indistinguishable without access to $S'$.

Let $f$ be the identity function, $\forall z \in \mathbb{R}$ $f(z)=z$.
We define $S_f$ and $S'_f$, the set of point $S$ and $S'$ after projection by $f$. Since $f$ is the identity function, $S$ and $S'$ are respectively identical to $S_f$ and $S'_f$.
Since $S$ is bounded, $S_f$ is also bounded. Hence there exists a hyper-plane $\mathcal{P}$ that verify eq. \ref{eq:sign} with $f(x)$ $\forall x \in S$. By hypothesis, $S$ and $S'$ are not linearly separable so $S_f$ and $S'_f$ is also not linearly separable. Then $\exists !$ hyper-plane $\mathcal{P}$ which respect eq. \ref{eq:linear_sep2} with $f(x)$ and $f(x')$.

Thus, $f$ exists and therefore it is impossible to assess any function as a projection of $S$ and $S'$ into a space were they are linearly separable without %
$S'$ set.

\end{proof}

In a more insightful way, for any bounded set of points, there is an infinite number of projections of the initial set of point in a space where it could be linearly separable from another set of points. Then, if you don't know the second set of points $S'$ you can not choose among the infinite number of potential projections which one is a good one. And if you ever choose one, you have no way to tell if it is better or not than another.

In the context of binary classification, the previous lemma can be interpreted as: \say{Two classes representation cannot be disentangled if there is no access to data from both simultaneously}.

In those lemma, the concept of \say{not having access to} a certain dataset can both be applicable to not being able to sample data point from the distributions and to not have a model of the dataset. 
It can be generalized to not having access to any representative data distribution of a dataset.

\subsection{Shortcomings in class-incremental tasks}
\label{sub:prop}

We would like to prove now that in incremental-class tasks, it is not possible to learn to discriminate classes from different tasks using only a regularization based memory. 
The main point is that, to correctly learn to discriminate classes over different tasks, the model needs access to both data distributions simultaneously. 

In regularization methods, the memory only characterizes the model and the important parameters as explained in Section \ref{sub:pb}. This memorization gives insight on some past data characteristics but it is not a model of their distributions globally.%
If we take again the cat/dog example, a model that needs to discriminate white cats from black cats will learn to discriminate black features from white features and this can be saved in $\Omega$ but $\Omega$ will not save the full characteristics of a cat because the model never has to learn it.

\begin{proposition}
While learning a sequence of disjoint classification tasks, if the memory $\Omega$ of the past tasks is only dependent on trained weights and learning criterion of previous task and does not model the past distribution, it is not possible to learn new tasks without interference.
\label{prop:regularization}
\end{proposition}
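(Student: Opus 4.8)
The plan is to reduce Proposition \ref{prop:regularization} to Lemmas \ref{lem:boundary} and \ref{lem:features}, by showing that learning to discriminate classes from different tasks is exactly the kind of problem those lemmas forbid when one does not have access to the second dataset. The key observation is that the regularization memory $\Omega_{\mathbb{C}_{t-1}}$, by hypothesis, is a function only of the trained weights $\bm{\theta}^{\ast}_{t-1}$, the previous learning criterion $\ell_{\mathbb{C}_{t-1}}$ and the current parameters $\bm{\theta}$; it does not model the past data distribution. So, from the point of view of learning task $T_t$, the only ``access to the past'' the algorithm has is through this memory term, and I must argue that this is not ``access to'' the past dataset in the sense required by the lemmas.

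First I would set up the concrete situation: consider two consecutive tasks, say $T_{t-1}$ with classes $\{A, B\}$ (e.g.\ black cats vs.\ white cats) and $T_t$ with classes $\{C, D\}$ (e.g.\ black dogs vs.\ white dogs). At the end of $T_{t-1}$ the model has learned a decision function separating $A$ from $B$; the parameters $\bm{\theta}^{\ast}_{t-1}$ and the loss $\ell_{\mathbb{C}_{t-1}}$ encode a hyperplane (in the appropriate embedded/feature space) separating these two, together with per-parameter importances. The claim to establish is: no choice of $\bm{\theta}$ optimizing eq.~\ref{eq:update_general} can be guaranteed to also place a correct decision boundary between, say, $A$ and $C$ (white cats vs.\ black dogs), because while learning $T_t$ the algorithm has no representative data distribution for $A$ (or $B$) available — only the memory $\Omega$, which characterizes the $A$-vs-$B$ boundary but not the location or extent of the set $A$ itself. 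I would invoke Lemma \ref{lem:boundary} here: the pair $(C, A)$ (or, in feature space after whatever projection the network applies, $(g(C), g(A))$) is a pair of bounded linearly separable sets, and without access to $A$ it is impossible to assess any hyperplane as a separating boundary between them; hence the regularization loss provides no learning criterion that forces the inter-task boundary to be correct. Symmetrically, when the classes from different tasks are not linearly separable in the current representation, Lemma \ref{lem:features} shows one cannot even assess a feature projection that would disentangle them without access to the past data. In both cases, since the continual loss eq.~\ref{eq:update_general} only constrains $\bm{\theta}$ to stay near $\bm{\theta}^{\ast}_{t-1}$ in a way weighted by $\Omega$, and $\Omega$ does not encode the past distribution, the optimum of eq.~\ref{eq:update_general} is free to misclassify past-task data relative to new-task data — i.e.\ interference is unavoidable.

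The remaining step is to make precise that ``the memory $\Omega$ does not constitute access to the past dataset'' in the technical sense the lemmas need. I would argue that, since $\Omega_{\mathbb{C}_{t-1}}$ is evaluated at the \emph{new-task} inputs $x$ (it only depends on $\bm{\theta}^{\ast}_{t-1}$, $\ell_{\mathbb{C}_{t-1}}$ as a fixed functional, and $\bm{\theta}$), and since $\ell_{\mathbb{C}_{t-1}}$ was itself only ever a criterion for the $A$-vs-$B$ discrimination, nothing in eq.~\ref{eq:update_general} ever evaluates anything at points of $\mathbb{D}_{t-1}$ nor reconstructs their distribution; concretely, two different past datasets with identical optimal $\bm{\theta}^{\ast}_{t-1}$ and identical $\ell_{\mathbb{C}_{t-1}}$ (but with, say, the set $A$ shifted so it falls on the other side of the future $A$-vs-$C$ boundary) would yield the exact same regularized loss eq.~\ref{eq:update_general}, so the learner cannot distinguish them — which is precisely the indistinguishability invoked in the proofs of Lemmas \ref{lem:boundary} and \ref{lem:features}. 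Combining this indistinguishability with those lemmas gives that no regularized optimum $\bm{\theta}^{\ast}_t$ can be certified to correctly separate classes across the task boundary, hence learning $T_t$ necessarily incurs interference with $T_{t-1}$ in the class-incremental (no test label) setting, proving Proposition \ref{prop:regularization}.

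The main obstacle I anticipate is the last step: rigorously justifying that $\Omega$ ``does not model the past distribution'' in a way that lets me literally exhibit two past datasets producing the same regularized objective. This requires being careful about what degrees of freedom the importance matrix $F_{t-1}$ (or $M_{t-1}$, $\sigma_{0:t-1}$) actually pins down — one must construct the shifted past dataset so that not only $\bm{\theta}^{\ast}_{t-1}$ but also the importance weights are unchanged, which is where the assumption ``$\Omega$ depends only on trained weights and learning criterion of the previous task'' (as opposed to, e.g., a rehearsal coreset) does the real work. Once that construction is in hand, the reduction to Lemma \ref{lem:boundary} / Lemma \ref{lem:features} is essentially immediate.
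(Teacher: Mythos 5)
Your proposal follows essentially the same route as the paper's own proof: decompose the network into a feature extractor and a linear output layer, then split into the case where cross-task classes are already linearly separable in the embedding (where Lemma~\ref{lem:boundary} shows no separating hyper-plane can be assessed without the past data) and the case where the feature extractor must be updated (where Lemma~\ref{lem:features} shows no disentangling projection can be assessed), concluding that interference is unavoidable. The extra indistinguishability construction you flag as the main obstacle (exhibiting two past datasets with identical $\bm{\theta}^{\ast}_{t-1}$ and identical importance weights) is not carried out in the paper either — the paper simply invokes the hypothesis that $\Omega$ does not model the past distribution and applies the lemmas directly — so your plan is, if anything, aiming at a slightly more rigorous justification of the same argument.
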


\begin{proof}

We are here in the context of learning with a deep neural network. We can decompose the model into a non-linear feature extractor $g(\cdot)$ and a linear output layer to predict a class $y=argmax (\sigma(A \cdot g(x) + b)$:

The projection $g(.)$ ensures the linear separability of classes and the output layer learns the separation.
The output layer allows for each class $i$ to learn a hyper-plane $A[:,i]$ with bias $b[i]$ that separate all classes from the class $i$, such as:
$\forall i \in \llbracket 1, N \rrbracket$ %
\begin{equation}
\forall (x,y) \in \mathbb{D}_t,  \operatorname*{argmax}_{i} (A[:,i]h + b[i]) = y
\label{eq:output_class}
\end{equation}
With $h = g(x)$. However, if we look at the classes independently the model should only require that:
$\forall i \in \llbracket 1, N \rrbracket$,
\begin{equation}
\forall (x,y) \in \mathbb{D}_t, 
\begin{cases} 
    \mathrm{if} ~ y=i, A[:,i]h + b[i] > 0 \\
    \mathrm{if} ~ y \neq i, A[:,i]h + b[i] < 0 \\
\end{cases}
\label{eq:hyper-plane}
\end{equation}
We now study how to learn new tasks $T_t$ for $0<t<N$. There are two different cases, \textit{first} if $g(\cdot)$ is already a good projection for $T_t$, i.e. classes are already linearly separable in the embedded space. \textit{Secondly}, if $g(\cdot)$ needs to be adapted, i.e. classes are not yet linearly separable in the embedded space and new features need to be learned by $g(\cdot)$ to fix it. \checked{Those two cases are illustrated with simple examples in the next section}.%
We call features the intrinsic characteristics of data that a model needs to detect to distinguish a class from another.
\begin{itemize}
    \item \textbf{First case:} \textit{Classes are linearly separable }\\
For all task $\forall (t) \in \llbracket 1, N-1 \rrbracket$, $T_{t-1}$ happens before $T_t$ in the continuum.
Since we are in a regularization setting, at task $T_t$, we have access to $\Omega_{t-1}$ which contains classification information from previous tasks including $T_{t-1}$. However, by hypothesis, $\Omega_{t-1}$ does not model the data distribution from $T_{t-1}$ and therefore it does not model data distribution from $T_{t-1}$ classes.

$T_t$ and $T_{t-1}$ classes images are a bounded set of points and $T_{t-1}$ points are not accessible,
consequently following lemma \ref{lem:boundary}, it is impossible to assess a boundary between any classes from $T_t$ and any classes $T_{t-1}$ even if by hypothesis this boundary exists. Therefore, we can not learn to discriminate classes from different tasks in this case.

    \item \textbf{Second case:} \textit{$g(\cdot)$ needs to be updated}.\\
Let $\delta_{t-1}$ be the set features already learned by $g_{t-1}(\cdot)$ the feature extractor from previous task.
$\Omega_{t-1}$ should keep $\delta_{t-1}$ unchanged while learning $T_t$.
Either, $\delta_{t-1}$ allows to solve $T_t$ and we are in \textit{first case}, or a new set of features $\delta_t$ needs to be learned while learning $T_t$.
In the second case, the set $\delta_t$ contains features to solve $T_t$, but features $\delta_{t-1:t}$ that distinguish classes from $T_{t-1}$ to classes from $T_t$ should also be learned. Then two cases raise, $\delta_{t-1:t} \not\subset \delta_t$ or $\delta_{t-1:t} \subset \delta_t$.

\begin{itemize}
\item if $\delta_{t-1:t} \not\subset \delta_t$, then supplementary features need to be learned to $\delta_{t-1}$ and $\delta_t$.
$T_t$ and $T_{t-1}$ classes images are a bounded set of points not linearly separable and since $\Omega_{t-1}$ does not give access to $T_{t-1}$ data points, from lemma \ref{lem:features} we can not assess a projection that put classes from $T_t$ and classes $T_{t-1}$  into a linearly separable space, i.e. we can not learn the set of features $\delta_{t-1:t}$ to discriminate $T_{t-1}$ from $T_t$ and solve the continual problem.

\item $\delta_{t-1:t} \subset \delta_t$ is possible but %
there is no way to project data from $T_{t-1}$ in the new latent space since they are no more accessible. Therefore, at $t$ we can not know if $\delta_{t-1:t} \subset \delta_t$ and which features of $\delta_t$ are in $\delta_{t-1:t}$.

\end{itemize}
\end{itemize}

To conclude, it is not possible to learn proper boundaries between classes in different tasks should be the feature space is already adapted to it or not. 
 There will be in any way conflict between losses leading to interference in the decision boundaries either because classes are not linearly separable or because a separation hyper-plane can't be found.
The regularization methods can not discriminate classes from different tasks and they are then not suited to class-incremental settings.
A simple trick used in some regularization approaches to compensate this shortcomings is to use the task label for inference, it gives a simple way to distinguish tasks from each other. However, it assumes the algorithms rely on a supervision signal for inference.

\end{proof}

We can note that
proposition \ref{prop:regularization}, still holds if tasks are only partially disjoint, i.e. only some classes appears only once in the continual curriculum.

Indeed, in partially disjoint settings, several classes are never in the same task.
If we define two set of disjoint classes $Y$ and $Y'$, that will never be in the same task. The demonstration of proposition \ref{prop:regularization} can be applied on $Y$ and $Y'$.
Then, classes $Y$ and $Y'$ will suffer from interference showing a shortcoming of regularization methods for this case too.
Therefore, if there is a class-incremental setting hidden into another setting, the regularization approach will not be able to solve it perfectly either. We could note that in many applications there are latent class-incremental problem to address in the learning curriculum. We mention some applications in Section \ref{sub:applications}.

\subsection{Practical examples}
\label{sub:practical_examples}

To illustrate the proposition from Section \ref{sub:prop}, we present two insightful examples of regularization limitations.

\subsubsection{The Task Separability Problem}

In the first case of proposition \ref{prop:regularization} proof, we already have a perfect feature extractor.
Classes are already linearly separable and only the output layer needs to be learned continually. %

If we have only two classes in the first task, the model will learn one hyper-plane $\mathcal{Q}_0$ separating the instances of these two classes (See Figure \ref{fig:2b_Replay:simple_continual}). 
For the second task, we have two new classes and a regularization protecting $\mathcal{Q}_0$. Then, we can learn a hyper-plane $\mathcal{Q}_1$ that separates our two new classes.
In the end, we have learned the hyper-planes $\mathcal{Q}_0$ and $\mathcal{Q}_1$ to distinguish classes from $T_0$ and classes from $T_1$. But none of those hyper-planes helps to discriminate $T_0$ classes from $T_1$ classes, as illustrated Figure \ref{fig:2b_Replay:simple_continual}. This will lead to error in the neural networks predictions.

\begin{figure}[h]
    \centering
    \begin{subfigure}[t]{0.3\linewidth}
        \centering
        \includegraphics[width=0.9\linewidth]{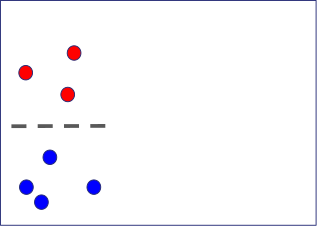}
    \end{subfigure}
    \begin{subfigure}[t]{0.3\linewidth}
        \centering
        \includegraphics[width=0.9\linewidth]{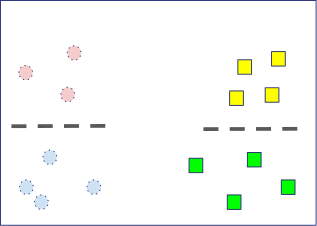}
    \end{subfigure}
    \caption[Simple case of continual learning classification in a multi-task setting.]{Simple case of continual learning classification in a multi-task setting. Left, the task T0: learning a hyper-plane splitting two classes (red and blue dots). Right, the task  T1: learning a line splitting two classes (yellow and green squares) while remembering $T_0$ models without remembering $T_0$ data (pale red and blue dots).}
    \label{fig:2b_Replay:simple_continual}
\end{figure}

\subsubsection{The Latent Features Problem}

In the second case of Proposition \ref{prop:regularization} proof, the feature extractor needs to be updated to learn new features extractors.

If we have only two classes in the first task, the model will learn to separate classes instances into two groups with the features extractor $g_0$ and one hyper-plan $\mathcal{Q}_0$ separating the two classes instances (See Figure \ref{fig:2b_Replay:T0}). 

For the second task, we have two new classes and a regularization protecting $\mathcal{Q}_0$ and $g_0$. Then, we can learn a features extractor $g_1$ to disentangle new class instances in the latent space and a hyper-plane $\mathcal{Q}_1$ that separates them.
In the end, we can disentangle classes from $T_0$ and classes from $T_1$ and we have two hyper-planes $\mathcal{Q}_0$ and $\mathcal{Q}_1$ to distinguish classes from $T_0$ and classes from $T_1$. 
But we can not disentangle  $T_0$ classes from $T_1$ classes and none of the learned hyper-planes helps to discriminate $T_0$ classes from $T_1$ classes (See Fig.~ \ref{fig:2b_Replay:T1}). It leads to errors in the neural network predictions. %
At test time, it will not be possible for the model to discriminate between classes correctly.

\begin{figure}
    \centering
    \begin{subfigure}[t]{0.3\linewidth}
        \includegraphics[width=0.9\linewidth]{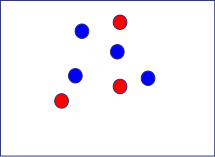}
    \end{subfigure}%
    ~ 
    \begin{subfigure}[t]{0.3\linewidth}
        \centering
        \includegraphics[width=0.9\linewidth]{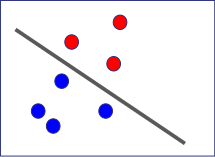}
    \end{subfigure}
    \caption[Case of representation overlapping in one task classification setting.]{$\mathbb{D}_0$ feature space before learning $T_0$ (Left), $\mathbb{D}_0$ feature space after learning $T_0$ with a possible decision boundary (Right). Data points are shown by blue and red dots. The line (right part) is the model learned to separate data into the feature space.}
    \label{fig:2b_Replay:T0}
\end{figure}

\begin{figure}
        \centering
        \begin{subfigure}[t]{0.3\linewidth}
        \centering
        \includegraphics[width=0.9\linewidth]{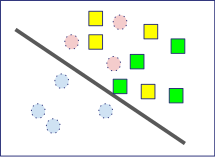}
    \end{subfigure}%
    ~ 
    \begin{subfigure}[t]{0.3\linewidth}
        \centering
        \includegraphics[width=0.9\linewidth]{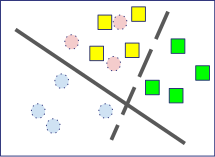}
    \end{subfigure}
    \caption[Case of representation overlapping in multi-tasks classification setting.]{Case of representation overlapping while continual learning classification in a multi-task setting. At task $T_1$, feature space of $\mathbb{D}_1$ before learning $T_1$ (Left), Feature space of $\mathbb{D}_1$ after learning $T_1$ with a possible decision boundary (Right). New data are plotted as yellow and green squares and old data that are not available anymore to learn are shown with pale red and blue dots. } 
    \label{fig:2b_Replay:T1}
\end{figure}

However, with the task label for inference, we could potentially perfectly use $g_0$, $g_1$, $\mathcal{Q}_0$ and $\mathcal{Q}_1$ to make good predictions.
Nevertheless, assuming that the task label is available for prediction is a strong assumption in continual learning and involves a need of supervision at test time.

\section{Experiments}
\label{sec:exp}

To illustrate the concrete effects of the limitations presented earlier, we propose the dataset \say{MNIST-Fellowship} for our experiments. %
 This dataset is composed of three datasets (Fig. \ref{fig:2b_Replay:MNIST_Fellowship}): MNIST \cite{LeCun10}, Fashion-MNIST \cite{Xiao2017} and KMNIST \cite{clanuwat2018deep}, each composed of 10 classes, those datasets should be learned sequentially one by one. We choose this dataset because it gathered three easy datasets for prototyping machine learning algorithms but solving those three quite different datasets is still harder than solving only one.

\begin{figure}
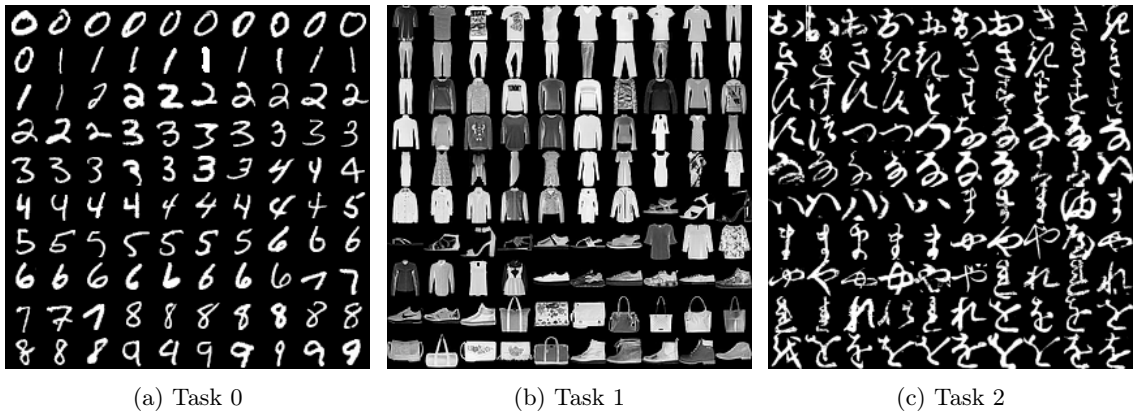

    \centering
    \begin{subfigure}[t]{0.32\linewidth}
        \centering
        \includegraphics[width=\linewidth]{Files/2b_Replay_Better/Samples/mnist_fellowship_task_0.png}
        \caption{Task 0}
    \end{subfigure}%
    ~ 
    \begin{subfigure}[t]{0.32\linewidth}
        \centering
        \includegraphics[width=\linewidth]{Files/2b_Replay_Better/Samples/mnist_fellowship_task_1.png}
        \caption{Task 1}
    \end{subfigure}%
    ~ 
    \begin{subfigure}[t]{0.32\linewidth}
        \centering
        \includegraphics[width=\linewidth]{Files/2b_Replay_Better/Samples/mnist_fellowship_task_2.png}
        \caption{Task 2}
    \end{subfigure}
    \caption[Illustration of the MNIST-Fellowship dataset's three tasks.]{The three tasks of the MNIST-Fellowship dataset.}
    \label{fig:2b_Replay:MNIST_Fellowship}
\end{figure}

Our goal is to illustrate the limitation of regularization based methods in disjoint settings. In particular that they can not distinguish classes from different tasks. We would like also to show that the shortcomming happen both in the output layer and in the feature extractor. Thus, we propose three different settings with the \textit{MNIST-Fellowship} dataset.

 \begin{itemize}
    \item \textbf{1. Disjoint setting}: all tasks have different classes (i.e. from 0 to 29). 
    
     \item \textbf{2. Joint setting}: all tasks have the same classes ( i.e. from 0 to 9) but different data. 
     
     \item \textbf{3. Disjoint setting with test label}: All tasks have different classes but at inference, we know from which task a data-point is coming from.
\end{itemize} 

First setting (disjoint with no test label), is the hardest because all classes need to be discriminated from all the others. The second setting (joint) is a bit easier because we don't need to discriminate task from each other but the model needs to use the same output layer for all task which can produce interferences. Theoretically, the second setting requires only the feature extractor to be learned.
The last setting (disjoint with test label) is the easiest, classes from different tasks don't need to be compared and the output layer is different for each task. \checked{The model used is presented in Table \ref{tab:2b_Replay:hyperparams}.}

\begin{table}[h]
\centering

  \caption[Model architecture for regularization methods evaluation.]{Model architecture, convolution have 5*5 kernel size, maxpool have 2*2 kernel size. Parameters not mentioned are default parameters in Pytorch library \cite{NEURIPS2019_9015} (in torch.nn). BS is for batch size, which is 64. All layers are initialized with Xavier init method \cite{glorot2010understanding}.}
  \label{tab:2b_Replay:hyperparams}
  \begin{tabular}{ccccccc}
    \hline 
    Layer Name &
    Layer Type &
    Input Size&
    Output Size\\
    \hline
    
    Conv1     & %
    ReLu(MaxPool2d(Conv2d(input))) & %
    BS*1*28*28 & %
    BS*10*12*12 \\ %
    \hline
    
    Conv2     & %
    ReLu(MaxPool2d(Conv2d(input))) & %
    BS*10*14*14  & %
    BS*20*4*4 \\ %
    \hline
    
    FC1     & %
    ReLu(Linear(input)) & %
    BS*320 & %
    BS*50 \\ %
    \hline
    
    FC2     & %
    functional.log\_softmax(Linear(input)) & %
    BS*50 & %
    BS*10 \\ %
    \hline
    \hline
\end{tabular}

\end{table}

With those three settings, We present two different experiments, a first one comparing disjoint settings with and without a label for inference. The goal is to bring to light that regularization fails in disjoint settings if the task label is not provided.
Secondly, we experiment with the joint setting, to show that even if the feature extractor only needs to be learned the approach still struggles to learn continually and forget significantly.   %

We present EWC results with diagonal Fisher Matrix \cite{kirkpatrick2017overcoming} and with Kronecker Factorization of the Fisher matrix \cite{Ritter18}. We add an expert model which learned on the full dataset at once and a baseline model who learned continually without any memorization process. All models are trained with stochastic gradient descent with a learning rate of $0.01$ and a momentum of $0.9$. Even if continual learning does not support a-posteriori hyper-parameter selection, for fairness in comparison, the parameter lambda has been tuned. The best lambda upon $[0.1;1;2;5;10;20;100;1000]$ is selected for each model. Then the model is trained on 5 different seeds.

The first experiment (Fig. \ref{fig:2b_Replay:test_label}), shows that \textit{regularization} methods performances are significantly reduced when there is no test label in the disjoint settings. The experiment also shows that without labels for inference the model forgets almost instantaneously the first task when switching to the second one. 
Those results support the proposition \ref{prop:regularization}. Indeed, the low performance of regularization methods without test labels in disjoint settings illustrates the output layer shortcomings in continual learning (task separability problem, Section \ref{sub:practical_examples}).
 
In Experiment 2 (Fig. \ref{fig:2b_Replay:merge}), since the classes are the same in all tasks, only the feature extractor needs to be learned continually. The low performance of the proposed models illustrates the shortcomings in the continual learning of the feature extractor (the latent features problem, Section \ref{sub:practical_examples}).
\begin{figure}[ht]
    \centering
    \includegraphics[width=0.43\linewidth]{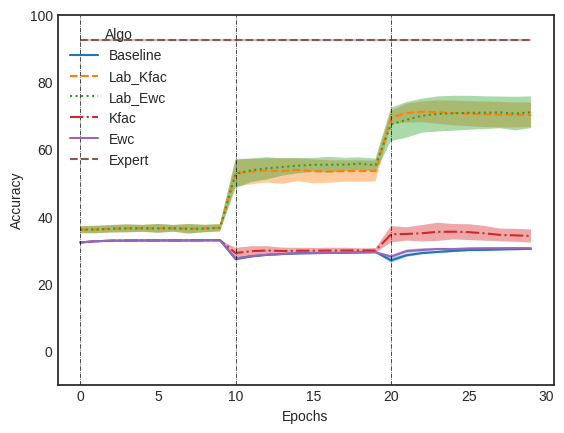}
    ~ 
    \includegraphics[width=0.43\linewidth]{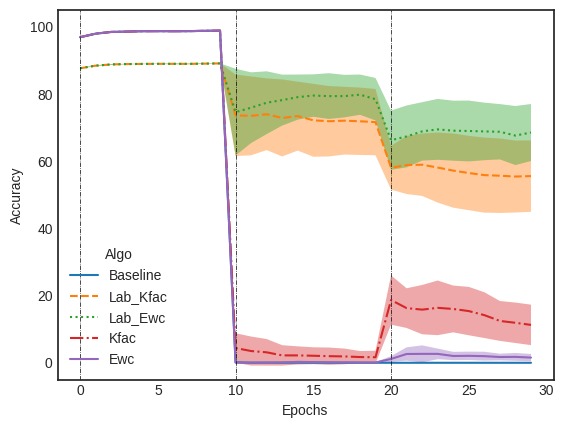}
    \caption[Experiment on disjoint classes without test label vs test label.]{Experiment on disjoint classes without test label vs test label. Left, the mean accuracy of all 3 tasks, vertical dashed line are task transitions. Right, accuracy on the first task. Legends with `Lab\_' indicate experiments with task labels for test. 
    The expert model is trained with i.i.d. data from all task and the baseline model is finetuned on each new task without any continual process.} 
    \label{fig:2b_Replay:test_label}
\end{figure}

\begin{figure}[ht]
    \centering
    \includegraphics[width=0.43\linewidth]{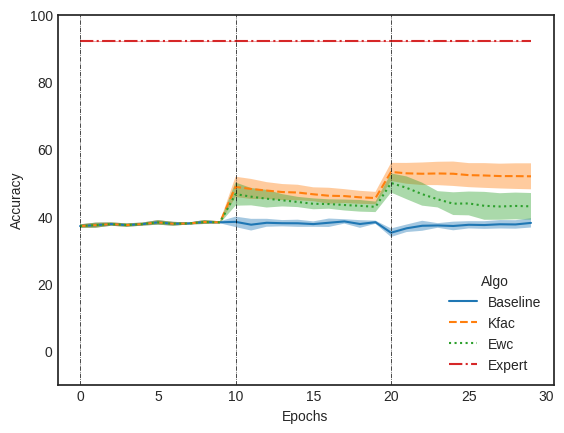}
    ~
    \includegraphics[width=0.43\linewidth]{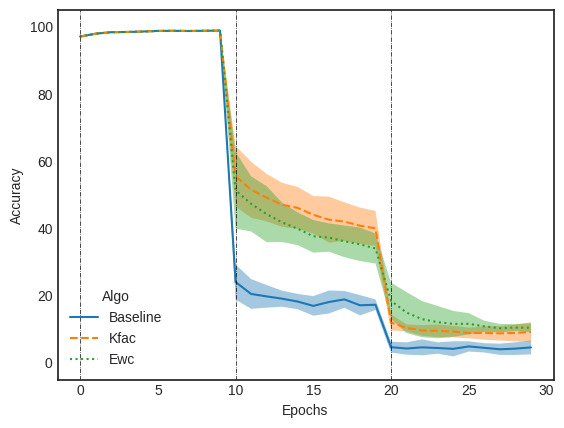}
    \caption[Experiments with joint classes.]{Experiments with joint classes.  Left, the mean accuracy of all 3 tasks, vertical dashed line are task transitions. Right, accuracy on the first task.} 
    \label{fig:2b_Replay:merge}
\end{figure}

These two experiments show that learning continually with regularization is only efficient in the setting with task label and maintains performance on task 0. The two other settings seem to either have interference in the output layer and in the feature extractor.

\section{Discussion}

The goal of this chapter is to propose a theoretical approach to the shortcomings of regularization methods in class-incremental settings. Regularization methods might have great characteristics for continual learning under certain conditions, but it is important to know their limitations to use the best of their capabilities. 
Regularization shortcomings could be offset with a replay method such as \textit{rehearsal} or \textit{generative replay}. \checked{We discuss in this section the potential applications where the shortcomings described have an impact, we also discuss the relationship between learning representation and memorizing. We finish by introducing how the replay methods could overcome the shortcomings described here.}

\subsection{Applications}
\label{sub:applications}

In this section, we point out supplementary shortcomings of regularization in other types of learning situations, namely a classification task with one only class and multi-task continual reinforcement learning. We also use proposition \ref{prop:regularization} for the case of pre-trained models.

\subsubsection{Learning from one class only}

A classification task with only one class might look foolish, however, in a sequence of tasks with varying number of classes, it makes more sense and it seems evident that a CL algorithm should be able to handle this situation.
Nevertheless, a classification neural network needs at least two classes to learn discriminative parameters.
Hence, in a one-class task, the model learns no useful parameters, a regularization term can then a fortiori not protect any knowledge.
As noted in \cite{lesort2018marginal}, the regularization method is not suited for such setting. It is worth noting that in a real-life settings it is mandatory to be able to learn only one concept at a time.

\subsubsection{Multi-task Continual Reinforcement Learning}

Results from Section \ref{sub:prop} can also be generalized to continual multi-tasks reinforcement learning settings \cite{Traore19DisCoRL}.
In this setting, a model has to learn several policies sequentially to solve different tasks.
At test time with no task label, the model needs both to be able to run the policies correctly but to infer which policy to run.
However, since policies are learned separately inferring which one to run is equivalent to a class incremental task. Therefore, following proposition \ref{prop:regularization}, the regularization based method will not be able to learn the implicit classification correctly.
Hence, in continual multi-tasks RL a regularization method alone will fail if task label is not enabled at test time.

\subsection{Using pre-trained models for continual learning}

We showed in Section \ref{sub:prop} that, in a class incremental classification scenario, regularization methods are not sufficient to learn continually. In the case of a pre-trained classification model on $N$ classes that we want to train on new classes without forgetting, if the training data are not available for some reasons, then we don't even have a \textit{regularization term} $\Omega$ to protect some. Following the proposition \ref{prop:regularization} and a fortiori without the regularization term, the model will forget past knowledge while learning new classes. 

Using pre-trained models can be useful to converge faster to a new task solution but it will undoubtedly forget what it has learn previously.

\subsection{Representation Learning and Memorization}

We presented the theoretical impossibility to distinguish past classes from current classes based only on regularization. Primarily, because we are unable to find the good decision boundaries in the output layer and because we can not learn features to disentangle the past from the present.
But also because the model representation overfits the task. The classifier only optimizes the current learning criterion, therefore data representations are restricted to it. Those representations could be insufficient to memorize the learning experiences correctly for future tasks, as in the cat/dog tasks described in the Section \ref{sec:2b_Replay:approach}. %
In any way, to maximize discrimination between tasks with no test label it is mandatory to have good memorization of past tasks. This memorization can be performed either by modelling their data distribution with generative models or samples or by adding surrogate losses that help the model to learn general representations of past tasks.
Memorization is intrinsically linked to representation. Hence, adding surrogate loss to improve the learned representation would a priori improve memorization and consequently continual learning.

\subsection{Toward Replay Methods}

\checked{In this chapter, we assume that a training \say{task label} is available,
indicating each time a drift happens in the data distribution while learning. These settings make learning easier than when the drifts are not signalled in any way. In case of task label unavailability, it is even more important to have a robust and resilient memorization process to detect data distribution drift and to deal with catastrophic forgetting.}

\checked{To be easily applicable in real-life scenarios,  algorithms should eventually have access to task labels for learning but not for inference. A family of algorithms that fit this requirement is the family of replay methods. It gathers rehearsal methods and generative replay described in Chapter \ref{chap:2_CL}.}
\checked{%
 Replay methods emulate the simultaneous apparition of two concepts making it possible to learn good decision boundaries.
 Replay hence enables the possibility to reinterpret data from past tasks with the current state of knowledge and belief. 
 They can learn task where there is only one concept at a time.
 The memorization process is agnostic to the past task learning criterion. It is driven by high-level rules. 
 Moreover, the method is agnostic to the number of tasks, the model solving the current task and the task label at test time. 
 Finally, the memory is easy to control, if in any way we want to forget some knowledge, the algorithms just have to stop replaying them and the model will forget automatically.}

\checked{Some additional assets can be attributed to rehearsal and generative replay separately:}

\begin{itemize}
\item \checked{\textbf{Rehearsal}: As long as the samples are saved in the memory, they can not be forgotten and there is no risk of damaging memories. Rehearsal is very suited for online learning, it can save samples as soon as they have been received. }

\item \checked{\textbf{Generative Replay:} The generative replay uses a learning criterion to learn the data distribution different from the task criterion. Therefore, it enables the possibility of a supplementary generalization than the generalization of the task learning criterion. Moreover, the generative model synthesizes the information and offers an automatic compression process for memorization.}
\end{itemize}

\checked{The replay methods look, therefore, very appealing for research in continual learning. They look adapted, robust and resilient enough to deal with various continual learning settings.}

\section{Conclusion}

Regularization is a widespread method for continual learning.
However, we proved that for class-incremental classification, no regularization method can learn alone to discriminate classes from different tasks.
At test time, this shortcoming makes them dependent on the task label for prediction. This need for supervision during inference restricts significantly the application scenario possible for regularization methods alone.

The class-incremental scenario is a specific benchmark measuring the ability of algorithms to learn sequentially different classes. However, being unable to deal with this setting implies that in a more complex learning environment, all sub-tasks interpretable as class-incremental will be ignored by the algorithm. 

\checked{This shortcoming in regularization methods is one of the motivations to study replay methods for continual learning in the following chapters of this thesis. 
In particular, in the next chapter, we will study the capacity of generative models to learn continually. This study allows to determine the best conditions to use a generative model for generative replay.}

\newpage
\chapter{Learning Continually a Generative Models}
\label{chap:3_CL_GM}

\checked{In the previous chapter, we have highlighted the shortcomings of regularization methods for continual learning in the case of disjoint classification. We also stressed the benefits of replay for continual learning. The use of rehearsal is a known method for continual learning and the recent improvements of generative models motivate us to explore generative replay methods. But first, we wanted to evaluate the capacity of generative models to learn continually. In this chapter, we lead an empirical study of generative models in the context of disjoint task generation.}

\checked{This work has been realized in collaboration with Hugo Caselles{-}Dupr\'{e} and led to the publication \say{Generative Models from the perspective of Continual Learning} published at IJCNN 2019 \cite{lesort2018generative}. The original article has been slightly modified and extended to better fit the thesis and include figures and results cut by article size restriction. }\checked{Moreover, we incorporate some results of the paper \say{Training Discriminative Models to Evaluate Generative Ones} \cite{lesort2018training} in the background section. This second paper has been published at ICANN 20019.}

\section{Introduction / Motivation}

In this chapter, we focus on generative models in Continual Learning scenarios. Previous work on CL has mainly focused on classification tasks \citep{kirkpatrick2017overcoming, rebuffi2017icarl,shin2017continual, schwarz2018progress} with approaches such as \textit{regularization}, \textit{rehearsal} and \textit{architectural} strategies, as described in Chapter \ref{chap:2_CL}.
However, discriminative and generative models strongly differ in their architecture and learning objective (classify images vs generating images). Several methods developed for discriminative models are thus not directly extendable to the generative setting. 
Once they are trained, generative models can be used as memory of the past for learning continually especially in reinforcement learning and classification. For example in  \citep{shin2017continual}, successful CL strategies with generative models have been used, via sample generation as detailed in the next section, to continually train discriminative models.
Hence, studying the viability and success/failure modes of CL strategies for generative models is an important step towards a better understanding of generative models but also Continual Learning as a whole.

We conduct a comparative study of generative models with different CL strategies. 
In our experiments, we sequentially learn generation tasks. We perform ten disjoint tasks, using commonly used benchmarks for CL: MNIST \citep{lecun1998gradient}, Fashion MNIST \citep{Xiao2017} and CIFAR10 \citep{Krizhevsky09}. In each task, the model gets a training set from one new class, and should learn to generate data from this class without forgetting what it learned in previous tasks, see Fig. \ref{fig:3_CL_GM:task_explained} for an example with disjoint tasks on MNIST.

\begin{figure}
    \centering
    \begin{subfigure}[b]{0.9\textwidth}
        \includegraphics[width=\textwidth]{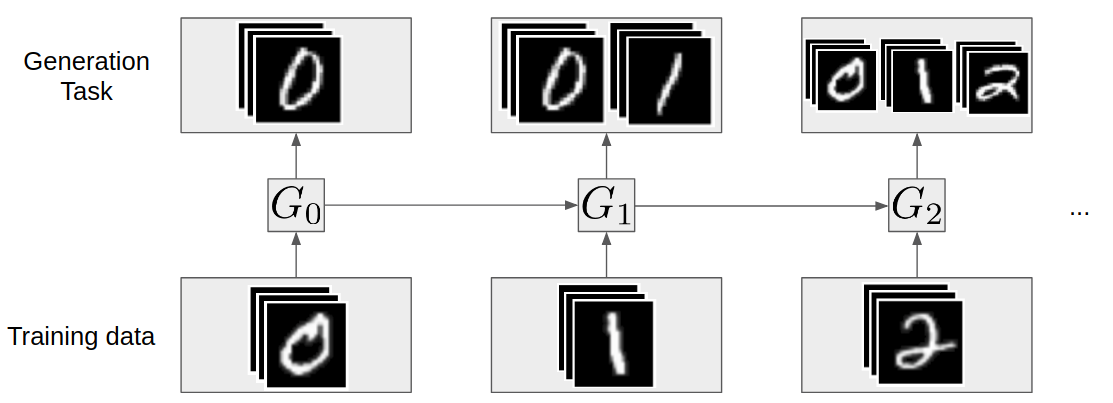}

    \end{subfigure}
    \caption[Illustration of the disjoint setting considered ]{Illustration of the disjoint setting considered. At task $i$ the training set includes images belonging to category $i$, and the task is to generate samples from all previously seen categories. Here MNIST is used as a visual example, we experiment in the same way Fashion MNIST and CIFAR10.}
    \label{fig:3_CL_GM:task_explained}
\end{figure}

We evaluate several generative models: Variational Auto-Encoders (VAEs)\citep{kingma2013auto}, Generative Adversarial Networks (GANs) \citep{goodfellow2014generative}, their conditional variant (CVAE ans CGAN), Wasserstein GANs (WGANs)\citep{arjovsky2017wasserstein} and Wasserstein GANs Gradient Penalty (WGAN-GP)\citep{gulrajani2017improved}.%
\checked{We introduced those models in Section \ref{sub:1b_ML:GM}.}

 We compare results on approaches taken from CL in a classification setting: \textit{finetuning}, \textit{rehearsal}, \textit{regularization} and \textit{generative replay}. \textit{Generative replay} consists of using generated samples to maintain knowledge from previous tasks. All CL approaches are applicable to both variational and adversarial frameworks. We evaluate with two quantitative metrics, Fr\'echet Inception Distance \citep{heusel2017gans} and fitting capacity (FiC) \citep{lesort2018training}, as well as visualization.
Also, we discuss the data availability and scalability of CL strategies.

Our contributions are:

\begin{itemize}
    \item We propose the Fitting Capacity, an evaluation method for generative models.
    \item Evaluating a wide range of generative models in a Continual Learning setting.
    \item Highlight success/failure modes of combinations of generative models and CL approaches.
    \item Comparing, in a CL setting, two evaluation metrics of generative models.
\end{itemize}

We describe related work in Section \ref{sec:3_CL_GM:background_CL_GM}, and our approach in Section \ref{sec:3_CL_GM:approach_CL_GM}. We explain the experimental setup that implements our approach in Section \ref{sec:3_CL_GM:exp_CL_GM}. Finally, we present our results and discussion in Section \ref{sec:3_CL_GM:results_CL_GM} and \ref{sec:3_CL_GM:discussion_CL_GM}, before concluding in Section \ref{sec:3_CL_GM:ccl_CL_GM}.

\section{Background}
\label{sec:3_CL_GM:background_CL_GM}

\checked{In this section, we will re-introduce the Fitting Capacity with some results from the original paper. The fitting capacity is a contribution from this thesis, we proposed to have a general evaluation of generative models.
Then, we will present a brief state of the art on continual learning and generative models.}

\subsection{Fitting Capacity}
\label{sub:3_CL_GM:FC}

\checked{As presented in Chapter \ref{chap:1b_ML}, the Fitting Capacity (FiC) is a method to evaluate generative models. It measures the accuracy of a classifier trained on generated data to estimate the quality of the generator.
It does not directly assess the realistic characteristics of the generated data but rather if their content and variability contain enough information to classify real data. 
This method makes it possible to take into account the complex characteristics of the generated samples and not only the distribution of their features. In this background section, we present a summary of \citep{lesort2018training} results to illustrate FiC evaluation.}

\medskip

\checked{The process for the fitting capacity is the following:}

\begin{enumerate}
\item Train a conditional generative model over a train set
\item Generate a dataset $D_{gen}$
\item Train a discriminative model (a classifier) over $D_{gen}$
\item Iterate the process for several generative models and compare the accuracy of the classifiers on the test set.
\end{enumerate}

\begin{figure}[ht]
\centering
\includegraphics[width=0.5\textwidth]{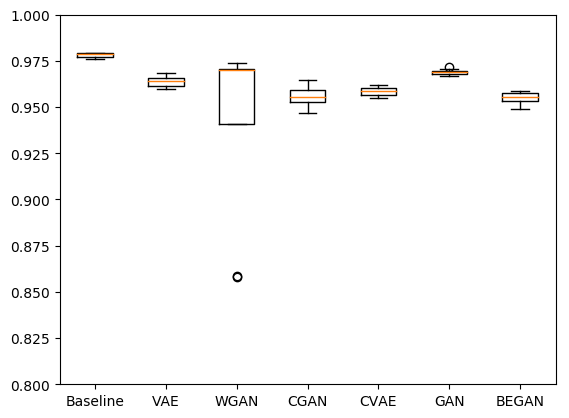}
\caption[Fitting capacity evaluation of various generative models on MNIST.]{Fitting capacity evaluation of various generative models on MNIST. Each model has been trained on 8 different seeds to create plot bars. NB: the figure is zoomed to better appreciate the difference between models but consequently an outlier of the GAN model is hidden and is around 0.1.}
\label{fig:3_CL_GR:barplot_FiC_MNIST}
\end{figure} 

\begin{figure}[ht]
\centering
\includegraphics[width=0.5\textwidth]{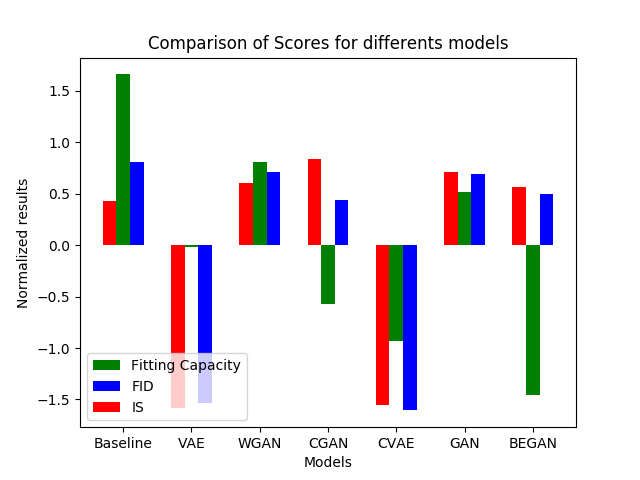}
\caption[Comparison between generative model evaluations.]{Comparison between normalized evaluation with inception score (IS), Frechet Inception Distance (FID) and Fitting Capacity (FiC). The normalization is achieved such as the sum of each metric over all the generative models is 0 with 1 standard deviation.}
\label{fig:3_CL_GR:GM_Eval_MNIST}
\end{figure} 

The method makes it possible to compare generative models trained on similar datasets. In Figure \ref{fig:3_CL_GR:barplot_FiC_MNIST}, we can find the FiC of various generative models trained on MNIST and appreciate the difference among models.
We can also evaluate the relative difference between metrics in Figure \ref{fig:3_CL_GR:GM_Eval_MNIST}. This figure highlights the differences by normalizing performance on the best models. The baseline is a classifier trained on real data and not on generated ones.
We can see that the FiC discriminates real data the most from generated ones and that FiC results do not correlate with other metrics results.

The Fitting Capacity is time-consuming since it necessitates to train a classifier for evaluation. Nevertheless, we believe that the fitting capacity is a good evaluation to estimate the intrinsic characteristics of generated data.

The outcome of the  fitting capacity evaluation of generative models in article \cite{lesort2018training}, was that GAN or WGAN models were producing the best samples but that VAE and CGAN were more stable and had a better mean accuracy. Moreover, conclusion was that inception score and Frechet inception distance do not make possible to predict if the data is good enough for downstream tasks. 
Similar evaluation methods also came out with that conclusion \cite{santurkar2017classification, shmelkov2018good, Dat2019Classifier, ravuri2019classification}.

\medskip

In this chapter, we will use the Fitting Capacity to evaluate generative models trained continually.

\subsection{State of the art}
\label{sub:3_CL_GM:sota_CL_GM}

As said in the introduction of this chapter, discriminative and generative models do not share the same learning objective (classify data vs generate data) and 
architecture (top down vs bottom up). For this reason, CL strategies for discriminative models are usually not directly transferable to generative models. Continual Learning in the context of generative models remains largely unexplored compared to CL for discriminative models. %
\checked{In this section, we present the state of the art of continual learning specifically for generative models.}

\medskip

Among notable previous work, \cite{seff2017continual} successfully apply EWC on the generator of Conditional-GANs (CGANS), after observing that applying the same regularization scheme to a classic GAN leads to catastrophic forgetting. However, their work is based on a scenario where two classes are presented first, and then unique classes come sequentially, e.g the first task is composed of 0 and 1 digits of MNIST dataset, and then is presented with only one digit at a time on the following tasks. This is likely due to the failure of CGANs on single digits, which we observe in our experiments. %
Moreover, the method is only shown to work on CGANs.
Another method for generative Continual Learning is Variational Continual Learning (VCL) \citep{nguyen2017variational}, which adapts variational inference to a continual setting. They exploit the online update from one task to another %
inspired from Bayes' rule.
They successfully experiment on a single-task scenario.  However, they experiment only on VAEs.
Moreover, since they use a multi-head architecture, they use specific weights for each task, which requires task index for inference.
A second method experimented on VAEs is to use a student-teacher method where the student learns the current task while the teacher retains knowledge \citep{ramapuram2017lifelong}. %
Finally, VASE \citep{achille2018life} is a third method, also experimented only on VAEs, which allocates spare representational capacity to new knowledge, while protecting previously learned representations from catastrophic forgetting by using snapshots (i.e. weights) of previous model.

\medskip

A different approach, introduced by \cite{shin2017continual} is an adaptation of the \textit{generative replay} method. It is applicable to both adversarial and variational frameworks. 
It uses two generative models: one which acts as a memory, capable of generating all past tasks, and one that learns to generate data from all past tasks and the current task.
It has mainly been used as a method for Continual Learning of discriminative models \citep{shin2017continual, Shah18}.
Recently, \cite{wu2018memory} authors have developed a similar approach called Memory Replay GANs, where they use Generative Replay combined to replay alignment, a distillation scheme that transfers previous knowledge from a conditional generator to the current one. However, they note that this method leads to mode collapse because it could favor learning to generate few class instances rather than a wider range of class instances. 

\section{Approach}
\label{sec:3_CL_GM:approach_CL_GM}

Typical previous works on Continual Learning for generative models focus on presenting a novel CL technique and comparing it to previous approaches, on one type of generative model (e.g. GAN or VAE).

On the contrary, we focus on looking for the best generative model and CL strategy association.
For now, empirical evaluation remains the only way to find the best performing combinations. Hence, 
we compare several existing CL strategies on a wide variety of generative models with the objective of finding the most suited generative model for Continual Learning.

\medskip 

\checked{In this chapter, we use two evaluation methods: the fitting capacity (FiC) \citep{lesort2018training} and the FID\citep{heusel2017gans}.}
We believe that using these two metrics is complementary. FID is a commonly used metric based solely on the distribution of images features. In order to have a complementary evaluation,
we use the fitting capacity, which evaluate samples on a classification criterion rather than features distribution.
For unconditional models, we used an adaptation of the FiC where data are labelled by an expert network trained on the dataset.

For all the progress made in quantitative metrics for evaluating generative models
\citep{borji2018pros}, qualitative evaluation remains a widely used and informative method. While visualizing samples provides a instantaneous detection of failure, it does not provide a way to compare two well-performing models. It is not a rigorous evaluation and it may be misleading when evaluating sample variability.

\section{Experiments}
\label{sec:3_CL_GM:exp_CL_GM}

\checked{In this section, we present the experiments conducted to evaluate the generative models and different training strategies.}

\subsection{Experimental setup}
\label{sub:setup_CL_GM}

We now describe our experimental setup: data, tasks, and evaluated approaches. 
The code is available online\footnote{https://github.com/TLESORT/Generative\_Continual\_Learning}.

Our main experiments use 10 sequential tasks created using the MNIST, Fashion MNIST and CIFAR10 datasets. \checked{This setting is referenced to as disjoint classes setting or class-incremental setting.}
For each dataset, we define 10 sequential tasks, one task consists of learning to generate a new class and all the previous ones (See Fig. \ref{fig:3_CL_GM:task_explained} for an example on MNIST). Both evaluations, FID and fitting capacity of generative models, are computed at the end of each task.

\subsection{Strategies for continual learning}
\label{sub:3_CL_GM:baseline_CL_GM}

We focus on strategies that are usable in both the variational and adversarial frameworks. We use 3 different strategies for Continual Learning of generative models, that we compare to 3 baselines. Our experiments are done on 8 seeds with 50 epochs per tasks for MNIST and Fashion MNIST using Adam \citep{kingma2014adam} for optimization.
For CIFAR10, we experimented with the CL strategy that performed the best on the previous datatsets.

The first baseline is Fine-tuning, which consists of classical training, ignoring catastrophic forgetting. It is essentially a lower bound of the performance. Our other baselines are two upper bounds: Upperbound Data, for which one generative model is trained on joint data from all past tasks, and Upperbound Model, for which one separate generator is trained for each task.

For Continual Learning strategies, we first use a vanilla Rehearsal method, where we keep a fixed number of samples of each observed task, and add those samples to the training set of the current generative model. We balance the resulting dataset by copying the saved samples so that each class has the same number of samples. The number of samples selected, here $10$, is motivated by the results in Fig. \ref{fig:3_CL_GR:Classifier-MNIST} and \ref{fig:3_CL_GR:Classifier-fashion}, where we show that $10$ samples per class is enough to get a satisfactory but not maximal validation accuracy for a classification task on MNIST and Fashion MNIST.

\begin{figure}[ht]
       \centering
    \begin{subfigure}{0.45\textwidth}
        \includegraphics[width=\textwidth]{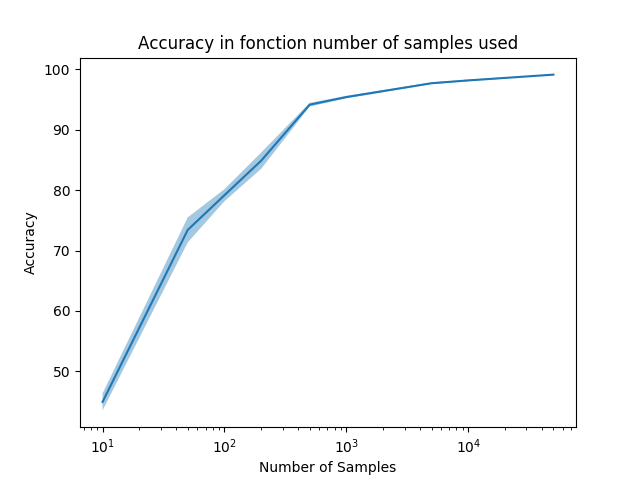}
        \caption{MNIST}
        \label{fig:3_CL_GR:Classifier-MNIST}
    \end{subfigure}
   \begin{subfigure}{0.45\textwidth}
       \includegraphics[width=\textwidth]{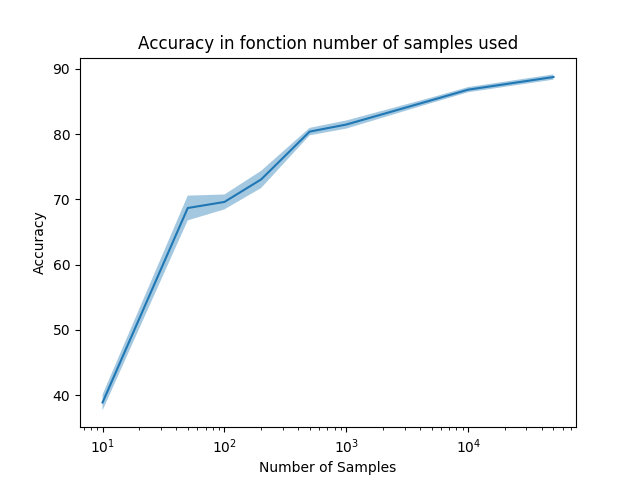}
        \caption{fashion-MNIST}
        \label{fig:3_CL_GR:Classifier-fashion}
   \end{subfigure}
\caption[Test set classification accuracy on MNIST and Fashion-MNIST]{Test set classification accuracy as a function of number of training samples, on MNIST (Left) and Fashion-MNIST (Right). Those figures make possible to estimate the minimal number of samples needed to achieve a high test accuracy. Furthermore, by comparing against the fitting capacity we can estimate how many different images of the dataset a generator can produce.}
\end{figure}

\checked{As they use the same test set, the fitting capacity (FiC) and the original accuracy with $10$ samples per task can be compared. A higher FiC shows that the memory prevents catastrophic forgetting. Equal FiC means overfitting of the saved samples and lower FiC means that the generator failed to even memorize these samples.}

We also experiment with EWC. We follow
the method described by \cite{seff2017continual} for GANs, i.e. the penalty is applied only on the generator's weights
, and for VAEs we apply the penalty on both the encoder and decoder.
As tasks are sequentially presented, we choose to update the diagonal of the Fisher information matrix by cumulatively adding the new one to the previous one. 
The last method is Generative Replay, described in Section \ref{sub:3_CL_GM:sota_CL_GM}.
Generative replay is a dual-model approach where a \say{frozen} generative model $G_{t-1}$ is used to sample from previously learned distributions and a \say{current} generative model $G_{t}$ is used to learn the current distribution and $G_{t-1}$ distribution. When a task is over, $G_{t}$ becomes the \say{frozen} model for training $G_{t+1}$. 

\checked{The classification model used for experiments with MNIST and Fashion-MNIST is described in table \ref{tab:3_CL_GM:hyperparams}. The model is slightly different from the one in Chapter \ref{chap:2b_Replay}, indeed we added the batch-normalization layer \cite{Ioffe2015batch} to boost the model training speed and performance.}

\begin{table}[ht]
\centering

  \caption[Classifier architecture for Fitting Capacity computation]{Classifier architecture, convolution have 5*5 kernel size, maxpool have 2*2 kernel size. Parameters not mentioned are default parameters in Pytorch library \cite{NEURIPS2019_9015} (in torch.nn). BS is for batch size, which is 64. All layers are initialized with Xavier init method \cite{glorot2010understanding}.}
  \label{tab:3_CL_GM:hyperparams}
  \begin{tabular}{ccccccc}
    \hline 
    Layer Name &
    Layer Type &
    Input Size&
    Output Size\\
    \hline
    
    Conv1     & %
    ReLu(MaxPool2d(Conv2d(input))) & %
    BS*1*28*28 & %
    BS*10*12*12 \\ %
    \hline
    
    Conv2     & %
    ReLu(MaxPool2d(Conv2d(Dropout(input,p=0.5)))) & %
    BS*10*14*14  & %
    BS*20*4*4 \\ %
    \hline
    
    FC1     & %
    ReLu(Linear(BatchNorm1d(input))) & %
    BS*320 & %
    BS*50 \\ %
    \hline
    
    FC2     & %
    functional.log\_softmax(Linear(input)) & %
    BS*50 & %
    BS*10 \\ %
    \hline
    \hline
\end{tabular}

\end{table}

\checked{If we describe the experiments as proposed in the framework presented in Chapter \ref{chap:2_CL}, then we are in an unsupervised learning setting, multi-task scenario, with 10 disjoint tasks with iid data, with an integer oracle task label for training but not for testing (learning labels). The content update of this setting, is a new concepts type (NC).}

\checked{Concerning the approaches experimented, the growth of memory is less than linear and the growth of computation is bounded by liner growth.}

\checked{For memory, the generative replay has only one generative models as a memory and the regularization model only retains a set of weights and a Fisher matrix. It makes the growth in memory less than linear. For regularization, the sample number grows linearly but the memory of the model itself stays constant. Therefore the growth in memory is also less than linear.}

\checked{The model upperbound method, which saves one mode per task has a linear growth in memory, the upperbound data has also a linear growth since it saves all data (and because tasks have the same size).}

\medskip
\checked{For computation, the growth of computation of generative replay is close to linear. The number of samples to generate increases linearly with the number of tasks. For rehearsal, the growth is also almost linear. In regularization, the growth is almost constant after the second task.}

\checked{The upperbound model has constant computation and the upperbound data has linear growth of computation since the number of data grows linearly.}

\section{Results}
\label{sec:3_CL_GM:results_CL_GM}

\begin{figure}[ht]
    \centering
    \begin{subfigure}[b]{0.45\textwidth}
        \includegraphics[width=\textwidth]{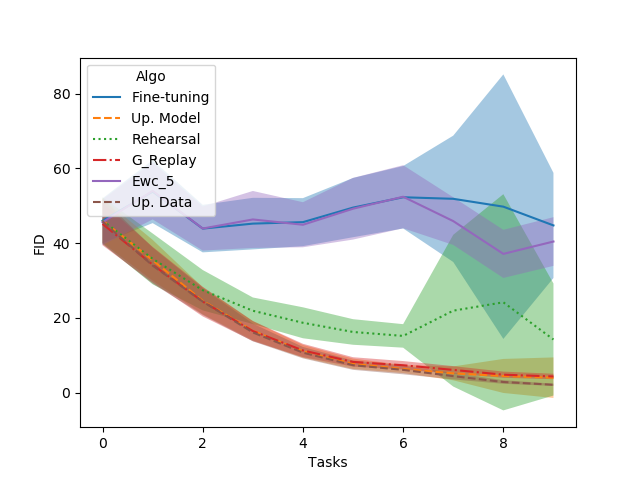}
    \end{subfigure}
       \begin{subfigure}[b]{0.45\textwidth}
        \includegraphics[width=\textwidth]{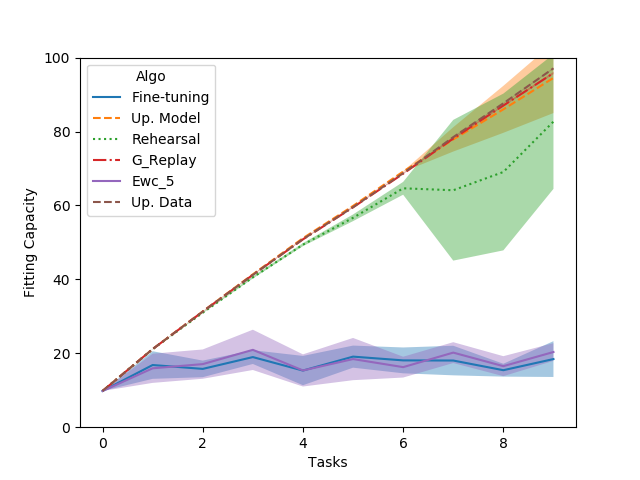}
    \end{subfigure}
   \caption[Comparison between FID results and Fitting Capacity results]{Comparison, averaged over 8 seeds, between FID results (left, lower is better) and Fitting Capacity results (right, higher is better) with GAN trained on MNIST.}
   \label{fig:3_CL_GM:FID_FT_figure}
\end{figure} 

The figures we report show the evolution of the metrics through tasks. Both FID and FiC are computed on the test set. A well-performing model should increase its FiC and decrease its FID. We observe a strong correlation between the FiC and FID (see Fig. \ref{fig:3_CL_GM:FID_FT_figure} for an example on GAN for MNIST).

Nevertheless, FiC results are more stable: over the 8 random seeds we used, the standard deviations are less significant than in the FID results. 
For that reason, we focus our interpretation on the FiC results.

\begin{figure}[!ht]
    \centering
    \begin{subfigure}[b]{0.9\textwidth}
        \includegraphics[width=\textwidth]{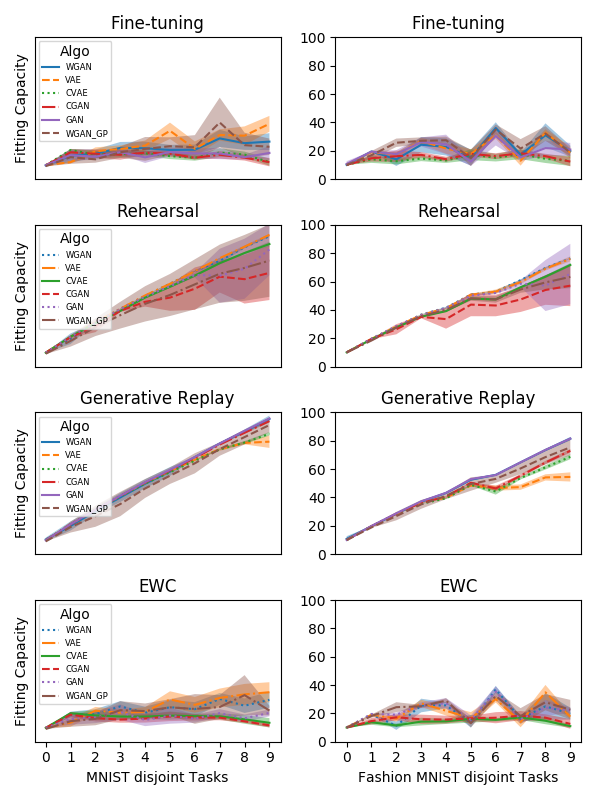}
    \end{subfigure}
   \caption[Fitting Capacity metric evaluation of VAE, CVAE, GAN, CGAN and WGAN]{Means and standard deviations over 8 seeds of Fitting Capacity metric evaluation of VAE, CVAE, GAN, CGAN and WGAN. The four considered CL strategies are: Fine Tuning, Generative Replay, Rehearsal and EWC. The setting is 10 disjoint tasks on MNIST and Fashion MNIST.}
   \label{fig:3_CL_GR:accuracy_per_task}
\end{figure}

\begin{table*}
\centering

  \caption[Results table on the 10 disjoint tasks setting]{Mean and standard deviations for Fitting Capacity (in \%) metric evaluation on last task of 10 disjoint tasks setting, on MNIST and Fashion MNIST, over 8 seeds. }
  
\resizebox{\textwidth}{!}{
  \begin{tabular}{cccccccc}
    \hline 
    Strategy&Dataset&GAN&CGAN&WGAN&WGAN-GP&VAE&CVAE\\ 
    \hline
    Fine-tuning&
    MNIST&
    $18.43$\mbox{\scriptsize{$\pm4.85$}}& %
    $11.93$\mbox{\scriptsize{$\pm2.97$}}&
    $23.17$\mbox{\scriptsize{$\pm5.66$}}&
    $22.79$\mbox{\scriptsize{$\pm5.75$}}&
    $38.98$\mbox{\scriptsize{$\pm5.57$}}&
    $11.96$\mbox{\scriptsize{$\pm2.56$}}\\ %
    
    \rowcolor{gray!20}EWC&
    -&
    $20.34$\mbox{\scriptsize{$\pm2.39$}}& %
    $11.53$\mbox{\scriptsize{$\pm1.42$}}&
    $29.57$\mbox{\scriptsize{$\pm5.59$}}&
    $22.00$\mbox{\scriptsize{$\pm3.39$}}&
    $34.93$\mbox{\scriptsize{$\pm7.06$}}&
    $13.37$\mbox{\scriptsize{$\pm3.28$}}\\ %
    
    Rehearsal&
    - &
    $82.69$\mbox{\scriptsize{$\pm18.21$}}& %
    $66.14$\mbox{\scriptsize{$\pm19.2$}}& %
    $92.05$\mbox{\scriptsize{$\pm0.64$}}& %
    $74.79$\mbox{\scriptsize{$\pm25.25$}}& %
    $92.99$\mbox{\scriptsize{$\pm0.64$}}& %
    $86.47$\mbox{\scriptsize{$\pm1.69$}}\\ %
    
    \rowcolor{gray!20}Generative Replay&
    -&
    $\textbf{95.81}$\mbox{\scriptsize{$\pm0.31$}}& %
    $93.89$\mbox{\scriptsize{$\pm0.35$}}&
    $95.41$\mbox{\scriptsize{$\pm2.41$}}&
    $91.12$\mbox{\scriptsize{$\pm5.09$}}&
    $79.38$\mbox{\scriptsize{$\pm4.40$}}&
    $84.95$\mbox{\scriptsize{$\pm1.24$}}\\ %
    
    Upperbound Model&
    -&
    $94.50$\mbox{\scriptsize{$\pm9.51$}}& %
    $96.84$\mbox{\scriptsize{$\pm3.22$}}&
    $95.72$\mbox{\scriptsize{$\pm6.93$}}&
    $79.41$\mbox{\scriptsize{$\pm27.85$}}&
    $97.82$\mbox{\scriptsize{$\pm0.17$}}&
    $97.89$\mbox{\scriptsize{$\pm0.12$}}\\ %
    
    \rowcolor{gray!20}Upperbound Data&
    -&
    $97.10$\mbox{\scriptsize{$\pm0.13$}}& %
    $96.65$\mbox{\scriptsize{$\pm0.21$}}&
    $96.76$\mbox{\scriptsize{$\pm0.29$}}&
    $84.79$\mbox{\scriptsize{$\pm27.76$}}&
    $96.88$\mbox{\scriptsize{$\pm0.27$}}&
    $96.17$\mbox{\scriptsize{$\pm0.19$}}\\ %
    
    \hline
    Fine-tuning&
    Fashion MNIST &
    $20.82$\mbox{\scriptsize{$\pm4.69$}}& %
    $12.30$\mbox{\scriptsize{$\pm3.33$}}&
    $19.68$\mbox{\scriptsize{$\pm3.92$}}&
    $18.75$\mbox{\scriptsize{$\pm2.58$}}&
    $18.60$\mbox{\scriptsize{$\pm4.24$}}&
    $12.82$\mbox{\scriptsize{$\pm3.55$}}\\ %
    
    \rowcolor{gray!20}EWC&
    -&
    $22.22$\mbox{\scriptsize{$\pm2.03$}}& %
    $12.58$\mbox{\scriptsize{$\pm3.48$}}&
    $19.81$\mbox{\scriptsize{$\pm4.18$}}&
    $22.63$\mbox{\scriptsize{$\pm6.91$}}&
    $17.70$\mbox{\scriptsize{$\pm1.83$}}&
    $11.00$\mbox{\scriptsize{$\pm1.16$}}\\ %
    
    Rehearsal&
    - &
    $65.34$\mbox{\scriptsize{$\pm21.3$}}& %
    $57.12$\mbox{\scriptsize{$\pm14.4$}}& %
    $76.32$\mbox{\scriptsize{$\pm0.33$}}& %
    $63.28$\mbox{\scriptsize{$\pm7.9$}}& %
    $76.03$\mbox{\scriptsize{$\pm1.77$}}& %
    $71.73$\mbox{\scriptsize{$\pm1.29$}}\\ %
    
    \rowcolor{gray!20}Generative Replay&
    - &
    $\textbf{81.52}$\mbox{\scriptsize{$\pm0.87$}}& %
    $72.98$\mbox{\scriptsize{$\pm1.22$}}&
    $81.50$\mbox{\scriptsize{$\pm1.26$}}&
    $75.37$\mbox{\scriptsize{$\pm5.49$}}&
    $54.49$\mbox{\scriptsize{$\pm3.24$}}&
    $68.70$\mbox{\scriptsize{$\pm1.71$}}\\ %
    
    Upperbound Model&
    -&
    $77.93$\mbox{\scriptsize{$\pm15.07$}}& %
    $80.96$\mbox{\scriptsize{$\pm0.69$}}&
    $73.20$\mbox{\scriptsize{$\pm5.63$}}&
    $65.5$\mbox{\scriptsize{$\pm2.69$}}&
    $78.64$\mbox{\scriptsize{$\pm1.36$}}&
    $79.15$\mbox{\scriptsize{$\pm0.96$}}\\ %
    
    \rowcolor{gray!20}Upperbound Data&
    -&
    $83.27$\mbox{\scriptsize{$\pm0.41$}}& %
    $80.09$\mbox{\scriptsize{$\pm0.94$}}&
    $83.29$\mbox{\scriptsize{$\pm0.52$}}&
    $81.5$\mbox{\scriptsize{$\pm0.50$}}&
    $80.21$\mbox{\scriptsize{$\pm0.79$}}&
    $79.51$\mbox{\scriptsize{$\pm0.55$}}\\ %

    \hline
\end{tabular}}
\label{tab:3_CL_GM:results}
\end{table*}

\clearpage

\subsection{MNIST and Fashion MNIST results}
\label{sub:mnist_CL_GM}

\subsubsection{Main results}
Our main results with fitting capacity (FiC) are displayed in Fig. \ref{fig:3_CL_GR:accuracy_per_task} and Table \ref{tab:3_CL_GM:results}. 
The best combination was Generative Replay + GAN with a mean FiC of $95.81\%$ on MNIST and $81.52\%$ on Fashion MNIST.
We observe that, for the adversarial framework, Generative Replay outperforms other approaches by a significant margin. However, for the variational framework, the Rehearsal approach was the best performing.
The Rehearsal approach worked quite well but is unsatisfactory for CGAN and WGAN-GP.
Indeed, the FiC is lower than the accuracy of a classifier trained on 10 samples per classes (see Fig. \ref{fig:3_CL_GR:Classifier-MNIST} and \ref{fig:3_CL_GR:Classifier-fashion}).

\medskip

In our setting, EWC is not able to overcome catastrophic forgetting and performs as well as the naive Fine-tuning baseline
which is contradictory with the results of \cite{seff2017continual} who found EWC successful in a slightly different setting. We replicated their result in a setting where there are two classes by tasks, showing the strong effect of task definition (Illustration Figure \ref{fig:3_CL_GM:cgan_numtask5_sample}). %

In \cite{seff2017continual} authors already found that EWC did not work with non-conditional models but showed successful results with conditional models (i.e. CGANs). The difference comes from the experimental setting. In \cite{seff2017continual}, the training sequence starts by a task with two classes. Hence, when CGAN is trained it is possible for the Fisher Matrix to understand the influence of the class-index input vector $c$. In our setting, since there is only one class at the first task, the Fisher matrix does not capture the importance of the class-index input vector $c$. Hence, as for non-conditional models, the Fisher Matrix is not able to protect weights appropriately and at the end of the second task the model has forgotten the first task. Moreover, since the generator forgot what it learned at the first task, it is only capable of generating samples of only one class. Then, the Fisher Matrix will still not capture the influence of $c$ until the end of the sequence.

Moreover, we show that even by starting with two classes, when there is only one class for the second task, the Fisher matrix is not able to protect the class from the second task in the third task (see Figure \ref{fig:3_CL_GM:seffbar}).

\begin{figure}
    \centering
    \includegraphics[width=0.25\textwidth]{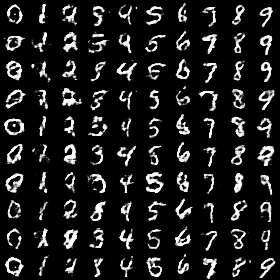}
    \caption[EWC results with CGAN]{CGAN augmented with EWC. MNIST samples after 5 sequential tasks of 2 digits each. Catastrophic forgetting is avoided.}
    \label{fig:3_CL_GM:cgan_numtask5_sample}
\end{figure}

\begin{figure}
    \centering
    \begin{subfigure}[b]{0.30\textwidth}
        \includegraphics[width=\textwidth]{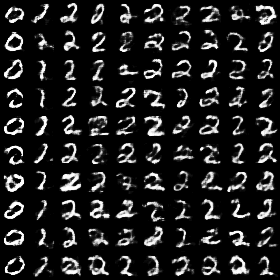}
        \caption{Task 2}
    \end{subfigure}
    \begin{subfigure}[b]{0.30\textwidth}
        \includegraphics[width=\textwidth]{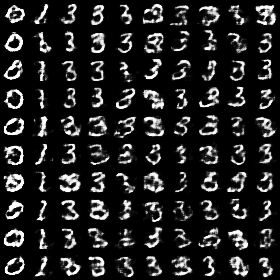}
        \caption{Task 3}
    \end{subfigure}
    \begin{subfigure}[b]{0.30\textwidth}
        \includegraphics[width=\textwidth]{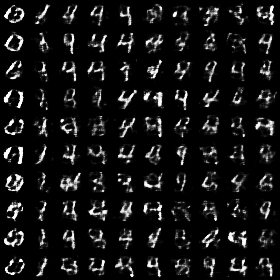}
        \caption{Task 4}
    \end{subfigure}
    
    \caption[Reproduction of EWC experiments from \citep{seff2017continual}]{Reproduction of EWC experiments from \citep{seff2017continual} with four tasks. First task with 0 and 1 digits, then digits of 2 for task 2, digits of 3 for task 3 etc.
    The first task results are not shown in the figure but the generated images where accurate 0 and 1 as expected.
     When task contains only one class, the Fisher information matrix cannot capture the importance of the class-index input vector because it is always fixed to one class. This problem makes the learning setting similar to a non-conditional models one which is known to not work \citep{seff2017continual}. As a consequence 0 and 1 are well protected when following classes are not. Samples to illustrate that the method works if there are several classes at each tasks are in Fig.~\ref{fig:3_CL_GM:cgan_numtask5_sample}}
    \label{fig:3_CL_GM:seffbar}
\end{figure}

\bigskip

Our results do not highlight a clear distinction between conditional and unconditional models. However, adversarial methods perform significantly better than variational methods. GANs variants are able to produce better, sharper quality and variety of samples. 
Hence, adversarial methods seem more viable for CL.
Samples for each model can be visualized on figures \ref{fig:3_CL_GM:samples_mnist} and figure \ref{fig:3_CL_GM:samples_fashion}.
We can link the accuracy from \ref{fig:3_CL_GR:Classifier-MNIST} and \ref{fig:3_CL_GR:Classifier-fashion} to the fitting capacity results. As an example,  we can estimate that GAN with Generative Replay is equivalent for both datasets to a memory of approximately $100$ samples per class.

\begin{figure}[ht]
    \centering
    \includegraphics[width=1\textwidth]{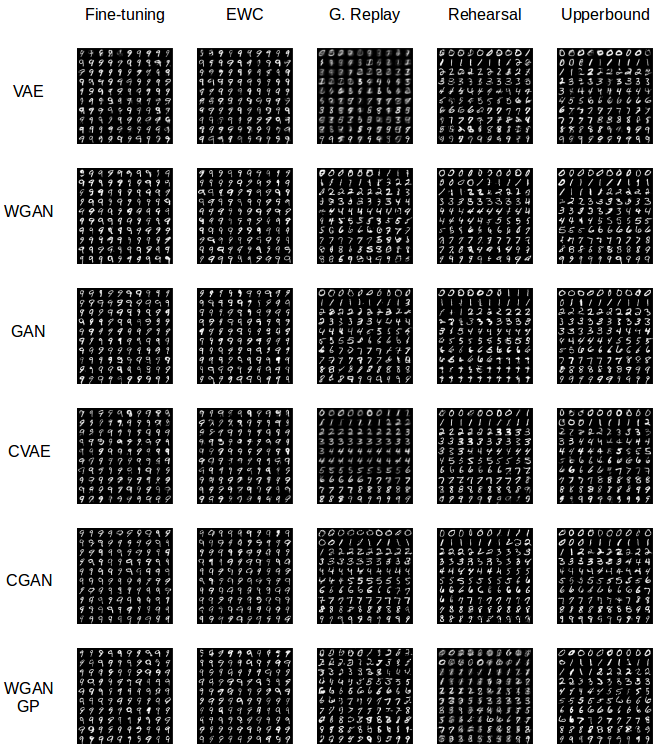}
    \caption[Generated samples with MNIST]{For each method and model, images sampled after the 10 sequential tasks with MNIST.}
\label{fig:3_CL_GM:samples_mnist}
\end{figure}

\begin{figure}[ht]
    \centering
    \includegraphics[width=1\textwidth]{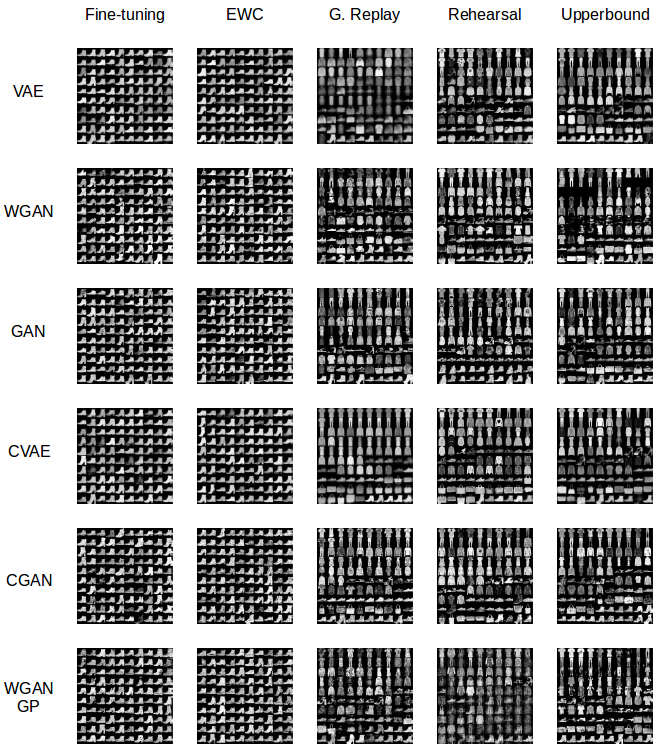}
    \caption[Generated Samples with Fashion-MNIST]{For each method and model, images sampled after the 10 sequential tasks with Fashion-MNIST.}
\label{fig:3_CL_GM:samples_fashion}
\end{figure}

\clearpage

\subsubsection{Corollary results}

Catastrophic forgetting can be visualized in Fig.\ref{fig:3_CL_GM:grid}. Each square's column represents the task index and each row the class, the color indicates the fitting capacity (FiC). Yellow squares show a high FiC, blue ones show a low FiC. We can visualize both the performance of VAE and GAN but also the performance evolution for each class. For Generative Replay, at the end of the task sequence, VAE decreases its performance in several classes but GAN does not. For Rehearsal, we observe the opposite.

\begin{figure}[ht]
   
       \centering
    \begin{subfigure}[b]{0.20\textwidth}
        \includegraphics[width=\textwidth]{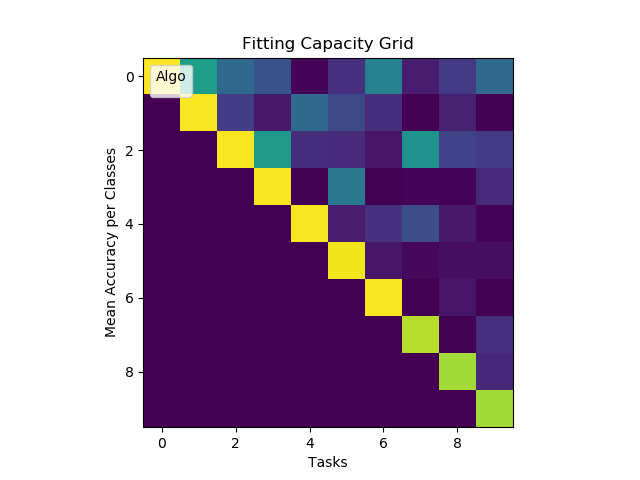}
        \label{fig:3_CL_GR:mnist_disjoint_Baseline_GAN_grid}
    \end{subfigure}
    \begin{subfigure}[b]{0.20\textwidth}
        \includegraphics[width=\textwidth]{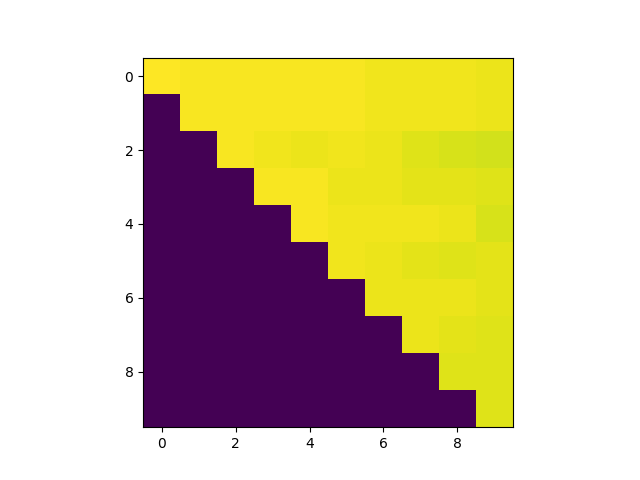}
        \label{fig:3_CL_GR:mnist_disjoint_Generative_Transfer_GAN_grid}
    \end{subfigure}
        \begin{subfigure}[b]{0.20\textwidth}
        \includegraphics[width=\textwidth]{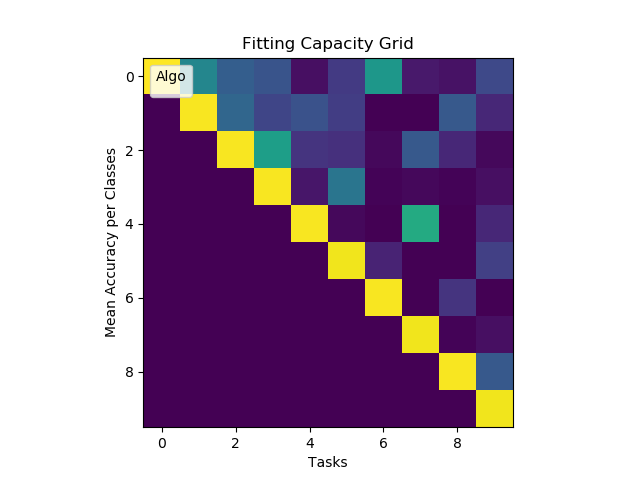}
        \label{fig:3_CL_GR:mnist_disjoint_Ewc_5_GAN_grid}
    \end{subfigure}
    \begin{subfigure}[b]{0.20\textwidth}
        \includegraphics[width=\textwidth]{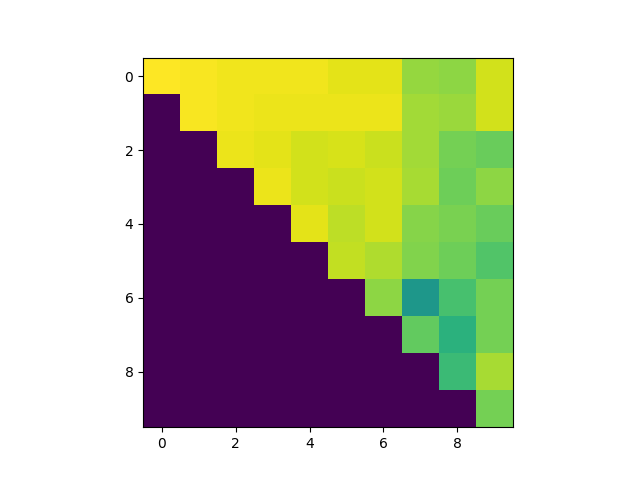}
        \label{fig:3_CL_GR:mnist_disjoint_Reharsal_balanced_GAN_grid}
    \end{subfigure}

          \centering
    \begin{subfigure}[b]{0.20\textwidth}
        \includegraphics[width=\textwidth]{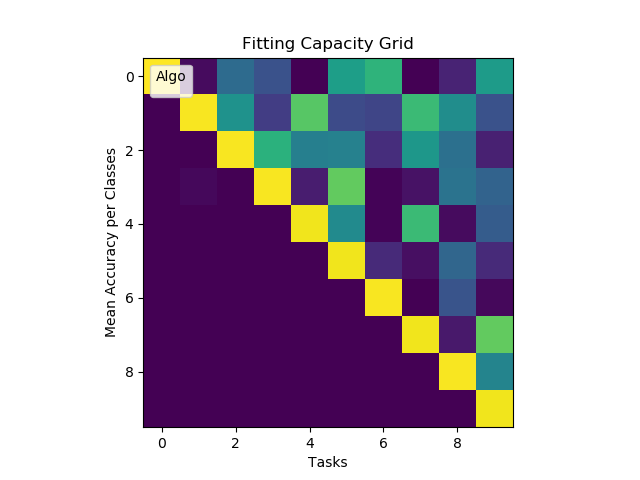}
        \caption{Fine-tuning}
        \label{fig:3_CL_GR:mnist_disjoint_Baseline_VAE_grid}
    \end{subfigure}
    \begin{subfigure}[b]{0.2\textwidth}
        \includegraphics[width=\textwidth]{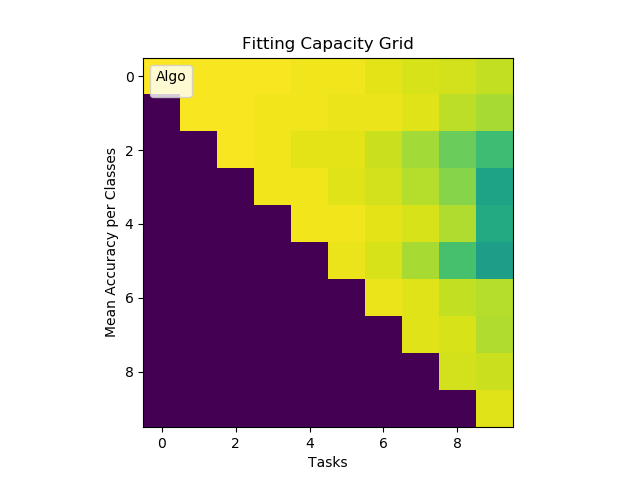}
        \caption{G. Replay}
        \label{fig:3_CL_GR:mnist_disjoint_Generative_Transfer_VAE_grid}
    \end{subfigure}
        \begin{subfigure}[b]{0.2\textwidth}
        \includegraphics[width=\textwidth]{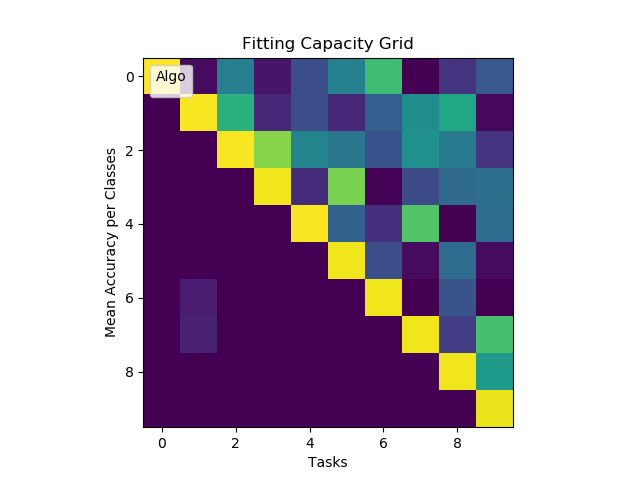}
        \caption{EWC}
        \label{fig:3_CL_GR:mnist_disjoint_Ewc_5_VAE_grid}
    \end{subfigure}
    \begin{subfigure}[b]{0.2\textwidth}
        \includegraphics[width=\textwidth]{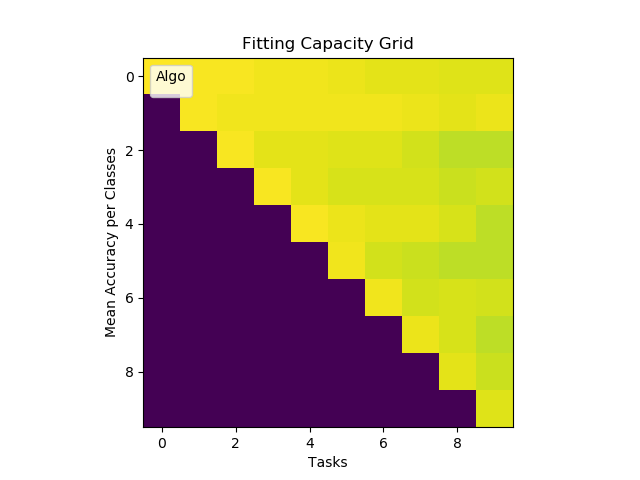}
        \caption{Rehearsal}
        \label{fig:3_CL_GR:mnist_disjoint_Reharsal_balanced_VAE_grid}
    \end{subfigure}

   \caption[Fitting Capacity results for GAN and VAE on MNIST]{Fitting Capacity results for GAN (top) and VAE (bottom) on MNIST. Each square is the accuracy on one class for one task. Abscissa is the task index (left: 0 , right: 9) and orderly is the class index (top: 0, down: 9). The accuracy is proportional to the color (dark blue : $0\%$, yellow $100\%$)}
   \label{fig:3_CL_GM:grid}
\end{figure}

Concerning the high performance of original GAN and WGAN with Generative Replay, those models benefit from their samples quality and their stability. In comparison, samples from CGAN and WGAN-GP are moisier and samples from VAE and CVAE blurrier.
However, in the Rehearsal approach, GANs-based models seem much less stable (See variance in Table \ref{tab:3_CL_GM:results} and Figure \ref{fig:3_CL_GR:accuracy_per_task}). 
In this setting, as the dataset is made of few data samples, the discriminative task is almost trivial for the discriminator and easy to overfit
which makes training harder for the generator.
In opposition, VAE-based models are particularly effective and stable in the Rehearsal setting (See Fig. \ref{fig:3_CL_GM:grid}). Indeed, their learning objective (pixel-wise error) is not disturbed by a low samples variability and their probabilistic hidden variables make them less prone to overfit.

However, the FiC of Fine-tuning and EWC in Table \ref{tab:3_CL_GM:results} is higher than expected for unconditional models. As the generator is only able to produce samples from the last task, the FiC should be near $10\%$. This is a downside of using an expert for annotation before computing the FiC. Fuzzy samples can be wrongly annotated, which can artificially increase the labels variability and thus the FiC of low performing models, e.g., VAE with Fine-tuning. However, this results stay lower than the FiC of well performing models.

\medskip

Incidentally, an important side result is that the fitting capacity (FiC) of conditional generative models is comparable to results of Continual Learning classification. Our maximal performance in this setting is with CGAN: $94.7\%$ on MNIST and $75.44\%$ on Fashion MNIST.
In a similar setting with two sequential tasks, which is arguably easier than our setting (one with digits from 0,1,2,3,4 and another with 5,6,7,8,9), \cite{He18} authors achieve a performance of $94.91\%$.
This shows that using generative models for CL could be a competitive tool in a classification scenario.
It is worth noting that we did not compare our results of unconditional models FiC with classification state of the art. Indeed, in this case, the FiC is based on an annotation from an expert not trained in a continual setting. The comparison would then not be fair.

\subsection{CIFAR10 results}
\label{sub:cifar_CL_GM}

\begin{figure}[ht]
    \centering
    \includegraphics[width=0.5\textwidth]{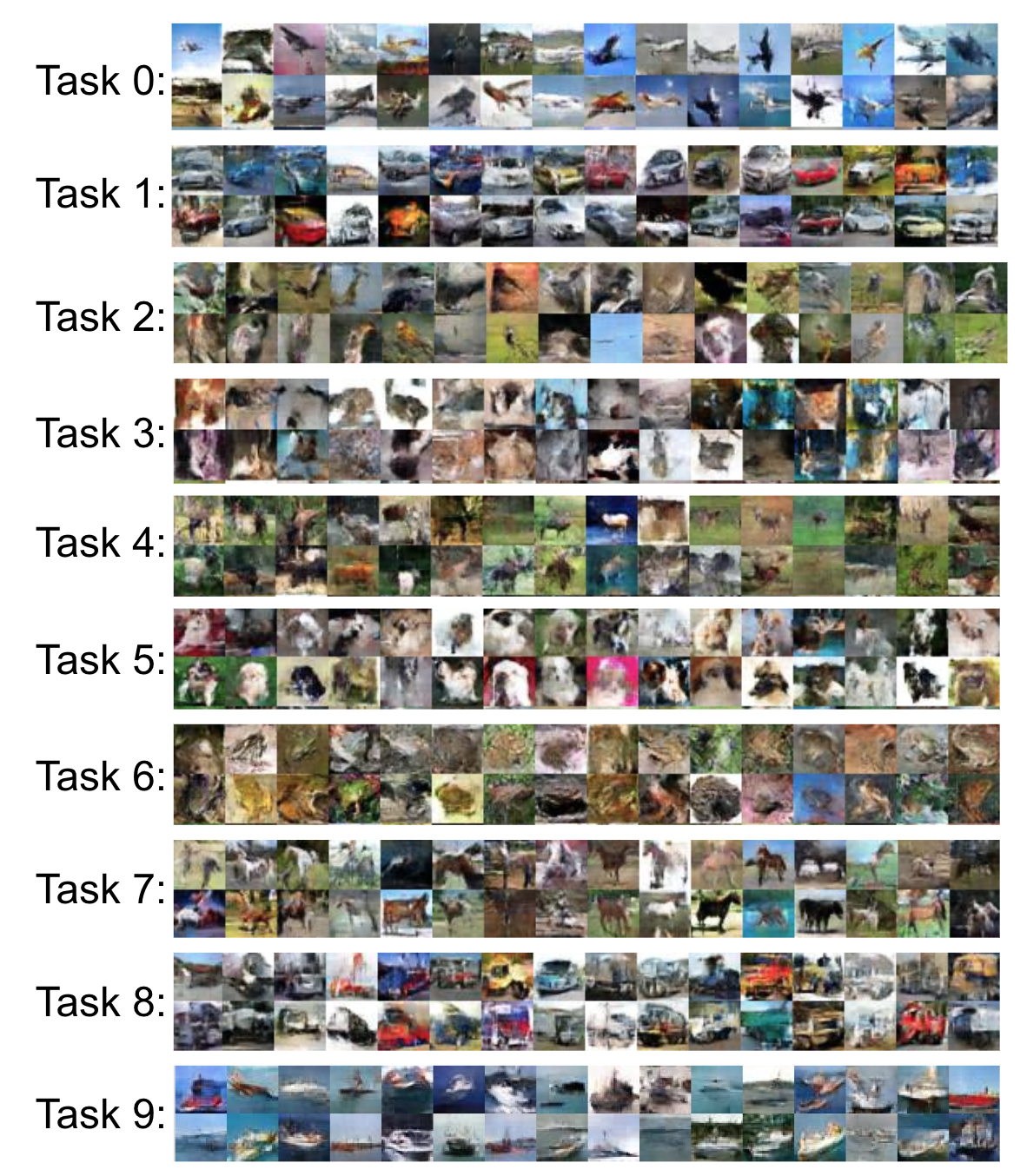}
    \caption[Generated samples with CIFAR10]{Images sampled from a WGAN-GP at the end of each tasks with CIFAR10. \checked{The images are similar to real images in their features but still not very likely to be real images.}}
\label{fig:3_CL_GM:samples_cifar}
\end{figure}

In this experiment, we selected the best performing CL methods on MNIST and Fashion MNIST, Generative Replay and Rehearsal, and tested them on the more challenging CIFAR10 dataset. We compared the two methods to naive Fine-tuning, and to Upperbound Model (one generator for each class). The setting remains the same, one task for each category, for which the aim is to avoid forgetting of previously seen categories. We selected WGAN-GP because it produced the most visually satisfying samples on CIFAR10 (Figure \ref{fig:3_CL_GM:samples_cifar}). \checked{We can note that visual assessment allows detecting failing training even if it is not accurate to distinguish the best models.}

\begin{figure}[ht]
    \centering
    \includegraphics[width=1\textwidth]{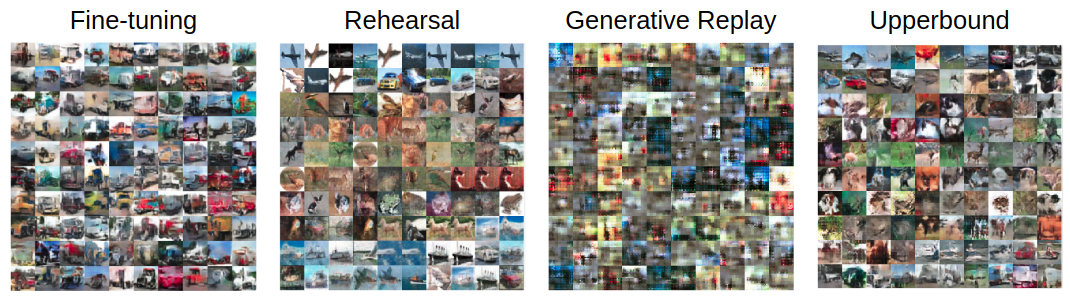}
    \caption[Generated samples on CIFAR10 with each method.]{For each method, images sampled after the 10 sequential tasks with WGAN-GP trained on CIFAR10.}
\label{fig:3_CL_GM:samples_cifar_final}
\end{figure}

\begin{figure}
    \centering
        \includegraphics[width=0.9\textwidth]{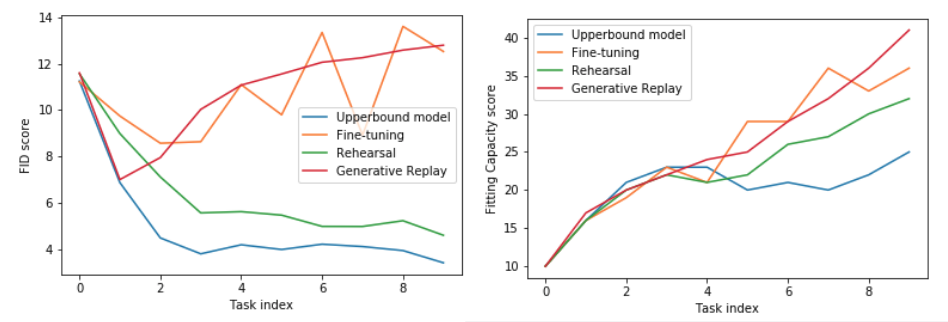}
    \caption[FiC and FID on CL tasks with WGAN\_GP, on CIFAR10.]{\checked{Fitting capacity and FID score of Continual Learning methods applied to WGAN\_GP, on CIFAR10. The fitting capacity (FiC) (right figure) is too low to be significant. Within this range of fitting capacities, we cannot make any comparison between models. 
     FiC is more suited for models which already work quite well. It explains probably why fine-tuning seems to work better than rehearsal in the CIFAR10 experiments.}}
\label{fig:3_CL_GM:cifar_gr}
\end{figure}

FID and fitting capacity (FiC) curves throughout training are provided in Fig. \ref{fig:3_CL_GM:cifar_gr}, and generated samples after the 10 sequential tasks are displayed in Fig. \ref{fig:3_CL_GM:samples_cifar_final}, where we display images sampled after the 10 sequential tasks. The FiC results show that all four methods fail to generate images that allow to learn a classifier that performs well on real CIFAR10 test data.
As stated for MNIST and Fashion MNIST, with non-conditional models, when the FiC is low, it can been artificially increased by automatic annotation which makes the difference between curves not significant in this case.
Naive Fine-tuning catastrophically forgets previous tasks, as expected. Rehearsal does not yield satisfactory results. While the FID score shows improvement at each new task, visualization clearly shows that the generator copies samples in memory, and suffers from mode collapse. This confirms our intuition that Rehearsal overfits to the few samples kept in memory. Generative Replay fails; since the dataset is composed of real-life images, the generation task is much harder to complete. 
At task 0, the generator is able to produce images that roughly resemble samples of the category, here planes. As tasks are presented, minor generation errors accumulated and snowballed into the result in task 9 (see Fig. \ref{fig:3_CL_GM:samples_cifar_final}): samples are blurry and categories are indistinguishable. As a consequence, the FID improves at the beginning of the training sequence, and then deteriorates at each new task. We also trained the same model separately on each task, and while the result might look visually satisfactory (see Upperbound in Fig. \ref{fig:3_CL_GM:samples_cifar_final}), the quantitative metrics show that generation quality is not excellent.

\bigskip

These negative results show that training a generative model on a sequential task scenario is not equivalent to successfully training a generative model on all data or each category, and that state-of-the-art generative models struggle on real-life image datasets like CIFAR10. Designing a CL strategy for these type of datasets remains a challenge.

\section{Discussion}
\label{sec:3_CL_GM:discussion_CL_GM}

Besides the quantitative results and visual evaluation of the generated samples, the evaluated strategies have, by design, specific characteristics relevant to CL that we discuss in this section. 

Rehearsal violates the data availability assumption that might be required in CL scenarios, by recording part of the samples. Furthermore, the risk of overfitting is high when only few samples represent a task, as shown in the CIFAR10 results. EWC and Generative Replay respect this assumption. EWC has the advantage of not requiring any computational overload during training, but this comes at the cost of computing the Fisher information matrix, and storing its values as well as a copy of previous parameters. The memory needed for EWC to save information from the past is twice the size of the model which may be expensive in comparison to rehearsal methods.
\checked{As an example, the model we used to solve MNIST classification problem, is 95.5 kBytes, it is the same memory space as approximately 120 images. We saw that with 10 images per classes the rehearsal is quite effective.}
 Nevertheless, with Rehearsal and Generative Replay, the model has more and more samples to learn from at each new task, which makes training computation cost increase at each new tasks. 

\bigskip

Finally, we want to highlight the importance of using metrics  that are sensitive to mode collapse (like the one we used). For example, \citep{wu2018memory} proposes a metric to evaluate CL for conditional generative models. For a given label $l$, they sample images from the generator conditioned on $l$ and feed it to a pre-trained classifier. If the predicted label of the classifier matches $l$, then it is considered correct. In our experiment we find that it gives a clear advantage to rehearsal methods. As the generator may overfit the few samples kept in memory and then maximizes the evaluation proposed by \cite{wu2018incremental}, while not producing diverse samples. 
Nevertheless, even if their metric is unable to detect mode collapse or overfitting, it can efficiently expose catastrophic forgetting in conditional models.

\begin{figure}[ht]
    \centering
    \includegraphics[width=1\textwidth]{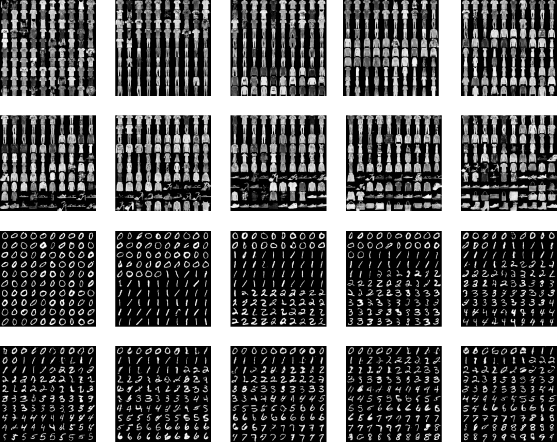}
    \caption{Samples at each task for the best working solution, GAN with generative replay.}
\label{fig:CL_GM:best_samples}
\end{figure}

\section{Conclusion}
\label{sec:3_CL_GM:ccl_CL_GM}

In this chapter, we experimented with the viability and effectiveness of generative models on Continual Learning (CL) settings.
We evaluated the considered approaches on commonly used datasets for CL, with two quantitative metrics. 
Our experiments indicate that on MNIST and Fashion MNIST, the original GAN combined to the Generative Replay method is particularly effective (samples at each task can be visualized Figure \ref{fig:CL_GM:best_samples}).
This method avoids catastrophic forgetting by using the generator as a memory to sample from the previous tasks and hence maintains past knowledge.
Furthermore, we shed light on how generative models can learn continually with various methods and present successful combinations.
We also revealed that generative models do not perform well enough on CIFAR10 to learn continually.
Since generation errors accumulate, they are not usable into a complex continual setting.
 
\checked{In the next chapter, we will study the use of generative models for learning continually on classification tasks.}

\newpage
\chapter{Generative Replay for Classification}
\label{chap:4_CL_GR}

In the previous chapter, we evaluate different generative models' ability to learn continually.
In this chapter, we use the results of the previous section for a classification task. In particular, we study the use of generative replay for classification and the advantage of conditional models over marginal models for continual classification.

This work was done in collaboration with Alexander Geppert from Fulda University (Germany), it led to the publication of \say{Marginal Replay vs Conditional Replay for Continual Learning} at ICANN 2019 \cite{lesort2018marginal}. The original article has been slightly modified and extended to better fit the thesis and include figures and results cut by article size restriction.

\section{Introduction}

\checked{As seen in the previous chapter, the generative replay uses a generative model to generate data from past learning experiences to remember them.}
The \textit{generative replay} approach is in principle model-agnostic. Indeed, it can be performed with a variety of machine learning models such as decision trees, support vector machines (SVMs) or deep neural networks (DNNs). \checked{It is also the case more generally to every replay method.} %

A downside of generative replay and similar approaches, such as rehearsal, is that the time complexity of adapting to a new task is not constant but depends on the number of preceding tasks that should be replayed. Or, conversely, if continual learning should be  performed at constant time complexity, only a fixed amount of samples can be generated, and thus there will be forgetting, although it won't be catastrophic.

\medskip

As in the previous chapter, we decided to investigate two different types of generative models: Generative adversarial networks (GAN) and variational auto-encoder (VAE). On one hand GAN are known to generate samples of high quality but on the other hand VAE directly maximize the likelihood of the learned distribution while training. It was then interesting to experiment both of them to compare their performances.

\medskip

This chapter proposes and evaluates a particular method for performing replay using DNNs, termed \say{conditional replay}, which is similar in spirit to  \cite{shin2017continual} but presents important conceptual improvements (see next section). The main advantage of conditional replay is that samples can be generated conditionally, i.e., based on a provided label. Thus, labels for generated samples need not be inferred in a separate step as other replay-based approaches, e.g., \cite{shin2017continual}, which we term \textit{marginal replay} approaches. Since inferring the label of a generated sample inevitably requires the application of a possibly less-than-perfect classifier, avoiding this step conceivably reduces the margin for error in complex continual learning tasks.

The original contributions of this chapter can be summarized as follows: 
\begin{itemize}
    \item \textbf{Conditional replay as a method for continual classification learning} We experimentally establish the advantages of conditional replay in the field of continual learning by comparing conditional and marginal replay models on a common set of benchmarks. 
    \item \textbf{Improvement of marginal learning} We furthermore propose an improvement of marginal replay as proposed in  \cite{shin2017continual} by using generative adversarial networks
    \item \textbf{New experimental benchmarks for generative replay strategies} To measure the merit of these proposals, we use two experimental settings that have not been previously considered for benchmarking generative replay: rotations and permutations. In addition, we promote the ''10-class-disjoint'' task as an important benchmark for continual learning as it is impossible to solve for purely discriminative methods (at no time, samples from different classes are provided for training so no discrimination can happen). 
    \item \textbf{Comparison of generative replay to EWC} We show the principled advantage that generative replay techniques have with respect to regularization methods like EWC in a \say{one class per task} setting, which is a very common setting in practice and in which discriminatively trained models strongly tend to assign the same class label to every sample regardless of content. \checked{Those experiences correlate with what was discussed in Chapter \ref{chap:2b_Replay}.} %
\end{itemize}

\checked{The chapter organization is the following: first, in Section \ref{sec:4_CL_GR:Backgroud}, we present some background material about generative replay state of the art. Second, in Section \ref{sec:4_CL_GR:approach}, we describe the methods used as well as the benchmarks. Third, in Section \ref{sec:4_CL_GR:results_CL_GR}, we present the experiments conducted to compare conditional replay and marginal replay. Then, in Section \ref{sec:4_CL_GR:discussion_CL_GR} we show and discuss our results. Finally, in Section \ref{sec:4_CL_GR:conclusion} we conclude about the advantages and limitations of the experimented methods.}

\section{Background}
\label{sec:4_CL_GR:Backgroud}

\checked{In this section, we present a summary of the state of the art from Chapter \ref{chap:2_CL} oriented on generative replay for continual learning classification and introduce advanced methods for generative replay with improved sampling strategies.}

\subsection{State of the art}

\checked{As already presented in Chapter \ref{chap:2_CL} and \ref{chap:3_CL_GM}, generative replay consists of training a generative model on the current task to replay it later. It benefits from the task learning criterion and the generation learning criterion to learn without forgetting in various types of tasks.}

Concerning recent advances in generative replay improving upon  \cite{shin2017continual},
several works propose the use of generative models in continual learning of classification tasks \cite{kamra2017deep, wu2018incremental, wu2018memory, Shah18, Rios19}. 
\checked{Those approaches generally improves vanilla approach \cite{shin2017continual} with supplementary losses as with distillation loss \cite{hinton2015distilling}.
 They also experiment with harder datasets than MNIST such as LSUN \cite{Yu15} in \cite{wu2018memory}.}
 However, their results do not provide comparison between different types of generative models.
 
 \checked{Our results from \cite{lesort2018generative} (Chapter \ref{chap:3_CL_GM}) offer more insights on which generative models suit continual learning the best. In this chapter, we propose to build on those results to search for best generative replay method for classification.}

Generally, each approach to continual learning has its advantages and disadvantages:
\begin{itemize}
\item dynamic architecture methods suffer from little to no interference between present and past knowledge as usually different networks or sub-networks are allocated to different learning tasks. The problem with this approach is that, on the one hand, model complexity is not constant, and more seriously, that the task from which a sample is coming from must be known at inference time in order to select the appropriate (sub-)network.

\item regularization approaches are very diverse: in general, their advantage is simplicity and (often) a constant-time/memory behaviour w.r.t. the number of tasks. However, \checked{they have some theoretical shortcomings in class-incremental learning (See Chapter \ref{chap:2b_Replay}),}
 the impact of the regularizer on continual learning performance is difficult to understand, and several parameters need to be tuned whose significance is unclear (i.e., the strengths of the regularization terms).

\item generative replay show very good and robust continual learning performance, although time complexity of learning depends on the number of previous tasks for current generative replay methods. In addition, the storage of weights for a sufficiently powerful generator may prove very memory-consuming, so this approach cannot be used in all settings.
\end{itemize}

\checked{Hence, the choice of the generative model is crucial to correctly learn continually. The way generative models are used is also particularly important to maximize the algorithm's performance and make the best out of a trained generative model. We detail, in next section, the importance of sampling to improve results.}

\subsection{Sampling generative models}

\checked{A well trained generative model can generate original data from the training data distribution.
The generation process is generally the following, a seed $z$ is sampled from a random distribution, typically a normal or uniform distribution. $z$ it then fed to the generative model as input and the generative model outputs a data point. The objective is that for two different $z$ the model generates two different data points.}

\checked{The sampling can also be generated under conditions. The generative model can be trained to generate different types of data depending on a defined flag. For example, the flag can indicate from which class the model should generate data or from which task. The data is still generated randomly from $z$ but a supplementary input $c$ is given, conditioning the data point generated. This is the kind of sampling we use for \textit{conditional replay}.}

\checked{The sampling can also be adapted to maximize a certain criterion, like variety or similarity. In continual learning, it can be tuned to find the samples creating the maximum of interferences to use them and solve a conflict between learning criterion \cite{Aljundi2019Online}. Finding the best sampling method is an interesting research field. However, in this chapter, the samples were selected uniformly as a vanilla method for fair comparison purposes.}

\section{Approach}
\label{sec:4_CL_GR:approach}

\begin{figure}
    \centering
    \includegraphics*[viewport=0in 0in 4.8in 1.3in,width=0.39\textwidth]{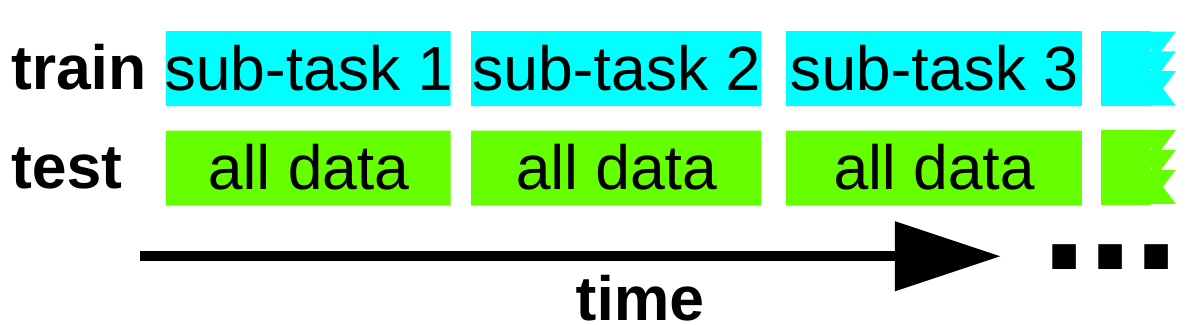}
    \includegraphics[width=0.40\textwidth]{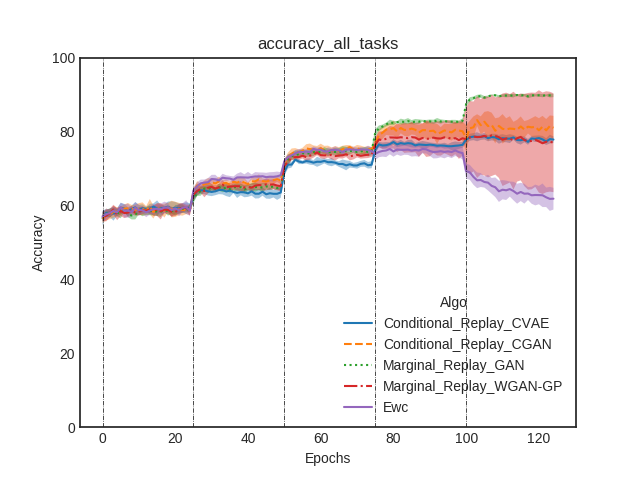}
    \caption[Illustration of a multi-tasks learning continuum.]{Illustration of a multi-tasks learning continuum. Left: The problem setting of continual learning as investigated in this chapter. DNN models are trained one after the other on a sequence of tasks (of which three are shown here), and are continuously evaluated on a test set consisting of the union of all task test sets. This gives rise to results as shown exemplarily on the right-hand side of the figure, i.e., plots of test set accuracy over time for different models, where boundaries between tasks (5 in this case) are indicated by vertical lines.
    \label{fig:4CL_GR:my_label}
    }
\end{figure}

In this chapter, \say{learning a task} denotes learning on a classification problem that is composed of two or more tasks presented sequentially to the neural network model. Fig.~\ref{fig:4CL_GR:my_label} offers a visualization of the problem setting.
 Here, the tasks are constructed from two standard visual classification benchmarks: MNIST and Fashion MNIST, either by dividing available classes into several tasks, or by performing per-sample image processing operations that are identical within, and different between, tasks. All continual learning models are then trained and evaluated in an identical fashion on all tasks, and performances are compared by a simple visual inspection of classification accuracy plots.

\subsection{Continual learning tasks}
All tasks are constructed from the underlying MNIST and FashionMNIST benchmarks, 
so the number of samples in train and test sets for each task depend on the precise way of constructing them, as described below.
\par

\begin{figure}
    \centering
    \begin{subfigure}[b]{0.15\textwidth}
        \includegraphics[width=\textwidth]{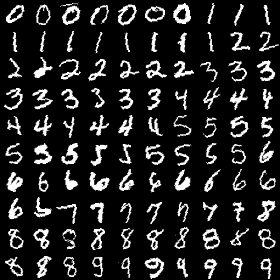}
        \caption{task 0}
        \label{fig:4_CL_GR:mnist_rotations_0}
    \end{subfigure}
    \begin{subfigure}[b]{0.15\textwidth}
        \includegraphics[width=\textwidth]{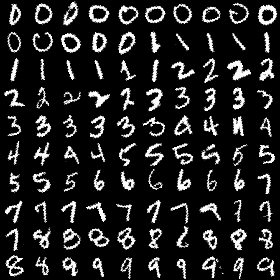}
        \caption{task 1}
        \label{fig:4_CL_GR:mnist_rotations_1}
    \end{subfigure}
        \begin{subfigure}[b]{0.15\textwidth}
        \includegraphics[width=\textwidth]{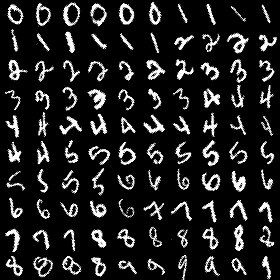}
        \caption{task 2}
        \label{fig:4_CL_GR:mnist_rotations_2}
    \end{subfigure}
    \begin{subfigure}[b]{0.15\textwidth}
        \includegraphics[width=\textwidth]{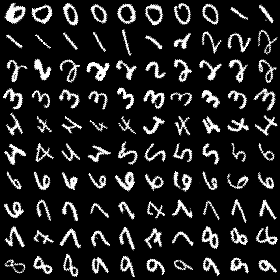}
        \caption{task 3}
        \label{fig:4_CL_GR:mnist_rotations_3}
    \end{subfigure}
    \begin{subfigure}[b]{0.15\textwidth}
        \includegraphics[width=\textwidth]{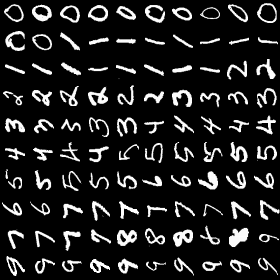}
        \caption{task 4}
        \label{fig:4_CL_GR:mnist_rotations_4}
    \end{subfigure}
  \caption{MNIST training data for rotation tasks.}
  \label{fig:4_CL_GR:rotations_samples}
\end{figure}

\begin{figure}
    \centering
  \includegraphics[width=0.9\textwidth]{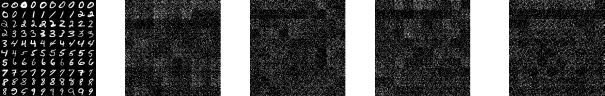}
  \caption{MNIST training data for permutation-type tasks.}
  \label{fig:4_CL_GR:mnist_permutation_train}
\end{figure}

\begin{figure}
    \centering
    \begin{subfigure}[b]{0.15\textwidth}
        \includegraphics[width=\textwidth]{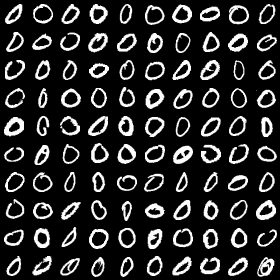}
        \caption{Task 0 }
        \label{fig:4_CL_GR:mnist_disjoint_0}
    \end{subfigure}
    \begin{subfigure}[b]{0.15\textwidth}
        \includegraphics[width=\textwidth]{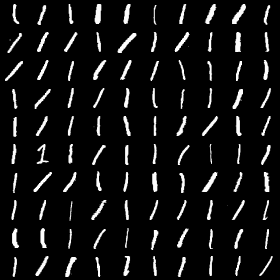}
        \caption{Task 1 }
        \label{fig:4_CL_GR:mnist_disjoint_1}
    \end{subfigure}
        \begin{subfigure}[b]{0.15\textwidth}
        \includegraphics[width=\textwidth]{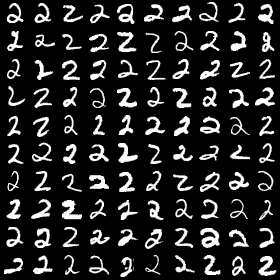}
        \caption{Task 2 }
        \label{fig:4_CL_GR:mnist_disjoint_2}
    \end{subfigure}
    \begin{subfigure}[b]{0.15\textwidth}
        \includegraphics[width=\textwidth]{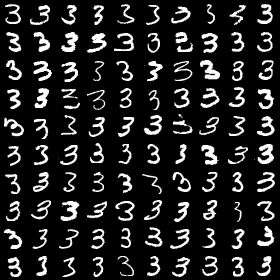}
        \caption{Task 3 }
        \label{fig:4_CL_GR:mnist_disjoint_3}
    \end{subfigure}
    \begin{subfigure}[b]{0.15\textwidth}
        \includegraphics[width=\textwidth]{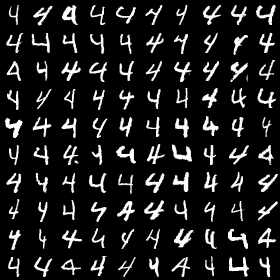}
        \caption{Task 4 }
        \label{fig:4_CL_GR:mnist_disjoint_4}
    \end{subfigure}
    
        \begin{subfigure}[b]{0.15\textwidth}
        \includegraphics[width=\textwidth]{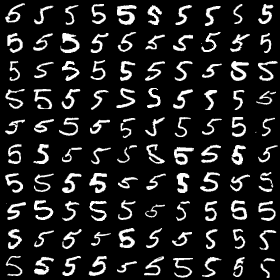}
        \caption{Task 5 }
        \label{fig:4_CL_GR:mnist_disjoint_5}
    \end{subfigure}
    \begin{subfigure}[b]{0.15\textwidth}
        \includegraphics[width=\textwidth]{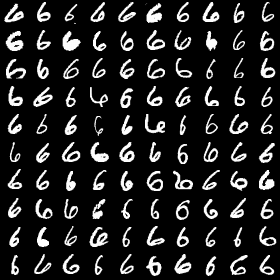}
        \caption{Task 6 }
        \label{fig:4_CL_GR:mnist_disjoint_6}
    \end{subfigure}
        \begin{subfigure}[b]{0.15\textwidth}
        \includegraphics[width=\textwidth]{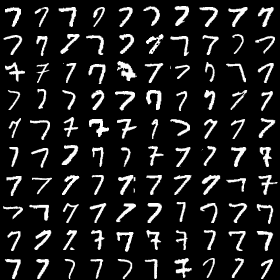}
        \caption{Task 7 }
        \label{fig:4_CL_GR:mnist_disjoint_7}
    \end{subfigure}
    \begin{subfigure}[b]{0.15\textwidth}
        \includegraphics[width=\textwidth]{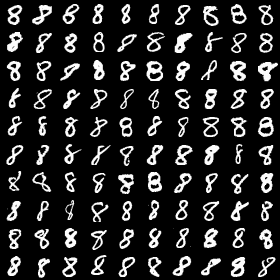}
        \caption{Task 8}
        \label{fig:4_CL_GR:mnist_disjoint_8}
    \end{subfigure}
    \begin{subfigure}[b]{0.15\textwidth}
        \includegraphics[width=\textwidth]{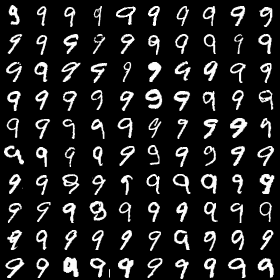}
        \caption{Task 9 }
        \label{fig:4_CL_GR:mnist_disjoint_9}
    \end{subfigure}
   
   \caption[Samples of MNIST training data for the disjoint tasks]{Samples of MNIST training data for the disjoint tasks. Each task adds one more visual class, a principle which carries over identically to FashionMNIST.}
   \label{fig:4_CL_GR:mnist_disjoint}
\end{figure}

\begin{itemize}

\item \textbf{Rotations}~New tasks are generated by choosing a random rotation angle $\beta \in [0,\pi/2]$ and then performing a 2D in-plane rotation on all samples of the original benchmark. As both benchmarks we use contain samples of 28x28 pixels, no information loss is introduced by this procedure. We limit rotation angles to $\pi/2$ because larger rotations could mix MNIST classes like 6 and 9. Each task in rotation-based tasks contains all 10 classes of the underlying benchmark, leading to 55.000 and 10.000 samples, respectively, in the train and test sets of each task.

\item \textbf{Permutations}~New tasks are generated by defining a random pixel permutation scheme, and then applying it to each data sample of the original benchmark. Each task in permutation-based tasks contains all 10 classes of the underlying benchmark, leading to 55.000 and 10.000 samples, respectively, in the train and test sets of each task.

\item \textbf{Disjoint classes}~For each benchmark, this task has as many tasks as there are classes in the benchmark. Each task contains the samples of a single class, i.e., roughly 6.000 samples in the train set and 1.000 samples in the test set. As the classes are balanced for both benchmarks, this does not unduly favor certain classes. This task presents a substantial challenge for machine learning methods since a normal DNN would, for each task, learn to map all samples to a single class label irrespective of content. Selective discrimination between any two classes is hard to obtain except if replay is involved, because then a classifier actually \say{sees} samples from different classes at the same time.

\end{itemize}

In this chapter, we will experiment with three different scenarios: 10 disjoint tasks (Fig.~\ref{fig:4_CL_GR:mnist_disjoint}), five permutation tasks (See Fig.~\ref{fig:4_CL_GR:mnist_permutation_train} and Fig.~\ref{fig:4_CL_GR:mnist_permutation}) and five rotation tasks (See Fig.~\ref{fig:4_CL_GR:rotations_samples}).

\begin{figure}
    \centering
    \includegraphics[width=\textwidth]{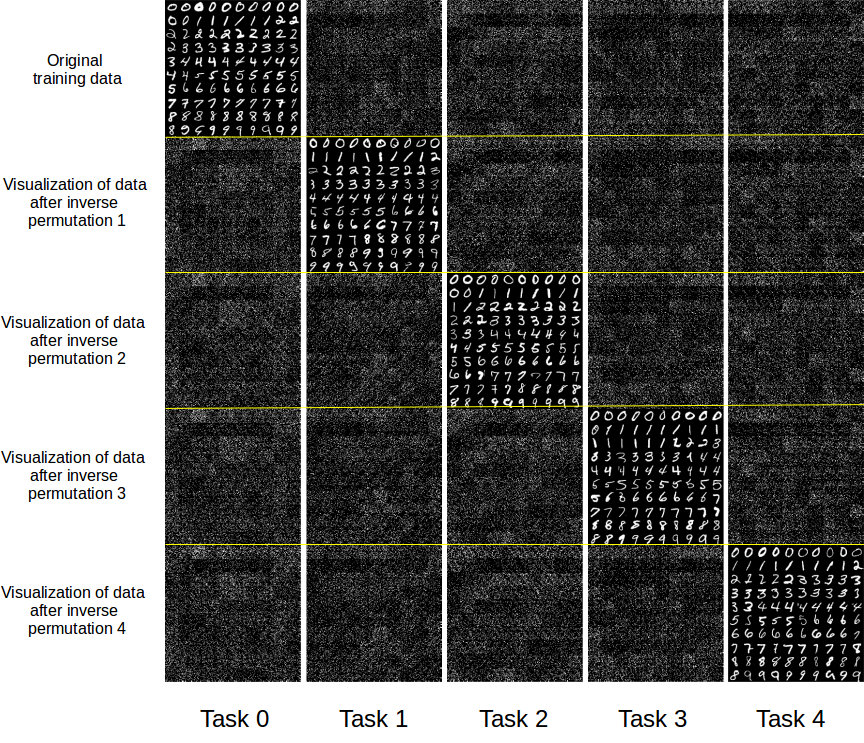}
   
   \caption[Samples of MNIST training data for the permutations tasks]{Visualization of training data for the MNIST permutations continual learning setting. %
The first line shows each task original training data. The other lines show the effect of all tasks inverse permutations applied to those data.
Obviously, the inverse permutation $i$, results in original data at task $i$.
 This figure should help the interpretation of Figure \ref{fig:4_CL_GR:mnist_permutation_results}}
   \label{fig:4_CL_GR:mnist_permutation}
\end{figure}

\clearpage

\subsection{Models}

\begin{table*}[ht!]
\centering

  \caption[Hyperparameters table for MNIST and Fashion MNIST]{Hyperparameters for MNIST and Fashion MNIST all models (all CL settings have the same training hyper parameters with Adam)}
  \label{tab:4_CL_GR:hyperparams}
  \begin{tabular}{ccccccc}
    \hline 
    Method &
    Epochs&
    LR Classifier&
    LR Generator&
    beta1&
    beta2&
    Batch Size\\
    \hline
    
    Marginal Replay     & %
    25 & %
    0.01& %
    2e-4 & %
    5e-1& %
    0.999& %
    64 \\ %
    \hline
    
    Conditional Replay     & %
    25 & %
    0.01& %
    2e-4 & %
    5e-1& %
    0.999& %
    64 \\ %
    \hline
    
    EWC     & %
    25 & %
    0.01& %
    -& %
    5e-1& %
    0.999& %
    64\\ %
    \hline

    \hline
\end{tabular}

\end{table*}

 As a reference implementation, we use a fully-connected network (2 hidden layers with 200 neurons each) with ReLU activation function. No batch normalization or dropout is performed. All other training parameters are described in Tab. \ref{tab:4_CL_GR:hyperparams}.
In this chapter, we compare a number of deep learning models: unless otherwise stated, we 
employ the Rectified Linear Unit (ReLU) transfer function, cross-entropy loss for classifier training, and the Adam optimizer.

\begin{figure}
\begin{subfigure}[b]{0.49\textwidth}
    \centering
  \includegraphics[width=0.9\textwidth]{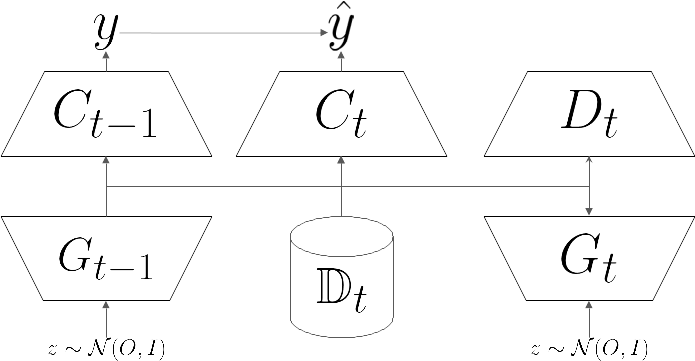}
  \caption{Generative replay}
  \label{fig:4_CL_GR:Generative_Replay}
\end{subfigure} 
\begin{subfigure}[b]{0.49\textwidth}
    \centering
  \includegraphics[width=0.9\textwidth]{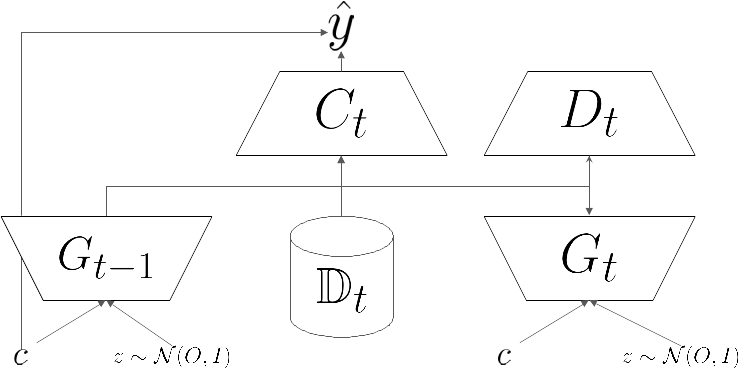}
  \caption{conditional replay}
  \label{fig:4_CL_GR:Conditional_Replay}
\end{subfigure} 

\caption[Illustration of Generative Replay and Conditional Replay for continual learning.]{\checked{Illustration of Generative Replay and Conditional Replay for continual learning. The representation is done with a GAN architecture for the generative model but it could be adapted for another one. 
The Generative replay method train the current generative model $G_t$ and classifier $C_t$ with a mixture of the current dataset $\mathbb{D}_t$ and generated data from $G_{t-1}$ and associated label $y$ given by $C_{t-1}$.
 At the end of the task only $G_t$ and $C_t$ are kept frozen as the memory of the past. For Conditional Replay, the label is imposed to the generative model, therefore, we don't need an old classifier $C_{t-1}$ but only $G_{t-1}$. At the end of the task, only $G_{t-1}$ is kept frozen. Conditional Replay is then lighter to realize.}}
\label{fig:4_CL_GR:G_Replay}
\end{figure}

\begin{itemize}

\item \textbf{EWC} We re-implemented the algorithm described in  \cite{kirkpatrick2017overcoming}, choosing
two hidden layers with 200 neurons each.

\item \textbf{Marginal replay}~ {In the context of classification, the \textit{marginal replay}  \cite{lesort2018generative, shin2017continual, wu2018memory} method works as follows: 
For each task $t$, there is a dataset $D_t$, a classifier $C_t$, a generator $G_t$ and a memory of past samples composed of a generator $G_{t-1}$ and a classifier $C_{t-1}$. The latter two allow the generation of artificial samples $D_{t-1}$ from previous tasks.
Then, by training $C_t$ and $G_t$ on $D_t$ and $D_{t-1}$, the model can learn the new task $t$ without forgetting old ones.}
At the end of the task, $C_t$ and $G_t$ are frozen and replace $C_{t-1}$ and $G_{t-1}$ (see Fig.~\ref{fig:4_CL_GR:Generative_Replay}).
In the default setting, we use the generator for marginal replay in a way that ensures a balanced distribution of classes from past tasks $D_{t-1}$, see also Fig.~\ref{fig:4_CL_GR:distribution}. This is achieved by  choosing a predetermined number of samples $N$ to be added for all tasks t, and letting the generator produce $tN$ previous samples at task $t$. Thus, 
the number of generated samples increases linearly over time. We choose to evaluate two different models for the generator: WGAN-GP as used in  \cite{shin2017continual} and the original GAN model  \cite{goodfellow2014generative} since it is a competitive baseline \cite{lesort2018training}.

\item \textbf{Conditional replay}~ The conditional replay method is derived from \textit{marginal replay}: instead of saving a classifier and a generator, the algorithm only saves a generator that can generate conditionally (for a certain class).
Hence, for each task $t$, there is a dataset $D_t$, a classifier $C_t$ and two generators $G_t$ and $G_{t-1}$ (see Fig.~\ref{fig:4_CL_GR:Conditional_Replay}).
The goal of $G_{t-1}$ is to generate data from all the previous tasks during the training on the new task. Since data is generated conditionally, samples automatically have a label and do not require a frozen classifier. We follow the same strategy as for marginal replay (previous paragraph) for choosing the number of generated samples at each task. However, conditional replay does not require this: it can, in principle, keep the number of generated samples constant for each task since it is trivially possible to generate a balanced distribution of $\frac{N}{t}$ samples per class, from $t$
different classes, via conditional sample generation.
$C_t$ and $G_t$ learn from generated data $D_{t-1}$ and $D_t$. At the end of a task $t$, $C_t$ is able to classify data from the current and previous tasks, and $G_t$ is able to sample from them also. We choose to use two different popular conditional models: CGAN described in  \cite{mirza2014conditional} and CVAE \cite{kihyuk2015learning}.

\end{itemize}

\section{Experiments}

\begin{figure}[ht]
    \centering
    \includegraphics[width=0.44\textwidth]{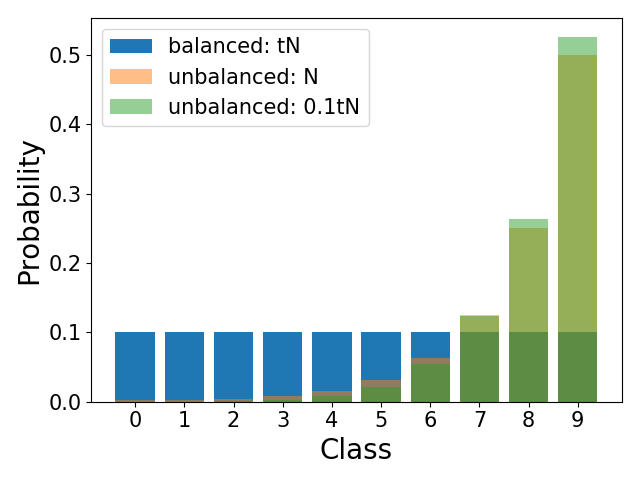}
    \caption[Illustration of classes imbalances for different replay strategies.]{Illustration of classes imbalances for different replay strategies. Why marginal replay must linearly increase the number of generated samples: distribution of classes produced by the generator of a marginal replay strategy after sequential training of 10 tasks (of 1 class each). This essentially corresponds to the \say{disjoint} type of tasks. Shown are three cases: \say{\textit{balanced}: $tN$} (blue bars) where $tN$ samples are generated for each task $t$, \say{unbalanced: $N$} (orange bars) where the number of generated samples is constant and equal to the number of newly trained samples $N$ for each task, and \say{unbalanced: $0.1 tN$} where $0.1tN$ samples are generated. We observe that, in order to ensure a balanced distribution of classes, the number of generated samples must be re-scaled, or, in other words, must increase linearly with the number of tasks.
    }
    \label{fig:4_CL_GR:distribution}
\end{figure}

We conduct experiments using all models and tasks described in the previous section. Each class (regardless of the task) is presented for 25 epochs, 
Results are presented either based on the time-varying classification accuracy on the \textit{whole} test set, or on the class (from the test set) that was presented first. In the first case, accuracy should ideally increase over time and reach its 
maximum after the last class has been presented. In the second case, accuracy should be stable if the model does not forgot or decrease over time, reflecting that some information about the first class is forgotten. 
We distinguish two major experimental goals or questions: 
\begin{itemize}
    \item Establishing the performance of the newly proposed methods (marginal replay with GAN, conditional replay with CGAN or CVAE) w.r.t. the state of the art. To this effect, we conduct experiments that increase the number of generated samples over time in a way that ensures an effectively balanced class distribution (see Fig.~\ref{fig:4_CL_GR:distribution}). We do this both for marginal and conditional replay in order to ensure a fair comparison, although technically conditional replay can generate balanced distribution even with a constant number of generated samples.
    \item Demonstrating the advantages of conditional w.r.t. marginal replay strategies, especially when only few samples can be generated, thus obtaining a skewed class distribution for marginal replay (see Fig.~\ref{fig:4_CL_GR:distribution}). 
\end{itemize} 

Results shedding light on the first question are presented in Fig.~\ref{fig:4_CL_GR:all_task_accuracy} (showing classification accuracy on whole test set over time, see Fig.~\ref{fig:4_CL_GR:first} for accuracy on first task), whereas the second question is addressed in Fig.~\ref{fig:4_CL_GR:bal} for the disjoint task.

\checked{On the other hand, we can also analyze the stability performances of the different models. Results from Fig.~\ref{fig:4_CL_GR:all_task_accuracy} presenting accuracy performance on the different tasks sequence can be compared with the FID results of generative models Fig.~\ref{fig:4_CL_GR:CL_GR_FID}. We can see a high similarity between the performance of generative models (FID) and the performance of the continual learning approach.}

\medskip

\checked{If we describe the experiments as proposed in the framework presented in Chapter \ref{chap:2_CL}, then we are in a supervised learning setting with multi-task scenario. They are 10 disjoint tasks with iid data or 5 joint tasks with iid data (for rotations and permutations tasks), with an integer oracle task label for training but not for testing (learning labels). The content update is a new concepts type (NC) for disjoint tasks and new instances for the others. The growth of used memory is less than linear and the growth of computation cost is bounded by linear growth as for generative replay described in Chapter \ref{chap:3_CL_GM}.}

\section{Results}
\label{sec:4_CL_GR:results_CL_GR}

\begin{figure}[h!]
    \centering
    \begin{subfigure}[b]{0.45\textwidth}
        \includegraphics[width=\textwidth]{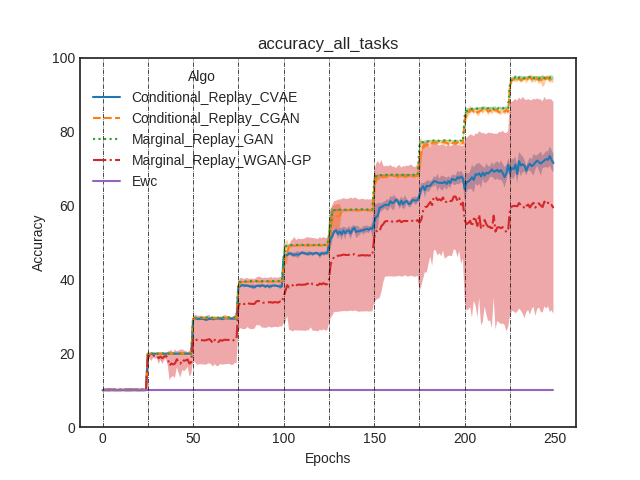}
        \caption{accuracy for MNIST disjoint task}
   \end{subfigure}
    \begin{subfigure}[b]{0.45\textwidth}
        \includegraphics[width=\textwidth]{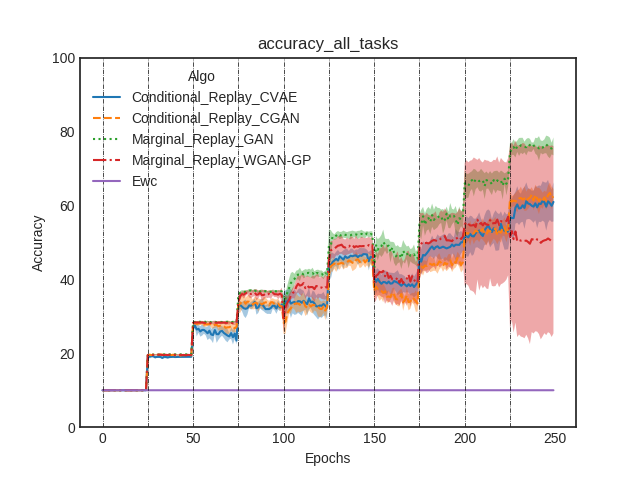}
        \caption{accuracy for Fashion MNIST disjoint task}
   \end{subfigure}
   
       \centering
    \begin{subfigure}[b]{0.45\textwidth}
        \includegraphics[width=\textwidth]{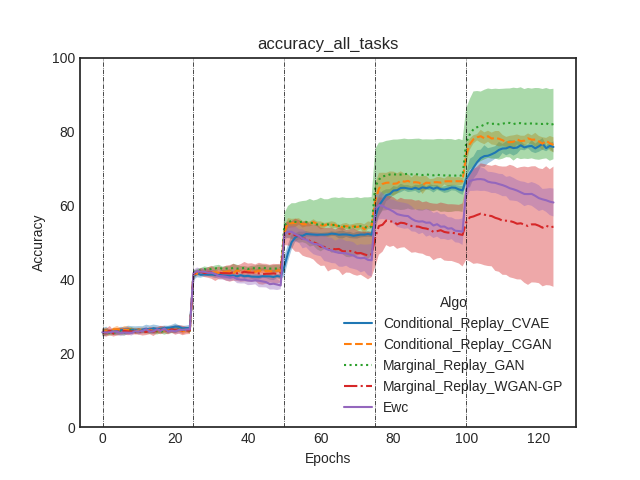}
        \caption{accuracy for MNIST permutation task}
    \end{subfigure}
   \begin{subfigure}[b]{0.45\textwidth}
       \includegraphics[width=\textwidth]{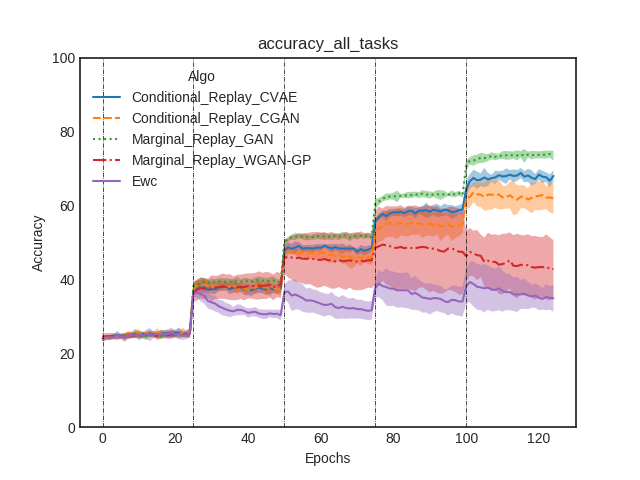}
        \caption{accuracy for Fashion MNIST permutation task}
   \end{subfigure}

       \centering
    \begin{subfigure}[b]{0.45\textwidth}
        \includegraphics[width=\textwidth]{Files/4_CL_GR/Figures/mnist_rotations_all_task_accuracy}
        \caption{accuracy for MNIST rotation task}
    \end{subfigure}
   \begin{subfigure}[b]{0.45\textwidth}
       \includegraphics[width=\textwidth]{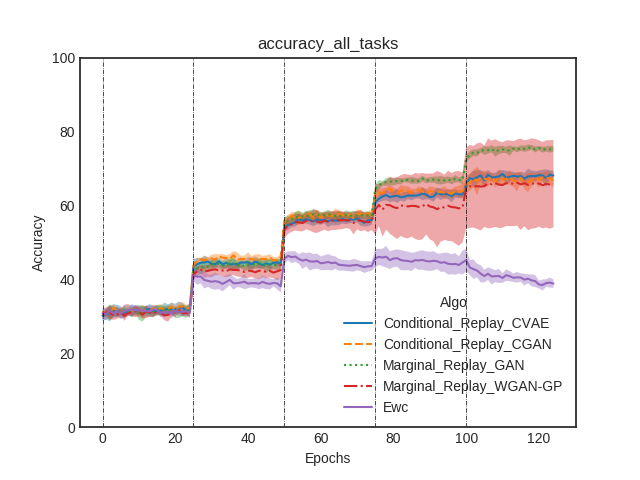}
        \caption{accuracy for Fashion MNIST rotation task}
   \end{subfigure}

   \caption[Test set accuracies during the training on different tasks]{Test set accuracies during the training on different tasks, shown for all tasks (indicated by dotted lines).}
    \label{fig:4_CL_GR:all_task_accuracy}
\end{figure}

From the experiments described in the previous section, we can state the following principal findings:

\subsection{Replay methods outperform EWC}
 As can be observed from Fig.~\ref{fig:4_CL_GR:all_task_accuracy}, the %
novel methods we propose (marginal replay with GAN and WGAN-GP, conditional replay with CGAN and conditional replay with CVAE) outperform EWC, on all tasks, sometimes by a large margin.
Particular attention should be given to the performance of EWC: while generally acceptable for rotation and permutation tasks, it completely fails for the disjoint task. This is due to the fact that there is only one class in each task, making EWC try to map all samples to the currently presented class label regardless of input, since no replay is available to include samples from previous tasks.%

\subsection{Marginal replay with GAN outperforms WGAN-GP}
The clear advantage of GAN over WGAN-GP is the higher stability of the generative models.
This is not only observable in the accuracy (see Fig.~\ref{fig:4_CL_GR:all_task_accuracy}), but also when measuring performance on the first task only during the course of 
continual learning (see Fig.~\ref{fig:4_CL_GR:first}) and in the generative model FID (see Fig.~\ref{fig:4_CL_GR:CL_GR_FID}).
\checked{We show some samples of GAN for the permutation tasks in Fig.~\ref{fig:4_CL_GR:mnist_permutation_results} (Fig.~\ref{fig:4_CL_GR:mnist_permutation} helps to understand Fig.~\ref{fig:4_CL_GR:mnist_permutation_results} \checked{by showing how  original training data look like and how they look like after applying each inverse permutation. It makes it easier to see that a single trained GAN can generate correct data in all tasks whatever the permutation.}). 
The permutation task is the hardest one for a generative model but we can see that the GAN is able to successfully generate samples from each tasks. }

\begin{figure}[t!]
    \centering
    \begin{subfigure}[b]{0.45\textwidth}
        \includegraphics*[width=\textwidth]{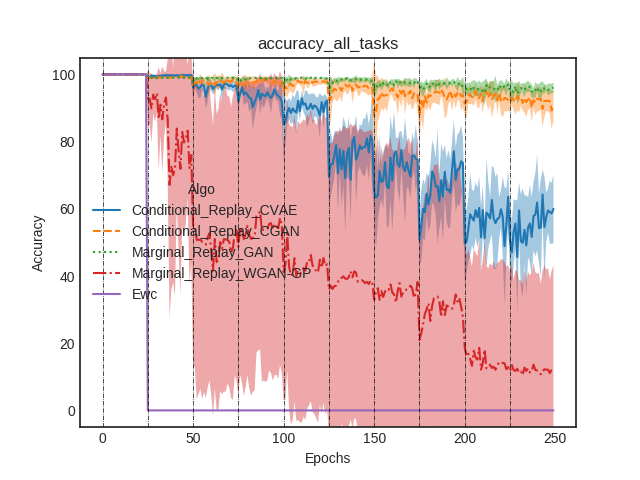}
        \caption{MNIST: disjoint task}
        \label{fig:mnist_disjoint_first_task_accuracy}
    \end{subfigure}
   \begin{subfigure}[b]{0.45\textwidth}
       \includegraphics[width=\textwidth]{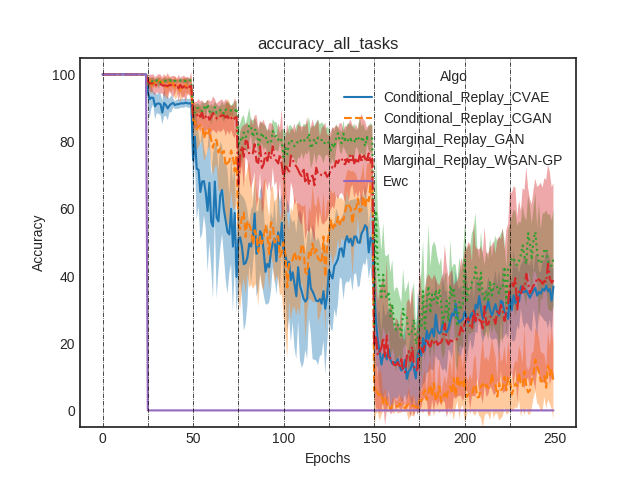}
        \caption{Fashion MNIST: disjoint task}
        \label{fig:fashion_disjoint_first_task_accuracy}
   \end{subfigure}

    \centering
    \begin{subfigure}[b]{0.45\textwidth}
        \includegraphics*[width=\textwidth]{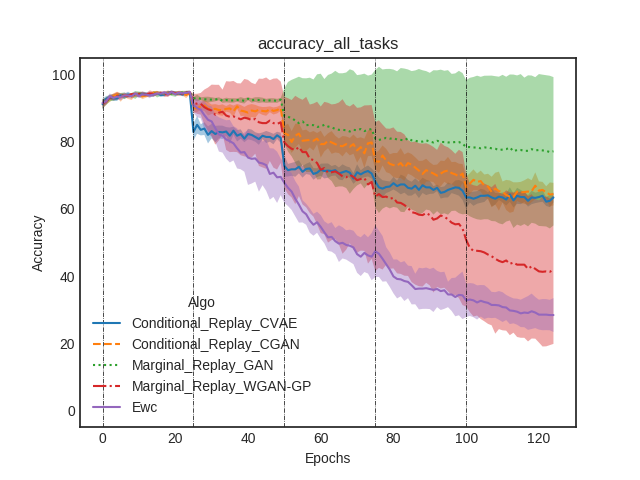}
        \caption{MNIST: permutation task}
        \label{fig:mnist_permutations_first_task_accuracy}
    \end{subfigure}
   \begin{subfigure}[b]{0.45\textwidth}
       \includegraphics[width=\textwidth]{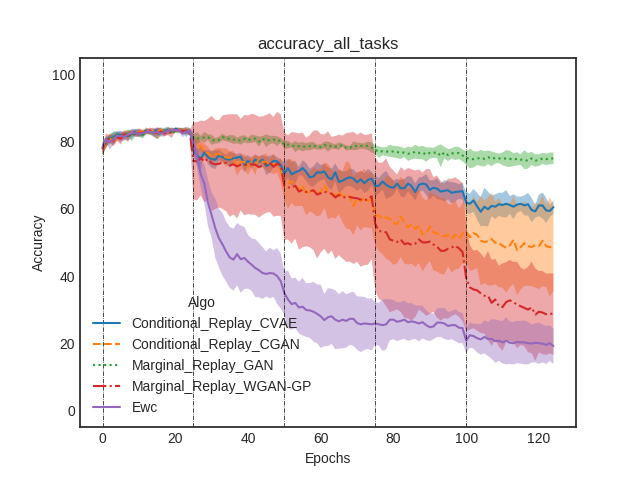}
        \caption{Fashion MNIST: permutation task}
        \label{fig:fashion_permutations_first_task_accuracy}
   \end{subfigure}

    \centering
    \begin{subfigure}[b]{0.45\textwidth}
        \includegraphics*[width=\textwidth]{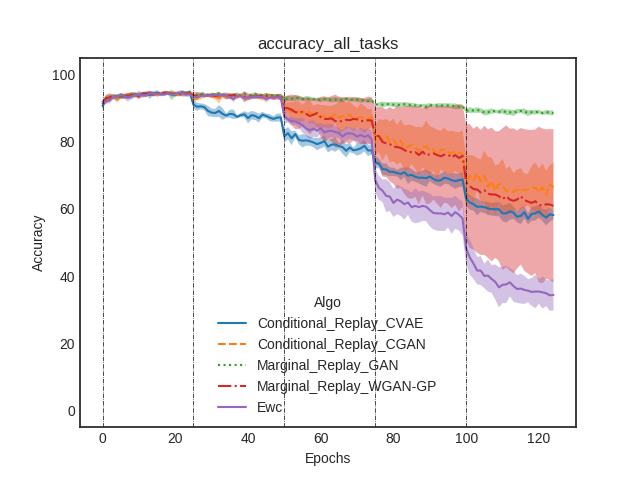}
        \caption{MNIST: rotation task}
        \label{fig:mnist_rotations_first_task_accuracy}
    \end{subfigure}
   \begin{subfigure}[b]{0.45\textwidth}
       \includegraphics[width=\textwidth]{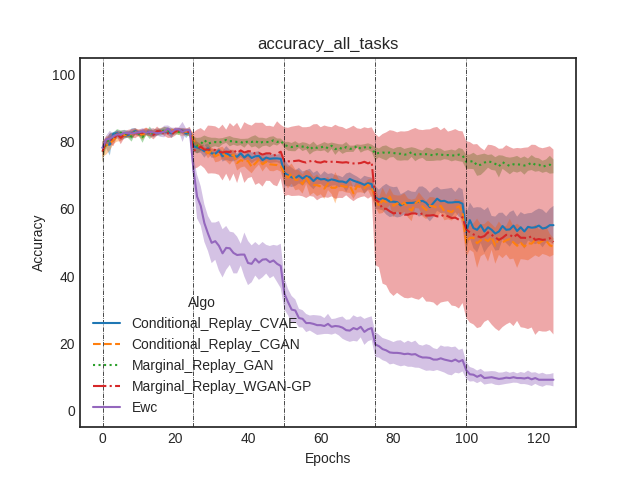}
        \caption{Fashion MNIST: rotation task}
        \label{fig:fashion_rotations_first_task_accuracy}
   \end{subfigure}
   \caption[Comparison of the accuracy of each approach on the first task.]{
   Comparison of the accuracy of each approach on the first task. This is another, very intuitive measure of how much is forgotten during continual learning. Means and standard deviations computed over 8 seeds.}
\label{fig:4_CL_GR:first}
\end{figure}

\begin{figure}[ht]
    \centering
    \begin{subfigure}[b]{0.45\textwidth}
        \includegraphics*[width=\textwidth]{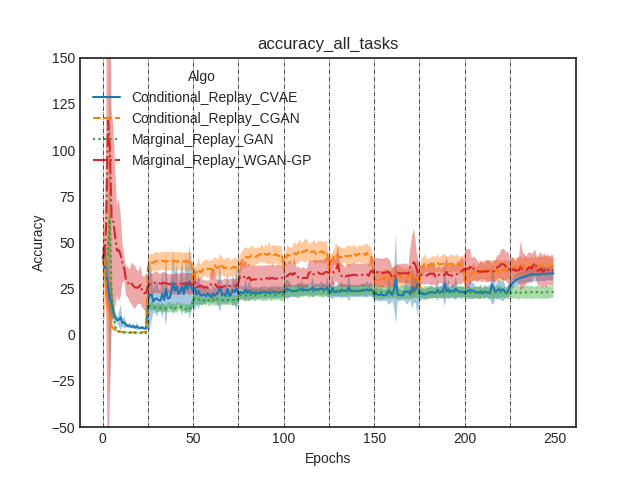}
        \caption{MNIST: disjoint task}
        \label{fig:mnist_disjoint_FID}
    \end{subfigure}
   \begin{subfigure}[b]{0.45\textwidth}
       \includegraphics[width=\textwidth]{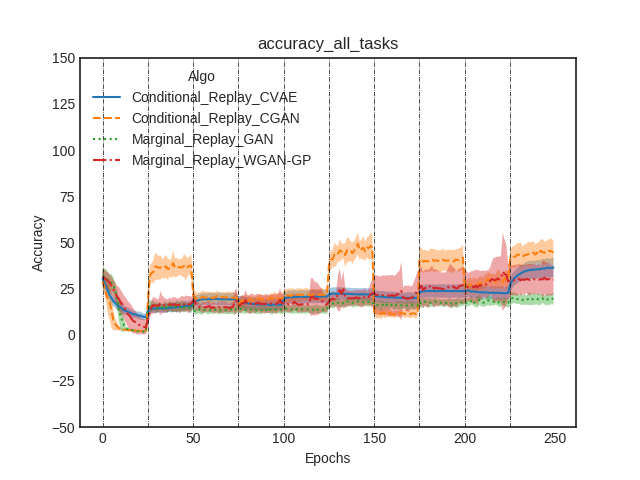}
        \caption{Fashion MNIST: disjoint task}
        \label{fig:fashion_disjoint_FID}
   \end{subfigure}
    
    \centering
    \begin{subfigure}[b]{0.45\textwidth}
        \includegraphics*[width=\textwidth]{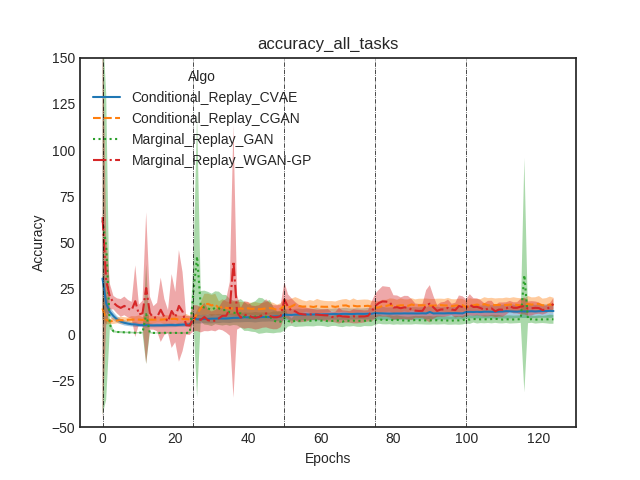}
        \caption{MNIST: permutation task}
        \label{fig:mnist_permutations_FID}
    \end{subfigure}
   \begin{subfigure}[b]{0.45\textwidth}
       \includegraphics[width=\textwidth]{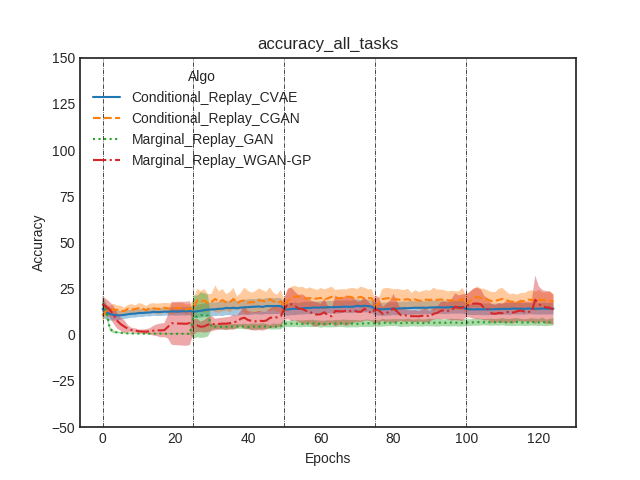}
        \caption{Fashion MNIST: permutation task}
        \label{fig:fashion_permutations_FID}
   \end{subfigure}
    
    \centering
    \begin{subfigure}[b]{0.45\textwidth}
        \includegraphics*[width=\textwidth]{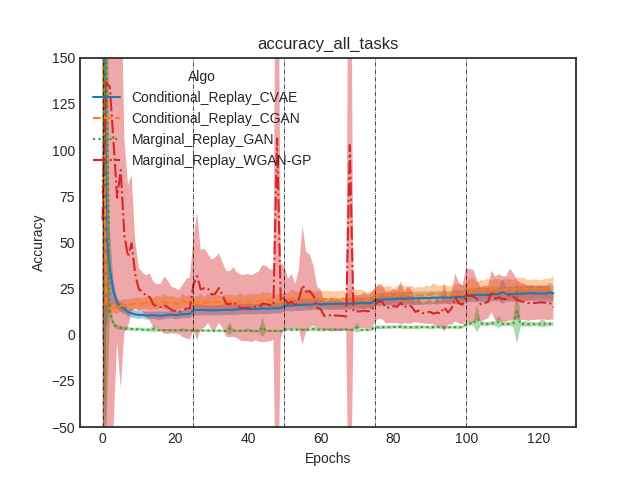}
        \caption{MNIST: rotation task}
        \label{fig:mnist_rotations_FID}
    \end{subfigure}
   \begin{subfigure}[b]{0.45\textwidth}
       \includegraphics[width=\textwidth]{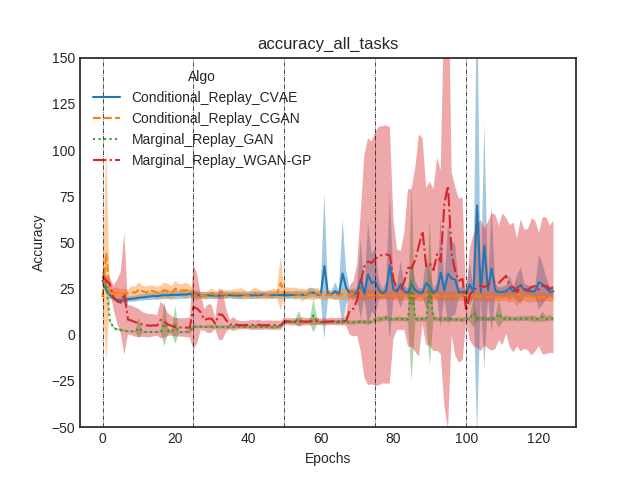}
        \caption{Fashion MNIST: rotation task}
        \label{fig:fashion_rotations_FID}
   \end{subfigure}
   
   \caption[Comparison of the FID of each approach's generative models]{Comparison of the FID of each approach's generative models. Means and standard deviations computed over 8 seeds.}
   \label{fig:4_CL_GR:CL_GR_FID}
\end{figure}

\begin{figure}
    \centering
    
    \includegraphics[width=\textwidth]{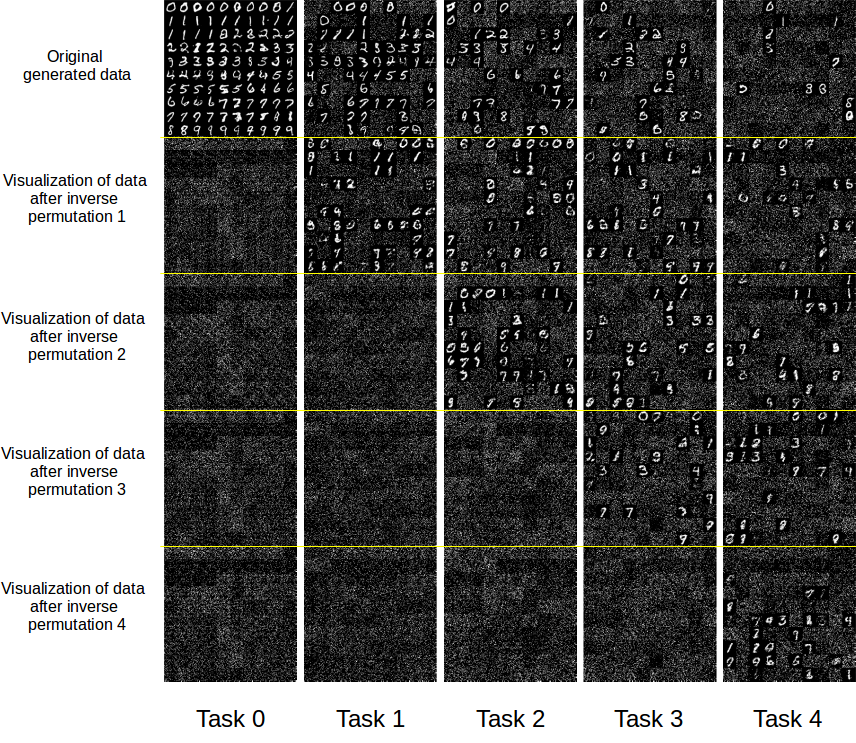}
   
   \caption[Generated samples of MNIST for the permutations tasks]{Visualization of data generated during the training of marginal replay + GAN on the MNIST permutation tasks. %
   The first line shows each task generated data. The other lines show the effect of all tasks inverse permutations applied to those data as in Fig.~\ref{fig:4_CL_GR:mnist_permutation}.
     We observe that at each task the generative model can generate data in all previous permutation spaces. Then, we have a stable retention behaviour as the number of achieved tasks increases, while data from the current task is learned successfully as well.}
   \label{fig:4_CL_GR:mnist_permutation_results}
\end{figure}

\clearpage

\subsection{Conditional replay can be run at constant time complexity}
A very important point in favour of conditional replay is run-time complexity, as expressed by the number of samples that need to be generated each time a new task is trained. Since the generators in marginal replay strategies generate samples regardless of class, the distribution of classes will be proportional to the distribution of classes during the last training of the generator, which leads to an unbalanced class distribution over time, with the oldest classes being strongly under-represented (see Fig.~\ref{fig:4_CL_GR:distribution}). This is
avoided by increasing the number of generated samples over time for marginal replay, leading to a balanced class distribution (see also Fig.~\ref{fig:4_CL_GR:distribution}) while vastly increasing the number of samples. Conditional replay, on the other hand, 
can selectively generate samples from a defined class, thus constructing a class-balanced dataset without needing to increase the number of generated samples over time. In the interest of accuracy, it can of course make sense to
increase the number of generated samples over time, just as for marginal replay. This, however, is a deliberate choice and not something required by conditional replay itself.

\subsection{Marginal replay vs Conditional Replay performances}
From Fig.~\ref{fig:4_CL_GR:all_task_accuracy}, it can be observed that
marginal replay outperforms conditional replay by a small margin. This comes at the price of having to generate a large number of samples, which will become unfeasible if many classes are involved in the retraining.

\medskip

The results of Fig.~\ref{fig:4_CL_GR:bal} show that conditional replay is superior to marginal replay when generating fewer samples at each task (more precisely: $0.1tN$ samples instead of $tN$, for task $t$ and number of new samples per task N). This can be understood quite easily: since we generate only $0.1tN$ samples instead of $tN$ samples at each task, marginal replay produces an unbalanced class distribution, see Fig.~\ref{fig:4_CL_GR:distribution}, which strongly impairs classification performance. This is a principal advantage that conditional replay has over marginal replay: generating balanced class distributions while having much more control over the number of generated samples.

\begin{figure}[h]
    \centering
    \begin{subfigure}[b]{0.45\textwidth}
       \includegraphics[width=\textwidth]{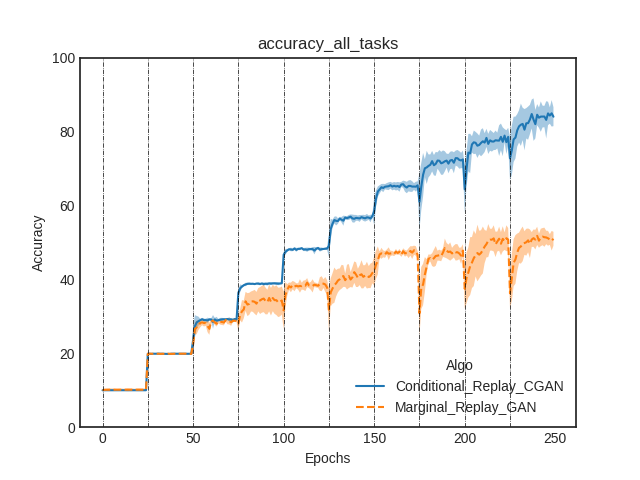}
        \caption{Unbalanced MNIST Disjoint}
        \label{fig:unbalanced_mnist}
    \end{subfigure}
   \begin{subfigure}[b]{0.45\textwidth}
       \includegraphics[width=\textwidth]{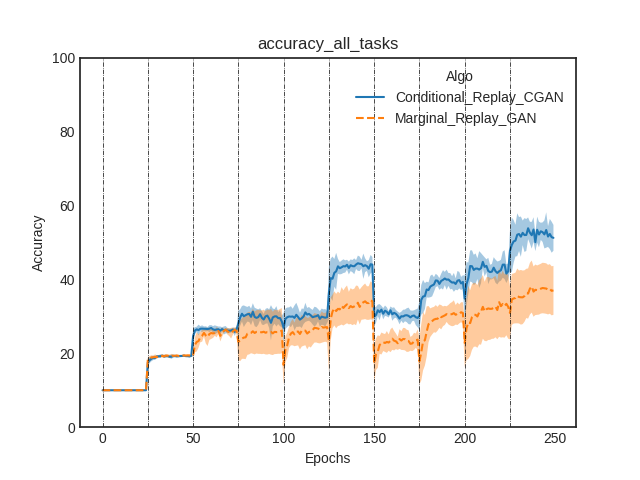}
        \caption{Unbalanced Fashion Disjoint}
        \label{fig:unbalanced_fashion}
   \end{subfigure}
   
       \centering
    \begin{subfigure}[b]{0.45\textwidth}
       \includegraphics[width=\textwidth]{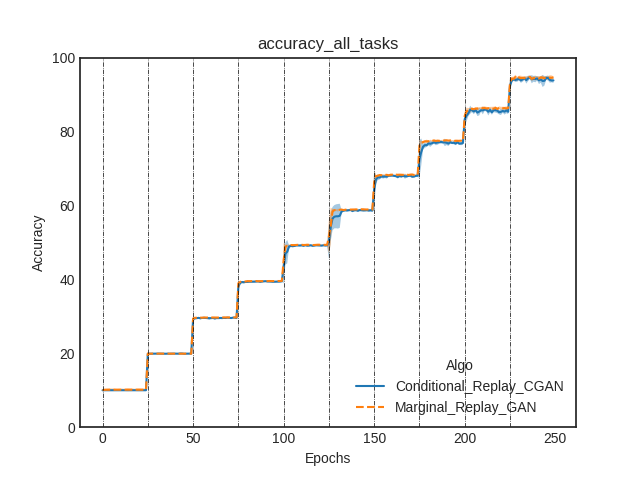}
        \caption{Balanced MNIST Disjoint}
        \label{fig:balanced_mnist}
    \end{subfigure}
   \begin{subfigure}[b]{0.45\textwidth}
       \includegraphics[width=\textwidth]{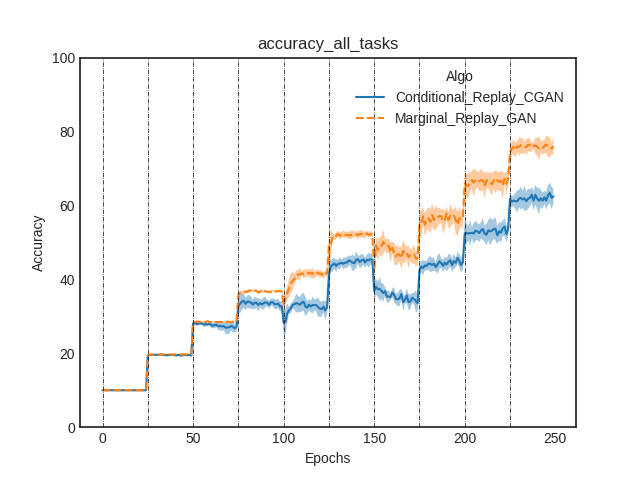}
        \caption{Balanced Fashion Disjoint}
        \label{fig:balanced_fashion}
   \end{subfigure}
   
   \caption[Comparison between conditional and marginal replay accuracy.]{Comparison between conditional and marginal replay accuracy.
   We compare final accuracy when the ratio between size of old task and size of new task is 1 (balanced) or 1/10 (unbalanced, factor was chosen empirically).}
   \label{fig:4_CL_GR:bal}
\end{figure} 

\clearpage

\section{Discussion}
\label{sec:4_CL_GR:discussion_CL_GR}

\checked{In this chapter, we have seen that to ensure the best performance of generative replay, and in replay methods in general, the most straight forward method is to replay samples while taking care  of balancing the instance of classes. %
}

\medskip

Even if balancing the classes offer the best performing results, it is not satisfying in term of computation, indeed since we should balance classes, the amount of computation needs grows linearly with the number of past classes to learn sequentially. It becomes then interesting to look for a solution to reduce the amount of replay needed to reduce the total computational cost. In our results, we show that conditional models are more likely to learn continually if the number of samples is reduced. It would be interesting to also study the impact of balancing the loss instead of the replay to see if we could also improve the computational cost.
It might also be interesting to sample more data but processing only a subset of them that could maximize the memorization as proposed in \cite{Aljundi2019Online} or find smarter methods of sampling.

\medskip

While one might argue that MNIST and FashionMNIST are too simple for a meaningful evaluation, this holds only for non-continual learning scenarios. In fact, recent articles \cite{pfulb2019a} show that MNIST-related tasks are still a major obstacle for most current approaches to continual learning under realistic conditions. So, while we agree that MNIST and FashionMNIST are not suitable benchmarks in general anymore, we must stress the difficulty of MNIST-related tasks in continual learning, thus making these benchmarks very suitable indeed in this particular context. The use of intrinsically more complex benchmarks, such as CIFAR,SVHN or ImageNet is at present not really possible since generative methods are not really good enough for replaying these data \cite{lesort2018generative} (see or experiments with CIFAR in Chapter \ref{chap:3_CL_GM}).

\medskip 
 
An interesting point is that the disjoint type tasks pose enormous problems to conventional machine learning architectures, and therefore represent a very useful benchmark for continual learning algorithms. \checked{As pointed in the Chapter \ref{chap:2b_Replay},} if each task contains a single visual class, training them one after the other will induce no between-class discrimination at all since 
every training step just \say{sees} a single class. %
Replay-based methods nicely bridge this
gap, allowing continual learning while allowing between-class discrimination. In our opinion, any application-relevant algorithm for continual learning therefore must include some form of experience replay.

\section{Conclusion}
\label{sec:4_CL_GR:conclusion}

In this chapter, we have proposed several ways of performing continual learning with replay-based models and empirically demonstrated (on novel benchmarks) their merit w.r.t. the state of the art, represented by the EWC method. A principal conclusion of this chapter is that conditional replay methods show strong promise because they have competitive performance, 
and they impose less restrictions on their use in applications. Most notably, they
can be used at constant time complexity, meaning that the number of generated samples 
does not need to increase over time, which would be problematic in applications with many tasks and real-time constraints.

\checked{Nevertheless, as noted in Chapter \ref{chap:3_CL_GM} one of the biggest limitations of generative replay is the difficulty to train generative models with real images. Generative models are generally very long to train and suffer from high instability.}

Ultimately, the goal of our research is to come up with replay-based models where the effort spent on replaying past knowledge is small compared to the effort of training with new samples.
\checked{Then, to minimize the amount of replay necessary the model needs also to limit catastrophic forgetting without replay.}

\newpage
\chapter{Replay for Policy Distillation}
\label{chap:5_DiscoRL}

\checked{In the previous chapters, we studied generative replay methods for continual learning on two aspects: the continual learning process of generative models and the generative replays applied to classification.}
\checked{We have seen the successes and failures of these uses of generative models for continual learning.}
\checked{ In this section, we study the rehearsal method for reinforcement learning. We apply a replay method to train a robot to learn several tasks sequentially and test them in a real-life setting.}

\checked{This work is a collaboration with Ren{\'{e}} Traor\'{e}, Hugo Caselles{-}Dupr{\'{e}}, Te Sun and Guanghang Cai. It has been published at NeurIPS deep Reinforcement Learning workshop as \say{DisCoRL: Continual Reinforcement Learning via Policy Distillation} \cite{Traore19DisCoRL}. The original article has been slightly modified to include additional figures and results.}

\section{Introduction}

An autonomous agent should be able to acquire and exploit knowledge through its entire life. In a sequence of learning experiences, it should therefore be able to build representations and skills online and be able to reactivate and reuse them later.
In this chapter, we focus on a setting where an agent learn skills sequentially using a single model, and use them afterwards.

This challenge is partially addressed by a sub-domain of machine learning called multi-task learning. Multi-task learning \cite{Caruana97} studies how to optimize several problems \textit{simultaneously} with a single model. However, when those problem can not be optimized at the same time, but sequentially, we identify the learning setting as a continual learning problem.
In this chapter, we address a continual learning problem of reinforcement learning (RL). In this setting, each learning experience is called task, and each task solution is called policy.

\begin{figure}[t]
    \centering
    \includegraphics[width=0.8\textwidth]{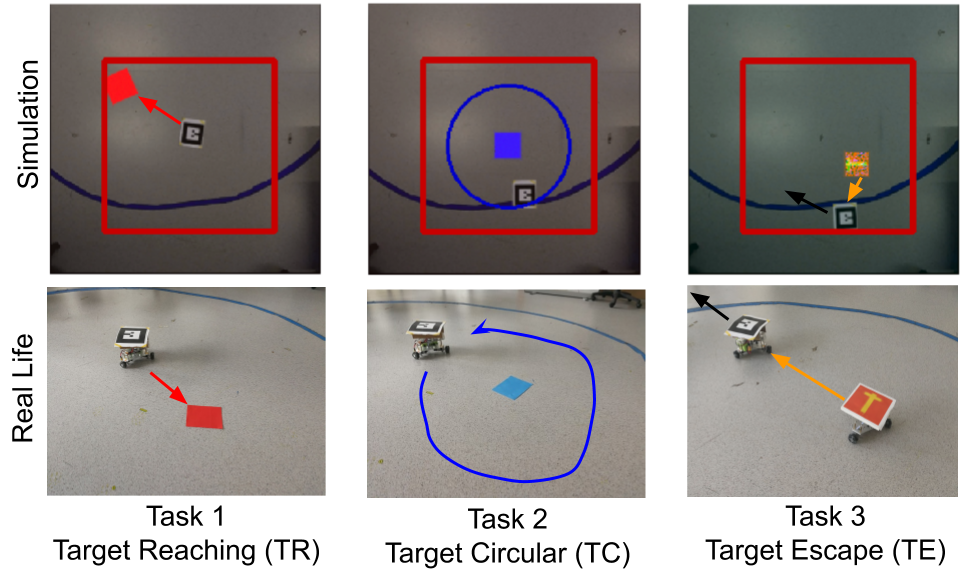}
        \caption[Image of the three robotics tasks.]{Image of the three tasks, in simulation (top) and in real life (bottom) sequentially experienced. Learning is performed in simulation, real life is only used at test time. }
    \label{fig:5_DiscoRL:real-life-tasks}
\end{figure}

To propose a learning setting compatible with a real autonomous agent, we define three simulated robotics tasks to solve sequentially. At each task, the agent need to learn a policy based on a reward function and a RL algorithm.

As presented in Section \ref{sub:1b_ML:RL}, RL is a popular framework to learn robot controllers that also has to face the continual learning (CL) %
challenges.
In order to learn a multi-task reinforcement learning policy continually %
, we use a method called policy distillation \cite{Rusu16distillation} that allows to transfer several policies learned sequentially into one in a single model.
To validate our approach, we evaluate the final results on the three simulated learning settings but also in a real life settings matching the simulation (Figure \ref{fig:5_DiscoRL:real-life-tasks}). 
It is important to note that, at test time, the agent does not have access to a task label to determine which policy to run, and thus, it needs to figure it out by itself from its observations. \checked{As discussed in the previous chapters, it is an important feature for the autonomy of decision making.}

Our contribution are:

\begin{itemize}

\item We propose \textbf{DisCoRL} (\textit{Distillation for Continual Reinforcement learning}): a modular, effective and scalable pipeline for continual RL. This pipeline uses policy distillation for learning without forgetting \cite{French99}%
, without access to previous environments, and without task labels. Our results show that the method is efficient and learns policies transferable into real-life scenarios.

\item \checked{We explore various sampling strategies for policy distillation and compare their performances for task transfer.}

\end{itemize}

The chapter is structured as follows: Section \ref{sec:5_DiscoRL:Background} introduces related work,  Section \ref{sec:5_DiscoRL:Approach} details the methods utilized, Section \ref{sec:5_DiscoRL:experiments} describes the robotics setting and  tasks, Section \ref{sec:5_DiscoRL:Results} presents the experiments performed, and Section \ref{sec:5_DiscoRL:discussion} concludes with future insights from our experiments.

\section{Background}
\label{sec:5_DiscoRL:Background}

\checked{In this section, we present some background material about multi-task reinforcement learning as well as reinforcement learning in continual learning and robotics.}

\subsection{Multi-task RL}
The objective of Multi-task learning (MTL) \cite{Caruana97} is to learn several tasks simultaneously; generally by training tasks in parallel with an unique model.
Therefore, multi-task RL aims at constructing one single policy that can solve a number of different tasks. Note how in classification this problem is quite simple, as data from all tasks just have to be shuffled randomly and can then be learned all together at once. However, in RL environments, data is sampled on sequences that can not be shuffled randomly with all other environments because the environments are not accessible simultaneously. Learning multiple tasks at once is thus more complicated.

Policy distillation \cite{Rusu16distillation} can be used to merge different policies into one single module/network.  This approach uses two models, a trained policy (the teachers) to annotate data with soft-labels%
, and a model to learn from the former (the student). The student is trained in a supervised manner with the soft-labels. The soft-annotation %
helps the student to learn faster than the teacher did~\cite{Furlanello18}.
Policy distillation can then be used to learn several policies separately and simultaneously, and distill them into a single model as in the \textit{distral} algorithm \cite{teh2017distral}. In our approach, we also use distillation but we do not keep the teacher model. We just label a set of data and then we don't need to keep the teacher model anymore and we can delete it to save memory space. %
Furthermore, tasks are learned sequentially, and not simultaneously.
Other approaches such as SAC-X \cite{riedmiller2018learning} or HER \cite{andrychowicz2017hindsight} take advantage of Multi-task RL by learning auxiliary tasks in order to help learning a main task. 
This approach is extended in the CURIOUS algorithm \cite{Colas18Curious} which selects tasks to be learned that improve an absolute learning progress metric the most.

\subsection{Continual Learning}
\checked{We present here a brief recall of Chapter \ref{chap:2_CL} on continual learning for reinforcement learning.}

Continual learning (CL) is the ability of a model to learn new skills without forgetting previous knowledge. It is in many ways similar to multi-task learning but with the difference past task can not be accessed directly.
In our context, it means learning several tasks sequentially and being able to solve any of the learned tasks at the end of the sequence. 

In continual RL, several approaches have been proposed, such as the use of \textit{Progressive Nets} in~\cite{Rusu16sim2real}, \textit{Elastic Weight Consolidation (EWC)} \cite{kirkpatrick2017overcoming}, \textit{Progress And Compress} (P\&C) \cite{schwarz2018progress}, or \textit{CRL-Unsup} \cite{lomonaco2019continual}. 
However, they either need a task indicator at test time to choose which policies  to run or, they have some hyper-parameter difficult to tune during a continual learning training, such as the importance of the Fisher information matrix in EWC. Our method does not add any new hyper-parameter to tune during the sequence of tasks and does not need a task label at test time.

\subsection{RL in Robotics}
\checked{As discussed in Chapter \ref{chap:2_CL},} applying RL to real-life scenarios such as robotics is still a major challenge. 

One of the main problem in this setting is that sampling data, and a fortiori learning, is costly. Therefore sample efficiency and stability in learning are highly valuable.
One common approach to reduce training cost, is training policies in simulation and then deploying them in real-life hoping that they will successfully transfer, considering the gap in complexity between simulation and the real world. Such approaches are termed \textit{Sim2Real} \cite{Golemo19}, and have been successfully applied \cite{christiano2016transfer, matas2018sim} in different scenarios. One of these approaches is Domain Randomization \cite{tobin2017domain}, which we use in this chapter. This technique trains policies in numerous simulations that are randomly different from each other (different background, colors, etc.). Using this technique, the transfer to real life is easier.

Another method we also exploit is to first learn a state representation \cite{Lesort18} to compress the observation into a low dimensional embedding and secondly learn the policy on top of this representation. This method helps to improve sample efficiency and stability of RL algorithms \cite{raffin2019decoupling} and thus can make them directly applicable in real life.

Others have tried to train a policy directly on real robots, facing the hurdle of the lack of sample efficiency of RL algorithms. SAC-X \cite{riedmiller2018learning} is one example that takes advantage of multi-task learning to improve efficiency, by simultaneously learning the policy and a set of auxiliary tasks to explore its observation space - in search for sparse rewards of the externally defined target task.

In the literature, most approaches focus on the single-task or simultaneous multi-task scenario. In this chapter, we attempt to train a policy on several tasks sequentially and deploy it in real life by combining policy distillation, training in simulation and state representation learning.

\section{Approach}
\label{sec:5_DiscoRL:Approach}

In this section we present our approach towards continual reinforcement learning for a sequence of vision based tasks. We assume that observations visually allow to recognize the current task from other tasks. We first explain how we learn a single task by combining state representation learning (SRL) \cite{Lesort18} and reinforcement learning (RL), then how each task is incorporated in the continual learning pipeline. Finally, we present how we evaluate the full pipeline. 

\subsection{Learning one task}
\label{subsec:5_DiscoRL:oneTask}

\begin{figure*}
\centering
\includegraphics[width=0.7\textwidth]{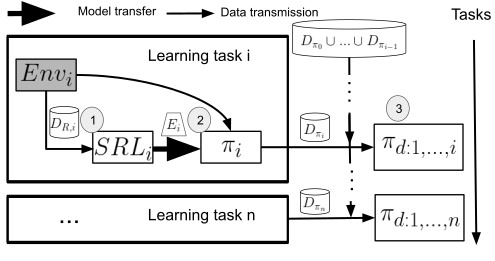}
\caption[Overview of our full pipeline for Continual multi-tasks Reinforcement Learning.]{Overview of our full pipeline for Continual Reinforcement Learning. White cylinders are for datasets, gray squares for environments, and white squares for learning algorithms, whose name corresponds to the model trained. 
Each task $i$ is learned sequentially and independently by first generating a dataset $D_{R,i}$ with a random policy to train a state representation with an encoder $E_i$ with an SRL method (1), then we use $E_i$ and the environment to learn a policy $\pi_i$ in the state space (2). Once trained, $\pi_i$ is used to create a distillation dataset $D_{\pi_i}$ that acts as a memory of the learned behaviour. 
All policies are finally compressed into a single policy $\pi_{d:{1,..,i}}$ by merging the current dataset $D_{\pi_i}$ with datasets from previous tasks $D_{\pi_1} \cup ... \cup D_{\pi_{i-1}}$ and using distillation (3).} 

\label{fig:5_DiscoRL:overview}
\end{figure*}

Each task $i$ is solved by first learning a state representation encoder $E_i$ in order to compress input images into a representation of the important underlying factor of variation.
This step allows to reduce the input space for the reinforcement learning algorithm and makes it learn more efficiently~\cite{raffin2019decoupling}.
To train this encoder, as shown in Fig. \ref{fig:5_DiscoRL:overview} (left), we sample data from the environment $Env_i$ 
with a random policy. We call this dataset $D_{R,i}$.
$D_{R,i}$ is then used to train the SRL model composed of an \textit{inverse model} and an \textit{auto-encoder}. The inverse model is trained to predict the action $a_t$ that led to transition from state $s_t$ to $s_{t+1}$, both extracted from respective observations $o_t$ and $o_{t+1}$ by the auto-encoder using $E_i$. The auto-encoder is additionally trained to reconstruct the observations from the encoded states. The architecture is motivated by the results from \cite{raffin2019decoupling}, and illustrated in the Figure \ref{fig:5_DiscoRL:split-model}. 

\usetikzlibrary{arrows}
\usetikzlibrary{decorations.markings}
\usetikzlibrary{arrows,calc,fit, patterns}
\newcommand{\mygrid}{\tikz{\draw[step=0.5cm] (0,0)  grid (0.5,1.5);}}

\begin{figure}[ht!]
\centering

\resizebox{0.65\textwidth}{!}{%
\begin{tikzpicture}[
roundnode/.style={circle, draw=black!60, fill=green!0, very thick, minimum size=10mm},
roundnode2/.style={circle, draw=black!60, fill=black!20, very thick, minimum size=10mm},
squarednode_st/.style={rectangle, draw=black!60, fill=green!20, very thick, minimum size=10mm},
squarednode/.style={rectangle, draw=black!60, fill=black!20, very thick, minimum size=10mm},
squarednode_st2/.style={rectangle, draw=black!60, very thick, minimum width=10mm, minimum height = 2cm},
invisible/.style={rectangle , draw=black!0, fill=green!0, very thick, minimum size=10mm},
squarednode_img/.style={rectangle, draw=black!60, fill=black!20, very thick, minimum size=10mm},
container_AE/.style={draw, rectangle, draw=green!60, dashed, inner sep=1em},
container_Inv/.style={draw, rectangle, draw=blue, dashed, inner sep=1em},
]

\node[invisible]        (base)        {};
\node[invisible]        (base2)        [above=of base] {};

\node[invisible]        (obs)        {};
\node[invisible]        (obs2)        [above=of obs] {};

\node[roundnode]        (hidden_obs2)        [on grid,above=of base2] {$I_{t+1}$};
\node[roundnode]        (hidden_obs)        [on grid,below=of base] {$I_t$};
\node[invisible]        (hidden_obs3)        [on grid,above=of base] {};

\node[invisible]        (hidden_state)        [right=of hidden_obs3,draw] {};
\node[invisible]        (hidden_state)        [on grid,right=of hidden_state,draw] {};

\node[squarednode_st2] (anode) [right=of hidden_obs,draw, pattern=horizontal lines light gray]{};
\node[] (label) [on grid, above=0.8cm of anode]{$s_{t}$};

\node[squarednode_st2] (anode2) [right=of hidden_obs2,draw, pattern=horizontal lines light gray]{};
\node[] (label2) [on grid, above=0.8cm of anode2]{$s_{t+1}$};

\node[invisible]        (center3)        [right=of hidden_state] {};

\node[squarednode]        (recon_act)        [on grid, above=of center3, draw] {$\hat{a}_t$};
\node[squarednode]        (recon_img)        [on grid,below=of center3,draw] {$\hat{I}_t$};

\node[roundnode]        (img)        [right=of recon_img,draw] {$I_t$};
\node[roundnode]        (act)        [right=of recon_act,draw] {$a_t$};

\node[invisible]        (L_img)        [right=of img,draw] {$\mathcal{L}_{Reconstruction}$};
\node[invisible]        (L_act)        [right=of act,draw] {$\mathcal{L}_{Inverse}$};

\draw[-{Latex[length=3mm,width=2mm]}] (hidden_obs.east) -- (anode.west);
\draw[-{Latex[length=3mm,width=2mm]}] (hidden_obs2.east) -- (anode2.west);

\draw[-{Latex[length=3mm,width=2mm]}] (anode2.east) -- (recon_act.west);
\draw[-{Latex[length=3mm,width=2mm]}] (anode.east) -- (recon_act.west);

\draw[-{Latex[length=3mm,width=2mm]}] (anode.east) -- (recon_img.west);

\node[container_AE, fit=(recon_act)(act)] (fwd) {};
\node[container_Inv, fit=(recon_img)(img) ] (ae) {};
\end{tikzpicture}
}

\caption[Illustration of \textit{SRL Combination} model]{\textit{SRL Combination} model: combines the prediction of an image $I$'s reconstruction loss and an inverse dynamics model loss in a state representation $s$. Arrows represent inference, dashed frames represent losses computations, rectangles are state representations, circles are real observed data, and squares are model predictions; $t$ represents the timestep.}

\label{fig:5_DiscoRL:split-model}
\end{figure}
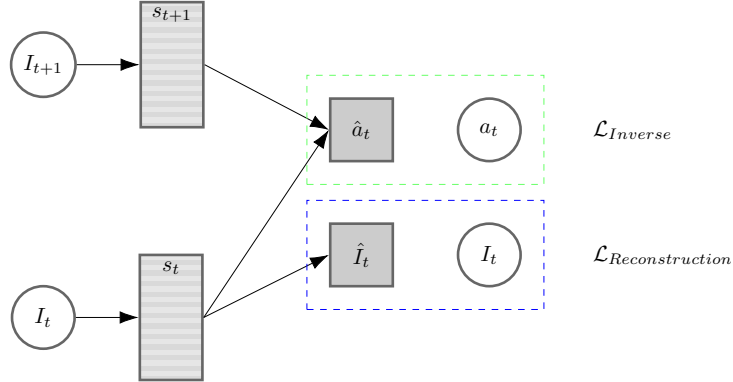

Once the SRL model is trained, we use its encoder $E_i$ to provide features as input of a policy $\pi_i$\footnote{Architecture available at: \url{https://github.com/araffin/srl-zoo/blob/438a05ab625a2c5ada573b47f73469d92de82132/models/models.py\#L179-L214}} trained using RL. We also experimented to learn the policy directly in the raw pixel space but, as shown in \cite{raffin2019decoupling}, it was less sample efficient. 

Once $\pi_i$ is learned, we use it to generate sequences of on-policy observations with associated actions, which will eventually be used for distillation (Fig. \ref{fig:5_DiscoRL:overview}, right). We call this the distillation dataset $D_{\pi_i}$. We generate $D_{\pi_i}$ the following way: we randomly sample a starting position and then let the agent generate a trajectory. At each step we save both the observation and associated action probabilities. We collect the shortest sequences maximizing the reward for an episode.
We also experiment to generate $D_{\pi_i}$ with a regular sampling and a random policy but annotated with $\pi_i$ to compare results, as detailed in Section \ref{sec:5_DiscoRL:sampling-strategies-section}.

From each task we only keep dataset $D_{\pi_i}$. As soon as we change task, $D_{R,i}$ and $Env_i$ are not available anymore. $D_{\pi_i}$ is split into a training set and a validation set.

\subsection{Learning continually}
\label{subsec:5_DiscoRL:continual_learning}

\checked{Learning policies independently ensures that learning a new policy will not degrade the previous policies. Nevertheless, it does not prevent forward transfer between tasks, since models are initialized with weights from the previous task.}

In order to learn continually, we adapt policy distillation \cite{Rusu16distillation} to a continual learning setting. The distillation consists of training a student policy to imitate a teacher policy. In our case, a student model
learns from a teacher policy the action probabilities associated to each observation. 
Each dataset $D_{\pi_i}$ allows to distill the policy $\pi_i$  (the \textbf{teacher} model) into a new network $\pi_{d:i}$ (the \textbf{student} model). 
In classic distillation, both data and models are saved 
, however saving just soft-labeled data is a lighter solution adapted to a continual setting. 

 With the aggregation of several distillation datasets $D_{\pi_i}$, we can distill several policies into the same network that can achieve all tasks (Fig.\ref{fig:5_DiscoRL:overview}, bottom right). By extension of the previous nomenclature, we denote $\pi_{d:1,..,n}$ a model where  policies $\pi_1$ ... $\pi_n$ have been distilled in. 
 When distilling all policies into the student, we select our best models with early stopping using the validation set of $D_{\pi_i}$, and test later in simulation and in real life settings.

 Since we assume that observations visually allow to recognize the current task, $\pi_{d:1,..,n}$ is able to choose the right action for the current task without a task indicator.

\medskip

The method, termed \textit{DisCoRL} for \textit{Distillation for Continual Reinforcement learning}, allows to learn continually several policies while minimizing forgetting. Regarding scalability, saving data from all past experiments may not look ideal if there is a high number of tasks. However, this solution is highly effective for remembering
and letting the reinforcement learning algorithm be absolutely free to learn a new policy without regularization. It is worth mentioning that RL is the real bottleneck in the whole process: Dataset $D_{\pi_i}$ contains approximately 10k samples per task, which allows to perform the distillation quickly, relative to how long and computationally expensive RL is (few minutes needed to learn $\pi_{d:i}$ while several hours are needed to learn $\pi_i$). Thus, in this context, it is better not to curb RL with regularization. Indeed, as will be explained in Section \ref{subsec:5_DiscoRL:neg_res}, we tried several regularization based approaches that were not successful.

\bigskip

\checked{If we describe the experiments as proposed in the framework presented in Chapter \ref{chap:2_CL}, then we are in a reinforcement learning setting, multi-task (MT) scenario. We have 3 disjoint tasks with non-iid data and we have an integer oracle task label for training but not for testing (learning labels).
Our setting fall in the NIC (New Instances and New Concepts) content update type for each task. Our approach can be classified into the rehearsal family of approaches where memory is saved as data points.
The growth of memory is linear per number of tasks and the growth of computation is less than linear growth. Indeed, the computation is mostly spent on learning the tasks using RL and not much on the continual learning model.}

\section{Experiments}
\label{sec:5_DiscoRL:experiments}

We apply our approach to learn continually three 2D navigation tasks applicable in real life. The software related to our experimental setting is available online\footnote{\url{https://github.com/kalifou/robotics-rl-srl}}.

\subsection{Robotic setup}

The experiments consists of 2D navigation tasks using a 3 wheel omni-directional robot similar to the 2D mobile navigation in \cite{Raffin18}. The input image is a top-down view of the floor and the robot is identified by a black QR code. The room where the real-life robotic experiments are performed is lighted by surroundings windows and artificial illumination and is subject to illumination changes depending on the weather and time of the day. 
The robot uses 4 high level discrete actions (move left/right, move up/down in a Cartesian plane relative to the robot) rather than motor commands. 

We simulate the experiment to increase sampling and learning speed. The simulation is performed by artificially moving the robot picture inside the background image according to the chosen actions.
We use domain randomization \cite{tobin2017domain} to improve the stability and facilitate transfer to the real world: during RL training, at each time-step, the color of the background is randomly changed.

\subsection{Continual learning setup}
\label{sub:5_DiscoRL:tasks_presentation}

Our continual learning scenario is composed of three similar environments, where the robot is rewarded according to the associated task (Fig. \ref{fig:5_DiscoRL:real-life-tasks}). 
In all environments, the robot is free to navigate for up to 250 steps, performing only discrete actions  
within the boundaries identified by a red line. Each task is associated to a visual target, which color depends on the task. This way, the controller can automatically infer which policy it needs to run and thus, does not need task labels at test time.

\begin{itemize}

\item \textbf{Task 1.} The task of environment 1 is named Target Reaching (TR). The robot gets at each time-step $t$ a positive reward $+1$ for reaching the target (red square), 
a negative reward $-1$ for bumping into the boundaries, and no reward otherwise.

\item \textbf{Task 2.} The task of environment 2 is named Target Circling (TC). The robot gets at each time-step $t$ a reward $R_t$ defined in Eq. \ref{eq:5_DiscoRL:reward-circular-task} (where $z_t$ is the 2D coordinate position with respect to the center of the circle) designed for agents to learn the task of circling around a central blue tag. This reward is highest when the agent is both on the circle (red (first) square in Eq. \ref{eq:5_DiscoRL:reward-circular-task}), and has been moving for the previous $k$ steps (blue, second square).
An additional penalty term of $-1$ is added to the reward function in case of bump with the boundaries (last, green square). A coefficient $\lambda=10$ is introduced to balance the behaviour.

\tcbset{reward-moving/.style={no shadow,colframe=blue, boxrule=1pt,frame style={opacity=0.5}}} 
\tcbset{reward-circular/.style={no shadow,colframe=red, boxrule=1pt,frame style={opacity=0.5}}} 
\tcbset{bump/.style={no shadow,colframe=green, boxrule=1pt,frame style={opacity=0.5}}} 
 
\begin{equation}
R_t =
\lambda *
\tcbhighmath[reward-circular]{(1 - \lambda (\|z_t\| -  r_{circle}) ^2)}
*  
\tcbhighmath[reward-moving]{\|z_t -z_{t-k}  \|_{2}^2}
+ 
\lambda ^ 2 *
\tcbhighmath[bump]{R_{t, bump}}
\label{eq:5_DiscoRL:reward-circular-task}
\end{equation}

\item \textbf{Task 3.} The task of environment 3 is named Target Escaping (TE). Robot A is being chased down by another robot B with an orange tag. Robot B is hard-coded to follow robot A, and robot A has to learn to escape using RL. Robot A gets at each time-step $t$ a reward of $+1$ if it's far enough from robot B, otherwise, if it is in the range of robot B, it gets a reward of $-1$. Additionally, robot A gets a negative reward $-1$ for bumping into the boundaries.

\end{itemize}

All RL tasks are learned with PPO algorithm \cite{schulman2017proximal} %
 \checked{from \textit{stable baselines} \cite{Hill2018Stable}} and the same state representation learning (SRL) model, as described in Section \ref{subsec:5_DiscoRL:oneTask}. We select the model architecture as in \cite{Raffin18} for RL and SRL.
 The input observations of all models are RGB images of size $224 * 224 * 3$.

\subsection{Dataset generation}

\checked{%
For each task, a dataset of (image, annotation) pairs is created after the teacher has been trained. The images are the agent observations starting from a random point. The trajectory are sampled using the different strategies described in the next section. The annotations are the soft-labels predicted by the teacher.}

\checked{Each tasks may use slightly different sampling methods.}
While generating on-policy datasets $D_{\pi 1}$ (see Section \ref{subsec:5_DiscoRL:oneTask})
for task 1 (TR), we allow the robot to perform a limited number of contacts with the target to reach ($N_{contacts}=10$) in order to mainly preserve the frames associated with the correct reaching behaviour. There are no such additional constraints when recording for task 2 (TC) or 3 (TE), the limit is the standard episode length, i.e. 250 time-steps. 

\checked{Different tasks will lead to different coverage of the space while sampling, therefore we adapted slightly the sampling for task 1 to maximize the samples space coverage, by modifying the sampling end criterion.}%

\section{Results}
\label{sec:5_DiscoRL:Results}

\checked{In this section, we present how we select the best strategy for sampling and distilling policy.}
Then, we use these choices to present our main result: the distillation of three tasks continually into a single policy that can achieve the three tasks both in simulation and real-life. We provide a supplementary video of this policy deployed in real-life on the robot showing the successful behaviours at \url{https://youtu.be/mzUigGWEfbU}. We also present the different strategies we tried but that did not work in our setting.

\subsection{Sampling and Distilling Methods}
\label{sec:5_DiscoRL:sampling-strategies-section}

\checked{This section present the different distillation and sampling methods experimented and the results.}

\subsubsection{Distillation strategies:} Distillation loss minimizes the difference between the student model's output and the teacher model's output for the same input. As in the policy distillation paper \cite{Rusu16distillation}, we investigate variations of the loss function: 

\begin{itemize}
\item Mean Squared Error loss:

 \begin{equation}
 \label{eq:5_DiscoRL:5_DiscoRL:mse}
     \mathcal{L}_{MSE}(x,y) = \mathbb{E} \left[||x - y||_{2}^{2} \right]
 \end{equation}

\item Kullback-Lieber divergence, and Kullback-Lieber divergence with temperature smoothing:

 \begin{equation}
 \label{eq:5_DiscoRL:kl}
     \mathcal{L}_{KL,\tau}(p|q) = \mathbb{E} 
     \left[ \text{softmax} 
     \left(
      \frac{p}{\tau} 
      \right) ln 
      \left( \frac{\text{softmax}
      (\frac{p}{\tau})}{\text{softmax}(q)}
       \right)
        \right]
 \end{equation}

\end{itemize}
 
\begin{table}
\centering
\begin{tabular}{l|c}
\textbf{Distillation loss} & \textbf{Student performance ($\pm$ std)} \\ \hline
MSE & 0.71 ($\pm$ 0.22) \\ \hline
KL ($\tau = 1$) & 0.76 ($\pm$ 0.14) \\ \hline
KL ($\tau = 0.1$) & 0.68 ($\pm$ 0.18) \\ \hline
KL ($\tau = 0.01$) & \textbf{0.77 ($\pm$ 0.13)}
\end{tabular}
\caption[Mean normalized performance of a student policies]{Mean normalized performance\footnote{Normalization is done by reducing the episode reward between 0 and the max. possible performance to a reward in [0,1].} of a student policies trained with distillation using 4 different loss functions. The student policy is trained to perform all three tasks. Kullback-Lieber divergence with $\tau=0.01$ performs best.}
\label{tab:5_DiscoRL:distillation-losses}
\end{table}

We run a performance comparison of the different losses by computing the mean normalized performance of a student policy trained to perform all three tasks (Tab.~\ref{tab:5_DiscoRL:distillation-losses}). Using the Kullback-Lieber divergence loss function with temperature smoothing with $\tau = 0.01$ is best, and optimizing the temperature parameter yields a small performance boost. This result is coherent with \cite{Rusu16distillation} where they reach the same conclusion.

\subsubsection{Data sampling strategies:}

We evaluate the effect of three different sampling strategies to create $D_{\pi_i}$ for policy distillation. Data sampling is a key component as the sampled dataset should be as small as possible but contain sufficient information for student model training. 
The strategies involved for data generation are:

\begin{itemize}

\begin{figure}
    \centering
    \begin{subfigure}[b]{0.3\textwidth}
    \centering
        \includegraphics[width=\textwidth]{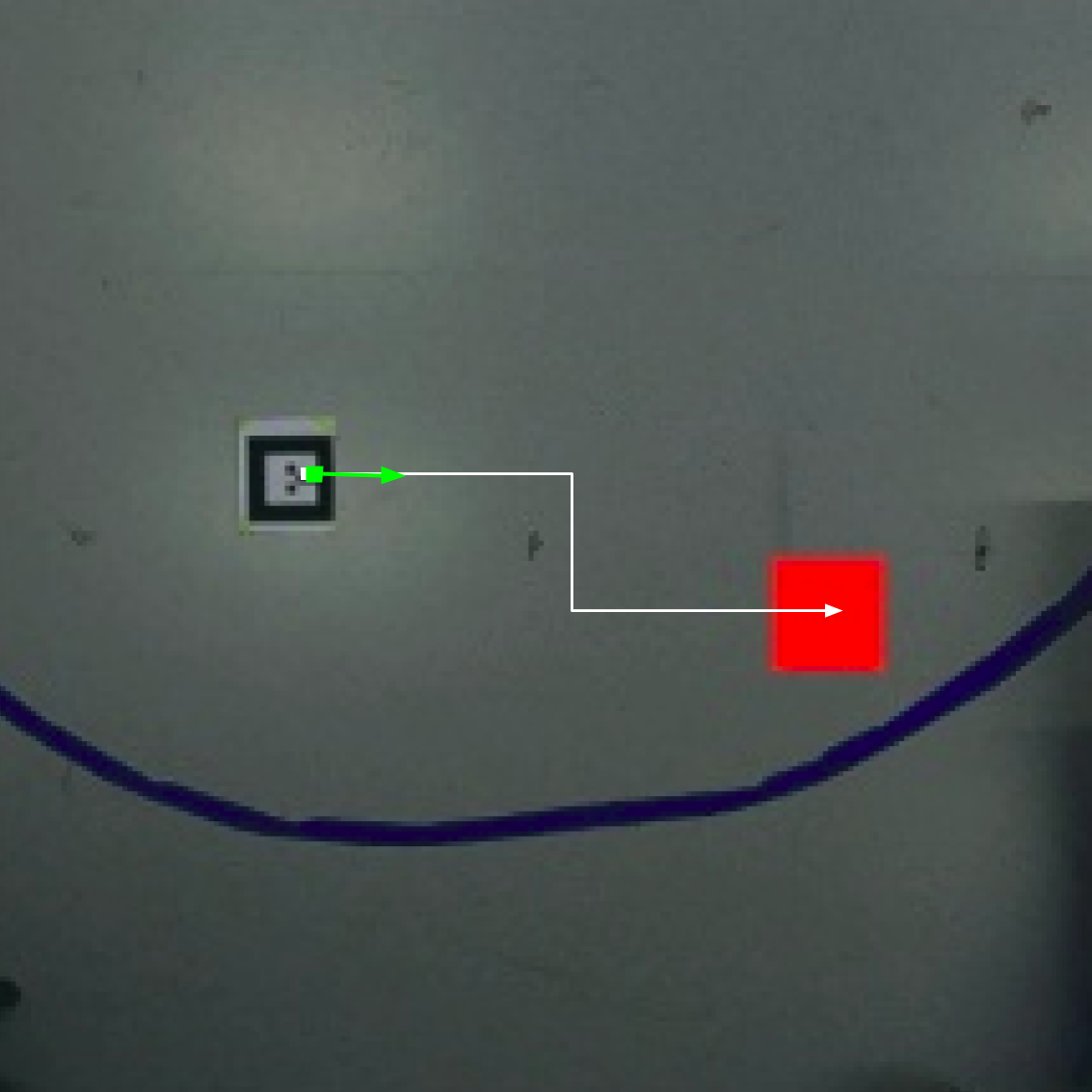}
    \end{subfigure}
\begin{subfigure}[b]{0.3\textwidth}
    \centering
        \includegraphics[width=\textwidth]{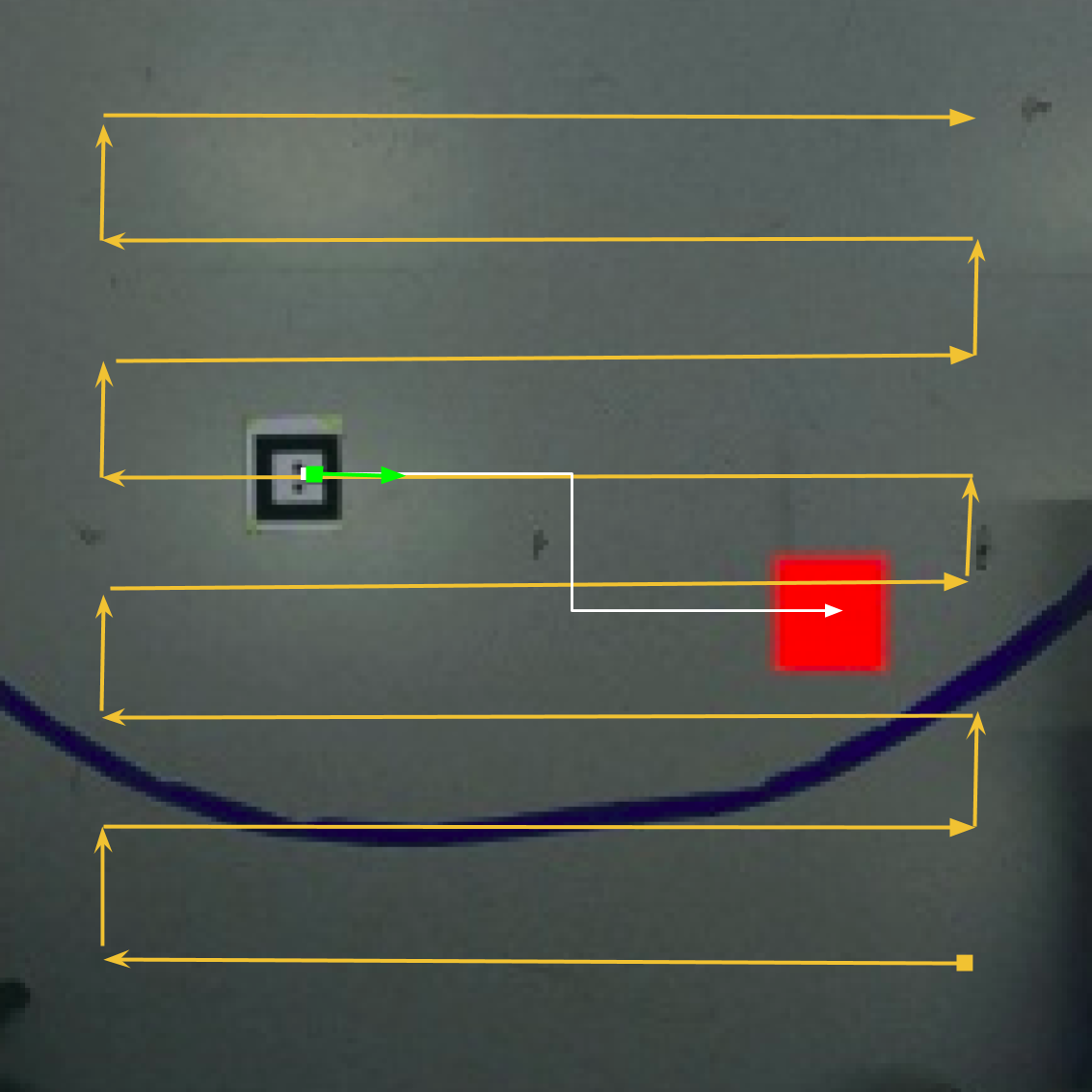}

    \end{subfigure}    
    
    \caption[Representation of data sampling strategies to distill the teacher policy.]{Representation of data sampling strategies to distill the teacher policy. Left, on policy sampling. Right, grid sampling.}
    \label{fig:5_DiscoRL:data-generation-strategies}
\end{figure}

\item  \textit{On-policy generation (Fig.\ref{fig:5_DiscoRL:data-generation-strategies}, left)}: We start an episode from a random point, then at each timestep $t$, we collect an observation $o_t$ and perform the action $a_{\pi_i, t}$ of the teacher policy. 
$D_{\pi_i}$ is thus composed of tuples ($o_t, p(a_{\pi_i, t} \mid o_t)$), with $p(a_{\pi_i, t} \mid o_t)$ the action probability associated to the action $a_{\pi_i, t}$ taken by the teacher, i.e., a \textit{soft label}, since we use the Kullback-Lieber divergence loss.

\item  \textit{Off-policy generation from a grid walker (Fig.\ref{fig:5_DiscoRL:data-generation-strategies}, right)}: at each time-step $t$, we collect an observation $o_t$ by performing an action $a_{grid, t}$ of a \textit{grid walker} exhaustively exploring the space of the arena. However, for each $o_t$ we save the probability of action $p(a_{\pi_i, t} \mid o_t)$ that would have been taken by a teacher policy. 
The goal of this strategy is to provide a more exhaustive sampling of the space of robot positions.

\item  \textit{Random Walker}: \checked{We start an episode from a random point, then at each time-step $t$, we realize random action for 200 time-steps. This method is proposed as a baseline.}

\end{itemize}

Performance of policies distilled using such strategies (see Fig. \ref{fig:5_DiscoRL:comparing-on-policy-strategies}) show that \textit{on-policy generation} (i.e., demonstrations) suffice to reproduce performance close to those of teacher policies on every task individually, with reasonable stability. In particular cases, see Fig. \ref{fig:5_DiscoRL:comparing-on-policy-strategies} for task TC, this strategy even provides a small boost in performances in the student policy over the teacher policy.

\medskip

However, using \textit{off-policy data generation from a grid walker} for distillation results in either unstable or poorly performing policies, especially in tasks defined by a reward function requiring the agent to move actively (\textit{TC} task, blue part of eq. \ref{eq:5_DiscoRL:reward-circular-task}) or anticipate the behaviour of another agent (TE task). 
In this case, the resulting policy reaches the performances of a lower-bound baseline obtained by distilling from trajectories of an untrained policy (see \textit{Student on off-policy data with a random walker} in fig. \ref{fig:5_DiscoRL:comparing-on-policy-strategies}), i.e. from a policy with random weights with input in the raw pixels' space.

\begin{figure}[ht]
    \centering
    \includegraphics[scale=0.35]{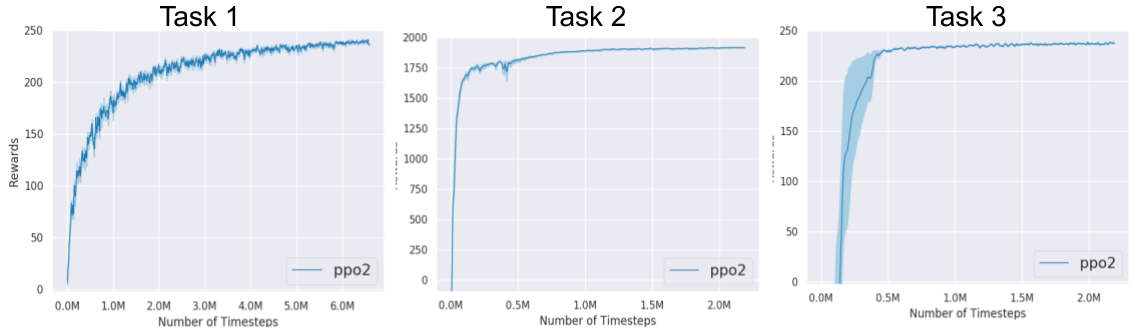}%
    \caption[Accumulated rewards on each task.]{Mean and standard error of rewards during RL learning of each task separately. Each task is learned using the same type of SRL model (SRL Combination), trained on each environment. All three tasks are mastered within roughly 2M time-steps.}
    \label{fig:5_DiscoRL:teacher_policies}
\end{figure}

\begin{figure}[ht]
    \centering
    \includegraphics[width=0.34\linewidth]{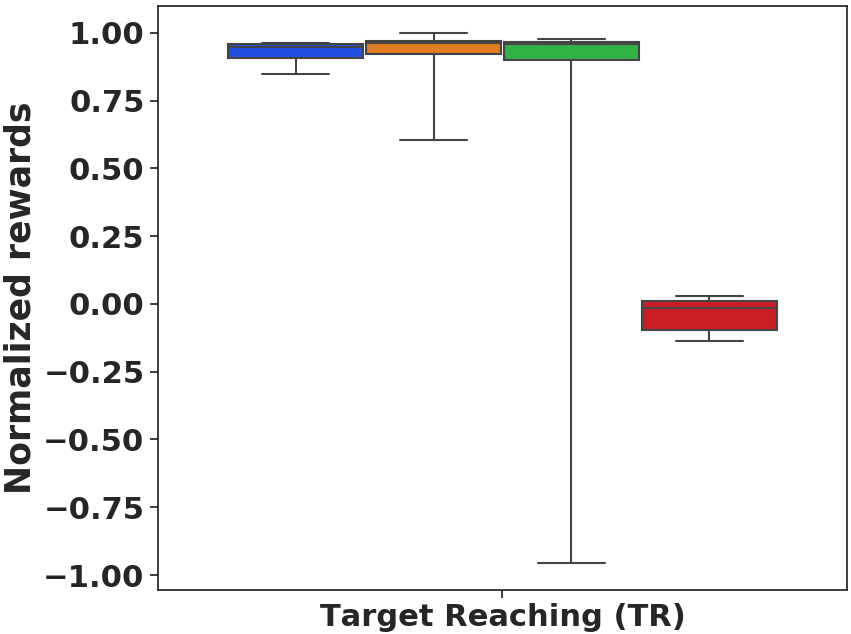}
    \includegraphics[width=0.32\linewidth]{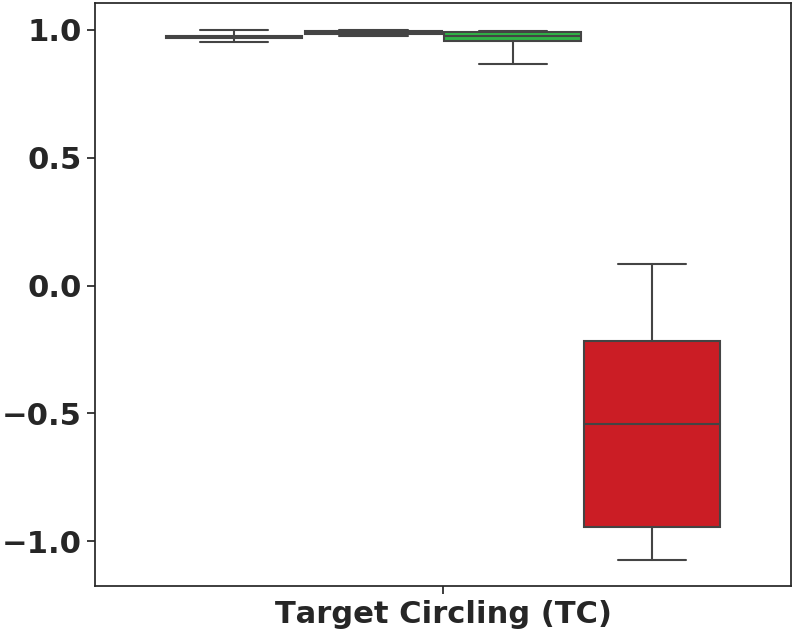}
    \includegraphics[width=0.32\linewidth]{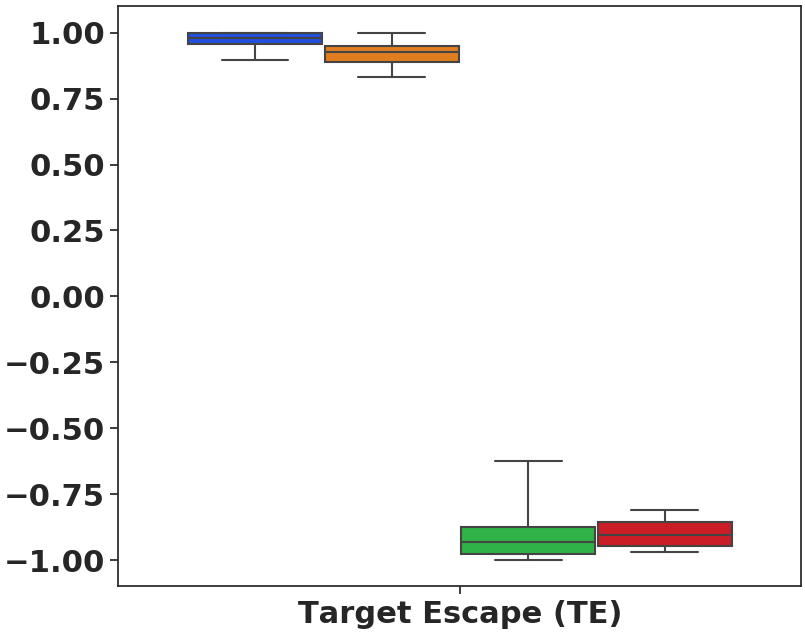}
    \includegraphics[width=.99\linewidth]{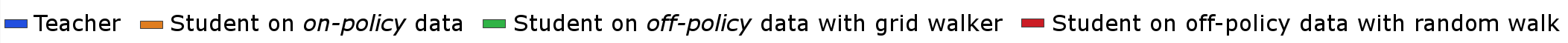}
    \caption[Performance comparison with different sampling strategies]{Efficiency (normalized rewards w.r.t the best teacher performance) of policies distilled on 8 seeds using various data sampling strategies for each task separately. Each evaluated policy is distilled on 15k tuples of sampled observations and action probabilities, for 4 epochs (see criteria of stopping in Section \ref{subsec:5_DiscoRL:continual_learning} and Figure \ref{fig:5_DiscoRL:teacher_policies}).} 
    \label{fig:5_DiscoRL:comparing-on-policy-strategies}
\end{figure}

\begin{figure}[ht]
    \centering
    \includegraphics[scale=0.25]{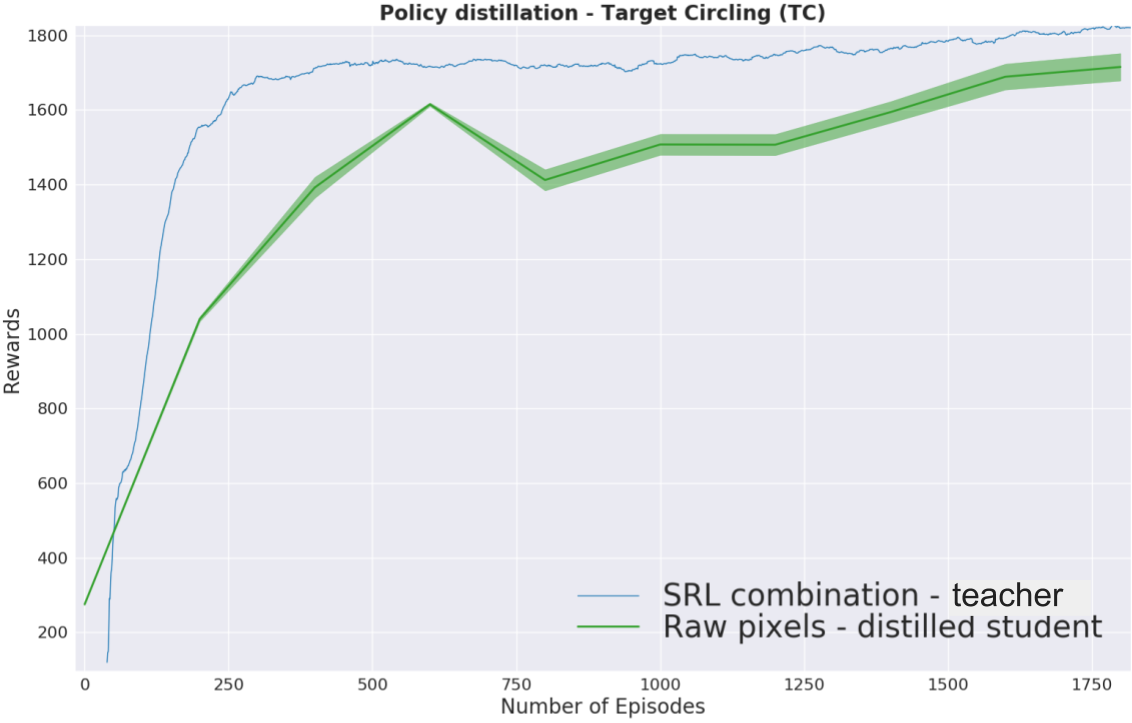}
    \caption[Demonstration of the effectiveness of distillation.]{Demonstration of the effectiveness of distillation. Blue: RL training curve of an SRL based Policy (SRL Combination) on the target circling (TC) task. Green: Mean and standard deviation performance on 8 seeds of distilled student policy. 
    The teacher policy in blue is distilled into a student policy every 200 episodes  (1 episode = 250 time-steps). }
    \label{fig:5_DiscoRL:distillation_cc}
\end{figure}

We performed a more explicit evaluation of distillation in the task 2 (Target Circling (TC)). While we train a policy using RL, we save the policy every 200 episodes (50K ), and distill it into a new student policy which we test. This is illustrated in Fig. \ref{fig:5_DiscoRL:distillation_cc}. Both curves are very close, which indicates that policy distillation enables to reproduce the skills of a teacher policy regardless of the teacher's state of convergence on the evaluated task. Moreover, distillation is able to transfer knowledge from teacher policy into a student using a small number of observations, i.e only 15k samples (w.r.t. the volume of samples required to learn the teacher policy, see Fig. \ref{fig:5_DiscoRL:teacher_policies}).

\subsection{Evaluation of each task separately}
\label{sub:5_DiscoRL:distill-sep}

\checked{We perform two evaluations on our final models.}
Our first evaluation is the performance of the final policy on the simulated environment.
This evaluation can then be compared with the performance of each teacher policy.
For the second evaluation we test if the policy is robust to the reality gap and can be adapted into a real life scenario. The simulation is voluntary close the real life setting but the reality gap is notoriously problematic.

\medskip

Before moving to a continual setup, we tested if it is possible to solve each task separately using distillation. Therefore, we evaluate distillation process for each task separately.
The challenges, is not only the ability to distill knowledge, but also the ability to know when a policy has been distilled, i.e. properly learned by the student. 
Indeed, due to the hypothesis of real continual learning settings, access to previous environments is not possible. Then, the challenges is to find a proxy task that can help indicate when early stopping of the policy distillation can be applied. 
That proxy task or signal should be different from the reward achieved in previous environments, which is no longer available. In our experiments, we found empirically that all policy could be distilled and  that limiting training to a small number of epochs, i.e $N=4$, guarantee policy learning.

\subsection{Main result}

In this section, we present our final results. We used \textit{on policy} data generation and training using KL-divergence loss with $\tau = 0.01$, as described in Section \ref{sec:5_DiscoRL:sampling-strategies-section}. 
In figure \ref{fig:5_DiscoRL:final_perf}, we show box plots over 10 episodes of reward performances for teacher policies in each task, and for the distillation of the same three teachers into a single student using DisCoRL. Each policy is evaluated in simulation and also in real-life on the robot. As a reference, we also show the performance of a random agent in each task. Our approach is effective in a continual reinforcement learning setting: the performance of teachers and student are similar.

More precisely, there are two main challenges to overcome in our setting: learning a behaviour via distillation by using only a limited number of examples, and the reality gap which can notoriously~\cite{tobin2017domain} introduce variations that may lead the policy to fail. Fig. \ref{fig:5_DiscoRL:final_perf} demonstrates the efficiency of our approach at overcoming both of these issues: only a small fraction of performance is lost from teacher to student, and from simulation to reality. We can see that the single student distilled policy achieves close to maximum rewards in all tasks, in real-life.

\begin{figure}[h]
    \centering
    \includegraphics[width=0.32\linewidth]{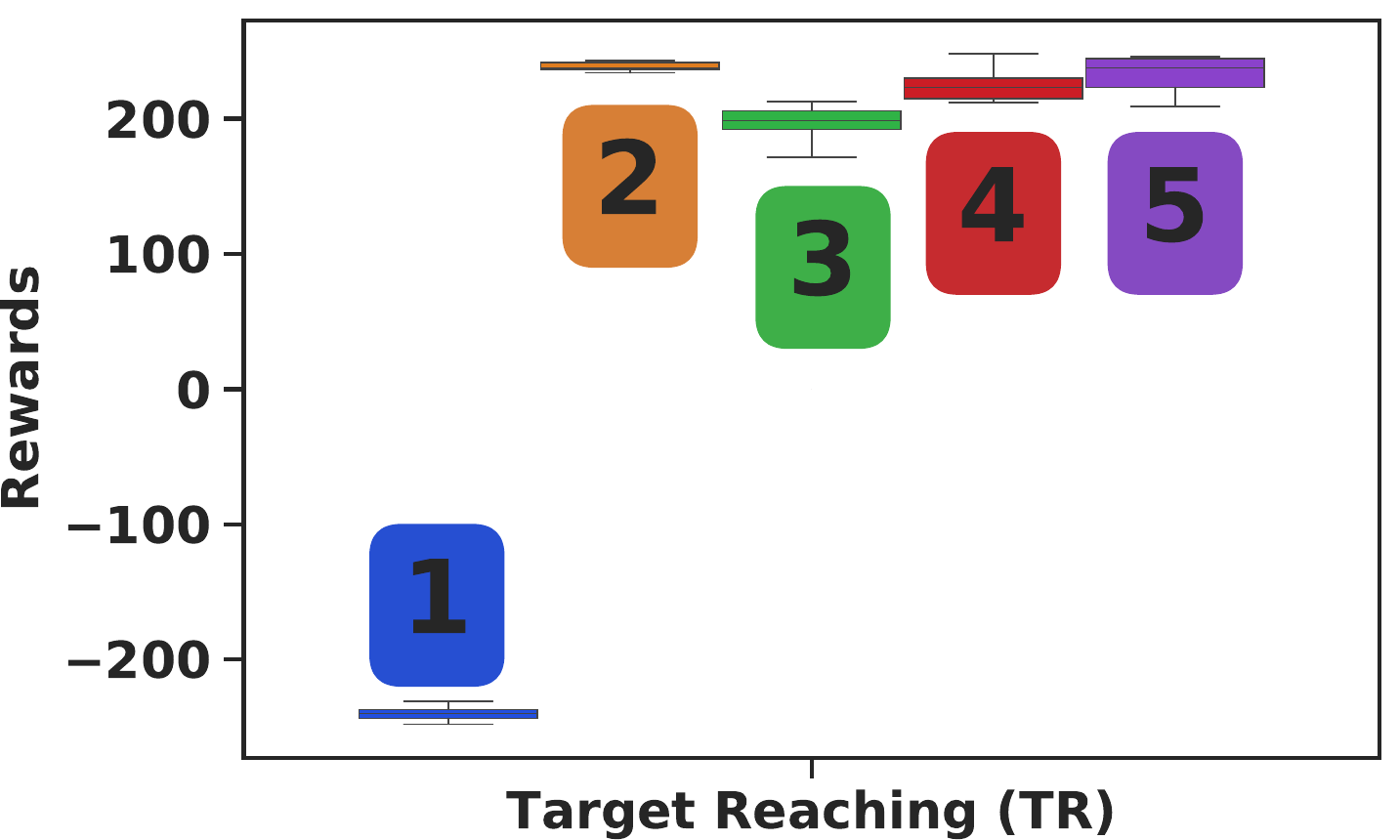}
    \includegraphics[width=0.32\linewidth]{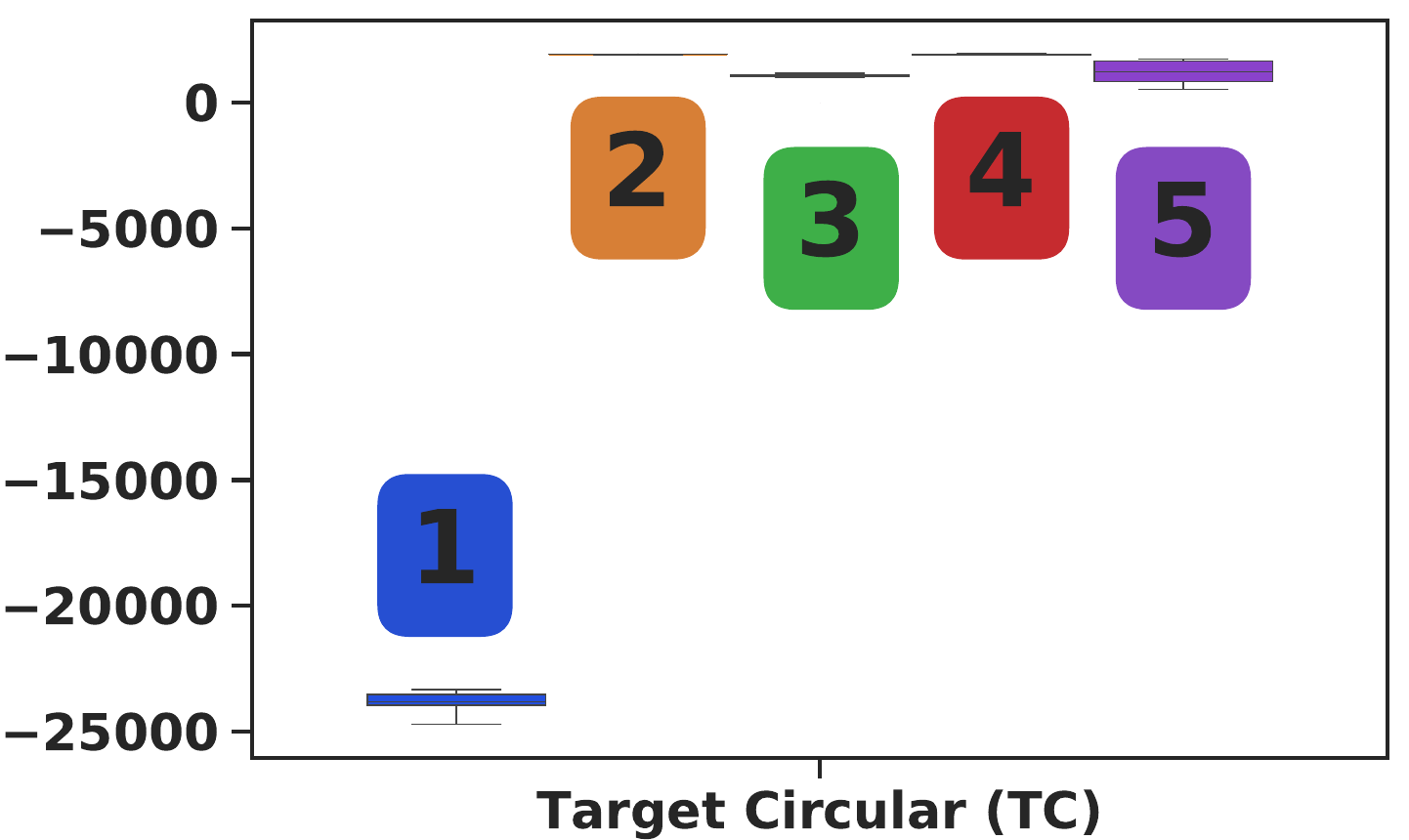}
    \includegraphics[width=0.31\linewidth]{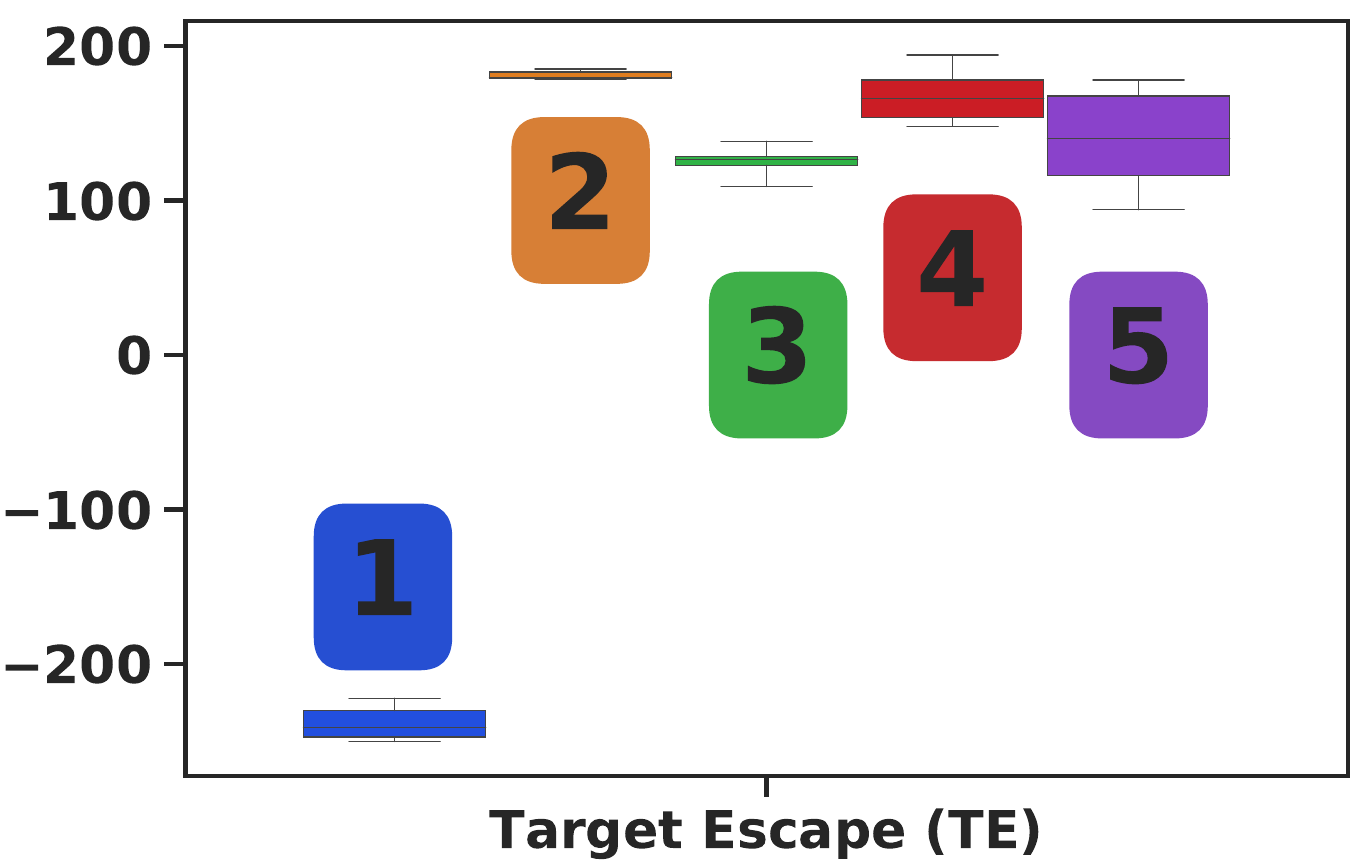}
    \includegraphics[width=.99\linewidth]{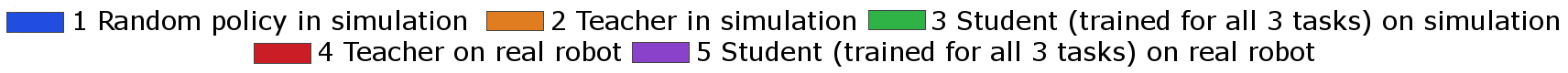}
    \caption[Distillation of teacher policies into a student network.]{Main result: distillation in a continual learning setting of three teacher policies into a single student policy. The resulting policy is able to perform all three tasks both in simulation and in the real world, while minimizing forgetting.} 
    \label{fig:5_DiscoRL:final_perf}
\end{figure}

\subsection{Negative results}
\label{subsec:5_DiscoRL:neg_res}

While distillation is effective for policy transfer, we also tested other alternatives worth mentioning. 

\subsubsection{Elastic Weight Consolidation (EWC)} 
EWC \cite{kirkpatrick2017overcoming} was implemented as a continual learning baseline to compare with the distillation method. EWC has the appealing advantage of not re-using any data from previous tasks. However, in all cases we found the method unsuccessful. 

Tuning the $\lambda$ parameter that controls the trade-off between weight protection and learning the new task showed that either $\lambda$ is too low and catastrophic forgetting happens, or $\lambda$ is too high and nothing new is learned (i.e., the full network is frozen).
A $\lambda$ value providing a proper balance in between both effects could not be found for such sequential tasks to be learned. 

\subsubsection{Progress and Compress (P\&C)} 
P\&C \cite{schwarz2018progress} was tested but as EWC, we had problems with the importance factor $\lambda$ and we where not able to learn three policies into a single model with this method.

\subsubsection{Adding task labels for distillation} Even if all tasks contain a visually differentiating identifier, they remain visually similar. In cases, we found that a distilled policy trained to perform well on several tasks can mix up tasks and thus not perform adequately. Hence, either adding tasks labels directly, or adding a module in the network that predicts the task label could be a way to improve the efficiency of distillation. However, none of the approaches were successful in practice, yielding the same results with or without task labels. 

\subsubsection{Gumbel-Softmax action sampling for the student}  This trick \cite{jang2016categorical} allows to sample from a categorical distribution using a softmax output layer. It has proven to be useful for action sampling in policy learning \cite{schulman2017proximal}. However, in our case we saw no improvement over a simple argmax strategy for action sampling when we used it on the student policy at test time.

\section{Discussion and Future Work}
\label{sec:5_DiscoRL:discussion}

Even if we believe this work proposes a stable and scalable framework for continual reinforcement learning, several possibilities for improvement exists. %
For example, we could have not only a policy learned in a continual way, but also the SRL model associated. We would need to update the SRL model as new tasks are presented sequentially. One possible approach would be to use Continual SRL methods like S-TRIGGER \cite{caselles2018continual} or VASE \cite{achille2018life}. Moreover, we would like to optimize more the memory needed to save samples by reducing their number and their size.

\medskip

Moreover, training policies on real robot experiences without the use of simulation would be desirable. However, at the moment, this is more a RL challenge than a CL challenge. One promising approach would be to use model-based RL while learning the state representation learning (SRL) model to improve sample efficiency. Though, nowadays approaches still do not offer solutions working in a reasonable amount of time.

\section{Conclusion}

In this chapter, we presented DisCoRL, an approach for continual reinforcement learning. 
\checked{DiscoRL is a simple yet efficient method for continual multi-tasks reinforcement learning. It performs independent learning of different policies without disabling forward learning transfer with a fast merging policy methodology. Moreover, at test time DiscoRL can run all the policies with a single model without the supervision of a task label.}

The method consists of summarizing sequentially learned policies into a dataset to distill them into a student model.  It allows to learn sequential tasks in a stable pipeline without forgetting. Some loss in performance may occur while transferring knowledge from teacher to student, or while transferring a policy from simulation to real life.
Nevertheless, our experiments show promising results in simulated environments and real life settings.

\newpage
\chapter{Discussion}
\label{chap:6_disc}

In the previous chapters, we presented a global overview of continual learning, the strength of replay methods and the analysis of generative replay methods. We focused on the study of generative models in a CL setting and the application of generative replay to classification problems. We also experimented with the rehearsal method for continual multi-task reinforcement learning.

In this chapter, we would first like to discuss continual learning research in a more global picture. Hence, we will discuss popular objectives in continual learning. We also try to disentangle potential use cases and present application scenarios for continual learning. 
Secondly, we discuss the work done in this thesis and the choices made. Then, we introduce a few continual learning pitfalls that should be avoided to perform healthy continual improvements.

\section{Rethinking Continual Learning Objectives}
\label{sec:6_Disc:Objectives}

Continual learning is a vast research domain with very ambitious objectives. The general aim is to learn from non-static data-sources but more particularly to learn from the real world. In this section, we first present the popular objectives of continual learning. Then we introduce several use cases we believe are the main long term objectives of continual learning. 

\subsection{Popular objectives of continual learning}
\label{sub:6_Disc:Midway}

Several continual learning papers \cite{deLange2019continual, Farquhar18, LESORT2019Continual, aljundi2019continual} present general desiderata for continual learning. Those desiderata are presented as characteristics that continual learning algorithms should have. The most common ones are:

\begin{itemize}

\item \textbf{Maximize Final performance:} Algorithm should have the best performance possible at test time.

\item \textbf{Learning without forgetting:} The ability to learn sequentially and incrementally knowledge or/and skills.

\item \textbf{Graceful forgetting:} Remembering the essential only.

\item \textbf{Detecting concept drift:} The ability to detect when the data distribution changes to avoid forgetting.

\item \textbf{Storage-Free Continual Learning:} Avoiding the storage of raw data.

\item \textbf{Efficient learning / Few shot learning :} Grasping new concepts thanks to only few data points.

\item \textbf{Applicable algorithms:} Algorithms able to solve problems with reasonable constraints such as limited memory or power consumption.

\item \textbf{Transfer between tasks:} The ability to improve a specific knowledge/skill by learning another one (forward learning and backward learning).

\item \textbf{Transfer between settings:} An algorithm able to learn in an environment/continuum A should be able to learn in a similar environment/continuum B.

\item \textbf{Execution time:} The ability to learn in a limited time, at best near to on-line.

\item \textbf{Reproducible results:} Being able to reproduce results by an independent party.

\end{itemize}

Many of those characteristics are quite easy to evaluate. The metrics proposed in Section \ref{sub:2_CL:metrics} associated with the right benchmarks make it possible to evaluate most of them or, at least, it makes it possible to compare two algorithms.
 Unfortunately, most of them 
  are not intrinsically interesting for continual learning and they should be put in a more long term perspective to understand what is important to be addressed. 
For example, the desiderata of transfer between tasks is usually not the true objective. %
 The true objective is either to reduce the execution time or improve the final accuracy. The transfer might be an answer to those objectives but the transfer is finally not exactly a goal but just a way to reach another goal. Similarly, graceful forgetting or few-shot learning are interesting, for us, only if they improve other objectives such as final performance or execution time. %

By analyzing the desiderata again we find that the most important one for continual learning are the same as for machine learning:
\begin{enumerate}
\item Maximize final performance (and generalization) 
\item Applicable algorithms (being able to adapt algorithms to specific constraint on power consumption / memory / training time / inference time)
\item Reproducible results
\end{enumerate}
Indeed, the difference between classical machine learning and continual learning is not the objective or criterion to optimize but the hypothesis on the input data (iid vs non-iid).

Nevertheless, the difference in input data might lead to supplementary constraint to fulfill the global desiderata and overcome some needs such as:

\begin{itemize}
\item Minimizing memory
\item Minimizing training time
\item Minimizing computation power
\item Autonomous inference  (label free inference)
\end{itemize}
The tools to answer those desiderata and constraints might be provided by other research domains such as:
\begin{itemize}
\item Few-Shot learning
\item Transfer (Backward  / Forward ; upon tasks / settings)
\item Knowledge distillation (for Graceful forgetting)
\item Learning from sparse labels
\item Detecting concept drift
\item Minimizing storage
\item Sparsity
\item $[~$...$~]$
\end{itemize}

Hence, progress in those research domains might be useful or necessary to leverage continual learning problems but it always depends on the targeted problem. %
Nevertheless, it is safer to keep apart desiderata from side constraints and from other research field to keep track of what we are trying to do.

We can also mention the bio-inspiration as a side objective. The biological agents provide many ideas to solves continual learning problems. However, as the other side objective, it is not what continual learning aims at solving. For example, even if biological agents don't store raw data, it does not mean that continual learning should not do it either. In many cases, saving raw data helps, as shown in Chapter   \ref{chap:5_DiscoRL}.
Creating false prohibition is deleterious for continual learning. 
The aeroplane %
is a working solution illustrating that biological agents do not necessarily have the optimal solution for our needs.

\bigskip

In the same spirit of finding the real goal of continual learning, in the next section, we present what we believe are the potential type of use cases for continual learning.

\subsection{Potential use cases}
\label{sub:6_Disc:Long}

The long term objectives are linked to the potential use cases of continual learning. Depending on them, the side objectives described in the previous section are not expected to be optimized in the same way. The application cases can be numerous and quite different. Therefore, to develop specialized approaches for different scenarios, it is necessary to identify the important differentiating factors.
We present here a possible classification of potential use case scenarios that are more interesting to target specifically than continual learning in general.

\begin{itemize}
\item \textbf{Incremental Learning}
 The objective is to learn new knowledge or/and skills without forgetting. In this case, the goal is not to improve knowledge but only to incrementally grow a set of knowledge.
 This is typically the case of a trained classification model that should learn a new class without forgetting the previous ones with very limited access to data from the past.
This case is the most straightforward to describe, the primary criterion that matters is learning without forgetting. The past learning experiences are not supposed to be accessible again and consequently, everything forgotten is lost forever.
In this case, since the forgetting should be minimized as much as possible, it looks more acceptable to not be too restrictive on the memory needed or the efficiency of learning or computation.

\underline{Example:} \textbf{Adding classes to a classifier}

\textit{A company, let's call it AItowardAGI, needs a trained model for a classification purpose. A solution is to buy a trained classification model from a company, training4U, that has access to more computation and more data. However, AItowardAGI has some personal data and would like to improve the model on this personal data without damaging initial models. Then, the company will need an incremental learning method to learn without forgetting and improve the model with new classes. We hope that AItowardAGI has read Chapter  \ref{chap:2b_Replay} of this thesis and therefore they know that they will not be able to improve the model if they don't ask for some more information about the initial training data from training4U company.}

\item \textbf{Lifelong Learning}
Lifelong learning consists of learning a never ending task, there are few variations in the tasks but the agent should always improve and be able to handle more and more of those little variations.
It could be essential for applications where smart agents need to keep improving and gain efficiency from experiences on a specific task.
In this case for example, forgetting is not unconditional, it is somehow acceptable to forget if globally it allows to improve the performance. However, as the agent learns at the same time it is used, it is expected to learn as fast as possible to adapt to changing situations. 

\underline{Example:} \textbf{Gift wrapping robot} 

\textit{A company wants to build robots that wrap gift papers. They want to send those robots in shops to wrap gifts automatically. However, each year the gifts are different and each robot needs to be updated. Moreover, shops sell objects of different sizes and shapes. Therefore, if all robots had a learning algorithm that could adapt to new sizes and shapes without selling company intervention, it would be very practical. 
  This company could use an algorithm similar to the one used in Chapter  \ref{chap:5_DiscoRL}.}

\item \textbf{Multi-task agent}
Multi-task agent is a mixture of the incremental learner and the lifelong learner. Similar to the lifelong learner, it is in a never ending situation however it has several different tasks to learn and improve in its lifelong learning curricula.
In the multi-task learning, as for lifelong-learning, forgetting can be acceptable in order to improve the global performance, however improving on one task should not deteriorate too much another one to keep progress positive.

\underline{Example:} \textbf{Periodic improvement}

\textit{The training4U company sells pre-trained classification models. So at any time, they should be able to sell a final trained model, however, they regularly receive new data about new or existing classes that could help to improve their models. Training again from scratch is too expensive and they will need a continual process to improve the models on new data without damaging current knowledge. 
Then, they need multi-task learning methods to keep improving existing classes and still be able to add new ones.
We hope they also read the Chapter  \ref{chap:2b_Replay} of this thesis, and therefore they will have developed a correct remembering process for their models.}

\item \textbf{Autonomous agent}

The autonomous agent is a multi-task agent which can additionally set its own objectives and has the capacity to explore and understand without clear tasks. This agent possess curiosity \cite{Oudeyer07, Schmidhuber10}.
  The curiosity is a self-motivated objective leading an agent to explore its environment and the possible interactions.
It helps the agent to improve on past and new tasks and enable it to eventually anticipate future tasks.  
  Moreover, the autonomous agent should be able to create its own innovative solution to problems with limited supervision, as in Open Ended Learning \cite{Doncieux18}. %
This kind of agent should make the best use of the resource that is available to it. It also has to be curious, creative and have some kind of survival reflex to evolve without auto-destruction.

\underline{Example:} \textbf{Robots on a Foreign Planet}

\textit{A space exploration program would like to send a robot to a very far exoplanet for discovery. The problem is that any communication can take weeks or years to be sent, so communication is very limited and the robot needs to be autonomous to survive in this unknown environment. Moreover, we would like  the robot to autonomously explore and adapt to the environment to complete future tasks faster. For example, the robot could automatically learn to recognize the type of grounds where it can pass to be able to move better later. The robot then needs a continual learning algorithm to complete its mission, explore and survive. }

\end{itemize}

For each case, we described what we believe are the most emblematic objective to optimize in the potential use cases.
However, depending on the use case more objectives may have to be optimized like computation power which could be restricted in certain situations like in embedded platforms. In any case, an approach would be more useful in the global research effort if it targets long-term or/and short-term objectives specifically.

\medskip

To summarize, in continual learning, the essential expected ability is to be able to improve a model based on new data, either by adding new concepts or by improving/strengthening already known concepts. 
A second important notion is adaptability to various types of supervision signals to learn, at best an algorithm should be able to learn at least \say{something} whatever the sparsity of supervision signal or its type (reward or label). 
Therefore, continual learning is somehow the science of learning autonomously and might have links to the auto-machine learning research field~\cite{NIPS2015_5872}.

\section{Discussion on the thesis choices}
\label{sec:6_Disc:Choices}

In this section, we discuss the different choices made in this thesis and the conducted experiments.

\subsection{Replay methods}
In this thesis, we presented the use of replay to deal with catastrophic forgetting. Indeed, replay methods have both the advantage to make models able to remember and learn on past tasks. We study it in particular in the context of incremental classes or tasks. %
We showed that in such settings, regularization and dynamic architecture approaches %
 rely on a task label for inference. Replay methods propose a label-free inference model that can be therefore easily deployed after training.

As presented earlier in the thesis, the assets of replay are:

\begin{itemize}
\item \textbf{A posteriori understanding:} Past learning experiences can be reinterpreted with current knowledge and favor backward transfer.
\item \textbf{Task agnostics memorization:} Some part of the memorization process is not affected by the current task and just aims at representing the learning experience data.
\item \textbf{Test labels agnostic:} The learned model does not need any supervision for inference.
\item \textbf{Model agnostic:} The memorization process is at least partially not affected by the model architecture.
\item \textbf{Manageable memory:} The memory is easy to control, if a memory is considered useless, it can be erased by just deleting it or not replaying it anymore
\item \textbf{Unconditional remembering:} (Rehearsal only) By saving raw data the memorization is theoretically robust to misinterpretation and memory modification.
\item \textbf{Good on-line capabilities:} (Rehearsal only) Saving raw data is quite fast in comparison to learning a memorization model.
\item \textbf{Auto-Memorization with self synthesizing models: }(Generative Replay Only) The generative model automatically learns to synthesize data and compress it in its weights.
\item \textbf{Supplementary understanding of data for memorization: }(Generative Replay Only) The generative models can potentially generalize processed data and share its understanding with the inference model.
\end{itemize}

The continual learning use case  that fits perfectly with the project of this thesis is \say{incremental learning} (Section \ref{sub:6_Disc:Long}), a model that learns different tasks sequentially and should and never forgets.
Studying this setting produce results that can be transferred to lifelong algorithms and multi-task continual learning. We believe, as demonstrated in Chapter  \ref{chap:2b_Replay}, that in all learning situations, two concepts need anyhow to be confronted to be distinguished and replay might be the only method able to achieve it.

\subsection{Static Deployment}

We can define two type of deployments for continual learning:

\begin{itemize}
\item \textbf{Static Deployment:} The model is trained and used frozen. The training can still be continued later.
\item \textbf{Never Ending Learning:} In this case, the training never stops; every new experience can be exploited to learn and be integrated in the model.
\end{itemize}

Moreover, it might be worth distinguishing two types of potential use. First, the \say{milestone} use case, the model is trained and it should be ready at some point to be used. We don't care about how it learns as long as at some point in time the model is ready. Secondly, the \say{always ready} use case, where at any time the model could be used and should be aware of the last data point processed. 

We target models with milestone use case and static deployment. %
Those models can learn in a continual environment and learn concept and skills sequentially until deployment start.

We considered as too ambitious the always ready scenario and
we argue that static inference is probably the use of continual algorithms that could be the safest. %
For the static inference scenario, after training, the model can be assessed and be deployed without fearing for uncontrolled or adversarial modification of the model. A model that would continue to learn after deployment would be harder to assess and the learning processes are nowadays not stable enough to be deployed.

\subsection{Task labels}

The use of training labels can be seen as a limitation of the approaches. 
The use of the task label allows targeting only the learning-without-forgetting problems without addressing concept-drift detection. 
 
We think that task labelling is cheap and may significantly help continual algorithms to learn. In many applications of continual learning, we can assume that at training time, the algorithm has a bit of assistance to learn. 
If no task label is provided, the algorithms have a higher risk to diverge and misunderstand its learning experiences.

\subsection{Data stream distribution}
In the same way we assumed task label available for training, we assume that the learning curriculum is independently and identically distributed by part (except in Chapter  \ref{chap:5_DiscoRL}). This is a clear limitation that should be solved before expecting to tackle real environment settings, however, this setting allows to better understand catastrophic forgetting behaviour and memory processes. Since we know exactly at which moment the model will start forgetting, it is easier to analyse it and address it. The iid by part settings is then very interesting for research purpose.

\subsection{Classification tasks}
\label{sub:6_Disc:Classif}

Ideally, algorithms can learn on-line new concepts from unprocessed data. Therefore, it is unlikely to have a fully annotated dataset correctly preprocessed to maximize learning easiness. 
Benchmarks using sparse labelling would therefore be more appropriate and more in the spirit of continual learning.
However, the use of fully annotated classification benchmarks makes it possible to focus on continual learning problems rather than having to deals with sparse label problems.
Nevertheless, in Chapter  \ref{chap:5_DiscoRL}, we did experiments with reinforcement learning environments that have sparse labels. And we have seen that even if learning tasks from those environments is much harder it does not make the continual learning problems significantly harder.  

\subsection{Evaluations}
\label{sub:6_Disc:Eval}

In this thesis, we evaluate essentially our algorithms with the final performance, we believe that, with the computation cost, it is the most valuable metrics to evaluate CL as discussed in Chapter   \ref{chap:2_CL}. The computation cost is however often difficult to evaluate rigorously and we did not evaluate it %
  to focus on the final performance.

The MNIST, Fashion-MNIST or KMNIST benchmarks we used are simple.  Some experiments have been conducted on Cifar10 and Core50 (not described in the manuscript), however, the results were not stable enough to conclude from the experiments. The main conclusion is that generative models are difficult to train on those datasets in a continual setting. Therefore, it is still too early to expect generative replay to work in continual real-life settings.
However, with the rapid progress of generative models we expect generative replay to became a viable solution in complex environments.
As discussed for the classification task above, the use of simple datasets makes it possible to get rid of learning shortcomings of models and only focus on continual learning shortcomings. If it is already difficult to train a model in a classical setting, studying it in a continual setting is limited by the machine learning shortcomings.

\subsection{Answers to the framework questions}
\label{sub:6_Discussion:Answers}

In Chapter \ref{chap:2_CL}, we compiled a set of questions that continual learning approaches should answer to have a proper explanation of the setting and the method.

Here, we compile the global answers for those questions concerning the thesis results. 

\begin{itemize}

\item $\bm{Q_1}$: \emph{Does some data need to be stored? If yes, how and what for? (e.g. regularization, re-training, validation)?}

Yes, we store data for model selection purposes (validation set) and in the rehearsal approaches for remembering.
\item $\bm{Q_2}$: \emph{Is the algorithm tuned based on the final performance? I.e. is it possible to go back in time to improve performance?}

Normally no, but we admit that some hyper-parameters have been tuned empirically, so we did not respect the temporal coherence perfectly in our results. 
\item $\bm{Q_3}$: \emph{Are data distributions assumed i.i.d. at any point?}

Yes, the data distribution is assumed iid by part (iid during each task). Except in reinforcement learning experiments.
\item $\bm{Q_4}$: \emph{Is each task assumed to be encountered only once? }

Yes, for unsupervised learning, no for the other experiments (even if all tasks are only encountered only once in the reported experiments for clearer evaluation purposes).
\item $\bm{Q_5}$: \emph{Is the continual learning algorithm agnostic with respect to the structure of the training data stream? (e.g. number of classes, numbers of tasks, number of learning objectives...)}

Yes, the number of classes / tasks can be dynamically changed.
\item $\bm{Q_6}$: \emph{Does the approach needs a pretrained model for the CL setting? If so, what is the new knowledge that needs to be acquired while learning continually?}

No pretrained model is used for continual learning experiments.
\item $\bm{Q_7}$: \emph{How much available memory does the algorithm require while learning? Does the memory capacity requirement change as more tasks are learned?}

The model architecture is chosen empirically based on the non-continual performance on classical benchmarks, it stays fixed. Only the output layer can be dynamically changed to add more classes. In rehearsal experiments the memory grow as there are more past tasks.
\item $\bm{Q_8}$: \emph{Is the continual learning algorithm constrained in terms of computational overhead for each learning experience? Does the computational overhead increase over the task sequence? }

There is no particular constraint, the amount of computation increases at least linearly with the number of same size tasks.
\item $\bm{Q_9}$: \emph{Is the continual learning algorithm agnostic with respect to the data type? (e.g. images, video, text,...)} 

The architecture model is designed for images and their dimension is known in advance.
\item $\bm{Q_{10}}$: \emph{Is the continual learning algorithm able to handle situations where there is not enough time to learn?}

Not yet for generative replay. For rehearsal, yes since the algorithms just need the time to save data to remember them. 
\item $\bm{Q_{11}}$: \emph{In the presence of multiple tasks, is the task label available to the algorithm during the training phase? And during evaluation?}

Task label is used for training but not for testing.
\item $\bm{Q_{12}}$: \emph{Are all the data labeled? or only the first training set? Can the user provide sparse label/feedback (e.g. active learning) to correct the system errors?}

In supervised experiments, all data are labeled, the labels are also used for evaluation in unsupervised learning, there are sparse labels for reinforcement learning.
\item $\bm{Q_{13}}$: \emph{What is expected from the algorithm to remember at the end of the full stream? Is it acceptable to forget somehow, when task, context or supervision change?}

Since the number of tasks in the experiments is not very high, the algorithm is expected to remember everything as much as possible.

\end{itemize}

\section{Continual Learning Pitfalls}
\label{sec:6_Disc:Pitfalls}

With regard to the potential use cases from Section \ref{sub:6_Disc:Long}, it is important to not fall into pitfalls that do not help to push forward true objectives. %

\subsection{The bias of the future}

In opposition to classical machine learning, the temporality in continual learning is essential. A CL algorithm should be prepared for its future without knowing it. Therefore hypothesis should be done about the future. In a chaotic world, it would not be possible to make any plan about the future, fortunately, we are not in a chaotic world and we can have reasonable hypothesis of the future to consider what can likely happen or not.

Nevertheless, we should not use the future to improve learning algorithm. Indeed, in research experiments we can virtually control the temporality of learning experiences and potentially use the future. Thus, we should be cautious to not create causal incoherence.

The main problem is that tuning the parameter at time $t=0$ based on results at time $t>0$ is aberrant. Algorithms can not be perfectly designed in a one-shot process, however, if they are designed on one curriculum they should be tested on another one to be valuable.

In the scope of continual learning, the hyper-parameter should be selected only based on the present tasks and past tasks. In a lifelong task, we believe the use of populations of models might be a good solution to deal with hyper-parameters selection. Indeed if we have a population of models with different hyper-parameters we could select the best hyper-parameter in one run.

\subsection{Spread out objectives}

As explain in the Section \ref{sec:6_Disc:Objectives}, continual learning is a wide research domain with ambitious expectations.
However, it is clear that no algorithm can tackle all learning situations. Understandably, algorithms have a limited scope of application, aiming at finding the algorithm that can solve all problems is chimeric. %

One of the pitfalls in continual learning is to not specify what kind of setting is targeted. It is probably not possible to find an approach that can solve catastrophic forgetting no matter the subject.
Then, to better address continual learning approaches and application generally it is appropriate to specify precisely the goals and scope of a research project.
The question presented in Section~\ref{sub:6_Discussion:Answers} and Chapter~\ref{chap:2_CL} should allow to specify the essential aspects of a continual learning research project such as the learning subject, the learning setting, the learning objectives and the learning tools.

\medskip

Therefore, we should target reasonable goals to expect to be able to bring useful solutions and not being lost in a far too big space of exploration.

\subsection{Scalability: a Double-Edged Sword}

In machine learning research, a comment often present in article reviews is \say{Have you tried on harder problems?}. There exist many incentives to try to tackle difficult problems in order to validate a theory or an approach.
However, those incentives are just a rule of thumb following the adage \say{who can do more can do less}. However, in machine learning, the successful transfer from one settings to another might have low correlation with the difficulties of tasks.

Then, asking for proof of scalability is probably more a mandatory obstacle to overcome to have recognition than a legitimate request for evaluation. Asking for more baseline or comparisons however might be a better request to estimate the legitimacy and the appropriateness of any approach. 

In continual learning (and in other machine learning fields), the MNIST dataset is often used, and reviewers often ask for harder settings rather than for stronger baselines. However, it is not clear if continual learning specific difficulties are correlated with machine learning difficulties. Therefore asking for harder datasets is not necessarily a service for the community. Asking for more datasets can however be legitimate to compare  approaches with similar results.

\medskip

In the next section, we gather a set of recommendations that we believe are important to have in mind for research projects in continual learning.

\section{Research recommendations}

For more concrete indications on what we consider worthwhile checking while creating a CL approach, we suggest a set of recommendations.
Those recommendation point out research topics that should be privileged or methodology that should be respected.

\begin{recommendation} 
On-line capabilities: CL algorithms should adapt to new data as soon as they are available without assumptions like the number of tasks or classes.
\end{recommendation}

\begin{recommendation} 
Autonomous inference: CL algorithms should be autonomous at least for inference, therefore they should not assume labeling information at test time.
\end{recommendation}

\begin{recommendation} 
Scalability evaluation: In order to provide a proper evaluation of the scalability and continual learning performance, we recommend, as the authors from \cite{Farquhar18}, to evaluate algorithms on more than two tasks.
\end{recommendation}

\begin{recommendation} %
Resources evaluation: To be practical, CL systems should evaluate resources consumption as for memory or computation.  %
\end{recommendation} 

\begin{recommendation} 
Reporting metrics:  We recommend reporting exactly the targeted objectives of the method and report the associated metrics. %
\end{recommendation}

\begin{recommendation} 
Ablation studies: we recommend reporting ablation studies to motivate as best as possible the different components and choices made in the CL algorithm and identify their importance for the different objectives or tools (learning without forgetting, transfer, few-shot-learning,...). %

\end{recommendation}

\begin{recommendation}
Distributional shifts: We recommend to formally describe the mechanism to handle distributional shifts, not only when tasks change, but also among batches where data points conform to different distributions.
\end{recommendation}

\begin{recommendation} 
Report precisely and clearly how an approach learns and the assumptions it make, as described in the framework (Chapter~\ref{chap:2_CL}).
\end{recommendation}

\bigskip

In this chapter, we discussed continual objectives and how specific criteria can be put in long term ambitions. We presented the global aim of the thesis regarding those objectives and justified the different choices made. We compiled a set of pitfalls that can divert continual learning from its progress and its potential applications and finally proposed a set of recommendation rules for continual learning. In the next chapter, we will conclude this thesis and provide some potential future research directions.

\newpage
\chapter{Conclusion}
\label{chap:7_ccl}

To conclude this thesis, we will first summarize the contributions presented in this manuscript. Then, we present research directions that could extend this work and improve the understanding and efficiency of replay methods for continual learning.

\section{Summary of Contributions}

The overall aim of the thesis was to study methods able to learn on incremental settings and which do not rely on any supervision to be deployable in real world applications.

First of all, we presented a framework for continual learning, built on top of \cite{Lomonaco2019ContinualLW} framework. This framework makes it possible to frame any continual learning approach systematically. It helps to rigorously set out the method, the scope and the evaluation of a CL algorithm to ease the comparison of the methods and transfer from one application to another.

Second, we demonstrate the advantages of replay methods in comparison with regularization and dynamic architecture methodology. We show that in the absence of task labels, the replay is the only method that could learn classification incrementally. 

Third, we applied replay methods in unsupervised learning (Chapter~\ref{chap:3_CL_GM}), supervised learning (Chapter~\ref{chap:4_CL_GR}) and reinforcement learning (Chapter~\ref{chap:4_CL_GR}) settings. We experimented, in particular, the generative replay methods and introduced the \say{Conditional Replay} method for continual learning. We showed that generative replay is agnostic to the test label and the number of tasks. Moreover, the generative model learns to memorize in a potentially more compact way than the initial dataset, it can generate never seen samples and offer its generalization capacity to learn downstream tasks.
We also used the rehearsal strategy for multi-task continual reinforcement learning, presenting the \textit{DiscoRL} algorithm. DiscoRL advantages are the unconditional ability to remember thanks to the hard memory process, the independence of individual policy learning and global policy learning without preventing forward transfer. We showed the effectiveness of the method on robots both in simulation and in real-life settings.

Finally, we present an extensive discussion on continual learning ultimate objectives, the choices made in the thesis experiments, the pitfalls of continual learning research and \checked{we introduce a list of recommendations which should help to push forward the limits of continual learning.}

\section{Future Research}

\checked{We presented our contributions to research in continual learning with replay methods. However, many research directions may improve these methods. We mention here several of them: improving sampling methods, improving generative models, detecting concept drift, improving hyper-parameters selection with meta-learning and estimating knowledge retention. 
Progress in those research direction will push continual learning possibilities forward.}

\subsection{Improving Replay Methods}
 
 Even if theoretically the replay method has clear advantages,
there are still potential improvements, in particular in the construction of the memory either by improving the validation of generative models for memory replay or by improving the selection %
 criterion for coreset. The data replayed should be representative enough to remember the task and general enough to be used in other tasks. %
Another research subject that would deserve to be studied is the protection against overfitting memory and insuring a good generalization. Sampling the memory should then be carefully done to find the good trade-off between the benefit of the memory without spoiling it. Smart sampling would also helps to reduce the algorithms computation consumption.

\subsection{Improving Generative Models}
Generative models are promising for continual learning. They theoretically propose a satisfying memorization solution for neural networks.
However, in the generative replay framework, the generative model is a serious bottleneck in the learning process both in computation and accuracy. 
Their training is long, computation heavy and they often suffer from instability. 

Since the experiments proposed in this thesis were performed, it seems that a lot of progress has been made in generated image quality. 
Therefore, we hope that those progresses will overcome generative model current limitations and fully exploit their potential for continual learning.

\subsection{Detecting Concept Drift}
In this thesis, we did not tackle the problem of concept drift detection. This problem is probably as crucial as tackling catastrophic forgetting for continual learning when the i.i.d assumption does not hold anywhere in the learning curricula.
It would be worth studying it more intensively, as it is essential to make lifelong learning work.

\subsection{Improving Hyper-Parameter Selection with Meta-Continual Learning}

\checked{As discussed, in Chapter \ref{chap:1b_ML}, the hyper-parameters (HP) selection is difficult in continual learning. In classical machine learning, we can select HPs that minimize validation set loss. However, in continual learning, we don't have access to the full validation set. It is then crucial to develop strategies to improve HP selection.}

\checked{Meta-learning, or learning to learn, is a training concept that aims at improving the learning process on new tasks by learning from many others.}
\checked{Meta-algorithms learn the best parameters or/and hyper-parameters to solve new tasks efficiently.
In continual learning, it could help to prepare learning algorithms for future tasks and improve existing strategies. 
It has already been used in many approaches~\cite{riemer2018learning,Javed2019Meta,beaulieu2020learning,caccia2020online}. }

\checked{For replay methods, meta-continual learning could be useful to improve memorization processes, especially by automatically learning the memorization hyper-parameters. For example, it could improve its learning of criterion for data selection or improve memory sampling to maximize remembering and minimize overfitting.}

\checked{Nevertheless, meta-learning needs to replay tasks several times to learn (or at least similar tasks) and it does not remember from one task to another. It might thus not be relevant in all continual learning settings.}

\subsection{Estimating Knowledge Retention}
In this thesis, the algorithm's goal is, at any time, to remember everything of the past in the active memory. 
\checked{Knowledge retention is only estimated as the knowledge that can be directly used for inference. For example, in classification, we only check if the neural network can correctly classify test images.
However, latent representations from past tasks can be hidden in the weights but can not be directly activated, i.e. the neural network can have memories of past tasks in inner layers without being able to use them directly because, for example, the output layer has been modified.
It would be interesting to tell the difference between a model that still has latent representations of past tasks and a model that forgot everything.}

\checked{Therefore, it would be relevant to develop methods that promote latent knowledge which could be easily reactivated and valuable again.}%

\newpage
\bibliographystyle{apalike}
\bibliography{continual,other_ref}

\clearpage
\includepdf[page=-, width=\paperwidth, height=\paperheight]{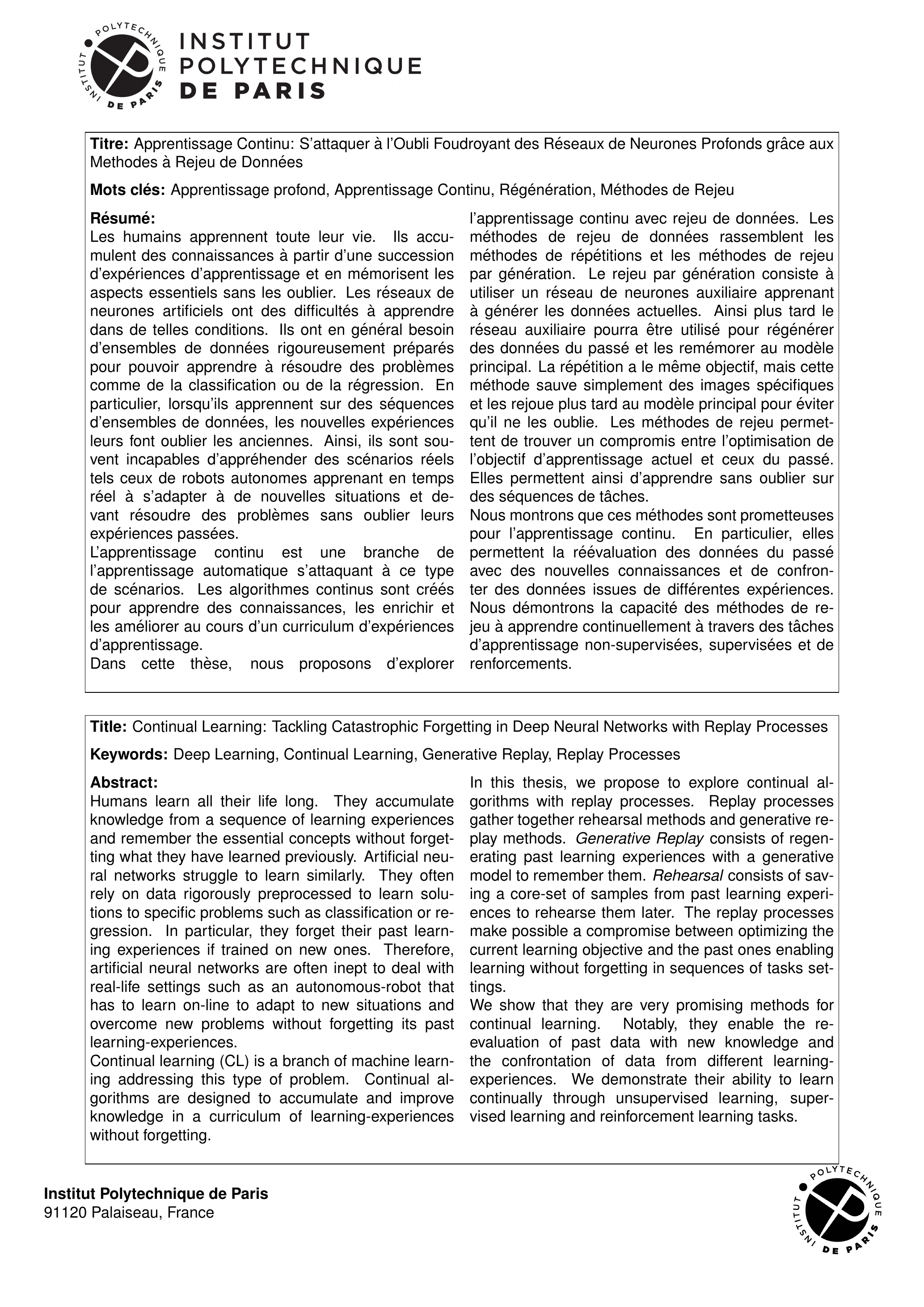}

\end{document}